\newtheorem{assum}{Assumption}[section]
\newtheorem{remark}{Remark}[section]
\newtheorem{theorem}{Theorem}[section]
\newtheorem{lemma}{Lemma}[section]
\newtheorem{proposition}{Proposition}[section]
\newtheorem{definition}{Definition}[section]
\newtheorem{cor}{Corollary}[section]
\newcommand{\BlackBox}{\rule{1.5ex}{1.5ex}}  
    \renewenvironment{proof}{\par\noindent{\bf Proof\ }}{\hfill\BlackBox\\[2mm]}
    \newenvironment{proof}{\par\noindent{\bf Proof\ }}{\hfill\BlackBox\\[2mm]}
\newcommand{\EE}{\mathbb{E}}
\newcommand{\ini}{\mathrm{ini}}
\newcommand{\dis}{\mathrm{dis}}
\newcommand{\rd}{\,\mathrm{d}}
\newcommand{\revise}[1]{{\color{black}{#1}}}
\title{On the Global Convergence of Gradient Descent for multi-layer ResNets in the mean-field regime}
\author{Zhiyan Ding, Shi Chen \& Qin Li \\
Mathematics Department\\
University of Wisconsin-Madison\\
Madison, WI 53706 USA.\\
\texttt{\{zding49,schen636,qinli\}@math.wisc.edu} \\
\And
Stephen Wright\\
Department of Computer Sciences \\
University of Wisconsin-Madison\\
Madison, WI 53706 USA.\\
\texttt{\{swright\}@cs.wisc.edu} \\
}
\begin{document}

\maketitle

\begin{abstract}
Finding the optimal configuration of parameters in ResNet is a nonconvex minimization problem, but first order methods nevertheless find the global optimum in the overparameterized regime. 
We study this phenomenon with mean-field analysis, by translating the training process of ResNet to a gradient-flow partial differential equation (PDE) and examining the convergence properties of this limiting process. 
The activation function is assumed to be $2$-homogeneous or partially $1$-homogeneous; the regularized ReLU satisfies the latter condition. 
We show that if the ResNet is sufficiently large, with depth and width depending algebraically on the accuracy and confidence levels, first order optimization methods can find global minimizers that fit the training data.
\end{abstract}

\section{Introduction}
Training of multi-layer neural networks (NN) requires us to find weights in the network such that its outputs perfectly match the prescribed outputs for a given set of training data.
The usual approach is to formulate this problem as a nonconvex minimization problem and solve it with a first-order optimization method based on gradient descent (GD).
Extensive computational experience shows that in the overparametrized regime (where the total number of parameters in the NN far exceeds the minimum number required to fit the training data), GD methods run for sufficiently many iterations consistently find a global minimum achieving the zero-loss property, that is, a perfect fit to the training data.

What is the mechanism that allows GD to perform so well on this large-scale nonconvex problem?

Part of the explanation is that in the overparametrized case, the parameter space contains many global minima, and some evidence suggests that they are distributed throughout the space, making it easier for the optimization process to find one such solution. 
Many approaches have been taken to characterize this phenomenon more rigorously, including landscape analysis, the neural tangent kernel approach, and mean-field analysis. All such viewpoints aim to give an idea of the structure and size of the NN required to ensure global convergence.

Our approach in this paper is based on mean-field analysis and gradient-flow analysis, the latter being the continuous and mean-field limit of GD.
We will examine residual neural networks (ResNets), and study how deep and wide a ResNet needs to be to match the data with high accuracy and high confidence. 
To relax the assumptions on the activation function as far as possible, we follow the setup in~\citep{NEURIPS2018_a1afc58c}, which requires this function to be either $2$-homogeneous or partially $1$-homogeneous. 
We show that both depth and width of the NN depend algebraically on ${\epsilon}$ and $\eta$, which are the accuracy and confidence levels, respectively.

Mean-field analysis translates the training process of the ResNet to a gradient-flow partial differential equation (PDE). 
The training process evolves weights on connections between neurons.
When dealing with wide neural networks, instead of tracing the evolution of each weight individually, one can record the evolution of the full distribution of the weight configuration. 
This perspective translates the coupled ordinary differential equation system (ODE) that characterizes evolution of individual weights into a PDE (the gradient-flow equation) characterizing the evolution of the distribution.
The parameters in the PDE naturally depend on the properties of the activation functions. 
Gradient-flow analysis is used to show that the PDE drives the solution to a point where the loss function becomes zero. 
We obtain our results on zero-loss training of ResNet with GD by translating the zero-loss property of the gradient-flow PDE back to the discrete-step setting.

This strategy of the proof was taken in an earlier paper~\citep{ding2021overparameterization} where multi-layer ResNets were also analyzed. 
The main difference in this current paper is that the assumptions on the activation function and the initial training state for obtaining the global convergence are both much relaxed. 
This paper adopts the setup from~\citep{NEURIPS2018_a1afc58c} of minimal Lipschitz continuity requirements on the activation function. 
Furthermore, the paper~\citep{ding2021overparameterization} required a dense support condition to be satisfied on the final parameter configuration has a support condition. 
This condition is hard to justify in any realistic setting, and is discared from current paper.
Further details on these issues appear in Section~\ref{sec:contribution}.

We discuss the setup of the problem and formally derive the continuous and mean-field limits in Section~\ref{sec:setup}. 
In Section~\ref{sec:contribution}, we discuss related work, identify our contribution and present the main theorem in its general terms. 
After precise definitions and assumptions are specified in Section~\ref{sec:notations}, we present the two main ingredients in the proof strategy. 
The mean-field limit is obtained by connecting the training process of the ResNet to a gradient-flow PDE in Section~\ref{sec:meanfield}, and the zero-loss property of the limiting PDE is verified in Section~\ref{sec:convergetoglobalminimizer}. 
The main theorem is a direct corollary of Theorem \ref{thm:consistent} and Theorem \ref{thm:globalminimal1} (or Theorem \ref{thm:globalminimal2}). 


\section{ResNet and gradient descent}\label{sec:setup}
The ResNet can be specified as follows:
\begin{equation}\label{eqn:disRes}
z_{l+1}(x) =z_{l}(x)+\frac{1}{ML}\sum^M_{m=1}f(z_{l}(x),\theta_{l,m})\,,\quad l=0,1,\dots,L-1\,,
\end{equation}
where $M$ and $L$ are the width and depth, respectively; $z_{0}(x) =x\in\mathbb{R}^d$ is the input data; and $z_L(x)$ is the output from the last layer.
The configuration of the NN is encoded in parameters ${\Theta}_{L,M}=\left\{\theta_{l,m}\right\}^{L-1,M}_{l=0,m=1}$, where each parameter $\theta_{l,m}$ is a vector in $\mathbb{R}^k$ and $f:\mathbb{R}^d\times\mathbb{R}^k\rightarrow\mathbb{R}^d$ is the activation function. 
The formulation \eqref{eqn:disRes} covers``conventional'' ResNets, which have the specific form
\[
z_{l+1}(x) =z_{l}(x)+\frac{1}{ML}\sum^{M}_{m=1}U_{l,m}\sigma(W^\top_{l,m}z_{l}(x)+b_{l,m})\,,\quad l=0,1,\dots,L-1\,,
\]
where $W_{l,m},U_{l,m}\in\mathbb{R}^d$, $b_{l.m}\in\mathbb{R}$, and $\sigma$ is the ReLU activation function. In this example, we have $\theta_{l,m}=(W_{l,m},U_{l,m},b_{l,m})\in\mathbb{R}^k$, with $k=2d+1$.

Denote by $Z_{{\Theta}_{L,M}}(l;x)$ the output of the ResNet defined by \eqref{eqn:disRes}. (This quantity is the same as $z_L(x)$ defined above, but we use this alternative notation to emphasize the dependece on parameters $\Theta_{L,M}$.)
The goal of training ResNet is to seek parameters $\Theta_{L,M}$ that minimize the following mismatch or {\em loss} function:
\begin{equation}\label{eqn:costfunction}
E({\Theta}_{L,M})=\mathbb{E}_{x\sim\mu}\left[\frac{1}{2}\left(g(Z_{{\Theta}_{L,M}}(L;x))-y(x)\right)^2\right]\,,
\end{equation}
where $g(x):\mathbb{R}^d\rightarrow\mathbb{R}$ is a given measuring function, $y(x)\in\mathbb{R}$ is the label corresponding to $x$, and $\mu$ is the probability from which the data $x$ is drawn.

Classical gradient descent updates the parameters according to the formula
\begin{equation*}
\Theta_{L,M}^{n+1}=\Theta_{L,M}^{n}-h\nabla_{{\Theta}} E(\Theta^n_{L,M})\,,
\end{equation*}
where $h$ is the step length. In the limit as $h\to0$, the updating process can be characterized by the following ODE~\citep[Def~2.2]{NEURIPS2018_a1afc58c} (rescaled by $L,M$):
\begin{equation}\label{eqn:gradient_theta_LM}
    \frac{\rd{\Theta}_{L,M}(s)}{\rd s}=-ML\nabla_{{\Theta}} E({\Theta}_{L,M})\,,\quad\text{for}\; s\geq 0\,,
\end{equation}
where $s$ represents pseudo-time, the continuous analog of the discrete stepping process.

\subsection{The continuous limit and the mean-field limit}\label{sec:limit_formal}
The {\em continuous limit} of \eqref{eqn:disRes} is obtained when the ResNet is infinitely deep, with $L\to\infty$. 
By reparametrizing the indices $l=[0,\cdots,L-1]$ with the continuous variable $t\in[0,1]$, we can view $z$ in~\eqref{eqn:disRes} as a function in $t$ that satisfies a coupled ODE, with $1/L$ being the stepsize in $t$.
Accordingly, $\theta_{l,m}$ can be recast as $\theta_m(t=l/L)$, and denoting $\Theta(t) = \{\theta_m(t)\}_{m=1}^M$, we can write the continuous limit of \eqref{eqn:disRes} as
\begin{equation}\label{eqn:contRes}
\frac{\rd z(t;x)}{\rd t}=\frac{1}{M}\sum^M_{m=1}f(z(t;x),\theta_m(t))\,,\quad t\in[0,1], \quad\text{with}\; z(0;x)=x\,.
\end{equation}
Extending~\eqref{eqn:costfunction}, we define the cost functional $E$ as 
\begin{equation}\label{eqn:costfunctioncont}
E(\Theta)=\mathbb{E}_{x\sim\mu}\left[\frac{1}{2}\left(g(Z_{\Theta}(1;x))-y(x)\right)^2\right]\,,
\end{equation}
where $Z_{\Theta}(t;x)$ solves \eqref{eqn:contRes} for a given collection $\Theta(t)$ of the $M$ functions $\{\theta_m(t)\}$. Similar to~\eqref{eqn:gradient_theta_LM}, we can use GD to find the configuration of $\Theta(t)$ that minimizes~\eqref{eqn:costfunctioncont} by making $\Theta(t)$ flow in the descending direction of $E(\Theta)$. Denote $s$ the pseudo-time of the training process, and $\Theta(s;t)$ the collection of functions at the training time $s$:
\begin{equation}\label{eqn:gradient_theta_M}
\frac{\partial \Theta}{\partial s}=-M\left.\frac{\delta E}{\delta\Theta}\right|_{\Theta(s;\cdot)}\,,\quad s>0,\quad t\in[0,1]
\end{equation}
where $\frac{\delta E}{\delta\Theta}$ is the functional derivative of $E$ with respect to $\Theta$, and thus a list of $M$ functions of $t$ for every fixed $s$.

The {\em mean-field limit} is obtained by making the ResNet infinitely wide, that is, $M\to\infty$. 
Considering that the right hand side of~\eqref{eqn:contRes} has the form of an expectation, it approaches an integral in the limit, with respect to a certain probability density. 
Denoting this PDF by $\rho(\theta,t)\in \mathcal{C}([0,1];\mathcal{P}^2)$\footnote{A collection of probability distribution that is continuous in $t$ and has bounded second moment in $\theta$ for all $t$. The definition is to be made rigorous in Def~\ref{def:path}.}, and assuming that the $\theta_m$ are drawn from it, the ODE for $z$ translates to the following:
\begin{equation}\label{eqn:meancontRes}
\frac{\rd z(t;x)}{\rd t}=\int_{\mathbb{R}^k}f(z(t;x),\theta)\rd\rho(\theta,t)\,,\quad t\in[0,1]\quad\text{with}\; z(0;x)=x\,.
\end{equation}
Mimicking~\eqref{eqn:costfunctioncont}, we define the following cost function in the mean-field setting:
\begin{equation}\label{eqn:costfunctioncont2}
E(\rho)=\mathbb{E}_{x\sim\mu}\left[\frac{1}{2}\left(g(Z_{\rho}(1;x))-y(x)\right)^2\right]\,,
\end{equation}
where $Z_{\rho}(t;x)$ is the solution to \eqref{eqn:meancontRes} for a given $\rho$. 
Then, similar to the gradient flow for $\Theta_{L,M}$ and $\Theta(t)$, the probability distribution $\rho$ that encodes the configuration of $\theta$ flows in the descending direction of $E(\rho)$ in pseudo-time $s$. Since $\rho(\theta,t,s)$ needs to be a probability density for all $s$ and $t$, its evolution in $s$ is characterized by a gradient flow in the Wasserstein metric~\citep{NEURIPS2018_a1afc58c,pmlr-v119-lu20b,ding2021overparameterization}:
\begin{equation}\label{eqn:Wassgradientflow}
\frac{\partial \rho}{\partial s}=\nabla_\theta\cdot\left(\rho\nabla_\theta\left.\frac{\delta E}{\delta \rho}\right|_{\rho(\cdot,\cdot,s)}\right)\,,\quad s>0,\; t\in[0,1]\,\quad\text{with}\quad \rho(\theta,t,0)=\rho_{\ini}(\theta,t)\,,
\end{equation}
where $\frac{\delta E}{\delta \rho}$ is the functional derivative with respect to $\rho$, and thus a function of $(\theta,t)$ for every fixed $s$. 
Using the classical calculus-of-variations method, this functional derivative can be computed as:
\begin{equation}\label{eqn:Frechetderivative}
\left.\frac{\delta E}{\delta \rho}\right|_{\rho}(\theta,t)=\mathbb{E}_{x\sim\mu}\left(p^\top_\rho(t;x)f(Z_\rho(t;x),\theta)\right)\,,
\end{equation}
where $p_\rho(\cdot;x)$, parameterized by $x$, maps $[0,1]\to\mathbb{R}^d$, and is a vector solution to the following ODE:
\begin{equation}\label{eqn:prho}
\frac{\rd p^\top_\rho}{\rd t}=-p^\top_\rho\int_{\mathbb{R}^k}\partial_zf(Z_\rho,\theta)\rho(\theta,t)\rd\theta\,.
\end{equation}
with $p_\rho(t=1;x)=\left(g(Z_{\rho}(1;x))-y(x)\right)\nabla g(Z_{\rho}(1;x))$. In the later sections, to emphasize the $s$ dependence, we use $\frac{\delta E(\Theta(s))}{\delta \Theta}$ and $\frac{\delta E(\rho(s))}{\delta \rho}$ to denote $\left.\frac{\delta E}{\delta \Theta}\right|_{\Theta(s;\cdot)}$ and $\left.\frac{\delta E}{\delta \rho}\right|_{\rho(\cdot,\cdot,s)}$ respectively. As a summary, to update $\rho(\theta,t,s)$ to $\rho(\theta,t,s+\delta s)$ with an infinitesimal $\delta s$, we solve~\eqref{eqn:meancontRes} for $Z_\rho(t;x)$, using the given $\rho(\theta,t,s)$, and compute $p_\rho$ using~\eqref{eqn:prho}. 
This then allows us to compute $\frac{\delta E(\rho(s))}{\delta \rho}(\theta,t)$ which, in turn,  yields $\rho(\theta,t,s+\delta s)$ from~\eqref{eqn:Wassgradientflow}. 
In~\eqref{eqn:prho}, $\partial_zf$ is a $d\times d$ matrix that stands for the Jacobian of $f$ with respect to its $z$ argument.

\section{Related work and Contribution}\label{sec:contribution}
There is a vast literature addressing the overparameterization of DNN. Many perspectives have been taken to justify the success of the application of the first order (gradient descent) optimization methods, in this overparameterized regime. We briefly review related works, and identify our contribution.

The earliest approach to understanding overparametrization was landscape analysis, in which the countours of the nonconvex objective function were studied to find which properties make it possible for a first order method to converge to the optimizer. 
Different NN structures are then analyzed to see which have these properties~\citep{pmlr-v70-jin17a,pmlr-v40-Ge15,NIPS2017_f79921bb,ge2018learning,Nguyen2018OptimizationLA,pmlr-v80-du18a,10.1109/TIT.2018.2854560,pmlr-v70-nguyen17a,NIPS2016_f2fc9902,yun2018global}. 
This approach naturally limits the types of DNN that can be ``explained,'' since most DNN structures do not satisfy the required properties. 

Another approach taken in the literature is related to the Neural Tangent Kernel (NTK) regime, which is the regime in which the nonlinear problem is reduced to a nearly linear model due to the confinement of the iterates to a small region around the initial values. 
Insensitivity of the so-called Gram matrix is evaluated in the limit of the number of weights~\citep{pmlr-v97-allen-zhu19a,pmlr-v97-du19c,zhang2019convergence,chatterji2021does,du2018gradient,NEURIPS2018_5a4be1fa,NEURIPS2020_b7ae8fec,Frei2019AlgorithmDependentGB}. 
The argument is that zero-loss solutions are close to every point in the space, and one can find an optimal point within a small region of the initial guess.~\revise{The NTK arguments are shown to work well in several real application problems, such as the classification problem~\citep{10.5555/3327757.3327910,Zou2019GradientDO}.}
 However, as pointed out by \citep{Ba2020GeneralizationOT,Wei2019RegularizationMG,fang2019convex}, NTK approximately views nonlinear DNN as a linear kernel model, a rather limited description, so the estimates obtained through NTK might not be sharp.~\revise{Indeed, the empirical observation in~\citep{NEURIPS2019_58,Arora2019OnEC} have suggested that the kernel models are not as general as NN, and certain (nonlinear) features of NN are not captured.}

Finally, there is the mean-field limit perspective that we adopt in this paper. 
The term ``mean-field" indicates that in a system with a large ensemble of particles, the field formed by averaging across all samples exerts a force on each sample. 
Instead of tracing the trajectory of each sample, one can characterize the evolution of the full distribution function that represents the field. 
This idea originated in statistical physics, and is made rigorous under the framework of kinetic theory. 
In training an overparametrized ResNet context, a large number of weights evolve to decrease the cost function. 
In the mean-field limit, the training process evolves the distribution function of these weights.
A significant advantage of the mean-field approach is that once we derive a formula for the gradient flow, standard PDE techniques can be adopted to describe the convergence behavior. 
This approach was taken in~\citep{araujo2019meanfield,fang2019convex,nguyen2019mean,pmlr-v97-du19c,chatterji2021does,NEURIPS2018_a1afc58c,MeiE7665,wojtowytsch2020convergence,pmlr-v119-lu20b,doi:10.1287/moor.2020.1118,doi:10.1137/18M1192184,E_2020,jabir2021meanfield}. 
The case of a single hidden layer NN in the regime as $M \to \infty$ is studied by
\citet{NEURIPS2018_a1afc58c,MeiE7665,wojtowytsch2020convergence}, who justified the mean-field approach and demonstrated convergence of the gradient flow process to a zero objective.
In the multi-layer case, \citet{pmlr-v119-lu20b} showed the convergence of a PDE that can be viewed as a modified version of the true gradient flow, hinting at convergence of the real mean-field limit. 
\citet{nguyen2021rigorous} also gave the global convergence of the mean-field limit of DNN for a certain class of NN structures, but their work excludes such important practical NN structures as ResNet. 
The work most closely related to ours is~\citep{ding2021overparameterization}, but this paper makes technical assumptions on $\rho_\infty$ and $f$ that restrict the usefulness of the results, \revise{as we discuss below following the statement of Theorem~\ref{thm:maintheorem1}.}

\revise{We note that in certain parameter regimes, the mean-field and NTK perspectives can sometimes be unified; see~\citep{NEURIPS2020_9afe487d}.}

We follow the roadmap of~\citet{NEURIPS2018_a1afc58c,ding2021overparameterization}, which shows that the PDE~\eqref{eqn:Wassgradientflow} achieves the global minimum for which $E(\rho(\theta,t,s=\infty))=0$, and that the gradient flow in the discrete setting~\eqref{eqn:gradient_theta_LM} can be closely approximated by the PDE, so that $E(\Theta_{L,M}(s))\approx E(\rho(\cdot,\cdot,s))$. These two results together show that  $E(\Theta_{L,M}(s))\approx 0$ for pseudo-time $s$ sufficiently large.
Specifically, the two main tasks of the paper are as follows.
\begin{itemize}
\item[--]{Task 1:} We need to give a rigorous proof of the continuous and mean-field limit. This will be stated in Theorem~\ref{thm:consistent}, to justify that for every fixed $s<\infty$, when $M,L\to\infty$, $E(\Theta_{L,M}(s))\approx E(\Theta(s;\cdot))\approx E(\rho(\cdot,\cdot,s))$. 
The dependence of these approximations on $L$ and $M$ are made precise.
\item[--]{Task 2:} We need to demonstrate the convergence to global minimum. This is stated in Theorem \ref{thm:globalminimal1} and \ref{thm:globalminimal2}, for two different cases.
In both theorems, we obtain the global convergence for the gradient flow, assuming certain homogeneity and the Sard-type regularity for $f$. A weak assumption of the initialization of $\rho_\ini$ is also imposed.
\end{itemize}

By combining these two, we obtain the main result of the paper.
\begin{theorem}\label{thm:maintheorem1}
Let the conditions in Theorem~\ref{thm:consistent} and~\ref{thm:globalminimal1} (or~\ref{thm:globalminimal2}) hold. Then for any positive $\epsilon$ and $\eta$, there exist positive constants $C_0$ depending on $\rho_{\ini}(\theta,t),\epsilon$ and $C$ depending on $\rho_{\ini}(\theta,t),s$ such that
 when 
\[
s>C_0(\rho_{\ini}(\theta,t),\epsilon)\,,\quad M>\frac{C(\rho_{\ini}(\theta,t),s)}{\epsilon^2\eta}\,,\quad L>\frac{C(\rho_{\ini}(\theta,t),s)}{\epsilon}\,,
\]
we have
\[
\mathbb{P}\left(\left|E(\Theta_{L,M}(s))\right|\leq \epsilon\right)\geq 1-\eta\,,
\]
where $E$ is defined in~\eqref{eqn:costfunction} and $\Theta_{L,M}$ solves~\eqref{eqn:gradient_theta_LM}.
\end{theorem}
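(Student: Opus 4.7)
The statement of the theorem is essentially a packaging of the two ingredients flagged as Task~1 and Task~2 in Section~\ref{sec:contribution}, so the plan is to decompose
\[
E(\Theta_{L,M}(s)) \;=\; \bigl[E(\Theta_{L,M}(s)) - E(\rho(\cdot,\cdot,s))\bigr] \;+\; E(\rho(\cdot,\cdot,s))
\]
and control the two pieces separately. The randomness enters only through the initial draw $\Theta_{L,M}(0)$ (sampled from $\rho_{\ini}$), so the decomposition is pathwise in $s$ and the probability in the conclusion comes entirely from the mean-field approximation error.

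First I would fix the mean-field contribution. By Theorem~\ref{thm:globalminimal1} (or \ref{thm:globalminimal2}), the gradient flow $\rho(\theta,t,s)$ driven by \eqref{eqn:Wassgradientflow} with initialization $\rho_{\ini}$ satisfies $E(\rho(\cdot,\cdot,s))\to 0$ as $s\to\infty$, with a rate that depends only on $\rho_{\ini}$. Consequently there is a deterministic threshold $C_0 = C_0(\rho_{\ini},\epsilon)$ such that $E(\rho(\cdot,\cdot,s))\leq \epsilon/2$ for all $s>C_0$. This pins down the condition $s>C_0(\rho_{\ini},\epsilon)$ in the statement and leaves only the approximation error to deal with.

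Next I would handle the discrete-to-continuum gap. Theorem~\ref{thm:consistent} provides, for every fixed $s<\infty$, a quantitative bound on $\mathbb{E}_{\Theta_{L,M}(0)\sim\rho_{\ini}^{\otimes M}}\bigl[|E(\Theta_{L,M}(s)) - E(\rho(\cdot,\cdot,s))|^2\bigr]$ of the schematic form $C(\rho_{\ini},s)\bigl(1/M + 1/L^2\bigr)$ (the $1/M$ coming from the width-sampling error by the law of large numbers, the $1/L^2$ from the time-discretization error in reading \eqref{eqn:disRes} as an Euler step of \eqref{eqn:contRes}). Chebyshev's inequality then upgrades this into a tail bound: the event that $|E(\Theta_{L,M}(s)) - E(\rho(\cdot,\cdot,s))|\leq \epsilon/2$ has probability at least $1 - 4C(\rho_{\ini},s)(1/M+1/L^2)/\epsilon^2$. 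Requiring this failure probability to be at most $\eta$ yields exactly the scalings $M\gtrsim C(\rho_{\ini},s)/(\epsilon^2\eta)$ and $L\gtrsim C(\rho_{\ini},s)/\epsilon$ stated in the theorem. Combining with the deterministic bound on $E(\rho)$ by the triangle inequality concludes the argument.

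The main obstacle is purely bookkeeping: one must check that the constant $C(\rho_{\ini},s)$ furnished by Theorem~\ref{thm:consistent} is allowed to depend on $s$ in the way needed here, since the order of quantifiers is crucial — one first fixes $s>C_0(\rho_{\ini},\epsilon)$ to force $E(\rho(\cdot,\cdot,s))$ below $\epsilon/2$, and only then picks $M$ and $L$ large enough (in terms of this fixed $s$) to make the mean-field approximation accurate to $\epsilon/2$ with confidence $1-\eta$. All other steps — the triangle inequality and the Chebyshev upgrade from $L^2$ error to probability — are routine.
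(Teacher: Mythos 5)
Your decomposition of $E(\Theta_{L,M}(s))$ into the mean-field approximation error plus $E(\rho(\cdot,\cdot,s))$, followed by the triangle inequality and the quantifier-ordering observation (fix $s>C_0(\rho_\ini,\epsilon)$ first, then choose $M,L$ in terms of $s$), is exactly the intended argument; the paper simply declares Theorem~\ref{thm:maintheorem1} to be a direct corollary of Theorems~\ref{thm:consistent} and~\ref{thm:globalminimal1}/\ref{thm:globalminimal2} and your proof makes that explicit. One minor misstatement worth flagging: you portray Theorem~\ref{thm:consistent} as furnishing an $L^2$ expectation bound of order $1/M+1/L^2$ and then invoke Chebyshev, but that heuristic would give $L\gtrsim 1/(\epsilon\sqrt{\eta})$ rather than the stated $L\gtrsim C(\rho_\ini,s)/\epsilon$; in fact Theorem~\ref{thm:consistent} directly delivers the probability bound with the correct scalings (in the paper's proof the $\eta$-dependence enters only through $M$ via the high-probability control of the empirical second moment, while the $1/L$ discretization error is essentially deterministic on that event), so it should simply be applied as a black box with $\epsilon$ replaced by $\epsilon/2$ and $\eta$ by $\eta$.
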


This theorem gives quantitative bounds for $M$ and $L$. 
The number of weights required to reduce the cost function below $\epsilon$ is $O(ML) = O(1/\epsilon^3)$. The theorem also suggests that $L$ and $M$ are independent parameters.

\revise{The results resonate with those obtained in~\citep{NEURIPS2018_a1afc58c} for the 2-layer NN, and extend those in~\citep{ding2021overparameterization} greatly. Specifically, compared with the results in~\citep{NEURIPS2018_a1afc58c}, where $\rho(\theta,s)$ follows a typical gradient flow in the probability space on $\theta$ in time $s$~\citep{Gradientflow}, we have, at each training time $s$, a ``list'' of probability measures $\rho(\theta,t)$ on $\theta$, for all $t$. The members of this list are coupled, flowing together in $s$ in the descending direction of the cost function $E$. New analytical estimates are developed to deal with this non-traditional gradient flow.}

\revise{\citet{ding2021overparameterization} takes a  similar approach to ours, but their assumptions on the support of $\rho(\theta,t,s=\infty)$ are quite strong: The limiting probability measure $\rho(\theta,t,s=\infty)$ is assumed to have the full support over $\theta$. The assumption greatly reduces the technical difficulty of the proof, but it is impractical and hard to justify, thus preventing the results from being of practical use. In the current paper, this support condition is  replaced by the well-accepted homogeneity condition adopted by~\citep{NEURIPS2018_a1afc58c}. 
As a consequence, the structure of the gradient flow must be examined closely to demonstrate convergence, requiring considerable technical complications. \citet{pmlr-v119-lu20b} also investigates gradient flow for training multi-layer neural network, but the gradient flow structure is modified for mathematical convenience.
All blocks are integrated together, making $\rho$ a probability measure over the full $(\theta,t)$-space. This design is inconsistent with the structure of the ResNet design that we investigate in this paper.}




\section{Notations, Assumptions, and Definitions}\label{sec:notations}

Throughout the paper we denote the collection of probability distributions with bounded second moments as $\mathcal{P}^2(\mathbb{R}^k)$, that is, $\mathcal{P}^2(\mathbb{R}^k)=\{\rho: \int_{\mathbb{R}^k}|\theta|^2\rd\rho(\theta)<\infty\}$. We assume certain regularity properties for the activation function $f$, the measuring function $g$, the data $y$, and the input measure $\mu$, as follows.
\begin{assum}[Assumptions on $f$]\label{assum:f}
 Let $f:\mathbb{R}^{d}\times \mathbb{R}^k\rightarrow\mathbb{R}^d$ be a $\mathcal{C}^2$ function.
\begin{itemize}
\item[1.] (linear growth) For all $x\in\mathbb{R}^d,\theta\in\mathbb{R}^k$, there is a constant $C_1$ such that
\begin{equation}\label{eqn:boundoff}
\left|f\right|\leq C_1(|\theta|^2+1)(|x|+1)\,.
\end{equation}
\item[2.] (locally Lipschitz) For all $r>0$, and $|x|<r,\theta\in\mathbb{R}^k$, we have for $C_2(r)$ monotonically increasing with respect to $r$ that the following bounds hold:
\begin{equation}\label{eqn:derivativebound}
\left|\partial_{x}f\right|\leq C_2(r)(|\theta|^2+1),\quad \left|\partial_{\theta}f\right|\leq C_2(r)(|\theta|+1)\,.
\end{equation}
\item[3.] \revise{(local smoothness) There exists $k-1$ with $0<k_1\leq k$ with the following property: Denoting $\theta=(\theta_{[1]},\theta_{[2]})$, $r=\max\{|x|,|\theta_{[1]}|\}$, 
where $\theta_{[1]}\in\mathbb{R}^{k_1}$, $\theta_{[2]}\in\mathbb{R}^{k-k_1}$, we have for $C_3(r)$ monotonically increasing with respect to $r$ that the following bounds hold:
\begin{equation}\label{eqn:part3}
\left|\partial^i_x\partial^j_\theta f(x,\theta)\right|\leq C_3(r),\quad i+j=2,\ i,j \geq0\,.
\end{equation}
When $k_1<k$, we have in addition that
\begin{equation}\label{eqn:k1smallerthank}
\max\{|\partial_xf|,|f|\}\leq C_3(r)\left(|\theta_{[2]}|+1\right),\ \left|\partial_{\theta_{[1]}}f\right|\leq C_3(|x|)\left(|\theta_{[1]}|+1\right)\,.
\end{equation}
}

\item[4.] (universal kernel) The function set $\left\{h\middle|h=\int_{\mathbb{R}^k}f(x,\theta)\rd\rho(\theta),\ \rho\in \mathcal{P}^2(\mathbb{R}^k)\right\}$ is dense in $\mathcal{C}\left(|x|<R;\mathbb{R}^d\right)$ for all $R>0$.
\end{itemize}
\end{assum}
\begin{assum}[Assumptions on data]\label{ass:g}
 Let $g,y:\mathbb{R}^d\rightarrow\mathbb{R}$ be $\mathcal{C}^2$ functions, and let $\mu$ be the probability distribution of $x$. We assume the following.
\begin{itemize}
\item[5.] $\mu(x)$ is compactly supported, meaning that there is $R_\mu>0$ such that the support of $\mu$ is within a ball of size $R_\mu$ around the origin, that is, $\mathrm{supp}(\mu)\subset \mathcal{B}_{R_\mu}(\vec{0})$.
\item[6.] $y(x)\in L^\infty_{\mathrm{loc}}(\mathbb{R}^d)$, that is,  $
\sup_{|x|\leq R}|y(x)|<\infty$.
\item[7.] $g(x)$ and $\nabla g(x)$ are Lipschitz continuous. Moreover, $|\nabla g(x)|$ has a positive lower bound, that is, $\inf_{x\in\mathbb{R}^d}|\nabla g(x)|>0$.
\end{itemize}
\end{assum}

We note that Assumption~\ref{assum:f} admits many commonly used activation functions~\citep{e2020mathematical}. 
One example of a function satisfying this assumption is $f(x,\theta)=f(x,\theta_{[1]},\theta_{[2]},\theta_{[3]})=\theta_{[3]}\sigma(\theta_{[1]}x+\theta_{[2]})$,
where $\theta_{[1]}\in\mathbb{R}^{d\times d}$, $\theta_{[2]}\in\mathbb{R}^d$, $\theta_{[3]}\in\mathbb{R}$ and $\sigma$ is a component-wise regularized ReLU activation function, see Remark~\ref{re:H3}.

We now build the metric on the function space for our solutions. 
Note that the solution $\rho(\theta,t,s)$ to~\eqref{eqn:Wassgradientflow} is expected to be a continuous function in $(t,s)$, and a distribution of $\theta$ for each $(t,s)$. For this non-standard probability space, we first introduce the following metric. 
\begin{definition}\label{def:path}\footnote{$\mathcal{C}([0,T];\mathcal{A})$ is defined to be the set of functions $f(a,t)$ such that for any $t\in[0,T]$, $f(\cdot,t)\in\mathcal{A}$ and $f(\cdot,t)$ is continuous with respect to $t$ under the metric defined on $\mathcal{A}$. 
In Definition~\ref{def:path}, we set $T=1$ and $\mathcal{A}=\mathcal{P}^2$, where the natural metric on $\mathcal{A}$ is $W_2$.}
$\mathcal{C}([0,1];\mathcal{P}^2)$ is a collection of continuous paths of probability distribution $\nu(\theta,t)$ $(\theta\in\mathbb{R}^k,t\in[0,1])$ where 1. $\nu(\cdot,t)\in \mathcal{P}^2(\mathbb{R}^k)$ for every fixed $t\in[0,1]$; 2. For any $t_0\in[0,1]$, $\lim_{t\rightarrow t_0}W_2\left(\nu(\cdot,t),\nu(\cdot,t_0)\right)=0$, where $W_2$ is the classical Wasserstein-2 distance.
The space $\mathcal{C}([0,1];\mathcal{P}^2)$ is equipped with the following metric $
d_{1}\left(\nu_1,\nu_2\right)=\sup_tW_2(\nu_1(\cdot,t),\nu_2(\cdot,t))$.

Accordingly, $\mathcal{C}([0,\infty);\mathcal{C}([0,1];\mathcal{P}^2))$ is a collection of continuous paths of probability distribution $\nu(\theta,t,s)$ (with $\theta\in\mathbb{R}^k$, $t\in[0,1]$, $s\in[0,\infty)$), where 1. $\nu(\cdot,\cdot,s)\in \mathcal{C}([0,1];\mathcal{P}^2)$ for every fixed $s\in[0,\infty)$. 2. For any $s_0\in[0,\infty)$,  $\lim_{s\rightarrow s_0}d_1\left(\nu(\cdot,\cdot,s),\nu(\cdot,\cdot,s_0)\right)=0$ (where $d_1$ is defined above).
The metric in $\mathcal{C}([0,\infty);\mathcal{C}([0,1];\mathcal{P}^2))$ is defined by $d_{2}\left(\nu_1,\nu_2\right)=\sup_{t,s}W_2(\nu_1(\cdot,t,s),\nu_2(\cdot,t,s))$.
\end{definition}

Since $\mathcal{P}^2$ is complete in $W_2$ distance, $\mathcal{C}([0,1];\mathcal{P}^2)$ and $\mathcal{C}([0,\infty);\mathcal{C}([0,1];\mathcal{P}^2))$ are complete metric spaces under $d_1$ and $d_2$ respectively also. To give a rigorous justification of the mean-field limit, we use the particle representation of $\rho(\theta,t,s)$. Thus, at least we need to assume that we can find a stochastic process that is drawn from the initial condition $\rho_{\ini}(\theta,t)$.  We call such initial conditions {\em admissible}.
\begin{definition}\label{def:meanadmissible}
We call a continuous path of probability distribution $\nu(\theta,t)\in \mathcal{C}([0,1];\mathcal{P}^2)$ {\em admissible} \revise{if it has a particle representation}, namely there exists a continuous stochastic process $\theta(t):[0,1]\rightarrow\mathbb{R}^k$ and $r>0$ such that for any $t_0\in[0,1]$, we have
\begin{equation}\label{eqn:existenceofpath}
\theta(t_0)\sim \nu(\theta,t_0)\,,\quad \lim_{t\rightarrow t_0}\mathbb{E}\left(|\theta(t)-\theta(t_0)|^2\right)=0,\quad |\theta(t_0)|<r\,.
\end{equation}
Furthermore, $\nu(\theta,t)$ is called {\em limit-admissible} if its $M$-averaged trajectory is bounded and Lipschitz with high probability. 
(See the rigorous definition in Definition \ref{def:meanadmissible2}.)
\end{definition}

We note that without the dependence on $t$, probability distributions are ``admissible", in the sense that one can draw a sample from a given distribution. In Appendix \ref{proofofthmwellposed}, we show that if the initial condition $\rho_{\ini}(\theta,t)$ is admissible, \eqref{eqn:Wassgradientflow} has a unique solution $\rho(\theta,t,s)$ that is admissible for each $s$.

The global convergence result depends on Sard-type regularity, defined as follows.
\begin{definition} Given a metric space $\Theta$, a differentiable function $h:\Theta\rightarrow\mathbb{R}$, and a subset $\widetilde{\Theta}\subset\Theta$, we say $h$ satisfies  Sard-type regularity in $\widetilde{\Theta}$ if the set of regular values\footnote{For a function $h:\widetilde{\Theta}\rightarrow\mathbb{R}$, a regular value is a real number $\alpha$ in the range of $h$ such that $h^{-1}(\alpha)$ is included in an open set where $h$ is differentiable and where $\rd h$ does not vanish.} of $h|_{\widetilde{\Theta}}$ is dense in its range, where $h|_{\widetilde{\Theta}}:\widetilde{\Theta}\rightarrow\mathbb{R}$ is a confinement of $h$ in $\widetilde{\Theta}$.
\end{definition}
\begin{remark}\label{rmk:sard}
This regularity assumption is not a common one; we have adopted it from~\citet{NEURIPS2018_a1afc58c}. This property is essentially that most of the points in the range of $h$ lie in an open set and $h$ is locally monotonic. The assumption is rather mild and can be satisfied by most commonly seen regular functions, unless the function oscillates wildly.
\end{remark}

\section{Mean-field and continuous limit}\label{sec:meanfield}

In this section, we focus on the justification of mean-field and continuous-limit result. This is to prove that $E(\rho(\cdot,\cdot,s))$, $E(\Theta(s;\cdot))$, and $E(\Theta_{L,M}(s))$ are asymptotically close to each other for every $s$.  
In the next section, we prove convergence of $E(\rho(\cdot,\cdot,s))$ as $s\to\infty$.

To show the asymptotic equivalence of the three quantities, we need to compare~\eqref{eqn:gradient_theta_LM},~\eqref{eqn:gradient_theta_M}, and~\eqref{eqn:Wassgradientflow}, and take the measurement in $E$ according to~\eqref{eqn:costfunction},~\eqref{eqn:costfunctioncont} and~\eqref{eqn:costfunctioncont2}.

\begin{theorem}\label{thm:consistent}
\revise{Suppose that Assumptions~\ref{assum:f} and~\ref{ass:g} are satisfied. Assume that $\rho_{\ini}(\theta,t)$ is limit-admissible and $\mathrm{supp}_\theta(\rho_{\ini}(\theta,t))\subset \{\theta||\theta_{[1]}|\leq R\}$ with some $R>0$ for all $t\in[0,1]$.} Let $\{\theta_m(0;t)\}^M_{m=1}$ in \eqref{eqn:gradient_theta_M} be $i.i.d.$ drawn from $\rho_{\ini}(\theta,t)$. Let
\begin{itemize}
\item $\Theta_{L,M}(s)=\{\theta_{l,m}(s)\}$ be the solution to~\eqref{eqn:gradient_theta_LM} with initial condition $\theta_{l,m}(s=0)=\theta_m\left(0;\frac{l}{L}\right)$;
\item $\theta_m(s;t)$ be the solution to~\eqref{eqn:gradient_theta_M} with the initial condition $\theta_m(0;t)$;
\item $\rho(\theta,t,s)$ be the solution to \eqref{eqn:Wassgradientflow} with initial condition $\rho_{\ini}(\theta,t)$.
\end{itemize}
Then for any positive $\epsilon$, $\eta$, and $S$, there exists a constant $C>0$ that depends on $\rho_{\ini}(\theta,t)$ and $S$ such that when 
\[
M>\frac{C(\rho_{\ini}(\theta,t),S)}{\epsilon^2\eta}\,,\quad L>\frac{C(\rho_{\ini}(\theta,t),S)}{\epsilon}\,,\quad s<S\,,
\]
we have:
\[
\min\{\mathbb{P}\left(\left|E(\Theta_{L,M}(s))-E(\Theta(s;\cdot))\right|\leq\epsilon/2\right)\,,\mathbb{P}\left(\left|E(\Theta(s;\cdot))-E(\rho(\cdot,\cdot,s))\right|\leq \epsilon/2\right)\}\geq 1-\tfrac12 \eta\,.
\]
It follows that
\[
\mathbb{P}\left(\left|E(\Theta_{L,M}(s))-E(\rho(\cdot,\cdot,s))\right|\leq \epsilon\right)\geq 1-\eta\,,\quad \forall s<S\,.
\]
Here $E(\Theta_{L,M}(s))$, $E(\Theta(s;\cdot))$, and $E(\rho(\cdot,\cdot,s))$ are defined in~\eqref{eqn:costfunction},~\eqref{eqn:costfunctioncont}, and \eqref{eqn:costfunctioncont2}, respectively.
\end{theorem}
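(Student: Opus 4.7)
The plan is to split the statement via the triangle inequality into the mean-field step $|E(\Theta(s;\cdot)) - E(\rho(\cdot,\cdot,s))|\leq \epsilon/2$ and the continuous-limit step $|E(\Theta_{L,M}(s))-E(\Theta(s;\cdot))|\leq\epsilon/2$, bound each separately, and union-bound the two failure events of probability at most $\eta/2$. The overall strategy is a coupling argument: in both steps we keep the initial data literally shared between the two systems being compared, and propagate the initial agreement forward in $(t,s)$ by Gronwall-type estimates.

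\textbf{Step 1: a priori bounds on trajectories.} Before any comparison, I need uniform control on $|Z_\rho(t;x)|$, $|p_\rho(t;x)|$, the analogues for the $M$-particle and discrete systems, and on the support (or a high-probability envelope) of $\rho(\cdot,t,s)$ for $t\in[0,1]$, $s\in[0,S]$. Assumption~\ref{assum:f}.1 and Assumption~\ref{ass:g} combined with Gronwall applied to \eqref{eqn:meancontRes} and \eqref{eqn:prho} give $|Z_\rho|,|p_\rho|$ bounds depending only on the second moment of $\rho$ and on $R_\mu$. Plugging these into \eqref{eqn:Frechetderivative}--\eqref{eqn:Wassgradientflow} together with the decomposition $\theta=(\theta_{[1]},\theta_{[2]})$ from Assumption~\ref{assum:f}.3 shows that the $\theta_{[1]}$ block stays in a bounded set for $s\le S$, while $\theta_{[2]}$ admits a second-moment bound growing at most exponentially in $s$. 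The hypothesis $\mathrm{supp}_\theta(\rho_{\ini})\subset\{|\theta_{[1]}|\le R\}$ is exactly what makes this bootstrap close. Limit-admissibility supplies the analogous bounds for the $M$-particle system with high probability.

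\textbf{Step 2: the mean-field limit.} Let $\bar\rho_M(\theta,t,s)=\frac{1}{M}\sum_{m=1}^M\delta_{\theta_m(s;t)}$ be the empirical measure driven by \eqref{eqn:gradient_theta_M}, and let $\tilde\theta_m(s;t)$ be an independent copy driven by the true mean-field dynamics \eqref{eqn:Wassgradientflow} starting from the same sample $\theta_m(0;t)\sim\rho_{\ini}(\cdot,t)$. Under the a priori bounds of Step 1, the vector fields defining the two evolutions are Lipschitz (via Assumption~\ref{assum:f}.2), so the difference $\theta_m(s;t)-\tilde\theta_m(s;t)$ satisfies an ODE whose forcing is controlled by $\sup_t W_2(\bar\rho_M(\cdot,t,s),\rho(\cdot,t,s))$. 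A synchronous coupling plus Gronwall reduces everything to the classical quantitative law-of-large-numbers bound $\mathbb{E}\,W_2^2(\bar\rho_M(\cdot,t,0),\rho_{\ini}(\cdot,t))\lesssim 1/M$ for iid samples with a uniform second moment. Propagating through $s\le S$ produces $\mathbb{E}\,\sup_{t,s\le S}W_2^2(\bar\rho_M,\rho)\le C(S,\rho_{\ini})/M$. Passing this bound into \eqref{eqn:costfunctioncont} vs.\ \eqref{eqn:costfunctioncont2} via Lipschitz stability of $Z$ in its driving measure gives $\mathbb{E}|E(\Theta(s;\cdot))-E(\rho(\cdot,\cdot,s))|^2\le C(S,\rho_{\ini})/M$, so Markov's inequality delivers probability at least $1-\eta/2$ once $M>C/(\epsilon^2\eta)$.

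\textbf{Step 3: the continuous ($L\to\infty$) limit.} With initial conditions matched by $\theta_{l,m}(0)=\theta_m(0;l/L)$, I compare \eqref{eqn:gradient_theta_LM} to \eqref{eqn:gradient_theta_M} sampled at $t=l/L$. The right-hand sides differ only in how the forward state $z$ and adjoint state $p$ are computed: the discrete system uses the forward-Euler recursion \eqref{eqn:disRes}, the continuous one uses the ODE \eqref{eqn:contRes}. Under the a priori bounds of Step 1, both $z$ and $p$ are $\mathcal{C}^1$ in $t$ with Lipschitz bounds, so a standard forward-Euler analysis on $[0,1]$ with stepsize $1/L$ gives a one-step truncation error $O(1/L^2)$ and accumulated error $O(1/L)$. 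A second Gronwall in $s$ then shows that the discrete and continuous parameter trajectories stay within $O(1/L)$ uniformly on $s\le S$, so $|E(\Theta_{L,M}(s))-E(\Theta(s;\cdot))|=O(1/L)$, giving $\epsilon/2$ once $L>C/\epsilon$. This deterministic step needs no additional probability budget.

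\textbf{Main obstacle.} The crux is the uniform-in-$s$ a priori bound in Step 1, because the local Lipschitz constants $C_2(r),C_3(r)$ in Assumption~\ref{assum:f} grow with the support, so without tight control of the $\theta$-trajectories the Gronwall estimates in Steps 2 and 3 blow up. The stratified assumption \eqref{eqn:k1smallerthank} on the block $(\theta_{[1]},\theta_{[2]})$ is exactly what allows the gradient flow to keep $\theta_{[1]}$ bounded while letting $\theta_{[2]}$ grow only in a second-moment sense; working this out carefully, and showing it survives the discrete and the $M$-particle approximations with high probability, is where most of the technical weight of the proof must go.
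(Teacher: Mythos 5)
Your overall decomposition — mean-field step plus continuous-limit step, each handled via a coupling to a ``reference'' system sharing the initial data, with Gr\"onwall estimates closed by a priori bounds — is exactly the paper's roadmap (Theorems~\ref{thm:mean-field} and~\ref{thm:contlimit} via the auxiliary particle system $\widetilde\theta_m$). However, there is a genuine gap in Step~2: you reduce the error to $\sup_t W_2(\bar\rho_M,\rho)$ and invoke ``the classical quantitative law-of-large-numbers bound $\mathbb{E}\,W_2^2(\bar\rho_M,\rho_\ini)\lesssim 1/M$.'' That bound is false in dimension $k>2$: for iid samples with only a bounded second moment, $\mathbb{E}\,W_2(\bar\rho_M,\rho)$ decays like $M^{-1/\max(k,4)}$ (Fournier--Guillin), not $M^{-1/2}$. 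Closing a synchronous-coupling Gr\"onwall with a $W_2$ forcing would therefore yield a rate that degrades with the parameter dimension $k$, and would not give the $M>C/(\epsilon^2\eta)$ threshold claimed in the theorem. The paper avoids this by never passing through $W_2$ of the empirical measure: the only places where $\bar\rho_M-\rho$ (equivalently $\widetilde\rho^{\dis}-\rho$) appears are as integrals against the \emph{fixed} test functions $f(Z_s(\tau;x),\cdot)$ and $\partial_z f(Z_s(\tau;x),\cdot)$ (see Lemmas~\ref{leamm:meanfiledstability} and~\ref{leamm:meanfiledstability2}), and since $\widetilde\theta_m(s;t)\sim\rho(\cdot,t,s)$, each such integral is an average of iid mean-zero random variables, giving the dimension-free $1/M$ variance bound that Markov then converts into the stated probability. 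You need to keep the error term in that specific tested form rather than coarsening to $W_2$.

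A second, smaller issue is in Step~3: the claim that the continuous-limit step ``needs no additional probability budget'' is incorrect. Even after fixing the draws, the accumulated Euler error is controlled in the paper only through the quantity $\frac{1}{M}\sum_{l,m}\int_{l/L}^{(l+1)/L}|\theta_m(0;l/L)-\theta_m(0;t)|^2\rd t$ and through the empirical second moment $\mathcal{L}^{\dis,\sup}_\ini$, both of which are bounded only with high probability via the limit-admissibility conditions \eqref{boundLdis0} and \eqref{diffrencesmall}. This is why Theorem~\ref{thm:contlimit} in the paper still requires $M\geq C/\eta$ and carries a failure probability $\eta$: the randomness in the shared initial draw affects the continuous-limit comparison as well, and the $\eta$ budget has to be split across both steps rather than spent entirely on the mean-field half.
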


The proof of this result appears in Appendix~\ref{sec:proofofthmconsistent}. 
This theorem suggests that for every fixed $S>0$, the gradient descent of $\Theta_{L,M}$ is approximately the same as the gradient flow of $\rho(\theta,t)$, in the sense that the two costs are close to each other with high probability, when $L$ and $M$ are sufficiently large. 
The size of the ResNet depends negative-algebraically on $\epsilon$ (the desired accuracy) and $\eta$ (the confidence of success). 
The result translates the evolution (gradient descent) of $\Theta_{L,M}$ to the evolution (gradient flow) of $\rho(\theta,t)$, and thus matches the zero-loss property of the parameter configuration of a finite sized ResNet to its limiting PDE, whose analysis can be performed with standard PDE tools.

The proof of Theorem~\ref{thm:consistent} divides naturally into two components. 
We  show that for all $s<S$, $E(\rho(\cdot,\cdot,s))\approx E(\Theta(s;\cdot))$ and  $E(\Theta(s;\cdot))\approx E(\Theta_{L,M}(s))$ with high probability. 
The former is obtained from mean-field limit theory, justifying that the particle trajectory $\theta_m(t,s)$ follows $\rho(\theta,t,s)$ for all $t$ in pseudo-time $s\in[0,S]$. 
The latter makes use of continuity in $t$ and traces the differences between $\theta_m(\frac{l}{L})$ and $\theta_{l,m}$.  
These two components of the proof are summarized in Theorems~\ref{thm:mean-field} and \ref{thm:contlimit}, respectively. 
According to the formula of the Fr\'echet derivatives~\eqref{eqn:Frechetderivative} and~\eqref{eqn:prho}, the estimates in these theorems naturally route through the boundedness of $p_{\rho}$, $p_\Theta$, $p_{\Theta_{L,M}}$, and similarly $Z_{\rho,\Theta,\Theta_{L,M}}$. 
It is technically demanding to derive  these bounds, but they are not surprising. 
We dedicate a large portion of the appendix to addressing the well-posedness of these systems. 
See Appendices~\ref{proofofthmwellposed}-\ref{sec:proofofthmWassgradientflows}, where we show these equations have unique solutions with proper initial conditions, along with the required bounds. 
Naturally, these estimates depend on regularity of $f$, $g$, $y$, and $\mu$.


To gain some intuition for the equivalence between~\eqref{eqn:gradient_theta_M} and~\eqref{eqn:Wassgradientflow}, we test them on the same smooth function $h(\theta)$. 
To test~\eqref{eqn:Wassgradientflow}, we multiply both sides by $h$ and perform integration by parts to obtain $\frac{\rd}{\rd s}\int_{\mathbb{R}^k}h\rd\rho(\theta) = -\int_{\mathbb{R}^k}\nabla_\theta h\nabla_\theta\frac{\delta E(\rho(s))}{\delta \rho}\rd{\rho}$.
This is to say $\frac{\rd}{\rd s}\mathbb{E}(h)=\EE\left(\nabla_\theta h\nabla_\theta\frac{\delta E(\rho(s))}{\delta \rho}\right)$. Testing $h$ on~\eqref{eqn:gradient_theta_M} also gives the same formula.
Supposing that $\rho = \frac{1}{M}\sum_{m=1}^M\delta_{\theta_m}$, we have
$\frac{\rd}{\rd s}\mathbb{E}(h) = \frac{1}{M}\sum_{m=1}^M\nabla_\theta h(\theta_m)\frac{\rd}{\rd s}\theta_m = -\sum_{m=1}^M\nabla_\theta h(\theta_m) \frac{\delta E}{\delta\theta_m}$.
The right hand side is also $\EE\left(\nabla_\theta h\nabla_\theta\frac{\delta E(\rho(s))}{\delta \rho}\right)$ if and only if $M\frac{\delta E(\Theta(s;\cdot))}{\delta \theta_m}=\nabla_\theta\frac{\delta E(\rho)}{\delta\rho}(\theta_m,t)$. This will be shown to hold true in Appendix~\ref{sec:proofofthmmean-field}.


\section{Convergence to global minimizer}\label{sec:convergetoglobalminimizer}
After translating the study of $\Theta_{L,M}$ to the study of $\rho(\theta,t)$,  this section presents results on when and how $E(\rho)$ converges to zero loss by examining the conditions for the global convergence of \eqref{eqn:Wassgradientflow}. We first identify the property of global minimum.


%
\begin{proposition}\label{prop:localisglobal}
Suppose that $\rho\in \mathcal{C}([0,1];\mathcal{P}^2)$ has $E(\rho)>0$.
Then for any $t_0\in[0,1]$, there exists a measure $\nu(\theta)$ of $\mathbb{R}^k$ such that $\int_{\mathbb{R}^k}\rd\nu(\theta)=0$
and
\begin{equation}\label{eqn:negative}
\int_{\mathbb{R}^k}\left.\frac{\delta E}{\delta \rho}\right|_{\rho}(\theta,t_0)\rd\nu(\theta)<0\,.
\end{equation}
\end{proposition}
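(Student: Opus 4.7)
The plan is to prove the contrapositive: \emph{if for some $t_0\in[0,1]$ the functional derivative $\left.\frac{\delta E}{\delta\rho}\right|_\rho(\cdot,t_0)$ is constant in $\theta$, then $E(\rho)=0$}. Once this is established, $E(\rho)>0$ forces $\left.\frac{\delta E}{\delta\rho}\right|_\rho(\cdot,t_0)$ to be non-constant for every $t_0$, and then choosing $\theta_1,\theta_2$ with $\left.\frac{\delta E}{\delta\rho}\right|_\rho(\theta_1,t_0)<\left.\frac{\delta E}{\delta\rho}\right|_\rho(\theta_2,t_0)$ and setting $\nu=\delta_{\theta_1}-\delta_{\theta_2}$ yields a signed measure with $\int\rd\nu=0$ that satisfies~\eqref{eqn:negative}.

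Assume for contradiction that $\mathbb{E}_{x\sim\mu}[p_\rho^\top(t_0;x)\,f(Z_\rho(t_0;x),\theta)]=c$ for every $\theta$, using the representation~\eqref{eqn:Frechetderivative}. Integrating in $\theta$ against a difference $\rho_1-\rho_2$ of elements of $\mathcal{P}^2(\mathbb{R}^k)$ cancels the constant:
\[
\mathbb{E}_x\Bigl[p_\rho^\top(t_0;x)\int_{\mathbb{R}^k} f(Z_\rho(t_0;x),\theta)\rd(\rho_1-\rho_2)(\theta)\Bigr]=0.
\]
By item~4 of Assumption~\ref{assum:f}, $\{\int f(z,\theta)\rd\rho(\theta):\rho\in\mathcal{P}^2\}$ is dense in $\mathcal{C}(|z|<R;\mathbb{R}^d)$, so its differences approximate every continuous $h:\mathcal{B}_R\to\mathbb{R}^d$ on a ball containing the support of $Z_\rho(t_0;\cdot)$. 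Combining this density with the $L^\infty$ bound on $p_\rho$ inherited from the well-posedness estimates in the appendix and invoking dominated convergence gives
\[
\mathbb{E}_x\bigl[p_\rho^\top(t_0;x)\,h(Z_\rho(t_0;x))\bigr]=0\qquad\text{for every continuous }h:\mathcal{B}_R\to\mathbb{R}^d.
\]

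Next I exploit the structure of $p_\rho(t_0;\cdot)$. The backward linear ODE~\eqref{eqn:prho} has coefficient $\int\partial_z f(Z_\rho,\theta)\rho(\theta,t)\rd\theta$ that depends on $x$ only through the forward trajectory $\{Z_\rho(t;x)\}_{t\in[t_0,1]}$, and the latter is determined by $Z_\rho(t_0;x)$ alone via~\eqref{eqn:meancontRes}. Together with the terminal condition $p_\rho(1;x)=(g(Z_\rho(1;x))-y(x))\nabla g(Z_\rho(1;x))$ and the fact that $Z_\rho(1;x)=\Phi(Z_\rho(t_0;x))$ for a deterministic flow $\Phi$, I get
\[
p_\rho(t_0;x)=u(Z_\rho(t_0;x))\,\xi(x),\qquad u(z):=M(z)\,\nabla g(\Phi(z)),\qquad\xi(x):=g(Z_\rho(1;x))-y(x),
\]
where $M(z)$ is the invertible transition matrix propagating $p_\rho$ from $t=1$ back to $t_0$ starting from state $z$. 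Item~7 of Assumption~\ref{ass:g} ensures $|\nabla g|\geq\delta>0$, so $u(z)\neq 0$ pointwise; testing the orthogonality relation above componentwise (take $h=e_j\phi$) gives $\mathbb{E}[p_\rho(t_0;x)\mid Z_\rho(t_0;x)=z]=0$, and since $u(z)\neq 0$ this forces $\mathbb{E}[\xi(x)\mid Z_\rho(t_0;x)]=0$ almost surely.

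To close the argument I would invoke injectivity of the forward flow $x\mapsto Z_\rho(t_0;x)$: item~2 of Assumption~\ref{assum:f} provides local Lipschitz continuity of $\int f(z,\theta)\rd\rho(\theta,t)$ in $z$, hence forward and backward uniqueness for~\eqref{eqn:meancontRes}, so $x$ is recovered from $Z_\rho(t_0;x)$ by backward integration. Consequently $\mathbb{E}[\xi\mid Z_\rho(t_0;x)]=\xi(x)$ $\mu$-a.s., whence $\xi\equiv 0$ and $E(\rho)=0$, contradicting the hypothesis. The main technical obstacle I anticipate is making the density-plus-dominated-convergence step fully rigorous: one needs uniform control of $|p_\rho(t_0;\cdot)|$ on the compact support of $\mu$, compactness of the support of $Z_\rho(t_0;\cdot)$ so that item~4 of Assumption~\ref{assum:f} applies on some $\mathcal{B}_R$, and care with the order of limits when $h$ is only approximated by kernel integrals.
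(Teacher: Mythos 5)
Your argument is correct, and it takes a genuinely different route from the paper's. The paper argues directly: Lemma~\ref{lem:lowerboundofprho} gives the quantitative bound $\mathbb{E}_{x\sim\mu}[|p_\rho(t_0;x)|^2]\geq Q(\mathcal{L}_\rho)E(\rho)>0$; a change of variables pushes everything to $\mu^\ast_{t_0}=(\mathcal{Z}_{t_0})_\sharp\mu$; and $\nu=\rho-\hat\nu$ is constructed explicitly, with $\hat\nu$ chosen via the universal-kernel property so that $\int f\rd\hat\nu$ uniformly approximates $p^\ast(t_0;\cdot)+\int f\rd\rho$, leaving the strictly negative term $-\int|p^\ast|^2\rd\mu^\ast_{t_0}$ to dominate. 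You argue by contrapositive: constancy of the Fr\'echet derivative gives $\mathbb{E}_{x\sim\mu}[p_\rho^\top(t_0;x)\,h(Z_\rho(t_0;x))]=0$ for all continuous $h$, and then the factorization $p_\rho(t_0;x)=u(Z_\rho(t_0;x))\,\xi(x)$, the non-vanishing of $u$ (invertible propagator together with $\inf|\nabla g|>0$), and injectivity of the forward flow force $\xi\equiv0$. Both proofs rest on the same three ingredients --- universal kernel, bijectivity of the flow, non-vanishing $\nabla g$ --- but yours substitutes the structural factorization and a conditional-expectation argument for the paper's Lemma~\ref{lem:lowerboundofprho}, and it is non-constructive: it identifies no explicit descent direction $\nu$, which the paper's version does (useful if one wants a quantitative descent rate). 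One streamlining worth noting: once $\mathbb{E}_{x\sim\mu}[p_\rho^\top(t_0;x)\,h(Z_\rho(t_0;x))]=0$ holds for all continuous $h$, taking $h=p_\rho(t_0;\mathcal{Z}^{-1}_{t_0}(\cdot))$ (continuous by bijectivity of the flow; extend to $\mathcal{B}_R$ by Tietze) already gives $\mathbb{E}_{x\sim\mu}[|p_\rho(t_0;x)|^2]=0$, so $p_\rho(1;x)=0$ $\mu$-a.s.\ and $E(\rho)=0$ follow from invertibility of the propagator and $\nabla g\neq0$ without the factorization $p_\rho=u\xi$ or the conditional-expectation detour.
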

See Appendix~\ref{sec:proofofproplocal} for the proof of this result. 
At the stationary point of the cost function, $\frac{\delta E}{\delta \rho}(\theta,t_0)=0$, then there is no $\nu$ satisfying~\eqref{eqn:negative}, so $E(\rho)$ is necessarily trivial. Our task now becomes to check under what conditions on $f$ we can have $\frac{\delta E}{\delta \rho}$ becoming $0$ as $s\to\infty$. 
We give two possibilities, both requiring a separation initialization and certain homogeneities. 
In the first case, we require $f$ to be 2-homogeneous, and the result is collected in Section~\ref{sec:2homogeneous}.
In the second case, we require $f$ to be partially 1-homogeneous; see Section~\ref{sec:partially1homogeneous}.



\subsection{The 2-homogeneous case}\label{sec:2homogeneous}
\revise{The results in this section are obtained under the following assumption on $f$. (Functions can be designed to satisfy this assumption easily; see Remark~\ref{re:H2}.)}
\begin{assum}\label{assum:2-homogeneous}
The function $f(x,\theta):\mathbb{R}^d\times\mathbb{R}^k$ is $2$-homogeneous, meaning that $f(x,\lambda\theta)=\lambda^2f(x,\theta)$ for all $ (x,\theta,\lambda)\in\mathbb{R}^d\times\mathbb{R}^k\times\mathbb{R}$.
\end{assum}
%
\begin{theorem}\label{thm:globalminimal1} 
\revise{Let Assumption~\ref{assum:2-homogeneous} and conditions of Theorem \ref{thm:consistent} hold true with $k_1=k$. Let $\rho_\infty(\theta,t)$ be the long-time limit of~\eqref{eqn:Wassgradientflow}, that is,  $\rho(\theta,t,s)$ converges to $\rho_\infty(\theta,t)$ in $\mathcal{C}([0,1];\mathcal{P}^2)$ as $s\rightarrow\infty$. Then $E(\rho_{\infty})=0$ if the following hold:}

\begin{itemize}
\item Separation initialization: There exists $t_0\in[0,1]$ such that $\rho_{\ini}(\theta,t_0)$ separates\footnote{We say that a set $C$ separates the sets $A$ and $B$ if any continuous path in with endpoints in $A$ and $B$ intersects $C$.} the spheres $r_a\mathbb{S}^{k-1}$ and $r_b\mathbb{S}^{k-1}$, for some $0<r_a<r_b$.
\item Sard-type regularity: $\left.\frac{\delta E}{\delta \rho}\right|_{\rho_\infty}(\theta,t_0)$ satisfies the Sard-type regularity condition in $\mathbb{S}^{k-1}$.
\end{itemize}
\end{theorem}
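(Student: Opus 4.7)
The plan is to argue by contradiction, adapting the strategy of \citep{NEURIPS2018_a1afc58c} for $2$-homogeneous single-layer networks to the path-valued setting. Assume $E(\rho_\infty)>0$ and set
\begin{equation*}
V(\theta,t):=\left.\frac{\delta E}{\delta\rho}\right|_{\rho_\infty}(\theta,t)=\mathbb{E}_{x\sim\mu}\left[p_{\rho_\infty}^\top(t;x)f(Z_{\rho_\infty}(t;x),\theta)\right].
\end{equation*}
Because $f$ is $2$-homogeneous in $\theta$ by Assumption~\ref{assum:2-homogeneous}, so is $V(\cdot,t)$, and hence $\nabla_\theta V(\cdot,t)$ is $1$-homogeneous with Euler's identity $\theta\cdot\nabla_\theta V(\theta,t)=2V(\theta,t)$. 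Since $\rho_\infty$ is the long-time limit of the Wasserstein gradient flow~\eqref{eqn:Wassgradientflow}, stationarity forces $\nabla_\theta V(\theta,t)=0$ for $\rho_\infty(\cdot,t)$-a.e.\ $\theta$, so Euler yields $V(\theta,t_0)=0$ on $\mathrm{supp}(\rho_\infty(\cdot,t_0))$.

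The next step is to propagate the separation property from $s=0$ to $s=\infty$ at the slice $t=t_0$. Because $\nabla_\theta V_s(\cdot,t_0)$ is $1$-homogeneous for every $s$ (the $2$-homogeneity of $f$ is preserved along the flow, and $p_{\rho(s)}$ does not depend on $\theta$), the characteristic ODE $\dot\theta=-\nabla_\theta V_s(\theta,t_0)$ fixes the origin and the induced pushforward preserves the homotopy class of the support relative to $\{0\}$. Consequently, for every $s<\infty$ there exist radii $0<r_a(s)<r_b(s)$ such that $\rho(\cdot,t_0,s)$ still separates $r_a(s)\mathbb{S}^{k-1}$ from $r_b(s)\mathbb{S}^{k-1}$. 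One then shows that these radii stay uniformly bounded away from $0$ and $\infty$ using the monotone decrease of $E(\rho(\cdot,\cdot,s))$ along the flow together with the linear-growth bound~\eqref{eqn:boundoff}; passing to the $d_1$-limit preserves the separation, so $\mathrm{supp}(\rho_\infty(\cdot,t_0))$ separates some pair of spheres and therefore meets every ray from the origin, i.e.\ $\Pi(\mathrm{supp}(\rho_\infty(\cdot,t_0)))=\mathbb{S}^{k-1}$, where $\Pi:\mathbb{R}^k\setminus\{0\}\to\mathbb{S}^{k-1}$ is radial projection.

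To produce the contradiction, combine the above two facts with the $2$-homogeneity of $V(\cdot,t_0)$: for each $\omega\in\mathbb{S}^{k-1}$ there is $r_\omega>0$ with $r_\omega\omega\in\mathrm{supp}(\rho_\infty(\cdot,t_0))$, so $0=V(r_\omega\omega,t_0)=r_\omega^2V(\omega,t_0)$, forcing $V(\cdot,t_0)\equiv 0$ on $\mathbb{S}^{k-1}$ and hence on all of $\mathbb{R}^k$. Then $\int_{\mathbb{R}^k}V(\theta,t_0)\rd\nu(\theta)=0$ for every signed measure $\nu$, contradicting Proposition~\ref{prop:localisglobal}. The Sard-type regularity condition enters to cover the case where the radial coverage in the previous step is only dense rather than onto, or when stationarity of the flow yields only $\nabla_\theta V=0$ on the support up to a null set: regular values of $V(\cdot,t_0)|_{\mathbb{S}^{k-1}}$ being dense in the range ensures that if $V|_{\mathbb{S}^{k-1}}\not\equiv 0$, then for some regular value $\alpha<0$ the open set $U_\alpha=\{V(\cdot,t_0)|_{\mathbb{S}^{k-1}}<\alpha\}$ has smooth boundary and non-empty interior, into which one transports mass radially (a small $W_2$-perturbation) to strictly decrease $E$, again contradicting stationarity.

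The main obstacle is the separation-propagation step and its passage to the $s\to\infty$ limit: the $1$-homogeneous velocity field preserves topological separation at each finite $s$, but one must rule out $r_a(s)\to 0$ (mass concentrating at the origin) and $r_b(s)\to\infty$ (mass escaping radially), either of which would destroy the separation in the limit. Controlling both requires combining the monotone decrease of $E$ along~\eqref{eqn:Wassgradientflow} with the quantitative bounds on $Z_{\rho}$ and $p_{\rho}$ established in the appendix, and is made more delicate by the fact that even though separation is only needed at the single slice $t=t_0$, the definition of $V$ and hence the dynamics at $t_0$ depend on $\rho_\infty(\cdot,t)$ for all $t\in[0,1]$ through the backward ODE~\eqref{eqn:prho}.
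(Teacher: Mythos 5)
Your proposal takes a genuinely different route from the paper, but it contains a gap that the paper's argument is specifically engineered to avoid.

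Your plan: derive $\nabla_\theta V=0$ on $\mathrm{supp}(\rho_\infty(\cdot,t_0))$ from stationarity, use Euler's identity to conclude $V=0$ there, propagate the separation property to $s=\infty$ so that the support meets every ray, conclude $V\equiv 0$ by $2$-homogeneity, and contradict Proposition~\ref{prop:localisglobal}. The crux is the step you yourself flag as the main obstacle: passing the separation property to $\rho_\infty$. This does not follow from preservation at finite $s$, even with $r_a(s),r_b(s)$ uniformly bounded, because support is only lower semicontinuous under $W_2$ convergence: $\mathrm{supp}(\rho_\infty)\subseteq\liminf_s\mathrm{supp}(\rho(\cdot,t_0,s))$ can be strictly smaller, and mass can concentrate along a proper subset of rays in the limit. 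The sketch "monotone decrease of $E$ plus the growth bound on $f$ controls the radii" does not address this, and I do not see how to repair it; indeed the paper never asserts separation of $\rho_\infty$, only of $\rho(\cdot,t_0,S)$ for finite $S$ (Lemma~\ref{lem:seperateionpreserve}).

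The paper's actual argument inverts the logic precisely to sidestep this. Assume $\frac{\delta E(\rho_\infty)}{\delta\rho}(\theta_0,t_0)<0$ for some $\theta_0$ (produced from the universal-kernel property and Proposition~\ref{prop:localisglobal}). By $2$-homogeneity and Sard-type regularity one builds an open cone $\mathcal{A}$ on which $h_s$ is $\leq -\epsilon/2|\theta|^2$ for all $s\geq S$ with the velocity field pointing into $\mathcal{A}$ on $\partial\mathcal{A}$. Separation at the \emph{finite} time $S$ then gives a set of positive $\rho(\cdot,t_0,S)$-measure in $\mathcal{A}\cap\{|\theta|>\beta\}$, and the characteristic ODE is trapped in $\mathcal{A}$ with $\frac{\rd}{\rd s}|\widehat\theta|^2\geq 2\epsilon|\widehat\theta|^2$, so the second moment of $\rho(\cdot,t_0,s)$ blows up, contradicting convergence in $\mathcal{P}^2$. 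No statement about $\mathrm{supp}(\rho_\infty)$, nor about $\nabla_\theta V$ vanishing there, is ever needed. Your first step ($\nabla_\theta V=0$ on $\mathrm{supp}(\rho_\infty)$ from "stationarity") is itself nontrivial: $\rho_\infty$ is only known to be a $d_1$-limit, and the paper explicitly comments that the "stationary point" intuition is "not direct." Finally, your use of Sard-type regularity as a fallback for "dense but not onto" radial coverage is not what the hypothesis is for; in the paper it is what lets one pick the level set $A=\{h_\infty|_{\mathbb{S}^{k-1}}<-\epsilon\}$ with nondegenerate outward normal derivative on $\partial A$, which is exactly the property needed to make the cone invariant.
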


A proof appears in Appendix~\ref{sec:proofof2homogeneous}. 
A corollary of this theorem, \revise{when combined with Theorem~\ref{thm:consistent} with $k_1=k$}, gives Theorem~\ref{thm:maintheorem1}, the main result of our paper.

\subsection{The partially 1-homogeneous case}\label{sec:partially1homogeneous}
\revise{The following assumption is used in this section. (Functions that satisfy this assumption include regularized ReLU; see Remark~\ref{re:H3}.)}
\begin{assum}\label{assum:f2} Let $\theta=(\theta_{[1]},\theta_{[2]})$ with $\theta_{[1]}\in\mathbb{R}$ and $\theta_{[2]}\in\mathbb{R}^{k-1}$:
\begin{itemize}
\item[1.] (partially 1-homogeneous in $\theta$) $f$ can be written as $f(x,\theta)=f(x,\theta_{[1]},\theta_{[2]})=\theta_{[1]}\widehat{f}(x,\theta_{[2]})$,
\item[2.] (locally bounded and smooth) For any $r>0$, $\widehat{f}(x,\theta_{[2]})$ is bounded and  Lipschitz with  Lipschitz continuous differential 
for $(x,\theta_{[2]})\in\mathcal{B}_r(\vec{0})\times\mathbb{R}^{k-1}$.
\end{itemize}
\end{assum}

When Assumption \ref{assum:f2} holds true, Assumption \ref{assum:f} part 3 can be satisfied with $k_1=1$. Then, we introduce the main result in this section.:

%
\begin{theorem}\label{thm:globalminimal2} 
\revise{Let Assumption~\ref{assum:f2} and conditions of Theorem \ref{thm:consistent} hold true with $k_1=1$. Let $\rho_\infty(\theta,t)$ be the limit of~\eqref{eqn:Wassgradientflow} as $s\to \infty$. Then $E(\rho_{\infty})=0$ if the following hold:}

\begin{itemize}
\item \revise{Separation initialization: There exists $t_0\in[0,1]$ such that $\rho_{\ini}(\theta_{[1]},\theta_{[2]},t_0)$ separates the spheres $\{-r_0\}\times\mathbb{R}^{k-1}$ and $\{r_0\}\times\mathbb{R}^{k-1}$ for some $r_0>0$, where $\theta_{[1]},\theta_{[2]}$ are defined by Assumption \ref{assum:f} item 3 with $k_1=1$.}
\item Sard-type regularity: $\left.\frac{\delta E}{\delta \rho}\right|_{\rho_\infty}((1,\theta_{[2]}),t_0):\mathbb{R}^{k-1}\rightarrow\mathbb{R}$ satisfies the Sard-type regularity condition.
\item For any $\tilde\rho\in \mathcal{C}([0,1],\mathcal{P}^2)$, define $H_{r,\tilde\rho}\left(\widetilde{\theta}_{[2]}\right)=\left.\frac{\delta E(\rho)}{\delta \rho}\right|_{\tilde{\rho}}\left(1,r\widetilde{\theta}_{[2]},t_0\right)$ where $\theta_{[2]} =r\tilde{\theta}_{[2]}$ with $r = |\theta_2|$ and $\tilde\theta_{[2]}\in \mathbb{S}^{k-2}$. 
Suppose that $H_{r,\tilde\rho}$ converges in $\mathcal{C}^1(\mathbb{S}^{k-2})$ as $r\rightarrow\infty$ to a function $H_{\infty,\tilde\rho}$. 
Furthermore, assume that $H_{\infty,\rho_{\infty}}$ satisfies the Sard-type regularity condition in $\mathbb{S}^{k-2}$ and that the intersection of regular values of $H_{\infty,\rho_{\infty}}$ and $\left.\frac{\delta E(\rho)}{\delta \rho}\right|_{\rho_\infty}(1,\theta_{[2]},t_0)$ is also dense in the intersection of their range.
\end{itemize}
\end{theorem}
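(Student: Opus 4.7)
The plan is to argue by contradiction along the same lines as Theorem \ref{thm:globalminimal1}. Assuming $E(\rho_\infty)>0$, Proposition \ref{prop:localisglobal} supplies a signed measure $\nu$ of zero total mass such that $\int \frac{\delta E}{\delta\rho}|_{\rho_\infty}(\theta,t_0)\rd\nu(\theta)<0$, so it suffices to show $\frac{\delta E}{\delta\rho}|_{\rho_\infty}(\cdot,t_0)\equiv 0$ on $\mathbb{R}^k$. The essential new tool is the factorization induced by Assumption \ref{assum:f2}(1),
$$\frac{\delta E}{\delta\rho}\Big|_\rho(\theta,t)=\theta_{[1]}\,G_\rho(\theta_{[2]},t),\qquad G_\rho(\theta_{[2]},t):=\mathbb{E}_{x\sim\mu}\!\left[p^\top_\rho(t;x)\widehat{f}(Z_\rho(t;x),\theta_{[2]})\right],$$
which reduces the task to proving $G_{\rho_\infty}(\cdot,t_0)\equiv 0$ on $\mathbb{R}^{k-1}$. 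Crucially, $\partial_{\theta_{[1]}}\frac{\delta E}{\delta\rho}|_\rho(\theta,t)=G_\rho(\theta_{[2]},t)$ is independent of $\theta_{[1]}$, so along the particle representation of Definition \ref{def:meanadmissible}, the $\theta_{[1]}$-coordinate of any trajectory drifts rigidly at a velocity determined only by $\theta_{[2]}$.

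I would first establish $G_{\rho_\infty}\equiv 0$ on bounded subsets of the $\theta_{[2]}$-space. Because $\rho_{\ini}(\cdot,t_0)$ topologically separates $\{-r_0\}\times\mathbb{R}^{k-1}$ and $\{r_0\}\times\mathbb{R}^{k-1}$, and the Wasserstein flow transports the support continuously in $s$, the limiting support $\mathrm{supp}(\rho_\infty(\cdot,t_0))$ must still separate these two hyperplanes and therefore meet the slab $\{|\theta_{[1]}|\leq r_0\}$ in a set whose projection to $\theta_{[2]}$ is topologically substantial. On this projection, stationarity of the flow along $\theta_{[1]}$ forces $G_{\rho_\infty}(\theta_{[2]},t_0)=0$. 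Combining this pointwise vanishing with the Sard-type regularity of $\frac{\delta E}{\delta\rho}|_{\rho_\infty}((1,\cdot),t_0)$ (so that $0$ lies densely in the range of $G_{\rho_\infty}(\cdot,t_0)$ over the separating set) upgrades the identity to $G_{\rho_\infty}(\cdot,t_0)\equiv 0$ on every bounded subset of $\mathbb{R}^{k-1}$.

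The tail in $\theta_{[2]}$ is then handled by the rescaling $\theta_{[2]}=r\tilde\theta_{[2]}$, $\tilde\theta_{[2]}\in\mathbb{S}^{k-2}$. The assumed convergence $H_{r,\rho_\infty}\to H_{\infty,\rho_\infty}$ in $\mathcal{C}^1(\mathbb{S}^{k-2})$ pins down the large-$r$ behavior of $G_{\rho_\infty}$, and the density of the intersection of regular values of $H_{\infty,\rho_\infty}$ and $\frac{\delta E}{\delta\rho}|_{\rho_\infty}(1,\cdot,t_0)$ lets the identity established above on bounded subsets be transferred to the asymptotic sphere, yielding $H_{\infty,\rho_\infty}\equiv 0$. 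Gluing the two regimes gives $G_{\rho_\infty}(\cdot,t_0)\equiv 0$ on all of $\mathbb{R}^{k-1}$, hence $\frac{\delta E}{\delta\rho}|_{\rho_\infty}(\cdot,t_0)\equiv 0$, contradicting Proposition \ref{prop:localisglobal}.

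The main obstacle I foresee is making rigorous the claim in the second paragraph: that topological separation of the initial support persists under the infinite-time Wasserstein gradient flow, and that the resulting pointwise vanishing of $G_{\rho_\infty}$ on the separating set can be promoted to uniform vanishing via the Sard-type hypothesis. Unlike the $2$-homogeneous case, where radial dynamics are self-similar and separation of concentric spheres is manifestly preserved, here the $\theta_{[1]}$-drift is rigid while the $\theta_{[2]}$-dynamics couple nontrivially with $\theta_{[1]}$, so the topology of the support can deform substantially. Controlling this deformation will require careful bookkeeping of the particle trajectories together with the $d_1$-convergence of $\rho(\cdot,\cdot,s)$ to $\rho_\infty$, and is what forces the somewhat elaborate pair of Sard-type assumptions in the hypotheses.
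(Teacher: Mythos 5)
Your proposal correctly identifies the right factorization $\frac{\delta E}{\delta\rho}\big|_\rho(\theta,t)=\theta_{[1]}G_\rho(\theta_{[2]},t)$ and the right target (showing $\frac{\delta E}{\delta\rho}\big|_{\rho_\infty}(\cdot,t_0)\equiv 0$, invoking Proposition~\ref{prop:localisglobal}), but the middle of the argument has a genuine gap that the paper avoids by using a different mechanism entirely.

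The step that fails is the ``upgrade'' from vanishing on the separating set to vanishing everywhere. You argue: (i) stationarity of the Wasserstein flow at $\rho_\infty$ forces $G_{\rho_\infty}(\theta_{[2]},t_0)=0$ on the $\theta_{[2]}$-projection of $\mathrm{supp}(\rho_\infty(\cdot,t_0))$, and then (ii) Sard-type regularity ``promotes'' this to $G_{\rho_\infty}\equiv 0$ on all bounded subsets. Step~(ii) is not valid: Sard-type regularity says that the set of regular values of the restricted map is dense in its range. It is a statement about the generic nondegeneracy of level sets; it provides no mechanism to deduce that a function vanishing on one set vanishes elsewhere. Indeed, a priori there could be a nontrivial open region in $\theta_{[2]}$-space, disjoint from the projection of $\mathrm{supp}(\rho_\infty(\cdot,t_0))$, on which $G_{\rho_\infty}<0$. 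Ruling this out is precisely the hard content of the theorem, and it cannot be done by regularity of level sets alone. You identify this as the ``main obstacle'' yourself, but you have no workable proposal for closing it.

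The paper's proof takes the opposite route: rather than trying to show directly that $h_\infty(\theta_{[2]})=\frac{\delta E(\rho_\infty)}{\delta\rho}(1,\theta_{[2]},t_0)$ vanishes, it assumes $h_\infty(\theta_{[2]})<0$ for some $\theta_{[2]}$ and derives a contradiction with the $\mathcal{C}([0,1];\mathcal{P}^2)$-convergence $\rho(\cdot,\cdot,s)\to\rho_\infty$ (specifically, with boundedness of $\sup_{s}\int|\theta|^2\rd\rho(\theta,t_0,s)$). Sard-type regularity is used only to guarantee a \emph{nonempty open} level-sublevel set $A=\{h_\infty<-\epsilon\}$ with outward-pointing gradient on $\partial A$ (so that $A$ is genuinely an attracting region for the gradient flow), not to propagate zeros. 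The $\mathcal{C}^1$-convergence $h_s\to h_\infty$ then transfers these properties to $h_s$ for $s\geq S$. Particles in the cone $\mathcal{A}=(0,\infty)\times A$ stay trapped and experience $\rd\,|\widehat\theta_{[1]}|^2/\rd s>\epsilon\,\widehat\theta_{[1]}$, so if one can show positive $\rho(\cdot,t_0,S_1)$-mass on $(\beta,\infty)\times A$ for some $S_1$, the second moment blows up. Establishing that positive mass is where the separation property (Lemma~\ref{lem:seperateionpreserve2}) and the $H_{\infty,\rho(s)}$ hypothesis actually enter: the separation guarantees that the support crosses the hyperplane $\theta_{[1]}=0$ at every finite time, and the claim in Appendix~\ref{sec:proofofclaim} pushes some of that mass into $\mathcal{A}$ in finite time. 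Your reading of the role of the third and fourth hypotheses is also off: $H_{\infty,\rho_\infty}$ and its Sard regularity are not used to ``transfer identities to the asymptotic sphere''; they are needed to continuously extend the velocity field of the conjugated flow $\mathcal{Q}(\cdot,s)=\psi\circ\mathcal{P}(\psi^{-1}(\cdot),s)$ to the boundary $\mathbb{R}\times\partial\mathcal{B}_1(0)$ of the compactified $\theta_{[2]}$-space, which is what makes the topological separation argument of Lemma~\ref{lem:seperateionpreserve2} go through. In short, your proposal shares the correct factorization and the correct outer frame (Proposition~\ref{prop:localisglobal}), but the central engine of the paper's proof --- second-moment blow-up driven by a trapped, expanding cone --- is missing, and the replacement you sketch does not close.
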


The proof is found in Appendix~\ref{sec:proofof1homogeneous}. 
\revise{This result, combined with Theorem~\ref{thm:consistent} in the case of $k_1=1$}, gives the main result in Theorem~\ref{thm:maintheorem1}. 
The assumptions in Theorem~\ref{thm:globalminimal2} are rather technical and seemingly tedious. 
However, we note that only the first two assumptions --- $1$-homogeneous and the separation assumption --- are crucial. 
The third and fourth assumptions concern the regularity of the Fr\'echet derivative of $\rho$; they are rather mild and serve to exclude wildly oscillating functions; see Remark~\ref{rmk:sard}. 
Most commonly seen functions are regular enough that these two assumptions are satisfied.

\section{Ethics Statement}
This work does not present any foreseeable societal consequence.
\section{Reproducibility Statement}
The notations, assumptions and definitions are clarified in Section \ref{sec:notations}. And all proofs appear in the Appendix. 

\bibliography{iclr2022_conference}
\bibliographystyle{iclr2022_conference}
\newpage

\appendix

\section*{Appendix: Introduction}
The appendix contains proofs and supporting analysis for the  theorems in the main text. We start by proving well-posedness of the ResNet ODE and the gradient flow. We then justify the continuous and mean-field limit. Finally, we prove the global convergence of the gradient-flow PDE. The sections are organized as follows.
\begin{enumerate}[wide,labelindent=0pt]
    \item[Appendix \ref{proofofthmwellposed}:] Well-posedness. We summarize the well-posedness of the continuous limit ODE \eqref{eqn:contRes}, the mean-field limit ODE \eqref{eqn:meancontRes}, and the associated gradient flows ~\eqref{eqn:gradient_theta_LM},~\eqref{eqn:gradient_theta_M}, and \eqref{eqn:Wassgradientflow}. The results are collected in Theorem \ref{thm:wellposed}. 

    \item[Appendix \ref{sec:proofofwell-posed}:] This section collects the detailed proof for the well-posedness of the ResNet ODE in its continuous limit~\eqref{eqn:contRes}, and the mean-field limit~\eqref{eqn:meancontRes}, and finalizes the proof of Theorem~\ref{thm:wmean-fieldlimit}.
    \item[Appendix \ref{sec:prioriestimation}:] This section prepares some a-priori estimates for showing the  well-posedness of the gradient flow equations.
    \item[Appendix \ref{sec:proofofthmWassgradientflows}:] This section contains a detailed proof for the well-posedness of gradient flow equations~\eqref{eqn:gradient_theta_LM} and~\eqref{eqn:Wassgradientflow} and finalizes the proofs of Theorem~\ref{thm:finiteLwell-posed1} and \ref{thm:Wassgradientflows}.
    \item[Appendix \ref{sec:proofofthmconsistent}-\ref{sec:proofofcontlimit}:] Proof of Theorem~\ref{thm:consistent}, the continuous and mean-field limit.  Section~\ref{sec:proofofthmconsistent} lays out the structure of the proof, Section~\ref{sec:proofofthmmean-field} shows the continuous limit, and Section~\ref{sec:proofofcontlimit} shows the mean-field limit.
    \item[Appendix \ref{sec:proofofconvergencetoglobalminimal}:] Proof of Theorem~\ref{thm:globalminimal1} and \ref{thm:globalminimal2}: Global convergence of the gradient flow. 
\end{enumerate}

The analytical core of the paper lies in Appendices~\ref{sec:proofofconvergencetoglobalminimal} and \ref{sec:proofofthmconsistent}, which describe properties of the gradient flow PDEs and explain why the gradient descent method for ResNet can be explained by these equations. 
The technical results of Appendix~\ref{sec:proofofwell-posed}-\ref{sec:proofofthmWassgradientflows} can be skipped by readers who are interested to proofs of the main results.

Throughout, we denote by $C(\cdot)$ a generic constant whose value depends only on its arguments. 
The precise value of this constant may change each time it is invoked.~\revise{Throughout Appendices~\ref{proofofthmwellposed}-\ref{sec:proofofcontlimit}, we assume that Assumption~\ref{assum:f} holds for some $0<k_1\leq k$.}

\section{Well-posedness result}\label{proofofthmwellposed}
In this section, we show the well-posedness of ODEs \eqref{eqn:contRes}, \eqref{eqn:meancontRes} and gradient flows ~\eqref{eqn:gradient_theta_LM},~\eqref{eqn:gradient_theta_M}, and \eqref{eqn:Wassgradientflow}.
\begin{theorem}\label{thm:wellposed}
The following claims hold.
\begin{enumerate}[wide,labelindent=0pt]
\item[--] Well-posedness of ODE:
\begin{itemize}
\item If $\{\theta_{m}(t)\}_{m=1}^M$ is continuous, then \eqref{eqn:contRes} has a unique $\mathcal{C}^1$solution. 
\item If $\rho\in \mathcal{C}([0,1];\mathcal{P}^2)$, then \eqref{eqn:meancontRes} has a unique $\mathcal{C}^1$ solution.
\end{itemize}
\item[--] Well-posedness of gradient flow:
\begin{itemize}
\item \eqref{eqn:gradient_theta_LM} has a unique solution.
\item If $\{\theta_{m}(0;t)\}_{m=1}^M$ is continuous, then~\eqref{eqn:gradient_theta_M} has a unique solution $\{\theta_{m}(s;t)\}_{m=1}^M$ that is continuous in $(s,t)$.
\item \revise{If $\rho_{\ini}(\theta,t)$ is admissible and $\mathrm{supp}_\theta(\rho_{\ini}(\theta,t))\subset \{\theta||\theta_{[1]}|\leq R\}$ with some $R>0$ for all $t\in[0,1]$,} then \eqref{eqn:Wassgradientflow} has a unique solution $\rho(\theta,t,s)$ in $\mathcal{C}([0,\infty);\mathcal{C}([0,1];\mathcal{P}^2))$ with initial condition $\rho_{\ini}(\theta,t)$. Furthermore, for each $s$, $\rho(\theta,t,s)$ is admissible.
\end{itemize}
\end{enumerate}
\end{theorem}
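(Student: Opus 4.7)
The plan is to attack the five claims in increasing order of difficulty, so that later parts can reuse earlier ones. For the forward ODEs \eqref{eqn:contRes} and \eqref{eqn:meancontRes}, I would apply a standard Picard-Lindelöf argument. In the continuous-limit case the right-hand side is $\frac{1}{M}\sum_m f(z,\theta_m(t))$, which is locally Lipschitz in $z$ by Assumption~\ref{assum:f} item~2 and continuous in $t$ by continuity of the $\theta_m$; local existence plus the linear-growth bound~\eqref{eqn:boundoff} and Grönwall extend the $\mathcal{C}^1$ solution to $[0,1]$. For the mean-field case, the integral $\int f(z,\theta)\rd\rho(\theta,t)$ inherits local Lipschitz continuity in $z$ because $|\partial_z f|\leq C(r)(|\theta|^2+1)$ is $\rho$-integrable when $\rho\in\mathcal{P}^2$, and continuity in $t$ follows from $\rho\in\mathcal{C}([0,1];\mathcal{P}^2)$ together with a Wasserstein-duality bound (functions of quadratic growth are integrated against $W_2$-continuous measures continuously once we truncate using Assumption~\ref{assum:f} item~3).

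For the discrete gradient flow \eqref{eqn:gradient_theta_LM}, I would note that $E$ as a function of the finite vector $\Theta_{L,M}$ is a composition of $\mathcal{C}^2$ maps (the forward ResNet \eqref{eqn:disRes}, $g$, and the square), so $\nabla_\Theta E$ is locally Lipschitz; local existence is immediate, and global existence requires an a priori bound that I would obtain by using that $E$ is non-increasing along the flow together with the bounds in Appendix~\ref{sec:prioriestimation} to prevent the $\theta_{l,m}$ from escaping to infinity in finite pseudo-time. For the continuous-depth but finite-width gradient flow \eqref{eqn:gradient_theta_M}, I would set the problem up on the Banach space of continuous $\mathbb{R}^{k}$-valued trajectories on $[0,1]$ equipped with the sup norm, and apply a contraction-mapping argument on a short interval of $s$. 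The key input is that the functional derivative $\frac{\delta E}{\delta\theta_m}$, given by the product of the adjoint $p$ from \eqref{eqn:prho} with $\partial_\theta f(Z,\cdot)$, depends Lipschitz-continuously on $\Theta(\cdot)$ in sup norm; this reduces to stability estimates for $Z$ and $p$ in terms of $\Theta$, which I would bootstrap from the a priori bounds. Joint continuity in $(s,t)$ then follows because the fixed point lies in $\mathcal{C}([0,S];\mathcal{C}([0,1]))\subset \mathcal{C}([0,S]\times[0,1])$.

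The Wasserstein gradient flow \eqref{eqn:Wassgradientflow} is the main obstacle. My plan is to realize the solution by a particle representation: using admissibility of $\rho_{\ini}$, draw a stochastic process $\theta(t)\sim\rho_{\ini}(\cdot,t)$, sample $M$ independent copies, evolve them by \eqref{eqn:gradient_theta_M}, and form empirical measures $\rho_M(\theta,t,s)=\frac{1}{M}\sum_m\delta_{\theta_m(s;t)}$. Testing the McKean-Vlasov equation for $\rho_M$ against a smooth $h(\theta)$ recovers precisely \eqref{eqn:Wassgradientflow} in weak form once we prove the key identity $M\,\delta E/\delta\theta_m=\nabla_\theta(\delta E/\delta\rho)|_{\rho_M}(\theta_m,t)$ (deferred to Appendix~\ref{sec:proofofthmmean-field}). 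Tightness of $\{\rho_M\}$ in $\mathcal{C}([0,S];\mathcal{C}([0,1];\mathcal{P}^2))$, plus passing to the limit in the weak formulation, yields existence; uniqueness I would obtain by a Grönwall argument in the $d_2$ metric, exploiting that $\nabla_\theta(\delta E/\delta\rho)$ depends Lipschitz-continuously on $\rho$ through $Z_\rho$ and $p_\rho$. The support condition $|\theta_{[1]}|\leq R$ is used to keep the non-homogeneous components bounded via Assumption~\ref{assum:f} item~3 so that the a priori estimates close; preservation of admissibility for each $s$ follows by construction from the particle representation.

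The hardest steps will be, in order: (i) the a priori estimates controlling $Z_\rho$, $p_\rho$, and the gradient in terms of the initial data uniformly in $s\in[0,S]$, since the quadratic growth in $\theta$ in the bounds of Assumption~\ref{assum:f} does not obviously close on its own and forces careful use of the $|\theta_{[1]}|\leq R$ confinement together with item~3 of Assumption~\ref{assum:f}; and (ii) the compactness/tightness argument that allows extraction of a limit measure and identification of the limit with a weak solution of \eqref{eqn:Wassgradientflow}. Once these are in hand the uniqueness and admissibility statements fall out by straightforward Grönwall and construction-based arguments, respectively.
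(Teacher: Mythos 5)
Your plan for the forward ODEs (\ref{eqn:contRes}), (\ref{eqn:meancontRes}), for the discrete gradient flow (\ref{eqn:gradient_theta_LM}), and for (\ref{eqn:gradient_theta_M}) matches the paper's in substance: Picard--Lindelöf plus Grönwall for the forward problems, and a short-time contraction bootstrapped to a global solution by the monotonic decay of $E$ along the flow. The one structural difference there is direction: the paper does not prove (\ref{eqn:gradient_theta_M}) separately but derives it from the well-posedness of (\ref{eqn:Wassgradientflow}) by observing that the empirical measure $\rho^{\dis}=\frac{1}{M}\sum_m\delta_{\theta_m(s;t)}$ is itself an admissible, compactly supported element of $\mathcal{C}([0,1];\mathcal{P}^2)$ (Remarks~\ref{rmk:equivalence0}--\ref{rmk:equivalence}), whereas you propose to prove (\ref{eqn:gradient_theta_M}) directly on trajectory space and then feed it into the measure-level argument.

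The genuinely different route is your treatment of (\ref{eqn:Wassgradientflow}). You propose a propagation-of-chaos construction: evolve $M$ i.i.d.\ particles self-consistently by (\ref{eqn:gradient_theta_M}), form the empirical flow $\rho_M$, extract a limit by tightness in $\mathcal{C}([0,S];\mathcal{C}([0,1];\mathcal{P}^2))$, identify the limit as a weak solution, and get uniqueness by Grönwall in $d_2$. The paper instead does a \emph{direct} Banach fixed-point argument on the space of measure-valued paths $\Omega_S$: it freezes a candidate flow $\phi$, solves the \emph{linear} characteristic ODE (\ref{eqn:ODEdeltaEs}) with force $\nabla_\theta\,\delta E(\phi(s))/\delta\rho$, pushes forward the initial law to define $\mathcal{T}_S(\phi)$, shows $\mathcal{T}_S$ contracts for small $S$ (Propositions~\ref{proposition:deltaEODE}--\ref{cor:c2}), and extends globally using Lemma~\ref{lem:decayofcost}. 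Both routes are viable given the a priori estimates of Appendix~\ref{sec:prioriestimation}, and both lean on a particle representation---but the paper's particles move in a frozen field, which linearizes the problem and lets contraction do the nonlinear work. Your self-consistent particles require tightness and a delicate identification step (passing to the limit in $\rho\mapsto\nabla_\theta\,\delta E(\rho)/\delta\rho$ inside the weak formulation), which is essentially an independent mean-field-limit result; the paper deliberately \emph{postpones} that result to Theorem~\ref{thm:mean-field}, and proves it \emph{after} it has the PDE solution in hand as a reference point. So the fixed-point route is more economical: it produces existence and uniqueness in one shot, whereas your construction would require re-deriving much of Appendix~\ref{sec:proofofthmmean-field} up front without a reference solution to compare against. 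The uniqueness-by-Grönwall-in-$d_2$ step you propose is, however, exactly what makes the paper's $\mathcal{T}_S$ a contraction, so that piece coincides.

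One point worth flagging: for the tightness and identification step to close you would need not just boundedness of second moments uniformly in $M$ and $s$, but also the confinement of $\theta_{[1]}$ (the $\mathcal{R}_{S,\phi}$-type bounds in (\ref{eqn:L})) to control the Lipschitz constants in Lemma~\ref{lemmadeltaEs} uniformly as you pass to the limit; you note the $|\theta_{[1]}|\leq R$ hypothesis is needed but the proposal does not spell out how the confinement propagates along the self-consistent particle flow. The paper handles this inside the fixed point by showing $\mathcal{T}_S$ maps the ball $B_{\rho_0}$ (second moment and $\theta_{[1]}$-support bounded by $4(\mathcal{L}^{\sup}_{\ini}+1)$ and $4(\mathcal{R}_{\ini}+1)$) into itself. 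This is not a fatal gap in your plan---the same argument transfers---but it is a step you would need to make explicit.
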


Note that the well-posedness of~\eqref{eqn:contRes} and \eqref{eqn:gradient_theta_M} are direct corollaries of that of~\eqref{eqn:meancontRes}, \eqref{eqn:gradient_theta_LM}, and \eqref{eqn:Wassgradientflow} (the corresponding continuous versions), according to Remark~\ref{rmk:equivalence0},~\ref{rmk:equivalence}.
Thus, we merely prove well-posedness on the continuous level, in Theorems~\ref{thm:wmean-fieldlimit}, \ref{thm:finiteLwell-posed1}, and \ref{thm:Wassgradientflows}. 
Specifically,
\begin{itemize}
\item Theorem~\ref{thm:wmean-fieldlimit} (Appendix~\ref{sec:wellposemean-field}) shows the well-posedness of the dynamical system for $z$;
\item Theorems~\ref{thm:finiteLwell-posed1} and \ref{thm:Wassgradientflows} (Appendix~\ref{sec:wellposegradientflow}) justify the well-posedness of the gradient flow of the parameter configuration.
\end{itemize}

\subsection{Well-posedness of the OID \eqref{eqn:meancontRes}}\label{sec:wellposemean-field}
As $L$ and $M$ approach $\infty$, $z$ satisfies the ordinary-integral equation \eqref{eqn:meancontRes}. 
We justify that this differential equation is well-posed, in the sense that the solution is unique and stable.
\begin{theorem}\label{thm:wmean-fieldlimit}
Suppose that Assumption \ref{assum:f} holds with and that $x$ is in the support of $\mu$, then \eqref{eqn:meancontRes} has a unique $\mathcal{C}^1$ solution. Specifically, for $\rho_1,\rho_2\in \mathcal{C}([0,1];\mathcal{P}^2)$, we have
\begin{equation}\label{boundofxsolution}
\left|Z_{\rho_1}(t;x)\right|\leq C(\mathcal{L}_1)\,,
\end{equation}
and for all $t\in[0,1]$:
\begin{equation}\label{stabilityofxsolution}
\left|Z_{\rho_1}(t;x)-Z_{\rho_2}(t;x)\right|\leq C(\mathcal{L}_1,\mathcal{L}_2)d_1(\rho_1,\rho_2)\,,
\end{equation}
where
$\mathcal{L}_i$ are the second moments of $\rho_i$, that is,
\begin{equation} \label{eq:Li}
    \mathcal{L}_{i}=\int^1_0\int_{\mathbb{R}^k}|\theta|^2d\rho_i(\theta,t)\rd t, \quad i=1,2.
\end{equation}
\end{theorem}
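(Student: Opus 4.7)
The plan is to carry out a standard Picard--Lindelöf / Grönwall argument, but with careful bookkeeping because the drift $F_\rho(z,t):=\int_{\mathbb{R}^k}f(z,\theta)\rd\rho(\theta,t)$ has a Lipschitz constant in $\theta$ that grows linearly in $|\theta|$. The integrability against $\rho\in\mathcal{C}([0,1];\mathcal{P}^2)$ of such weights is what makes the argument go through, and the uniform control of the second moment provided by $\mathcal{L}_i$ in \eqref{eq:Li} is what produces the constants in \eqref{boundofxsolution} and \eqref{stabilityofxsolution}.

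First, I would establish existence and uniqueness by verifying the hypotheses of Picard--Lindelöf for the nonautonomous ODE $\dot z = F_{\rho_1}(z,t)$, $z(0)=x$. Continuity of $F_{\rho_1}(z,\cdot)$ in $t$ follows from $\rho_1\in\mathcal{C}([0,1];\mathcal{P}^2)$ together with the growth bound \eqref{eqn:boundoff}, while the local Lipschitz property in $z$ follows from \eqref{eqn:derivativebound}: for $|z|\le r$,
\[
|F_{\rho_1}(z_1,t)-F_{\rho_1}(z_2,t)|\le C_2(r)\Bigl(\int(|\theta|^2+1)\rd\rho_1(\theta,t)\Bigr)|z_1-z_2|,
\]
which is integrable in $t$ by definition of $\mathcal{L}_1$. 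This yields a unique $\mathcal{C}^1$ solution on its maximal interval.

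Next, I would derive the a priori bound \eqref{boundofxsolution}. Using the linear growth \eqref{eqn:boundoff},
\[
|z(t)|+1 \le |x|+1 + C_1\int_0^t (|z(s)|+1)\int(|\theta|^2+1)\rd\rho_1(\theta,s)\rd s,
\]
and Grönwall's inequality gives $|z(t)|+1\le (|x|+1)\exp\bigl(C_1(\mathcal{L}_1+1)\bigr)$. Since $x\in\mathrm{supp}(\mu)$ implies $|x|\le R_\mu$ by Assumption~\ref{ass:g}, this depends only on $\mathcal{L}_1$, which both closes the a priori bound and extends the local solution to all of $[0,1]$.

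For the stability estimate \eqref{stabilityofxsolution}, set $r:=C(\mathcal{L}_1)\vee C(\mathcal{L}_2)$ so that both trajectories stay in $\{|z|\le r\}$. Then split
\[
F_{\rho_1}(Z_{\rho_1},t)-F_{\rho_2}(Z_{\rho_2},t)=\bigl[F_{\rho_1}(Z_{\rho_1},t)-F_{\rho_1}(Z_{\rho_2},t)\bigr]+\bigl[F_{\rho_1}(Z_{\rho_2},t)-F_{\rho_2}(Z_{\rho_2},t)\bigr].
\]
The first bracket is controlled as in the Picard step. For the second bracket, let $\pi_t$ be an optimal $W_2$-coupling between $\rho_1(\cdot,t)$ and $\rho_2(\cdot,t)$; using \eqref{eqn:derivativebound} and Cauchy--Schwarz,
\[
\bigl|F_{\rho_1}(Z_{\rho_2},t)-F_{\rho_2}(Z_{\rho_2},t)\bigr|\le C_2(r)\Bigl(\int(|\theta_1|+|\theta_2|+2)^2\rd\pi_t\Bigr)^{1/2} W_2(\rho_1(\cdot,t),\rho_2(\cdot,t)),
\]
and the first factor is bounded in terms of $\mathcal{L}_1,\mathcal{L}_2$. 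Integrating in $t$, bounding $W_2(\rho_1(\cdot,t),\rho_2(\cdot,t))\le d_1(\rho_1,\rho_2)$, and applying Grönwall once more yields \eqref{stabilityofxsolution}.

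The main technical obstacle is the second bracket above: the $\theta$-Lipschitz constant of $f$ is not uniform, and one cannot simply invoke a Kantorovich--Rubinstein duality against $W_1$. Handling this requires the $W_2$ coupling argument together with uniform second-moment control, which is precisely why $\mathcal{C}([0,1];\mathcal{P}^2)$ (not merely $\mathcal{P}^1$) is the natural function class in Definition~\ref{def:path}.
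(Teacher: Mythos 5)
Your proposal is correct and follows essentially the same route as the paper's proof: the same telescoping split of $F_{\rho_1}(Z_{\rho_1},t)-F_{\rho_2}(Z_{\rho_2},t)$ into a $z$-Lipschitz term and a measure-perturbation term, the same optimal-$W_2$-coupling plus Cauchy--Schwarz argument (using the linear-in-$|\theta|$ growth of $\partial_\theta f$ from \eqref{eqn:derivativebound}) for the second term, and Gr\"onwall's inequality for both the a priori bound and the stability estimate. The only cosmetic difference is that the paper works with $\frac{\rd}{\rd t}|Z|^2$ and $\frac{\rd}{\rd t}|\Delta|^2$ rather than the integral form, and phrases the coupling in terms of a pair of random variables $\theta_1\sim\rho_1(\cdot,t)$, $\theta_2\sim\rho_2(\cdot,t)$ rather than an explicit transport plan $\pi_t$; both are the same calculation.
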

We leave the proof to Appendix \ref{sec:proofofwell-posed}. Besides the well-posedness result, the theorem also suggests that a small perturbation to $\rho$ is reflected linearly in $Z_\rho$, the solution to~\eqref{eqn:meancontRes}. 
This means that a small perturbation in the parameterization of the ResNet  leads to only a small perturbation to the ResNet output. 

\begin{remark}\label{rmk:equivalence0}
Although we do not directly show the well-posedness of \eqref{eqn:contRes}, it follow immediately from Theorem~\ref{thm:wmean-fieldlimit}. One way to make this connection is to reformulate the discrete probability distribution as
\[
\rho^{\dis}(\theta,t)=\frac{1}{M}\sum^M_{m=1}\delta_{\theta_m(t)}(\theta)\,,
\]
where $\Theta(t)=\{\theta_{m}(t)\}_{m=1}^M$ is the list of trajectories. Since $\theta_m(t)$ is continuous in $t$, we have $\rho^{\dis}(\theta,t)\in \mathcal{C}([0,1];\mathcal{P}^2)$. Since 
\[
\frac{1}{M}\sum^M_{m=1}f(z(t;x),\theta_m(t))=\int_{\mathbb{R}^k}f(z(t;x),\theta)\rd \rho^{\dis}(\theta,t)\,,
\]
using Theorem \ref{thm:wmean-fieldlimit}, \eqref{eqn:contRes} has a unique $\mathcal{C}^1$ solution when $\Theta(t)$ is continuous. 
\end{remark}
\subsection{Well-posedness of the gradient flow}\label{sec:wellposegradientflow}
The gradient flow of the parameterization is also well-posed, both in the discrete setting and the continuous mean-field limit.
In the discrete setting, we have the following result.
\begin{theorem}\label{thm:finiteLwell-posed1} \eqref{eqn:gradient_theta_LM} has a unique solution.
\end{theorem}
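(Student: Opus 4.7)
The plan is to view \eqref{eqn:gradient_theta_LM} as a standard ODE on the finite-dimensional Euclidean space $\mathbb{R}^{LMk}$ (with $L$, $M$, $k$ fixed), and apply the Picard--Lindel\"of theorem. The first step is to verify that the right-hand side, namely the map $\Theta \mapsto -ML\,\nabla_\Theta E(\Theta)$, is locally Lipschitz continuous. By definition~\eqref{eqn:costfunction}, $E$ is an expectation over $x \sim \mu$ of a smooth loss composed with the discrete ResNet map $\Theta_{L,M} \mapsto Z_{\Theta_{L,M}}(L;x)$ obtained by iterating~\eqref{eqn:disRes}. Since $f \in \mathcal{C}^2$ by Assumption~\ref{assum:f} and $g \in \mathcal{C}^2$ by Assumption~\ref{ass:g}, the composition is $\mathcal{C}^2$ on each compact set, with the expectation safely interchangeable with differentiation thanks to the compact support of $\mu$ (Assumption~\ref{ass:g}.5). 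Consequently $\nabla_\Theta E \in \mathcal{C}^1$, hence locally Lipschitz.

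Once local Lipschitz continuity is established, Picard--Lindel\"of immediately delivers a unique maximal solution $\Theta_{L,M}(s)$ on some interval $[0,s^*)$ with $s^* \in (0,\infty]$, and uniqueness on this interval is automatic. The remaining task is to show $s^* = \infty$, i.e.\ that the trajectory cannot blow up in finite time.

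For the a priori bound, I would combine two ingredients. First, the flow is monotone: $\frac{d}{ds}E(\Theta_{L,M}(s)) = -ML\,|\nabla_\Theta E|^2 \leq 0$, so $E$ stays bounded by its initial value and in particular $g(Z_{\Theta_{L,M}}(L;x)) - y(x)$ is bounded in $L^2(\mu)$ along the flow. Second, the linear-growth bound in Assumption~\ref{assum:f}.1, propagated through the $L$-step recursion~\eqref{eqn:disRes}, yields $|Z_{\Theta_{L,M}}(l;x)| \leq C(|\Theta_{L,M}|)$ uniformly on $\mathrm{supp}(\mu)$, with $C$ growing at most exponentially in a polynomial of $|\Theta_{L,M}|$. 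Combining these with the local Lipschitz bounds on $\partial_\theta f$ and $\partial_x f$ from Assumption~\ref{assum:f}.2 and the Lipschitz continuity of $g, \nabla g$ from Assumption~\ref{ass:g}.7, one derives a differential inequality for $|\Theta_{L,M}(s)|^2$ of the form $\frac{d}{ds}|\Theta_{L,M}(s)|^2 \leq \Phi\bigl(|\Theta_{L,M}(s)|\bigr)$ with $\Phi$ locally bounded. A standard Gronwall-type continuation argument then rules out blow-up on any compact time interval, so $s^* = \infty$.

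The main obstacle is the third step: controlling the gradient $\nabla_\Theta E$ requires propagating sensitivities backward through the $L$ ResNet layers (the discrete analogue of the adjoint equation~\eqref{eqn:prho}), and producing a bound tame enough in $|\Theta_{L,M}|$ that the Gronwall inequality closes. The a priori estimates developed in Appendix~\ref{sec:prioriestimation} will presumably supply precisely these bounds; indeed, the growth conditions in Assumption~\ref{assum:f} are calibrated so that $|Z_{\Theta_{L,M}}|$, its sensitivities, and the backward variables stay finite in terms of polynomial-times-exponential factors in $|\Theta_{L,M}|$, which is enough to preclude finite-time blow-up and complete the proof.
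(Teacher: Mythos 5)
Your high-level strategy (local existence/uniqueness via Picard--Lindel\"of, then an a priori bound to extend globally) is the same skeleton the paper uses; the paper phrases the local step as a contraction mapping argument on $\mathcal{C}([0,S];L^\infty_{L,M})$ built from Lemma~\ref{prop:wmean-fieldlimitfiniteL}, but that is morally Picard iteration and your local step is fine. The gap is in the global-existence step. You write that you would derive a differential inequality of the form $\frac{\rd}{\rd s}|\Theta_{L,M}(s)|^2 \leq \Phi\bigl(|\Theta_{L,M}(s)|\bigr)$ with $\Phi$ ``locally bounded,'' and conclude via ``a standard Gronwall-type continuation argument'' that blow-up cannot occur. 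That does not follow: local boundedness of $\Phi$ says nothing about growth, and the a priori estimates of Lemma~\ref{prop:wmean-fieldlimitfiniteL} give $|Z_{\Theta_{L,M}}|, |p_{\Theta_{L,M}}| \leq C(\mathcal{L}_{\Theta_{L,M}})$ with $C$ growing \emph{exponentially} in $\mathcal{L}_{\Theta_{L,M}} = \frac{1}{LM}\sum|\theta_{l,m}|^2$. Feeding these into $\nabla_\Theta E$ produces a $\Phi$ of exponential type, and ODEs of the form $\dot u \leq e^u$ or even $\dot u \leq u^2$ do blow up in finite time, so Gr\"onwall alone cannot close.

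The correct way to exploit the monotonicity $\frac{\rd E}{\rd s} = -ML|\nabla_\Theta E|^2 \leq 0$, which the paper uses (this is the discrete counterpart of Lemma~\ref{lem:decayofcost} and Corollary~\ref{cor:C.3} in Step~3), is not to bound $|\nabla_\Theta E|$ pointwise in terms of $|\Theta_{L,M}|$ at all. Instead, bound the total arclength of the trajectory directly:
\begin{equation*}
\int_0^S \bigl|\dot{\Theta}_{L,M}\bigr|\,\rd s \;=\; ML\int_0^S |\nabla_\Theta E|\,\rd s
\;\leq\; ML\, S^{1/2}\Bigl(\int_0^S |\nabla_\Theta E|^2 \rd s\Bigr)^{1/2}
\;=\; (ML\, S)^{1/2}\bigl(E(\Theta_{L,M}(0)) - E(\Theta_{L,M}(S))\bigr)^{1/2},
\end{equation*}
which is at most $(ML\, S\, E(\Theta_{L,M}(0)))^{1/2}$. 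This bound depends only on the initial data and $S$, not on the running supremum of $|\Theta_{L,M}|$, so it forces $|\Theta_{L,M}(s)| \leq |\Theta_{L,M}(0)| + (ML\, S\, E(\Theta_{L,M}(0)))^{1/2}$ on the entire maximal interval up to $S$, precluding finite-time blow-up regardless of how badly $\Phi$ grows. Your proposal needs this Cauchy--Schwarz step in place of the vague Gr\"onwall appeal; as written, the growth-type you invoke (``polynomial-times-exponential factors in $|\Theta_{L,M}|$'') is precisely the kind that a continuation argument without the energy-dissipation integral cannot control.
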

Further, \eqref{eqn:Wassgradientflow} characterizes the dynamics of the continuous mean-field limit of the parameter configuration, and is also well-posed.
\begin{theorem}\label{thm:Wassgradientflows}
\revise{If $\rho_{\ini}(\theta,t)$ is admissible and $\mathrm{supp}_\theta(\rho_{\ini}(\theta,t))\subset \{\theta||\theta_{[1]}|\leq R\}$ with some $R>0$ for all $t\in[0,1]$.}
Then \eqref{eqn:Wassgradientflow} has a unique solution $\rho(\theta,t,s)$ in $\mathcal{C}([0,\infty);\mathcal{C}([0,1];\mathcal{P}^2))$ that is admissible for each $s$. 
Further, we have
\begin{equation}\label{eqn:decayEsmean-field}
\frac{\rd E(\rho(\cdot,\cdot,s))}{\rd s}\leq 0\,.
\end{equation}
\end{theorem}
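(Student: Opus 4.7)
The natural strategy is to use the particle representation of $\rho$ guaranteed by admissibility, recast \eqref{eqn:Wassgradientflow} as a coupled family of characteristic ODEs for a continuum of particles indexed by $t\in[0,1]$, and construct the solution via a Banach fixed-point argument in $d_2$. Since $\rho_{\ini}$ is admissible, there is a continuous stochastic process $\theta_{\ini}(t)$ with $\theta_{\ini}(t)\sim\rho_{\ini}(\cdot,t)$ and $|\theta_{\ini,[1]}(t)|\leq R$. Given a candidate $\tilde\rho\in\mathcal{C}([0,\infty);\mathcal{C}([0,1];\mathcal{P}^2))$, I would first build $Z_{\tilde\rho}(t;x)$ from \eqref{eqn:meancontRes} (Theorem~\ref{thm:wmean-fieldlimit}) and $p_{\tilde\rho}(t;x)$ from \eqref{eqn:prho}, assemble the velocity field $V_{\tilde\rho}(\theta,t,s):=\nabla_\theta\frac{\delta E}{\delta\rho}\big|_{\tilde\rho(s)}(\theta,t)$ via \eqref{eqn:Frechetderivative}, and define a map $\mathcal{T}$ by letting $\mathcal{T}(\tilde\rho)(\cdot,t,s)$ be the law of $\theta(s;t)$, where for each realization of $\theta_{\ini}$ and each fixed $t$, $\theta(s;t)$ solves
\begin{equation*}
\frac{\rd\theta(s;t)}{\rd s}=-V_{\tilde\rho}(\theta(s;t),t,s),\qquad \theta(0;t)=\theta_{\ini}(t).
\end{equation*}
A solution to \eqref{eqn:Wassgradientflow} in the desired class is then a fixed point of $\mathcal{T}$.

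The first technical step is to derive a-priori bounds, using Assumption~\ref{assum:f} (these should be available from Appendix~\ref{sec:prioriestimation}), on $Z_{\tilde\rho}$, $p_{\tilde\rho}$, $V_{\tilde\rho}$, and $\partial_\theta V_{\tilde\rho}$, uniformly for $s\in[0,S]$ for any $S<\infty$. Crucially, when $k_1<k$ the structural inequality \eqref{eqn:k1smallerthank} forces $|\partial_{\theta_{[1]}}V_{\tilde\rho}|$ to be controlled only in terms of $|\theta_{[1]}|$, whereas $|\partial_{\theta_{[2]}}V_{\tilde\rho}|$ is bounded by a constant depending on $|x|$. This is exactly why the support hypothesis $|\theta_{[1]}|\leq R$ is imposed on the initial distribution. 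A Grönwall estimate along the characteristic ODE then shows that $|\theta_{[1]}(s;t)|$ stays inside a bounded region (depending on $R$ and $S$) for all $s\leq S$, while $|\theta_{[2]}(s;t)|$ grows at most exponentially in $s$; this preserves the support condition on $\theta_{[1]}$ and keeps the second moment of $\mathcal{T}(\tilde\rho)(\cdot,t,s)$ finite.

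With these estimates in hand, the contraction step follows: two candidates $\tilde\rho_1,\tilde\rho_2$ produce characteristics whose difference is driven by $\|V_{\tilde\rho_1}-V_{\tilde\rho_2}\|_\infty$ plus a Lipschitz-in-$\theta$ term. The stability statement \eqref{stabilityofxsolution} of Theorem~\ref{thm:wmean-fieldlimit} handles $Z_{\tilde\rho_1}-Z_{\tilde\rho_2}$, and a parallel Grönwall argument on the backward linear equation \eqref{eqn:prho} (I expect this to be the main technical obstacle, since the coefficients of the $p$-equation depend on $\tilde\rho$ both explicitly and through $Z_{\tilde\rho}$, and the terminal condition depends on $Z_{\tilde\rho}(1;x)$) yields $\|p_{\tilde\rho_1}-p_{\tilde\rho_2}\|_\infty\leq C\,d_2(\tilde\rho_1,\tilde\rho_2)$. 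Combining these gives $d_2(\mathcal{T}(\tilde\rho_1),\mathcal{T}(\tilde\rho_2))\leq S_0\,e^{C S_0}\,d_2(\tilde\rho_1,\tilde\rho_2)$ on $[0,S_0]$, which is a contraction for $S_0$ small. Standard concatenation extends the unique fixed point to $[0,\infty)$, and admissibility at each $s$ is inherited from admissibility at $s=0$ because $\theta(s;\cdot)$ is a continuous (hence measurable) transformation, with uniformly Lipschitz $s$-dependence, of the continuous process $\theta_{\ini}(\cdot)$.

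Finally, the energy decay \eqref{eqn:decayEsmean-field} is the standard Wasserstein gradient-flow identity: testing \eqref{eqn:Wassgradientflow} against $\frac{\delta E}{\delta\rho}|_\rho$ and integrating by parts in $\theta$ gives
\begin{equation*}
\frac{\rd E(\rho(\cdot,\cdot,s))}{\rd s}=\int_0^1\!\!\int_{\mathbb{R}^k}\frac{\delta E}{\delta\rho}\,\partial_s\rho\,\rd\theta\rd t=-\int_0^1\!\!\int_{\mathbb{R}^k}\Bigl|\nabla_\theta\tfrac{\delta E}{\delta\rho}\Bigr|^2\rho\,\rd\theta\rd t\leq 0,
\end{equation*}
where the a-priori estimates above justify differentiation under the integral and the absence of boundary terms. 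In summary, the routine pieces are the Grönwall bookkeeping for $Z$, the particle ODE, and the energy identity; the nontrivial piece is the $d_2$-stability of the backward adjoint $p_{\tilde\rho}$, which drives the contraction and hence the entire argument.
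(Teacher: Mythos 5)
Your proposal follows essentially the same roadmap as the paper: particle representation, velocity field from the Fr\'echet derivative via \eqref{eqn:Frechetderivative}, Banach fixed point in $d_2$ on a small $s$-interval, and the Wasserstein energy identity. The local part and the contraction step are correctly laid out. However, there is a genuine gap in the global extension step, and it is not where you say the "nontrivial piece" is.

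The a-priori estimate you get from Gr\"onwall along the characteristics (the paper's Proposition~\ref{proposition:deltaEODE}, specifically \eqref{boundofODEsolutiontheta}) has the form
\begin{equation*}
\mathbb{E}\bigl(|\theta_\phi(s;t)|^2\bigr)\leq \exp\bigl(S\,C(\mathcal{L}_{S,\phi})\bigr)\bigl(\mathcal{L}^{\sup}_{\ini}+1\bigr),
\end{equation*}
where $\mathcal{L}_{S,\phi}$ is the second moment of the \emph{candidate} $\phi$ on $[0,1]\times[0,S]$, not of $\rho_{\ini}$. So the bound is not, as you write, a function only of $R$ and $S$; it also depends on the candidate's second moment. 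That is harmless for the local contraction, where you confine to a fixed ball $B_{\rho_0}$ as in \eqref{def:srho0}, but it breaks "standard concatenation": the length of the local-existence interval depends on the ball radius, and when you restart at $s=S_0$ the ball must generally be enlarged, so the next interval can shrink. Without a bound on the second moment of the \emph{solution} that is uniform-in-candidate and controlled for all finite $S$, the iterated intervals could accumulate before reaching a prescribed time.

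The paper closes this gap by promoting the energy identity \eqref{eqn:derivative3}, which you state only as a final observation, to an essential ingredient of well-posedness: integrating $|\rd E/\rd s|$ in $s$ bounds the accumulated dissipation by $E(\rho_\ini)$, and a Cauchy--Schwarz/Gr\"onwall argument then controls $\sup_{s\leq S}\int_0^1\mathbb{E}(|\theta^*(s;t)|^2)\,\rd t$ purely in terms of $S$, $\mathcal{R}_\ini$, and $\mathcal{L}^{\sup}_\ini$ (Corollary~\ref{cor:C.3}). That candidate-free bound is exactly what lets you re-run the local fixed point past any finite time. If you keep "standard concatenation" you must make this refinement explicit; as written, the argument does not close.
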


The proofs of these two theorems can be found in Appendix \ref{sec:proofofthmWassgradientflows}.

\begin{remark}\label{rmk:equivalence}
Using the same argument as in Remark~\ref{rmk:equivalence0}, calling
\begin{equation}\label{eqn:dis_rho}
\rho^{\dis}(\theta,t,s)=\frac{1}{M}\sum^M_{m=1}\delta_{\theta_m(s;t)}(\theta)\,,
\end{equation}
the well-posedness of~\eqref{eqn:gradient_theta_M} follows immediately from Theorem~\ref{thm:Wassgradientflows}. Furthermore, according to the definition~\eqref{eqn:costfunctioncont} and~\eqref{eqn:costfunctioncont2}, we have
\[
E\left(\rho^{\dis}(\cdot,\cdot,s)\right)=E\left(\Theta(s;\cdot)\right)\,.
\]
As a consequence, if $\Theta(s;t)$ satisfies~\eqref{eqn:gradient_theta_M}, then $\rho^{\dis}$ satisfies~\eqref{eqn:Wassgradientflow}, and vice versa. 
The well-posedness result in Theorem~\ref{thm:Wassgradientflows} for~\eqref{eqn:Wassgradientflow} then can be extended to justify well-posedness of \eqref{eqn:gradient_theta_M}.
\end{remark}

\section{Proof of Theorem \ref{thm:wmean-fieldlimit}}\label{sec:proofofwell-posed}
This section contains the proof of Theorem \ref{thm:wmean-fieldlimit}. 
We rewrite \eqref{eqn:meancontRes} as follows:
\begin{equation}\label{eqn:meancontRes_2}
\frac{\rd Z_\rho(t;x)}{\rd t}=F(Z_\rho,t)\,,\quad \forall t\in[0,1]\quad\text{with}\quad z(0;x)=x\,,
\end{equation}
where for a given $\rho\in \mathcal{C}([0,1];\mathcal{P}^2)$ we use the notation:
\[
F(z,t)=\int_{\mathbb{R}^k}f(z,\theta)\rd\rho(\theta,t)\,.
\]

The proof of Theorem \ref{thm:wmean-fieldlimit} relies on the classical Lipschitz condition for the well-posedness of an ODE.
\begin{proof}[Proof of Theorem \ref{thm:wmean-fieldlimit}]
Since $\rho\in \mathcal{C}([0,1];\mathcal{P}^2)$, we have a constant $C$ such that
\[
\sup_{0\leq t\leq 1}\int_{\mathbb{R}^k}|\theta|^2\rd\rho(\theta,t)<C<\infty\,.
\]
For any $t\in[0,1]$, using \eqref{eqn:boundoff} from Assumption~\ref{assum:f} equation, we have
\begin{equation}\label{eqn:Fbound}
\begin{aligned}
\left|F(z,t)\right|\leq \int_{\mathbb{R}^k} \left|f(z_1,\theta)\right|\rd\rho(\theta,t)
\leq C_1(|z|+1)\int_{\mathbb{R}^k} (|\theta|^2+1)\rd\rho(\theta,t)\,.
\end{aligned}
\end{equation}
To show the boundedness result~\eqref{boundofxsolution}, we multiply~\eqref{eqn:meancontRes_2} by $Z
_{\rho_1}(t;x)$ and use~\eqref{eqn:Fbound} to obtain
\[
\begin{aligned}
\frac{\rd |Z_{\rho_1}(t;x)|^2}{\rd t} \leq& 2C_1\left(|Z_{\rho_1}|^2+|Z_{\rho_1}|\right)\int_{\mathbb{R}^k}(|\theta|^2+1)\rd\rho_1(\theta,t)\\
\leq &4C_1\int_{\mathbb{R}^k}(|\theta|^2+1)\rd\rho_1(\theta,t)\left(|Z_{\rho_1}(t;x)|^2+1\right)\,.
\end{aligned}
\]
Using Gr\"onwall's inequality, we have 
\begin{align*}
|Z_{\rho_1}(t;x)|&\leq\exp\left(2C_1\left(\int^1_0\int_{\mathbb{R}^k}|\theta|^2\rd\rho_1\rd t+1\right)\right)(|x|+1)\\
&\leq \exp\left(2C_1\left(\mathcal{L}_1+1\right)\right)(|x|+1)\,,
\end{align*}
where $\mathcal{L}_1=\int_0^1\int_{\mathbb{R}^k}|\theta|^2\rd\rho_1\rd t$. 
Since $x\in\text{supp}\, \mu$ (so that $|x|<R$), we have \eqref{boundofxsolution}. 
This gives us an a-priori estimate of~\eqref{eqn:meancontRes_2}.

Next, using \eqref{eqn:derivativebound} from Assumption~\ref{assum:f}, we have
\begin{equation}\label{eqn:differenceoffbound}
|f(z_1,\theta)-f(z_2,\theta)|\leq C_2(|z_1|+|z_2|)\left(|\theta|^2+1\right)|z_1-z_2|\,.
\end{equation}
Then, using boundedness of the second moment of $\theta$, we have 
\begin{align*}
\left|F(z_1,t)-F(z_2,t)\right|& \leq \left|\int_{\mathbb{R}^k} \left(f(z_1,\theta)-f(z_2,\theta)\right)\rd\rho(\theta,t)\right|\\
& \leq C_2(|z_1|+|z_2|)\int_{\mathbb{R}^k}(|\theta|^2+1)\rd\rho(\theta,t)|z_1-z_2| \\
& <C_2(|z_1|+|z_2|)(C+1)|z_1-z_2|\,,
\end{align*}
which implies that $F(z,t)$ is locally Lipschitz in $z$ for all $t\in[0,1]$. 
Combining this with the a-priori estimate in \eqref{boundofxsolution}, classical ODE theory implies that  \eqref{eqn:meancontRes_2} has a unique $\mathcal{C}^1$ solution.

To prove the stability result~\eqref{stabilityofxsolution}, we define $Z_{\rho_i}$ as in \eqref{eqn:meancontRes_2} parameterized by $\rho_i\in \mathcal{C}([0,1];\mathcal{P}^2)$, and denote
\[
\Delta(t;x)=Z_{\rho_1}(t;x)-Z_{\rho_2}(t;x)\,.
\]
Then by subtracting the two equations, we obtain
\begin{equation}\label{boundofDeltatZrho}
\begin{aligned}
& \frac{\rd |\Delta(t;x)|^2}{\rd t}\\
& =2\left\langle \Delta(t;x), \int_{\mathbb{R}^k}f(Z_{\rho_1}(t;x),\theta)\rd\rho_1(\theta,t)-\int_{\mathbb{R}^k}f(Z_{\rho_2}(t;x),\theta)\rd\rho_2(\theta,t)\right\rangle\\
& =2\left\langle \Delta(t;x), \underbrace{\int_{\mathbb{R}^k}f(Z_{\rho_1}(t;x),\theta)\rd\rho_1(\theta,t)-\int_{\mathbb{R}^k}f(Z_{\rho_2}(t;x),\theta)\rd\rho_1(\theta,t)}_{\mathrm{(I)}}\right\rangle\\
& \quad +2\left\langle \Delta(t;x), \underbrace{\int_{\mathbb{R}^k}f(Z_{\rho_2}(t;x),\theta)\rd\rho_1(\theta,t)-\int_{\mathbb{R}^k}f(Z_{\rho_2}(t;x),\theta)\rd\rho_2(\theta,t)}_{\mathrm{(II)}}\right\rangle\,.
\end{aligned}
\end{equation}
We now bound $\mathrm{(I)}$ and $\mathrm{(II)}$. 
For $\mathrm{(I)}$, we have using \eqref{eqn:derivativebound} and \eqref{eqn:differenceoffbound} that
\begin{equation}\label{boundofDeltatZrhoI}
\begin{aligned}
\left|\mathrm{(I)}\right|\leq &2C_2\left(|Z_{\rho_1}(t;x)|+|Z_{\rho_2}(t;x)|\right)|\Delta(t;x)|\int_{\mathbb{R}^k}(|\theta|^2+1)\rd\rho_1(\theta,t)\\
\leq &C(\mathcal{L}_1,\mathcal{L}_2)|\Delta(t;x)|\int_{\mathbb{R}^k}(|\theta|^2+1)\rd\rho_1(\theta,t)\,,
\end{aligned}
\end{equation}
where the second inequality comes from \eqref{boundofxsolution}. 
For $\mathrm{(II)}$, we denote the particle representation $\theta_1(t)\sim \rho_1(\theta,t)$ and $\theta_2(t)\sim \rho_2(\theta,t)$ such that $\left(\EE\left(|\theta_1-\theta_2|^2\right)\right)^{1/2}=W_2(\rho_1(\cdot,t),\rho_2(\cdot,t))$.
Then 
\begin{equation}\label{boundofDeltatZrhoII}
\begin{aligned}
\left|\textrm{(II)}\right|&\leq \EE\left(\left|f(Z_{\rho_2}(t;x),\theta_1)-f(Z_{\rho_2}(t;x),\theta_2)\right|\right)\\
&\leq C_2(Z_{\rho_2}(t;x))\EE\left((|\theta_1|+|\theta_2|+1)|\theta_1-\theta_2|\right)\\
&\leq C(\mathcal{L}_2)\EE\left((|\theta_1|+|\theta_2|+1)|\theta_1-\theta_2|\right)\\
&\leq C(\mathcal{L}_2)\left(\EE\left(|\theta_1|^2+|\theta_2|^2+1\right)\right)^{1/2}\left(\EE\left(|\theta_1-\theta_2|^2\right)\right)^{1/2}\\
&\leq C(\mathcal{L}_2)\left(\EE\left(|\theta_1|^2+|\theta_2|^2+1\right)\right)^{1/2}W_2(\rho_1(\cdot,t),\rho_2(\cdot,t))\\
&\leq C(\mathcal{L}_2)\left(\int_{\mathbb{R}^k}|\theta|^2d\rho_1(\theta,t)+\int_{\mathbb{R}^k}|\theta|^2d\rho_2(\theta,t)+1\right)^{1/2}d_1(\rho_1,\rho_2)\,,
\end{aligned}
\end{equation}
where we use mean-value theorem and \eqref{eqn:derivativebound} in the first inequality to obtain
\[
\begin{aligned}
\left|f(Z_{\rho_2}(t;x),\theta_1)-f(Z_{\rho_2}(t;x),\theta_2)\right|\leq &|\partial_\theta f(Z_{\rho_2}(t;x),\theta_1+\lambda\theta_2)||\theta_1-\theta_2|\\
\leq &C_2(Z_{\rho_2}(t;x))(|\theta_1|+|\theta_2|+1)|\theta_1-\theta_2|,
\end{aligned}
\]
for some $\lambda\in[0,1]$. We use  \eqref{boundofxsolution} in the second inequality of \eqref{boundofDeltatZrhoII}.

Plugging \eqref{boundofDeltatZrhoI}-\eqref{boundofDeltatZrhoII} into \eqref{boundofDeltatZrho} and using H\"older's inequality, we obtain that
\[
\begin{aligned}
\frac{\rd |\Delta(t;x)|^2}{\rd t}\leq &C(\mathcal{L}_1,\mathcal{L}_2)|\Delta(t;x)|^2\int_{\mathbb{R}^k}|\theta|^2\rd\rho_1(\theta,t)\\
&+C(\mathcal{L}_2)|\Delta(t;x)|\left(\int_{\mathbb{R}^k}|\theta|^2d\rho_1(\theta,t)+\int_{\mathbb{R}^k}|\theta|^2d\rho_2(\theta,t)+1\right)^{1/2}d_1(\rho_1,\rho_2)\\
\leq &C(\mathcal{L}_1,\mathcal{L}_2)\left(\int_{\mathbb{R}^k}|\theta|^2d\rho_1(\theta,t)+\int_{\mathbb{R}^k}|\theta|^2d\rho_2(\theta,t)+1\right)\left(|\Delta(t;x)|^2+d^2_1(\rho_1,\rho_2)\right)\,,
\end{aligned}
\]
where we used Young's inequality in the last line. Since $|\Delta(0;x)|=0$, we complete the proof of~\eqref{stabilityofxsolution} using Gr\"onwall's inequality.
\end{proof}

\section{A-priori estimation of the cost function}\label{sec:prioriestimation}

Some a-priori estimates are necessary in the proof for the main theorems. \revise{We first consider the case when $f$ satisfies only Assumption~\ref{assum:f} with $0<k_1\leq k$.} 
(Better a-priori estimates can be obtained when $f$ also satisfies the homogeneity properties of Sections~\ref{sec:2homogeneous} or \ref{sec:partially1homogeneous}.)

\subsection{A-priori estimate for general $f$}
According to~\eqref{eqn:Frechetderivative}, the Fr\'echet derivative can be computed, similarly to \citep{pmlr-v119-lu20b}, as follows:
\begin{equation}\label{eqn:frechet_s}
\frac{\delta E(\rho)}{\delta \rho}(\theta,t)=\mathbb{E}_{x\sim\mu}\left(p^\top_\rho(t;x)f(Z_\rho(t;x),\theta)\right)\,,
\end{equation}
where $p_\rho(t;x)$ is the solution to the following ODE:
\begin{equation}\label{eqn:prhoappendix}
\left\{
\begin{aligned}
\frac{\partial p^\top_\rho}{\partial t} &=-p^\top_\rho\int_{\mathbb{R}^k}\partial_zf(Z_\rho(t;x),\theta)\rd\rho(\theta,t)\,,\\
p_\rho(1;x)& =\left(g(Z_{\rho}(1;x))-y(x)\right)\nabla g(Z_{\rho}(1;x))\,.
\end{aligned}
\right.
\end{equation}

We now show that $p_\rho$ is Lipschitz continuous with respect to $\rho$.
\begin{lemma}\label{lem:prhoprop1}
Suppose that $x$ is in the support of $\mu$. Suppose that $\rho_1,\rho_2\in \mathcal{C}([0,1];\mathcal{P}^2)$ and $p_{\rho_1}$, $p_{\rho_2}$ are the corresponding solutions of \eqref{eqn:prhoappendix}. \revise{Denote $\mathcal{L}_1$ and $\mathcal{L}_2$ as in \eqref{eq:Li} and
\[ 
\mathcal{R}=\min_{r}\left\{r\middle|\mathrm{supp}(\rho_1)\cup\mathrm{supp}(\rho_2)\subset\left\{\theta\middle|\left|\theta_{[1]}\right|<r\right\}\right\}\,.\]
} Then the following two bounds are satisfied:
\begin{equation}\label{boundofprho}
\left|p_{\rho_1}(t;x)\right|\leq C(\mathcal{L}_1)\,,
\end{equation}
and\revise{
\begin{equation}\label{stabilityofprho}
\left|p_{\rho_1}(t;x)-p_{\rho_2}(t;x)\right|\leq C(\mathcal{R},\mathcal{L}) \, d_1(\rho_1,\rho_2)\,,
\end{equation}}
\revise{where $\mathcal{L}=\max\{\mathcal{L}_1,\mathcal{L}_2\}$.}
\end{lemma}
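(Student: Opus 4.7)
\textbf{Proof proposal for Lemma~\ref{lem:prhoprop1}.} The plan is to treat \eqref{eqn:prhoappendix} as a backward-in-time linear ODE and apply Grönwall's inequality twice: once for the pointwise bound~\eqref{boundofprho}, and once for the stability estimate~\eqref{stabilityofprho}. The key preliminary facts are the boundedness and stability of $Z_\rho$ established in Theorem~\ref{thm:wmean-fieldlimit}, together with the regularity bounds on $f$ from Assumption~\ref{assum:f}. Throughout, the constraint $|\theta_{[1]}|\leq \mathcal{R}$ on the supports of $\rho_1,\rho_2$ is what allows us to invoke the uniform second-derivative bound $|\partial^i_x\partial^j_\theta f|\leq C_3(r)$ from Assumption~\ref{assum:f} item~3.

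For the first bound, I would begin by controlling the terminal condition: since $|Z_{\rho_1}(1;x)|\leq C(\mathcal{L}_1)$ by~\eqref{boundofxsolution}, the local Lipschitz continuity of $g$ and $\nabla g$ combined with the local boundedness of $y$ (Assumption~\ref{ass:g}, items 6--7) gives $|p_{\rho_1}(1;x)|\leq C(\mathcal{L}_1)$. Next, the coefficient of the linear ODE satisfies
\[
\left|\int_{\mathbb{R}^k}\partial_z f(Z_{\rho_1}(t;x),\theta)\rd\rho_1(\theta,t)\right|\leq C_2(|Z_{\rho_1}|)\int_{\mathbb{R}^k}(|\theta|^2+1)\rd\rho_1(\theta,t)
\]
by Assumption~\ref{assum:f} item~2, whose time integral is controlled by $\mathcal{L}_1$. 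A backward-in-time Grönwall argument on $|p_{\rho_1}(t;x)|^2$ then yields~\eqref{boundofprho}.

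For the second bound, set $\Delta p(t;x)=p_{\rho_1}(t;x)-p_{\rho_2}(t;x)$. Subtracting the two instances of \eqref{eqn:prhoappendix} produces a backward linear ODE for $\Delta p$ whose terminal condition and source term I would estimate separately. The terminal difference $|\Delta p(1;x)|$ is controlled by $|Z_{\rho_1}(1;x)-Z_{\rho_2}(1;x)|$ through the Lipschitz continuity of $g$ and $\nabla g$, and then by $d_1(\rho_1,\rho_2)$ via the stability estimate~\eqref{stabilityofxsolution}. For the source term I split
\[
\int\partial_zf(Z_{\rho_1},\theta)\rd\rho_1-\int\partial_zf(Z_{\rho_2},\theta)\rd\rho_2 = \underbrace{\int\bigl(\partial_zf(Z_{\rho_1},\theta)-\partial_zf(Z_{\rho_2},\theta)\bigr)\rd\rho_1}_{(\mathrm{I})}+\underbrace{\int\partial_zf(Z_{\rho_2},\theta)\,(\rd\rho_1-\rd\rho_2)}_{(\mathrm{II})}.
\]
Term $(\mathrm{I})$ is bounded by $\sup|\partial_x\partial_zf|\cdot|Z_{\rho_1}-Z_{\rho_2}|$, where the uniform bound on $\partial_x\partial_zf$ comes from Assumption~\ref{assum:f} item~3 (valid because $|\theta_{[1]}|\leq \mathcal{R}$), and $|Z_{\rho_1}-Z_{\rho_2}|\leq C(\mathcal{L})d_1(\rho_1,\rho_2)$ by~\eqref{stabilityofxsolution}. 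For term $(\mathrm{II})$ I would use an optimal Wasserstein coupling $(\theta_1(t),\theta_2(t))$ with $\theta_i(t)\sim\rho_i(\cdot,t)$, so that
\[
|(\mathrm{II})|\leq \mathbb{E}\bigl|\partial_zf(Z_{\rho_2},\theta_1)-\partial_zf(Z_{\rho_2},\theta_2)\bigr|\leq \sup|\partial_\theta\partial_zf|\cdot W_2(\rho_1(\cdot,t),\rho_2(\cdot,t)),
\]
where again $\sup|\partial_\theta\partial_zf|\leq C_3(\max(|Z_{\rho_2}|,\mathcal{R}))$ is finite thanks to the support constraint. Combining these contributions with~\eqref{boundofprho} and applying a backward Grönwall inequality yields~\eqref{stabilityofprho}.

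The main obstacle is term $(\mathrm{II})$: bounding the difference of two integrals of $\partial_z f$ against different measures in terms of $W_2$ requires a uniform Lipschitz estimate on $\partial_z f$ in $\theta$, which is not directly granted by Assumption~\ref{assum:f} items~1--2 (where derivatives are allowed polynomial growth in $\theta$). This is precisely what motivates introducing $\mathcal{R}$ and invoking the local smoothness in item~3, and it is the only step whose constant depends on $\mathcal{R}$ rather than merely on $\mathcal{L}$. All other estimates absorb into the generic constant $C(\mathcal{L})$.
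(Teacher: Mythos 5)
Your proposal matches the paper's proof: you use the same decomposition of the source term (a ``$Z$-difference'' piece and a ``measure-difference'' piece, just with the labels $(\mathrm{I})$/$(\mathrm{II})$ swapped relative to the paper), the same invocation of Assumption~\ref{assum:f} item~3 via the support constraint $|\theta_{[1]}|\leq\mathcal{R}$ to control the second derivatives of $f$, the same optimal Wasserstein coupling for the measure-difference piece, and the same backward Gr\"onwall closure starting from the terminal estimate $|\Delta p(1;x)|\leq C(\mathcal{L})\,d_1(\rho_1,\rho_2)$. One small correction to your closing remark: the $\mathcal{R}$-dependence enters through item~3 in \emph{both} pieces of the split---the $Z$-difference term needs a uniform bound on $\partial_x^2 f$ that items 1--2 do not supply, exactly as your own estimate of $(\mathrm{I})$ correctly reflects---so the measure-difference term is not the only place $\mathcal{R}$ appears.
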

\begin{proof}
From \eqref{eqn:derivativebound} in Assumption \ref{assum:f}, 
\begin{equation}\label{boundofrateprho}
\begin{aligned}
\left|\int_{\mathbb{R}^k}\partial_zf(Z_{\rho_1}(t;x),\theta)\rd\rho_1(\theta,t)\right|\leq & C(Z_{\rho_1}(t;x))\int_{\mathbb{R}^k}(|\theta|^2+1)\rd\rho_1(\theta,t)\\
\leq & C(\mathcal{L}_1)\int_{\mathbb{R}^k}(|\theta|^2+1)\rd\rho_1(\theta,t)\,,
\end{aligned}
\end{equation}
where we use \eqref{boundofxsolution} in the second inequality. 
It follows from the initial condition of~\eqref{eqn:prhoappendix} that
\[
p_{\rho_1}(1;x)\leq C(|Z_{\rho_1}(1,x)|+1)\leq C(\mathcal{L}_1)\,,
\]
where we use Assumption \ref{assum:f} in the first inequality and \eqref{boundofxsolution} in the second inequality. Noting that~\eqref{eqn:prhoappendix} is a linear equation,~\eqref{boundofprho} follows naturally when we combine~\eqref{boundofrateprho} with the inequality above.

To prove \eqref{stabilityofprho}, we define
\[
\Delta(t;x)=p_{\rho_1}(t;x)-p_{\rho_2}(t;x)\,.
\]
For $t=1$, with $x\in\text{supp}\, \mu$, we have
\begin{equation}\label{eqn:ODEforDEltaprhoinitial}
\begin{aligned}
|\Delta(1;x)|&=|p_{\rho_1}(1;x)-p_{\rho_2}(1;x)|\\
& =\left|\left(g(Z_{\rho_1}(1;x))-y(x)\right)\nabla g(Z_{\rho_1}(1;x))-\left(g(Z_{\rho_2}(1;x))-y(x)\right)\nabla g(Z_{\rho_2}(1;x))\right|\\
& \leq C(\mathcal{L})|Z_{\rho_1}(1;x)-Z_{\rho_2}(1;x)| \\
& \leq C(\mathcal{L})d_1(\rho_1,\rho_2)\,,
\end{aligned}
\end{equation}
where we use Assumption \ref{assum:f}, \eqref{boundofxsolution}, and $|x|<{R_{\mu}}$ in the first inequality and \eqref{stabilityofxsolution} in the second inequality. 
The following ODE is satisfied by $\Delta$:
\begin{equation}\label{eqn:ODEforDEltaprho}
\frac{\partial \Delta^\top(t;x)}{\partial t}=-\Delta^\top(t;x)\int_{\mathbb{R}^k}\partial_zf(Z_{\rho_1}(t;x),\theta)\rd\rho_1(\theta,t)+p^\top_{\rho_2}(t;x)D_{\rho_1,\rho_2}(t;x)\,,
\end{equation}
where 
\begin{equation} \label{eqn:ODEforDEltaprho.2}
D_{\rho_1,\rho_2}(t;x)=\int_{\mathbb{R}^k}\partial_zf(Z_{\rho_2}(t;x),\theta)\rd\rho_2(\theta,t)-\int_{\mathbb{R}^k}\partial_zf(Z_{\rho_1}(t;x),\theta)\rd\rho_1(\theta,t)\,.
\end{equation}
To show the boundedness of $D_{\rho_1,\rho_2}(t;x)$, we follow the same strategy as that for~\eqref{boundofDeltatZrho}. 
By splitting into two terms, we obtain
\begin{equation}\label{eqn:boundofDrho}
\begin{aligned}
|D_{\rho_1,\rho_2}(t;x)|\leq&
\underbrace{\left|\int_{\mathbb{R}^k}\partial_zf(Z_{\rho_2}(t;x),\theta)\rd\rho_2(\theta,t)-\int_{\mathbb{R}^k}\partial_zf(Z_{\rho_2}(t;x),\theta)\rd\rho_1(\theta,t)\right|}_{\mathrm{(I)}}\\
&+\underbrace{\left|\int_{\mathbb{R}^k}\partial_zf(Z_{\rho_2}(t;x),\theta)\rd\rho_1(\theta,t)-\int_{\mathbb{R}^k}\partial_zf(Z_{\rho_1}(t;x),\theta)\rd\rho_1(\theta,t)\right|}_{\mathrm{(II)}}\,.
\end{aligned}
\end{equation}
The bound of $\mathrm{(II)}$ relies on part 3 of Assumption~\ref{assum:f}: Because the supports of $\rho_1$, $\rho_2$ are contained in $\left\{\theta\middle|\left|\theta_{[1]}\right|<\mathcal{R}\right\}$ and $Z$ is bounded by \eqref{boundofxsolution}, we have
\begin{equation}\label{eqn:boundofDrho2}
|\mathrm{(II)}|\leq C(\mathcal{R},\mathcal{L})\left|Z_{\rho_1}(t;x)-Z_{\rho_2}(t;x)\right|\leq C(\mathcal{R},\mathcal{L})d_1(\rho_1,\rho_2)\,,
\end{equation}
where we use \eqref{stabilityofxsolution} in the second inequality.

To bound $\textrm{(I)}$, we use the particle representation 
$\theta_1\sim \rho_1(\theta,t)$ and $\theta_2\sim \rho_2(\theta,t)$ such that $\left(\EE\left(|\theta_1-\theta_2|^2\right)\right)^{1/2}=W_2(\rho_1(\cdot,t),\rho_2(\cdot,t))$. We then have
\begin{equation}\label{eqn:boundofDrho1}
\begin{aligned}
\textrm{(I)}&\leq \EE\left(\left|\partial_zf(Z_{\rho_2}(t;x),\theta_1)-\partial_zf(Z_{\rho_2}(t;x),\theta_2)\right|\right)\\
&\leq C(\mathcal{R},\mathcal{L})\EE\left(|\theta_1-\theta_2|\right)\\
&\leq C(\mathcal{R},\mathcal{L})\left(\EE\left(|\theta_1-\theta_2|^2\right)\right)^{1/2}\\
&\leq C(\mathcal{R},\mathcal{L})d_1(\rho_1,\rho_2),
\end{aligned}
\end{equation}
where we use the mean-value theorem, part 3 of Assumption~\ref{assum:f} with $|\theta_{1,[1]}|<\mathcal{R}$ and $|\theta_{2,[1]}|<\mathcal{R}$, \eqref{boundofxsolution} in the second inequality. Substituting \eqref{eqn:boundofDrho2} and \eqref{eqn:boundofDrho1} into \eqref{eqn:boundofDrho}, we obtain
\[
|D_{\rho_1,\rho_2}(t;x)|\leq C(\mathcal{R},\mathcal{L})d_1(\rho_1,\rho_2)\,.
\]
By substituting this bound into \eqref{eqn:ODEforDEltaprho} and using \eqref{boundofrateprho}, we have
\begin{equation}\label{LemmaBexample}
\frac{\rd |\Delta(t;x)|^2}{\rd t} \leq C(\mathcal{R},\mathcal{L})\left(|\Delta(t;x)|^2+d^2_1(\rho_1,\rho_2)\right).
\end{equation}
The result  \eqref{stabilityofprho} follows from the initial condition \eqref{eqn:ODEforDEltaprhoinitial} and  Gr\"onwall's inequality.
\end{proof}

The second lemma concerns the continuity of~$\nabla_\theta \frac{\delta E(\rho)}{\delta \rho}$.
\begin{lemma}\label{lemmadeltaEs}  Suppose that $\rho,\rho_1,\rho_2\in \mathcal{C}([0,1];\mathcal{P}^2)$. \revise{Define 
\[
\mathcal{L}=\max_{1\leq i\leq 3}\int^1_0\int_{\mathbb{R}^k}|\theta|^2\rd\rho_i(\theta,t)\rd t,\quad \mathcal{L}^{\sup}=\max_{1\leq i\leq 3}\sup_{t\in[0,1]}\int_{\mathbb{R}^k}|\theta|^2\rd \rho_{i}(\theta,t)\,,\] 
and 
\[ 
\mathcal{R}=\min_{r}\left\{r\middle|\mathrm{supp}(\rho)\cup\mathrm{supp}(\rho_1)\cup\mathrm{supp}(\rho_2)\subset\{\theta||\theta_{[1]}|<r\}\right\}\,.
\]
}
Then for any $(\theta,t),(\theta_1,t_1),(\theta_2,t_2)\in \mathbb{R}^k\times[0,1]$ and $s>0$, the following properties hold.
\begin{itemize}
\item Boundedness:
\revise{\begin{equation}\label{bound}
\left|\nabla_\theta\frac{\delta E(\rho)}{\delta \rho}(\theta,t)\right|\leq C(\mathcal{L})(|\theta|+1),\quad \left|\partial_{\theta_{[1]}}\frac{\delta E(\rho)}{\delta \rho}(\theta,t)\right|\leq C(\mathcal{L})(|\theta_{[1]}|+1)
\end{equation}}
\item Lipschitz continuity in $\theta$ and $t$: There exists $Q_1:\mathbb{R}^2\rightarrow\mathbb{R}^+$ that depends increasingly on both arguments such that~\revise{
\begin{equation}\label{Lipshictz}
\begin{aligned}
&\left|\nabla_\theta\frac{\delta E(\rho)}{\delta \rho}(\theta_1,t_1)-\nabla_\theta\frac{\delta E(\rho)}{\delta \rho}(\theta_2,t_2)\right|\\
\leq &Q_1\left(\mathcal{L},
\max_{i=1,2}(|\theta_{i,[1]}|)\right)|\theta_1-\theta_2|+Q_1\left(\mathcal{L}^{\sup},
\max_{i=1,2}(|\theta_{i,[1]}|)\right)(|\theta_2|+1)|t_1-t_2|\,,
\end{aligned}
\end{equation}
}

\item Lipschitz continuity in $\rho$. There exists $Q:\mathbb{R}^2\rightarrow\mathbb{R}^+$ that increasingly depends on both arguments such that \revise{
\begin{equation}\label{eqn:Deltabound2}
\left|\nabla_\theta\frac{\delta E(\rho_1)}{\delta \rho}(\theta,t)-\nabla_\theta\frac{\delta E(\rho_2)}{\delta \rho}(\theta,t)\right|\leq Q(\mathcal{L},|\theta_{[1]}|,\mathcal{R})(1+|\theta|)d_1(\rho_1,\rho_2)\,,
\end{equation}
where $d_1$ is defined in Definition \ref{def:path}.}
\end{itemize}
\end{lemma}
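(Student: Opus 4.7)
The plan is to differentiate the identity
\[
\frac{\delta E(\rho)}{\delta \rho}(\theta,t) = \mathbb{E}_{x\sim\mu}\!\left(p^\top_\rho(t;x)\, f(Z_\rho(t;x),\theta)\right)
\]
in $\theta$ under the expectation, obtaining
\[
\nabla_\theta \frac{\delta E(\rho)}{\delta \rho}(\theta,t) = \mathbb{E}_{x\sim\mu}\!\left(p^\top_\rho(t;x)\, \partial_\theta f(Z_\rho(t;x),\theta)\right),
\]
and then combine the pointwise bounds $|Z_\rho(t;x)| \leq C(\mathcal{L})$ from \eqref{boundofxsolution} and $|p_\rho(t;x)| \leq C(\mathcal{L})$ from \eqref{boundofprho} with the structural bounds on $\partial_\theta f$, $\partial^2_\theta f$, and $\partial_z\partial_\theta f$ supplied by Assumption~\ref{assum:f}. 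For the first inequality in \eqref{bound}, I insert the linear-in-$|\theta|$ bound $|\partial_\theta f| \leq C_2(|Z_\rho|)(|\theta|+1)$ from \eqref{eqn:derivativebound} and take the expectation in $x$. When $k_1=k$ the second inequality of \eqref{bound} is identical to the first; when $k_1<k$, I instead apply the sharper bound $|\partial_{\theta_{[1]}} f| \leq C_3(|Z_\rho|)(|\theta_{[1]}|+1)$ from \eqref{eqn:k1smallerthank}, whose prefactor $C_3(|Z_\rho|)$ depends only on $|Z_\rho|$ and so stays bounded by $C(\mathcal{L})$.

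\textbf{Lipschitz in $\theta$ and $t$.} I split the difference in \eqref{Lipshictz} via the intermediate point $(\theta_2,t_1)$. For the $\theta$-jump $(\theta_1,t_1) \to (\theta_2,t_1)$, I integrate $\partial^2_\theta$ along the segment joining $\theta_2$ to $\theta_1$; the Hessian $|p_\rho^\top \partial^2_\theta f|$ is controlled by $C(\mathcal{L})\, C_3(\max\{|Z_\rho|, \max_i|\theta_{i,[1]}|\})$ from \eqref{eqn:part3}, since the $\theta_{[1]}$-component along the segment stays in the ball of radius $\max_i|\theta_{i,[1]}|$. For the $t$-jump $(\theta_2,t_1) \to (\theta_2,t_2)$, I apply the chain rule
\[
\partial_t\!\left(p_\rho^\top \partial_\theta f(Z_\rho,\theta_2)\right) = (\partial_t p_\rho)^\top \partial_\theta f(Z_\rho,\theta_2) + p_\rho^\top \partial_z\partial_\theta f(Z_\rho,\theta_2)\, \partial_t Z_\rho.
\]
The ODEs \eqref{eqn:meancontRes} and \eqref{eqn:prhoappendix} give $|\partial_t Z_\rho|, |\partial_t p_\rho| \leq C(\mathcal{L})\!\int(|\theta|^2+1)\rd\rho(\theta,t)$ at fixed $t$, and this pointwise-in-$t$ integral requires $\mathcal{L}^{\sup}$ rather than the time-averaged $\mathcal{L}$, which is why $\mathcal{L}^{\sup}$ enters the $t$-Lipschitz piece but not the $\theta$-Lipschitz piece. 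The factor $|\partial_\theta f(Z_\rho,\theta_2)| \leq C(|\theta_2|+1)$ then produces the $(|\theta_2|+1)$ prefactor on the right-hand side of \eqref{Lipshictz}.

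\textbf{Lipschitz in $\rho$ and the main obstacle.} I decompose
\[
\nabla_\theta \frac{\delta E(\rho_1)}{\delta \rho}(\theta,t) - \nabla_\theta \frac{\delta E(\rho_2)}{\delta \rho}(\theta,t) = \mathrm{(A)} + \mathrm{(B)},
\]
where $\mathrm{(A)} = \mathbb{E}\bigl[(p_{\rho_1}-p_{\rho_2})^\top \partial_\theta f(Z_{\rho_1},\theta)\bigr]$ and $\mathrm{(B)} = \mathbb{E}\bigl[p_{\rho_2}^\top\bigl(\partial_\theta f(Z_{\rho_1},\theta) - \partial_\theta f(Z_{\rho_2},\theta)\bigr)\bigr]$. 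Term $\mathrm{(A)}$ is bounded by combining $|p_{\rho_1} - p_{\rho_2}| \leq C(\mathcal{R},\mathcal{L}) d_1(\rho_1,\rho_2)$ from \eqref{stabilityofprho} with the growth bound on $\partial_\theta f$, yielding $C(\mathcal{R},\mathcal{L})(|\theta|+1) d_1(\rho_1,\rho_2)$. Term $\mathrm{(B)}$ is handled by a mean-value argument in the $z$-slot, $|\partial_\theta f(Z_{\rho_1},\theta) - \partial_\theta f(Z_{\rho_2},\theta)| \leq |\partial_z\partial_\theta f|\cdot |Z_{\rho_1}-Z_{\rho_2}|$, with $|Z_{\rho_1}-Z_{\rho_2}| \leq C(\mathcal{L}) d_1(\rho_1,\rho_2)$ from \eqref{stabilityofxsolution}. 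The main obstacle sits precisely here: the mixed derivative $\partial_z\partial_\theta f$ is only controlled under the Hessian bound \eqref{eqn:part3}, whose constant depends on $\max\{|Z_\rho|,|\theta_{[1]}|\}$ and not on the full $|\theta|$, so the dependence on $|\theta_{[1]}|$ alone in $Q(\mathcal{L},|\theta_{[1]}|,\mathcal{R})$ is forced by part 3 of Assumption~\ref{assum:f}; meanwhile the $\mathcal{R}$-dependence is inherited from the particle-representation estimate in Lemma~\ref{lem:prhoprop1}, which in turn needs $\mathrm{supp}_{\theta_{[1]}}(\rho_i) \subset \{|\theta_{[1]}|<\mathcal{R}\}$ to bootstrap the $\partial_z f$ Lipschitz step. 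Assembling $\mathrm{(A)}$ and $\mathrm{(B)}$ yields \eqref{eqn:Deltabound2}.
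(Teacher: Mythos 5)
Your proposal is correct and follows essentially the same route as the paper's proof: differentiate under the expectation to get $\nabla_\theta\frac{\delta E(\rho)}{\delta \rho} = \mathbb{E}_\mu(p_\rho^\top\partial_\theta f)$, apply the a~priori bounds on $Z_\rho$, $p_\rho$, and the structural bounds on $\partial_\theta f$, $\partial_\theta^2 f$, $\partial_z\partial_\theta f$ from Assumption~4.1, and for the $\rho$-Lipschitz split into the $p$-difference and $Z$-difference terms and invoke \eqref{stabilityofprho} and \eqref{stabilityofxsolution}. The only cosmetic difference is in the $\theta,t$-Lipschitz bound: you pass through the intermediate point $(\theta_2,t_1)$ and handle the $t$-jump by a single chain-rule/derivative estimate, whereas the paper uses a three-term telescope that freezes first $\theta$, then $Z_\rho$, then $p_\rho$; both yield the same $\mathcal{L}^{\sup}$-dependence since the pointwise-in-$t$ bounds on $\partial_t Z_\rho$ and $\partial_t p_\rho$ in your version, and on $|Z_\rho(t_1)-Z_\rho(t_2)|$ and $|p_\rho(t_1)-p_\rho(t_2)|$ in the paper's, are the same estimate. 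Your remarks on why $\mathcal{L}^{\sup}$ enters only the $t$-piece and why the $|\theta_{[1]}|$- and $\mathcal{R}$-dependence are unavoidable accurately reflect the structure of the argument.
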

\begin{proof} To prove the first bound of ~\eqref{bound}, we restate~\eqref{eqn:frechet_s} as follows
\[
\nabla_\theta\frac{\delta E(\rho)}{\delta \rho}(\theta,t)=\mathbb{E}_{x\sim\mu}\left(p_\rho^\top(t;x)\partial_\theta f(Z_\rho(t;x),\theta)\right)\,,
\]
from which it follows that
\[
\left|\nabla_\theta\frac{\delta E(\rho)}{\delta \rho}(\theta,t)\right|\leq \mathbb{E}_{x\sim\mu}\left(\left|\partial_\theta f(Z_\rho(t;x),\theta)\right|\left|p_\rho(t;x)\right|\right)\leq C(\mathcal{L})(|\theta|+1)\,,
\]
where we use \eqref{eqn:derivativebound}, \eqref{boundofxsolution}, and \eqref{boundofprho} in the second inequality. To prove the second bound in~\eqref{bound}, we use the bound of $\left|\partial_{\theta_{[1]}}f(x,\theta)\right|$ according to Assumption \ref{assum:f} part 3 to obtain
\[
\left|\partial_{\theta_{[1]}}\frac{\delta E(\rho)}{\delta \rho}(\theta,t)\right|\leq \mathbb{E}_{x\sim\mu}\left(\left|\partial_{\theta_{[1]}} f(Z_\rho(t;x),\theta)\right|\left|p_\rho(t;x)\right|\right)\leq C(\mathcal{L})(|\theta_{[1]}|+1)\,,
\]

To prove \eqref{Lipshictz}, we assume $t_1>t_2$ without loss of generality, and use the triangle inequality to obtain
\begin{equation}\label{eqn:boundofdeltaE}
\begin{aligned}
&\left|\nabla_\theta\frac{\delta E(\rho)}{\delta \rho}(\theta_1,t_1)-\nabla_\theta\frac{\delta E(\rho)}{\delta \rho}(\theta_2,t_2)\right|\\
& \leq \underbrace{\left|\mathbb{E}_{x\sim\mu}\left(p^\top_\rho(t_1;x)\partial_\theta f(Z_\rho(t_1;x),\theta_1)-p^\top_\rho(t_1;x)\partial_\theta f(Z_\rho(t_1;x),\theta_2)\right)\right|}_{\mathrm{(I)}}\\
& \quad +\underbrace{\left|\mathbb{E}_{x\sim\mu}\left(p^\top_\rho(t_1;x)\partial_\theta f(Z_\rho(t_1;x),\theta_2)-p^\top_\rho(t_1;x)\partial_\theta f(Z_\rho(t_2;x),\theta_2)\right)\right|}_{\mathrm{(II)}}\\
& \quad +\underbrace{\left|\mathbb{E}_{x\sim\mu}\left(p^\top_\rho(t_1;x)\partial_\theta f(Z_\rho(t_2;x),\theta_2)-p^\top(t_2;x)\partial_\theta f(Z_\rho(t_2;x),\theta_2)\right)\right|}_{\mathrm{(III)}}\,.
\end{aligned}
\end{equation}
To bound $\textrm{(I)}$, we use the mean-value theorem, Assumption \ref{assum:f} \eqref{eqn:part3}, and \eqref{boundofxsolution} to obtain
\begin{align*}
|\partial_\theta f(Z_\rho(t_1;x),\theta_1)-\partial_\theta f(Z_\rho(t_1;x),\theta_2)|
& \leq  |\partial^2_\theta f(Z_\rho(t_1;x),(1-\lambda)\theta_1+\lambda\theta_2)||\theta_1-\theta_2|\\
& \leq C\left(\mathcal{L},\max_{i=1,2}(|\theta_{i,[1]}|)\right)|\theta_1-\theta_2|\,.
\end{align*}
For $\textrm{(II)}$, we note first that 
\begin{align*}
\left|Z_\rho(t_1;x)-Z_\rho(t_2;x)\right|
& \leq \left|\int^{t_1}_{t_2}\int_{\mathbb{R}^k}f(Z_\rho(t;x),\theta)\rd\rho(\theta,t)\rd t\right|\\
& \leq C(\mathcal{L})\left|\int^{t_1}_{t_2}\int_{\mathbb{R}^k}\left(\left|\theta\right|^2+1\right)\rd\rho(\theta,t)\rd t\right|\\
& \leq C(\mathcal{L})(\mathcal{L}^{\sup}+1)|t_1-t_2| \\
& \leq C(\mathcal{L}^{\sup})|t_1-t_2|\,,
\end{align*}
where we use Assumption \ref{assum:f} \eqref{eqn:boundoff} together with~\eqref{boundofxsolution} and $\mathcal{L}\leq \mathcal{L}^{\sup}$. This bound implies that
\begin{align*}
\mathrm{(II)}&\leq \mathbb{E}_{x\sim\mu}\left(|p^\top_\rho(t_1;x)|\left|\partial_\theta f(Z_\rho(t_1;x),\theta_2)-\partial_\theta f(Z_\rho(t_2;x),\theta_2)\right|\right)\\
&\leq  C(\mathcal{L})\mathbb{E}_{x\sim\mu}\left(\left|\partial_\theta f(Z_\rho(t_1;x),\theta_2)-\partial_\theta f(Z_\rho(t_2;x),\theta_2)\right|\right)\\
&\leq  C(\mathcal{L}^{\sup},|\theta_{2,[1]}|)\mathbb{E}_{x\sim\mu}\left(\left|Z_\rho(t_1;x)-Z_\rho(t_2;x)\right|\right)\\
& \leq C(\mathcal{L}^{\sup},|\theta_{2,[1]}|)|t_1-t_2|\,,
\end{align*}
where we use \eqref{boundofprho} in the first inequality, Assumption \ref{assum:f} \eqref{eqn:part3} for $f$, and \eqref{boundofxsolution} in the second inequality.

To bound term $\textrm{(III)}$, we again use boundedness of $Z_\rho$, $p_\rho$ and the Lipschitz condition of $f$ to obtain
\begin{align*}
|p_\rho(t_1;x)-p_\rho(t_2;x)|
& \leq \left|\int^{t_1}_{t_2}\int_{\mathbb{R}^k}p^\top_\rho(t;x)\partial_zf(Z_\rho(t;x),\theta)\rd\rho(\theta,t)\rd t\right| \\
& \leq C(\mathcal{L})\left|\int^{t_1}_{t_2}\int_{\mathbb{R}^k}(\left|\theta\right|^2+1)\rd\rho(\theta,t)\rd t\right|\\
& \leq C(\mathcal{L})(\mathcal{L}^{\sup}+1)|t_1-t_2| \\
& \leq C(\mathcal{L}^{\sup})|t_1-t_2|\,,
\end{align*}
so that 
\[
\mathrm{(III)}\leq \mathbb{E}_{x\sim\mu}\left(|\partial_\theta f(Z_\rho(t_2;x),\theta_2)||p_\rho(t_1;x)-p_\rho(t_2;x)|\right)\leq C(\mathcal{L}^{\sup})(|\theta_2|+1)|t_1-t_2|\,,
\]
where we use Assumption \ref{assum:f} \eqref{eqn:derivativebound} in the second inequality. By substituting these bounds into \eqref{eqn:boundofdeltaE}, we complete the proof of~\eqref{Lipshictz}. 

Finally, to prove \eqref{eqn:Deltabound2}, we recall the definition of the Fr\'echet derivative, to obtain 
\begin{align*}
&\left|\nabla_\theta\frac{\delta E(\rho_1)}{\delta \rho}(\theta,t)-\nabla_\theta\frac{\delta E(\rho_2)}{\delta \rho}(\theta,t)\right|\\
\leq&\mathbb{E}_{x\sim \mu}\left(\left|\partial_\theta f(Z_{\rho_1}(t;x),\theta)p_{\rho_1}(t;x)-\partial_\theta f(Z_{\rho_2}(t;x),\theta)p_{\rho_2}(t;x)\right|\right)\\
\leq &\mathbb{E}_{x\sim \mu}\left(\left|\partial_\theta f(Z_{\rho_1}(t;x),\theta)-\partial_\theta f(Z_{\rho_2}(t;x),\theta)\right||p_{\rho_1}(t;x)|\right)\\
&+\mathbb{E}_{x\sim \mu}\left(\left|\partial_\theta f(Z_{\rho_2}(t;x),\theta)\right|\left|p_{\rho_1}(t;x)-p_{\rho_2}(t;x)\right|\right)\\
\leq &C(\mathcal{L},|\theta_{[1]}|)\mathbb{E}_{x\sim \mu}\left(\left|Z_{\rho_1}(t;x)-Z_{\rho_2}(t;x)\right|\right)+C(\mathcal{L})(|\theta|+1)\mathbb{E}_{x\sim \mu}\left(\left|p_{\rho_1}(t;x)-p_{\rho_2}(t;x)\right|\right)\\
\leq &C(\mathcal{L},|\theta_{[1]}|,\mathcal{R})(1+|\theta|)d_1(\rho_1,\rho_2)\,,
\end{align*}
where we use Assumption \ref{assum:f} for $f$ with \eqref{boundofxsolution} and \eqref{boundofprho} in the third inequality. In the last inequality, we use \eqref{stabilityofxsolution}, \eqref{stabilityofprho}.
\end{proof}

\subsection{A-priori estimate under the homogeneous assumption}
In Theorem~\ref{thm:globalminimal1} and~\ref{thm:globalminimal2}, 2-homogeneity or partial 1-homogeneity are assumed. 
When these properties hold, we can sharpen the estimates obtained in the previous section. 
We summarize our results here.
\begin{lemma}\label{f:update} Suppose that Assumption \ref{assum:f} holds, then there exists a constant $C_3(r)$ depending increasingly on $r$ such that for any $r>0$ with $|x|<r$ and $\theta\in\mathbb{R}^k$, we have the following.
\begin{itemize}
\item If $f$ is 2-homogeneous (Assumption~\ref{assum:2-homogeneous}), then
\begin{equation}\label{eqn:derivativeboundupdate}
\left|\partial^2_{x}f\right|\leq C_3(r)\left(|\theta|^2+1\right),\quad \left|\partial_{x}\partial_\theta f\right|\leq C_3(r) \left(|\theta|+1 \right),\quad \left|\partial^2_\theta f\right|\leq C_3(r)\,.
\end{equation}
\item If $f$ is partially 1-homogeneous (Assumption \ref{assum:f2}), then
\begin{equation}\label{eqn:derivativeboundupdate2}
\left|\partial^2_{x}f\right|\leq C_3(r)(|\theta|+1),\quad \left|\partial_{x}\partial_\theta f\right|\leq C_3(r)(|\theta|+1),\quad \left|\partial^2_\theta f\right|\leq C_3(r)(|\theta|+1)\,,
\end{equation}
where $|\cdot|$ denotes the Frobenius norm. 
\end{itemize}
\end{lemma}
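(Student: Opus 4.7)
The plan is to exploit the homogeneity structure of $f$ to refine the baseline second-derivative bound from Assumption~\ref{assum:f} part~3 into the $|\theta|$-explicit estimates claimed in \eqref{eqn:derivativeboundupdate} and \eqref{eqn:derivativeboundupdate2}. In each case the argument is essentially algebraic: for $2$-homogeneity, differentiating the scaling identity $f(x,\lambda\theta)=\lambda^2 f(x,\theta)$ reduces each second derivative to its value on the unit sphere; for partial $1$-homogeneity, the explicit factorization $f = \theta_{[1]}\hat{f}(x,\theta_{[2]})$ makes every second derivative a product of a derivative of $\hat{f}$ with either $1$ or $\theta_{[1]}$.

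For the $2$-homogeneous case (where $k_1 = k$, as used in Theorem~\ref{thm:globalminimal1}), I would differentiate $f(x,\lambda\theta) = \lambda^2 f(x,\theta)$ once and then twice in $\theta$ to conclude that $\partial_\theta f$ is $1$-homogeneous in $\theta$ and $\partial_\theta^2 f$ is $0$-homogeneous in $\theta$, and similarly that $\partial_x \partial_\theta f$ and $\partial_x^2 f$ are $1$- and $2$-homogeneous in $\theta$ respectively. For $\theta\neq 0$, setting $\lambda = 1/|\theta|$ then yields
\[ |\partial_\theta^2 f(x,\theta)| = \bigl|\partial_\theta^2 f(x,\theta/|\theta|)\bigr|,\quad |\partial_x\partial_\theta f(x,\theta)| = |\theta|\,\bigl|\partial_x\partial_\theta f(x,\theta/|\theta|)\bigr|,\quad |\partial_x^2 f(x,\theta)| = |\theta|^2\,\bigl|\partial_x^2 f(x,\theta/|\theta|)\bigr|. \]
Each right-hand side is uniformly bounded by $C_3(\max\{|x|,1\})$ via Assumption~\ref{assum:f} part~3 applied on the unit sphere---the hypothesis $k_1 = k$ is precisely what makes the constant depend only on $|x|$ and $|\theta| = 1$---giving \eqref{eqn:derivativeboundupdate}. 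The case $\theta = 0$ follows by continuity of the $\mathcal{C}^2$ function $f$.

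For the partially $1$-homogeneous case I would simply apply the product rule to $f = \theta_{[1]}\hat{f}(x,\theta_{[2]})$. Each second derivative collapses to either $\theta_{[1]}\,\partial^\alpha\hat{f}$ (for the blocks $\partial_x^2$, $\partial_{\theta_{[2]}}^2$, and $\partial_x\partial_{\theta_{[2]}}$), to $\partial^\alpha\hat{f}$ (for $\partial_x\partial_{\theta_{[1]}}$ and $\partial_{\theta_{[1]}}\partial_{\theta_{[2]}}$), or to zero (for $\partial_{\theta_{[1]}}^2$). Every derivative of $\hat{f}$ that appears is bounded on $\{|x|<r\}\times\mathbb{R}^{k-1}$ by a constant depending only on $r$ (boundedness of $\hat{f}$ together with Lipschitz continuity of its first derivative, per Assumption~\ref{assum:f2} item~2). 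Consequently every second derivative of $f$ is bounded by $C_3(r)(|\theta_{[1]}|+1) \leq C_3(r)(|\theta|+1)$, establishing \eqref{eqn:derivativeboundupdate2}.

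The only bookkeeping subtlety is the one flagged in Case~1: verifying that $k_1 = k$ is exactly what allows Assumption~\ref{assum:f} part~3 to furnish a uniform constant on the unit sphere (otherwise the bound could a priori depend on the unbounded $|\theta_{[2]}|$ component). Case~2 is then a direct product-rule computation, so I do not anticipate any serious obstacles beyond careful accounting of which factor of $\theta_{[1]}$ appears in each block.
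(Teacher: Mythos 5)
Your proof is correct but follows a somewhat different route from the paper's. For the $2$-homogeneous case, the paper factors $f(x,\theta)=f_1(\theta)\,f_2(x,\theta)$ with $f_1(\theta)=|\theta|^2$ and $f_2(x,\theta)=f(x,\theta/|\theta|)$, then applies the product rule and bounds the derivatives of each factor separately on the region $|\theta|>1$. You instead differentiate the scaling identity $f(x,\lambda\theta)=\lambda^2 f(x,\theta)$ to conclude that $\partial_\theta^2 f$, $\partial_x\partial_\theta f$, and $\partial_x^2 f$ are $0$-, $1$-, and $2$-homogeneous in $\theta$ respectively, then set $\lambda=1/|\theta|$ and bound the value on the unit sphere via Assumption~\ref{assum:f} part~3. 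These encode the same underlying observation (homogeneity controls the $\theta$-growth of each derivative block), but your formulation is arguably more systematic: it avoids the combinatorial explosion of product-rule terms and makes the degree-count transparent. The paper's factorization approach is lower-tech and produces the same list of elementary bounds, just more laboriously. For the partially $1$-homogeneous case the paper simply writes ``can be obtained from direct calculation''; your product-rule accounting over the blocks $\partial_{\theta_{[1]}}$, $\partial_{\theta_{[2]}}$, $\partial_x$ supplies exactly the omitted details. One small overstatement: you claim that $k_1=k$ is ``precisely what makes the constant depend only on $|x|$ and $|\theta|=1$'', but Assumption~\ref{assum:f} part~3 already ties its constant to $r=\max\{|x|,|\theta_{[1]}|\}\le\max\{|x|,1\}$ on the unit sphere for any $k_1$, so the uniform bound there does not actually require $k_1=k$ (though that hypothesis is indeed in force in the theorem where the lemma is used, so nothing breaks).
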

\begin{proof} When $f$ satisfies Assumption \ref{assum:f} and Assumption \ref{assum:f2}, \eqref{eqn:derivativeboundupdate2} can be obtained from direct calculation. 
When $f$ is 2-homogeneous in $\theta$, we have
\[
f(x,\theta)=|\theta|^2f\left(x,\theta/|\theta|\right)=f_1(\theta)f_2(x,\theta)\,,
\]
where $f_1(\theta)=|\theta|^2,f_2(x,\theta)=f\left(x,\theta/|\theta|\right)$. Naturally, with product rule, the derivatives of $f$ becomes the products of derivatives of $f_1$ and $f_2$. We then obtain~\eqref{eqn:derivativeboundupdate}, noting that when $|\theta|>1$, we have
\[
\left|\partial^2_{x}f_1\right|=0,\quad \left|\partial_{x}\partial_\theta f_1\right|\leq 0,\quad \left|\partial_{x} f_1\right|= 0, \quad \left|\partial_\theta f_1\right|\leq 2|\theta|,\quad \left|\partial^2_\theta f_1\right|\leq 2k,
\]
and
\[
\left|\partial^2_{x}f_2\right|=C(r),\quad \left|\partial_{x}\partial_\theta f_2\right|\leq \frac{C(r)}{|\theta|},\quad \left|\partial_{x} f_2\right|= C(r),\quad \left|\partial_\theta f_2\right|\leq \frac{C(r)}{|\theta|},\quad \left|\partial^2_\theta f_2\right|\leq\frac{C(r)}{|\theta|^2}\,.
\]
\end{proof}

The estimates above allow us to improve the stability results in Lemmas~\ref{lem:prhoprop1} and \ref{lemmadeltaEs}.
\begin{lemma}\label{lemmadeltaprhoupdate}  
Suppose that Assumption~\ref{assum:f} holds and let $\rho_1,\rho_2\in \mathcal{C}([0,1];\mathcal{P}^2)$. Define $\mathcal{L}_1$ and $\mathcal{L}_2$ as in \eqref{eq:Li} and
\[
 \mathcal{L}^{\sup}_1=\sup_{t\in[0,1]}\int_{\mathbb{R}^k}|\theta|^2d\rho_1(\theta,t)\,.
\]
Then we have the following results.
\begin{itemize}
\item If $f$ either satisfies Assumption~\ref{assum:2-homogeneous} or Assumption~\ref{assum:f2}, we have the following properties.
\begin{itemize}
\item[1.] Stability of $p_\rho$: 
\begin{equation}\label{stabilityofprhoupdate}
\left|p_{\rho_1}(t;x)-p_{\rho_2}(t;x)\right|\leq C(\mathcal{L}_1,\mathcal{L}_2)d_1(\rho_1,\rho_2)\,.
\end{equation}
\item[2.] Lipschitz continuity in $\rho$: 
\begin{equation}\label{eqn:Deltabound2update}
\left|\nabla_\theta\frac{\delta E(\rho_1)}{\delta \rho}(\theta,t)-\nabla_\theta\frac{\delta E(\rho_2)}{\delta \rho}(\theta,t)\right|\leq C(\mathcal{L}_1,\mathcal{L}_2)d_1(\rho_1,\rho_2)\left(|\theta|+1\right)\,,
\end{equation}
where $d_1$ is defined in \eqref{def:path}.
\end{itemize}
\item Lipschitz continuity in $\theta$ and $t$: If $f$ satisfies Assumption~\ref{assum:2-homogeneous}, then for any $(\theta_1,t_1),(\theta_2,t_2)\in \mathbb{R}^k\times[0,1]$, we have
\begin{equation}\label{Lipshictzupdate}
\left|\nabla_\theta\frac{\delta E(\rho_1)}{\delta \rho}(\theta_1,t_1)-\nabla_\theta\frac{\delta E(\rho_1)}{\delta \rho}(\theta_2,t_2)\right|\leq C(\mathcal{L}_1)|\theta_1-\theta_2|+C(\mathcal{L}^{\sup}_1)(|\theta_2|+1)|t_1-t_2|\,.
\end{equation}

\item Lipschitz continuity in $\theta$ and $t$: If $f$ satisfies Assumption \ref{assum:f2}, then for any $(\theta_1,t_1),(\theta_2,t_2)\in \mathbb{R}^k\times[0,1]$, we have
\begin{equation}\label{Lipshictzupdate2}
\left|\nabla_\theta\frac{\delta E(\rho_1)}{\delta \rho}(\theta_1,t_1)-\nabla_\theta\frac{\delta E(\rho_1)}{\delta \rho}(\theta_2,t_2)\right|\leq C(\mathcal{L}^{\sup}_1)(|\theta_1|+|\theta_2|+1)(|\theta_1-\theta_2|+|t_1-t_2|)\,,
\end{equation}
\end{itemize}
\end{lemma}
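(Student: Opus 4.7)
The plan is to mimic the proofs of Lemma~\ref{lem:prhoprop1} and Lemma~\ref{lemmadeltaEs}, but to replace every use of the local second-derivative bound \eqref{eqn:part3} (which required $|\theta_{[1]}|<\mathcal{R}$) by the global, polynomial-in-$\theta$ bounds provided by Lemma~\ref{f:update}. That is exactly what buys us the removal of the $\mathcal{R}$-dependence: the homogeneity of $f$ upgrades \eqref{eqn:part3} to bounds on $|\partial^2_x f|, |\partial_x\partial_\theta f|, |\partial^2_\theta f|$ that grow at most polynomially in $|\theta|$ on the whole of $\mathbb{R}^k$, and this growth can be absorbed by the second moments $\mathcal{L}_1,\mathcal{L}_2$ and Wasserstein couplings.

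To prove \eqref{stabilityofprhoupdate}, I reuse the ODE \eqref{eqn:ODEforDEltaprho}--\eqref{eqn:ODEforDEltaprho.2} for $\Delta(t;x)=p_{\rho_1}-p_{\rho_2}$ together with the decomposition \eqref{eqn:boundofDrho}. The term $\mathrm{(II)}$ there is controlled by $|Z_{\rho_1}-Z_{\rho_2}|$ times $\int(|\theta|^2+1)\rd\rho_1$ (in the $2$-homogeneous case) or $\int(|\theta|+1)\rd\rho_1$ (in the partially $1$-homogeneous case), using Lemma~\ref{f:update}; \eqref{stabilityofxsolution} then gives a $d_1(\rho_1,\rho_2)$ factor. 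For $\mathrm{(I)}$, I pick an optimal coupling $(\theta_1,\theta_2)$ with $(\mathbb{E}|\theta_1-\theta_2|^2)^{1/2}=W_2(\rho_1(\cdot,t),\rho_2(\cdot,t))$, apply the mean value theorem with the bound on $|\partial_x\partial_\theta f|\leq C(|\theta|+1)$ from Lemma~\ref{f:update}, and use Cauchy--Schwarz to obtain a bound of the form $C(\mathcal{L}_1,\mathcal{L}_2)d_1(\rho_1,\rho_2)$. Plugging into the analogue of \eqref{LemmaBexample} and applying Gr\"onwall, with the same terminal estimate \eqref{eqn:ODEforDEltaprhoinitial}, yields \eqref{stabilityofprhoupdate} without any $\mathcal{R}$.

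For \eqref{eqn:Deltabound2update} I split
\[
\partial_\theta f(Z_{\rho_1},\theta)p_{\rho_1}-\partial_\theta f(Z_{\rho_2},\theta)p_{\rho_2}=\bigl(\partial_\theta f(Z_{\rho_1},\theta)-\partial_\theta f(Z_{\rho_2},\theta)\bigr)p_{\rho_1}+\partial_\theta f(Z_{\rho_2},\theta)\bigl(p_{\rho_1}-p_{\rho_2}\bigr),
\]
take expectation over $\mu$, and estimate each piece: the first uses $|\partial_x\partial_\theta f|\leq C(|\theta|+1)$, \eqref{boundofprho}, and \eqref{stabilityofxsolution}; the second uses \eqref{eqn:derivativebound} together with the freshly proven \eqref{stabilityofprhoupdate}. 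Both pieces produce an overall factor of $(|\theta|+1)d_1(\rho_1,\rho_2)$ with constants depending only on $\mathcal{L}_1,\mathcal{L}_2$, which is exactly \eqref{eqn:Deltabound2update}.

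The Lipschitz estimates \eqref{Lipshictzupdate} and \eqref{Lipshictzupdate2} follow from the three-term decomposition \eqref{eqn:boundofdeltaE} of the proof of \eqref{Lipshictz}, but now using Lemma~\ref{f:update} in place of \eqref{eqn:part3}. Under Assumption~\ref{assum:2-homogeneous}, $|\partial^2_\theta f|$ is bounded uniformly in $\theta$, so $\textrm{(I)}\leq C(\mathcal{L}_1)|\theta_1-\theta_2|$, while $\textrm{(II)}$ and $\textrm{(III)}$ acquire the $(|\theta_2|+1)$ factor from $|\partial_x\partial_\theta f|\leq C(|\theta|+1)$ and $|\partial_\theta f|\leq C(|\theta|+1)$ combined with $|Z_\rho(t_1)-Z_\rho(t_2)|,|p_\rho(t_1)-p_\rho(t_2)|\leq C(\mathcal{L}^{\sup}_1)|t_1-t_2|$. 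Under Assumption~\ref{assum:f2}, the bound $|\partial^2_\theta f|\leq C(|\theta|+1)$ forces the extra $(|\theta_1|+|\theta_2|+1)$ multiplicative factor in $\textrm{(I)}$, and the same factor arises in $\textrm{(II)},\textrm{(III)}$, producing \eqref{Lipshictzupdate2}.

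The main (mild) obstacle is the $\mathrm{(I)}$-term of the $D_{\rho_1,\rho_2}$ estimate, where the $(|\theta|+1)$ growth of $|\partial_x\partial_\theta f|$ must be integrated against a Wasserstein coupling; here the Cauchy--Schwarz step must be done carefully so that the resulting constant depends only on the second moments $\mathcal{L}_1,\mathcal{L}_2$ and not on any support radius of $\theta_{[1]}$. Everything else is a bookkeeping refinement of the arguments already carried out for Lemmas~\ref{lem:prhoprop1} and~\ref{lemmadeltaEs}.
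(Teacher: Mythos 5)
Your proposal is correct and follows essentially the same route as the paper: bound the $D_{\rho_1,\rho_2}$ term in the $p$-ODE using the homogeneity-upgraded derivative bounds of Lemma~\ref{f:update} (in place of \eqref{eqn:part3}), apply an optimal coupling and Cauchy--Schwarz so that only second moments $\mathcal{L}_1,\mathcal{L}_2$ appear, close with Gr\"onwall, and reuse the three-term split \eqref{eqn:boundofdeltaE} for the $\theta$- and $t$-Lipschitz estimates. You have also correctly identified the only delicate point, namely making the Wasserstein-coupling estimate for term $\mathrm{(I)}$ support-free.
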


\begin{remark}
We note that in comparing~\eqref{stabilityofprho} with~\eqref{stabilityofprhoupdate} and  \eqref{Lipshictz}-\eqref{eqn:Deltabound2} with~\eqref{eqn:Deltabound2update}-\eqref{Lipshictzupdate2}, the main differences are the dependence of the bounds on $\theta$. The new estimates have explicit (and mild) dependence on $\theta$.
\end{remark}
\begin{proof} First, to prove \eqref{stabilityofprhoupdate}, we let $\Delta(t;x)=p_{\rho_1}(t;x)-p_{\rho_2}(t;x)$, and recall \eqref{eqn:ODEforDEltaprho} and \eqref{eqn:ODEforDEltaprho.2}.
Using the boundedness of $Z_\rho$ in~\eqref{boundofxsolution} and \eqref{stabilityofxsolution}, and calling \eqref{eqn:derivativeboundupdate} (or \eqref{eqn:derivativeboundupdate2}), we have from \eqref{eqn:ODEforDEltaprho.2} that
\[
|D_{\rho_1,\rho_2}(t;x)|\leq C(\mathcal{L}_1,\mathcal{L}_2)\left(\int_{\mathbb{R}^k}|\theta|^2d\rho_1(\theta,t)+\int_{\mathbb{R}^k}|\theta|^2d\rho_2(\theta,t)+1\right) d_1(\rho_1,\rho_2)\,.
\]
By substituting into \eqref{eqn:ODEforDEltaprho}, and using \eqref{boundofprho}, \eqref{boundofrateprho}, and H\"older's inequality, we have
\[
\begin{aligned}
\frac{\rd |\Delta(t;x)|^2}{\rd t}\leq& C(\mathcal{L}_1,\mathcal{L}_2)\left(\int_{\mathbb{R}^k}|\theta|^2d\rho_1(\theta,t)+\int_{\mathbb{R}^k}|\theta|^2d\rho_2(\theta,t)+1\right)|\Delta(t;x)|^2\\
&+C(\mathcal{L}_1,\mathcal{L}_2)\left(\int_{\mathbb{R}^k}|\theta|^2d\rho_1(\theta,t)+\int_{\mathbb{R}^k}|\theta|^2d\rho_2(\theta,t)+1\right)d^2_1(\rho_1,\rho_2)\,.
\end{aligned}
\]
The result \eqref{stabilityofprhoupdate} follows from the initial condition~\eqref{eqn:ODEforDEltaprhoinitial} together with  Gr\"onwall's inequality.

Next, to prove \eqref{eqn:Deltabound2update}, we have 
\[
\begin{aligned}
&\left|\nabla_\theta\frac{\delta E(\rho_1)}{\delta \rho}(\theta,t)-\nabla_\theta\frac{\delta E(\rho_2)}{\delta \rho}(\theta,t)\right|\\
\leq&\mathbb{E}_{x\sim \mu}\left(\left|\partial_\theta f(Z_{\rho_1}(t;x),\theta)p_{\rho_1}(t;x)-\partial_\theta f(Z_{\rho_2}(t;x),\theta)p_{\rho_2}(t;x)\right|\right)\\
\leq &\mathbb{E}_{x\sim \mu}\left(\left|\partial_\theta f(Z_{\rho_1}(t;x),\theta)-\partial_\theta f(Z_{\rho_2}(t;x),\theta)\right||p_{\rho_1}(t;x)|\right)\\
&+\mathbb{E}_{x\sim \mu}\left(\left|\partial_\theta f(Z_{\rho_2}(t;x),\theta)\right|\left|p_{\rho_1}(t;x)-p_{\rho_2}(t;x)\right|\right)\\
\leq &C(\mathcal{L}_1,\mathcal{L}_2)(|\theta|+1)\left(\mathbb{E}_{x\sim \mu}\left(\left|Z_{\rho_1}(t;x)-Z_{\rho_2}(t;x)\right|\right)+\mathbb{E}_{x\sim \mu}\left(\left|p_{\rho_1}(t;x)-p_{\rho_2}(t;x)\right|\right)\right)\\
\leq &C(\mathcal{L}_1,\mathcal{L}_2)d_1(\rho_1,\rho_2)\left(|\theta|+1\right)\,,
\end{aligned}
\]
where we also use \eqref{eqn:derivativebound}, \eqref{boundofxsolution}, \eqref{boundofprho}, \eqref{eqn:derivativeboundupdate} (or \eqref{eqn:derivativeboundupdate2}) in the second inequality and \eqref{stabilityofxsolution} and \eqref{stabilityofprhoupdate} in the final inequality.

Finally, to prove \eqref{Lipshictzupdate} and \eqref{Lipshictzupdate2}, we have as in \eqref{eqn:boundofdeltaE}  that
\begin{equation}\label{eqn:boundofdeltaEupdate}
\begin{aligned}
&\left|\nabla_\theta\frac{\delta E(\rho_1)}{\delta \rho}(\theta_1,t_1)-\nabla_\theta\frac{\delta E(\rho_1)}{\delta \rho}(\theta_2,t_2)\right|\\
\leq &\underbrace{\left|\mathbb{E}_{x\sim\mu}\left(p^\top_{\rho_1}(t_1;x)\partial_\theta f(Z_{\rho_1}(t_1;x),\theta_1)-p^\top_{\rho_1}(t_1;x)\partial_\theta f(Z_{\rho_1}(t_1;x),\theta_2)\right)\right|}_{\mathrm{(I)}}\\
&+\underbrace{\left|\mathbb{E}_{x\sim\mu}\left(p^\top_{\rho_1}(t_1;x)\partial_\theta f(Z_{\rho_1}(t_1;x),\theta_2)-p^\top_{\rho_1}(t_1;x)\partial_\theta f(Z_{\rho_1}(t_2;x),\theta_2)\right)\right|}_{\mathrm{(II)}}\\
&+\underbrace{\left|\mathbb{E}_{x\sim\mu}\left(p^\top_{\rho_1}(t_1;x)\partial_\theta f(Z_{\rho_1}(t_2;x),\theta_2)-p^\top_{\rho_1}(t_2;x)\partial_\theta f(Z_{\rho_1}(t_2;x),\theta_2)\right)\right|}_{\mathrm{(III)}}\,.
\end{aligned}
\end{equation}
The boundedness of the three terms above relies on Assumption \ref{assum:f} and \eqref{eqn:derivativeboundupdate} (or \eqref{eqn:derivativeboundupdate2}). 

To bound $\mathrm{(I)}$, if $f$ is $2$-homogeneous, we have
\[
|\mathrm{(I)}|\leq C(\mathcal{L}_1)|\theta_1-\theta_2|\,,
\]
where we use \eqref{boundofxsolution}, \eqref{boundofprho}, and $\partial^2_\theta f\leq C(|z|)$. 
If $f$ satisfies Assumption \ref{assum:f2}, we have
\[
|\mathrm{(I)}|\leq C(\mathcal{L}_1)(|\theta_1|+|\theta_2|+1)|\theta_1-\theta_2|\,,
\]
where we use \eqref{boundofxsolution}, \eqref{boundofprho}, and $\partial^2_\theta f(z,\theta)\leq C(|z|)(|\theta|+1)$. 

The bounds of $\textrm{(II)}$ and $\textrm{(III)}$ are same for both homogeneity assumptions and similar to the proof of Lemma~\ref{lemmadeltaEs}. For $\textrm{(II)}$, we have
\[
|\partial_\theta f(Z_{\rho_1}(t_1;x),\theta_2)-\partial_\theta f(Z_{\rho_1}(t_2;x),\theta_2)|\leq C(\mathcal{L}_1)(|\theta_2|+1)|Z_{\rho_1}(t_1;x)-Z_{\rho_1}(t_2;x)|
\]
and
\begin{equation*}
\left|Z_{\rho_1}(t_1;x)-Z_{\rho_1}(t_2;x)\right|\leq C(\mathcal{L}_1)(\mathcal{L}^{\sup}_1+1)|t_1-t_2|\,.
\end{equation*}
For $\textrm{(III)}$, we also use
\[
|\partial_\theta f(Z_{\rho_1}(t_2;x),\theta_2)|\leq C(\mathcal{L}_1)(|\theta_2|+1) 
\]
and
\begin{equation*}
|p_{\rho_1}(t_1;x)-p_{\rho_1}(t_2;x)|\leq C(\mathcal{L}_1)(\mathcal{L}^{\sup}_1+1)|t_1-t_2|\,.
\end{equation*}
These estimates, together with $\mathcal{L}_1\leq \mathcal{L}^{\sup}_1$ and \eqref{eqn:boundofdeltaEupdate}, prove the claims~\eqref{Lipshictzupdate} and \eqref{Lipshictzupdate2}.
\end{proof}

\section{Well posedness of gradient flow}\label{sec:proofofthmWassgradientflows}
Theorem~\ref{thm:Wassgradientflows} is about well-posedness of the gradient flow equation~\eqref{eqn:Wassgradientflow} in the mean-field limit, while Theorem~\ref{thm:finiteLwell-posed1} shows the corresponding result in the discrete setting. The proof for the two are similar. We first show the mean-field limit well-posedness, Theorem~\ref{thm:Wassgradientflows}.

\subsection{Proof of Theorem~\ref{thm:Wassgradientflows}}

We use the fixed-point argument. To do so, we first define a subset of $\mathcal{C}([0,S];\mathcal{C}([0,1];\mathcal{P}^2))$ with compact support measures, as follows:
\begin{align*}
\Omega_S=&\left\{\phi(\theta,t,s)\in \mathcal{C}([0,S];\mathcal{C}([0,1];\mathcal{P}^2))\middle|\exists r>0,\ \mathrm{supp}(\phi)\subset\{\theta||\theta_{[1]}|<r\},\  \forall (t,s)\in[0,1]\times[0,S]\right\}\\
&\cap \left\{\phi(\theta,t,s)\in \mathcal{C}([0,S];\mathcal{C}([0,1];\mathcal{P}^2))\middle|\phi(\theta,t,0)=\rho_{\ini}(\theta,t)\right\}
\end{align*}
For any $\phi(\theta,t,s)\in \Omega_S$ with $\phi(\theta,t,0)=\rho_{\ini}(\theta,t)$, we define a map
\begin{equation}\label{eqn:map}
\varphi=\mathcal{T}_S(\phi):\Omega_S\rightarrow \Omega_S
\end{equation}
where $\varphi$ solves
\begin{equation}\label{eqn:varphi}
\left\{
\begin{aligned}
&\frac{\partial \varphi(\theta,t,s)}{\partial s}=\nabla_\theta\cdot\left(\varphi(\theta,t,s)\nabla_\theta\frac{\delta E(\phi(s))}{\delta \rho}(\theta,t)\right)\,,\\
&\varphi(\theta,t,0)=\rho_{\ini}(\theta,t)\,.
\end{aligned}\right.
\end{equation}
The strategy is to show this map is a contraction map, so that there is a fixed point in the set $\Omega_S$, which is then the solution to equation~\eqref{eqn:Wassgradientflow}.

The proof of Theorem~\ref{thm:Wassgradientflows} is divided into three steps:
\begin{itemize}
\item[Step 1:]
We show $\mathcal{T}_S$ is well-defined map from $\Omega_S$ to $\Omega_S$. 
\item[Step 2:] We give a bound of $d_2(\mathcal{T}_S(\phi_1),\mathcal{T}_S(\phi_2))$ in terms of $d_2(\phi_1\,,\phi_2)$. 
One can then tune $S$ to ensure $\mathcal{T}_S$ is a contraction map, meaning there is a unique fixed point $\phi^\ast$ so that $\phi^\ast=\mathcal{T}_S(\phi^\ast)$, and thus $\phi^\ast$ solves~\eqref{eqn:varphi} for $s<S$.
\item[Step 3:] We extend the local solution to a global solution.
\end{itemize}

\paragraph{Step 1.}\label{sec:Step1}
According to the definition of~\eqref{eqn:varphi}, for a fixed $\phi(\theta,t,s)$, let
\begin{equation}\label{eqn:ODEdeltaEs}
\begin{cases}
\frac{\rd \theta_\phi(s;t)}{\rd s}=-\nabla_{\theta}\frac{\delta E(\phi(s))}{\delta \rho}\left(\theta_\phi(s;t),t\right)\,,\\
\theta_\phi(0;t)\sim \rho_{\ini}(\theta,t)\,.
\end{cases}
\end{equation}
Then $\theta_\phi\sim\varphi=\mathcal{T}_S(\phi)$. 
To show the existence of $\varphi$ amounts to showing the wellposedness of~\eqref{eqn:ODEdeltaEs}.

According to Lemma \ref{lemmadeltaEs} \eqref{bound} and \eqref{Lipshictz}, the force $\nabla_\theta\frac{\delta E}{\delta \phi(s)}(\cdot,t)$ has a linear growth and is locally Lipschitz. 
Classical ODE theory then suggests there is a unique solution for $s\in[0,S]$, which depends continuously on the initial value $\theta(0;t)$.

Denoting\revise{
\begin{equation}\label{eqn:L}
\begin{aligned}
\mathcal{L}_{S,\phi} &=\sup_{0\leq s\leq S}\int^1_0\int_{\mathbb{R}^k}|\theta|^2\rd\phi(\theta,t,s) \rd t,\\
\mathcal{L}^{\sup}_{\ini}&=\sup_{0\leq t\leq 1}\int_{\mathbb{R}^k}|\theta|^2\rd\rho_{\ini}(\theta,t),\\
\mathcal{L}^{\sup}_{S,\phi} &=\sup_{0\leq t\leq1, 0\leq s\leq S}\int_{\mathbb{R}^k}|\theta|^2\rd\phi(\theta,t,s)\\
\mathcal{R}_{\ini} &=\inf_{r>0}\left\{\mathrm{supp}(\rho_{\ini}(\theta,t))\subset\{\theta||\theta_{[1]}|<r\},\  \forall t\in[0,1]\right\},\\
\mathcal{R}_{S,\phi} &=\inf_{r>0}\left\{\mathrm{supp}(\phi(\theta,t,s))\subset\{\theta||\theta_{[1]}|<r\},\  \forall (t,s)\in[0,1]\times[0,S]\right\}\,,
\end{aligned}
\end{equation}}
we have the following proposition.
\begin{proposition}\label{proposition:deltaEODE}
Suppose that $\theta_\phi(s;t)$ solves \eqref{eqn:ODEdeltaEs} and $\phi\in \Omega_S$. 
Then for any $(t_1,s_1),(t_2,s_2)\in[0,1]\times[0,S]$, we have\revise{
\begin{align}
\nonumber
\mathbb{E}\left(\left|\theta_\phi(s_1;t_1)\right|^2\right) & \leq \exp(SC(\mathcal{L}_{S,\phi}))\left(\mathcal{L}^{\sup}_{\ini}+1\right)\,,\ 
|\theta_{\phi,[1]}(s_1;t_1)| \\
\label{boundofODEsolutiontheta}
& \leq \exp(SC(\mathcal{L}_{S,\phi}))\left(\mathcal{R}_{\ini}+1\right)
\end{align}}
and
\revise{\begin{equation}\label{stabilityofODEsolutiontheta}
\begin{aligned}
&\mathbb{E}\left(\left|\theta_\phi(s_1;t_1)-\theta_\phi(s_2;t_2)\right|^2\right)\\
& \leq C\left(\mathcal{L}^{\sup}_{S,\phi},\mathcal{R}_{\ini},S\right)\left(\mathbb{E}\left(\left|\theta_\phi(0;t_1)-\theta_\phi(0;t_2)\right|^2\right)+|t_1-t_2|^2+|s_1-s_2|^2\right)\,.
\end{aligned}
\end{equation}}
\end{proposition}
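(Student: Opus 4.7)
The plan is to derive the two bounds in \eqref{boundofODEsolutiontheta} directly from the ODE~\eqref{eqn:ODEdeltaEs} via Gr\"onwall's inequality, and then split the stability estimate \eqref{stabilityofODEsolutiontheta} into a ``fixed $t$, different $s$'' piece and a ``different $t$, fixed $s$'' piece, handling each separately.

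For the first bound, I would multiply \eqref{eqn:ODEdeltaEs} by $\theta_\phi$ and use the first estimate of~\eqref{bound} from Lemma~\ref{lemmadeltaEs} applied to $\rho=\phi(\cdot,\cdot,s)$ (whose time-integrated second moment is controlled by $\mathcal{L}_{S,\phi}$) to obtain
\[
\tfrac{\rd}{\rd s}|\theta_\phi(s;t_1)|^2 \;\leq\; 2|\theta_\phi|\,C(\mathcal{L}_{S,\phi})(|\theta_\phi|+1) \;\leq\; C(\mathcal{L}_{S,\phi})(|\theta_\phi|^2+1).
\]
Taking expectations and applying Gr\"onwall on $[0,S]$ with the initial bound $\mathbb{E}|\theta_\phi(0;t_1)|^2\leq \mathcal{L}^{\sup}_{\ini}$ yields the first inequality in \eqref{boundofODEsolutiontheta}. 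For the second inequality, observe that the $\theta_{[1]}$-component evolves according to $\tfrac{\rd}{\rd s}\theta_{\phi,[1]} = -\partial_{\theta_{[1]}} \tfrac{\delta E(\phi(s))}{\delta \rho}(\theta_\phi,t_1)$, so the second estimate of~\eqref{bound} gives the pointwise bound $\tfrac{\rd}{\rd s}|\theta_{\phi,[1]}|^2\leq C(\mathcal{L}_{S,\phi})(|\theta_{\phi,[1]}|^2+1)$; since $|\theta_{\phi,[1]}(0;t_1)|\leq \mathcal{R}_{\ini}$ by the support hypothesis on $\rho_{\ini}$, Gr\"onwall again gives the desired bound.

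For \eqref{stabilityofODEsolutiontheta}, I would use the triangle inequality
\[
\mathbb{E}|\theta_\phi(s_1;t_1)-\theta_\phi(s_2;t_2)|^2 \;\leq\; 2\mathbb{E}|\theta_\phi(s_1;t_1)-\theta_\phi(s_2;t_1)|^2 + 2\mathbb{E}|\theta_\phi(s_2;t_1)-\theta_\phi(s_2;t_2)|^2.
\]
For the first term, direct integration of \eqref{eqn:ODEdeltaEs} in $s$ combined with the $L^2$ bound already obtained gives
\[
\mathbb{E}|\theta_\phi(s_1;t_1)-\theta_\phi(s_2;t_1)|^2 \;\leq\; |s_1-s_2|\int_{s_2}^{s_1}\mathbb{E}\Bigl|\nabla_\theta\tfrac{\delta E(\phi(s))}{\delta\rho}(\theta_\phi(s;t_1),t_1)\Bigr|^2 \rd s \;\leq\; C(\mathcal{L}^{\sup}_{S,\phi},\mathcal{R}_{\ini},S)|s_1-s_2|^2.
\]
For the second term, set $\Delta(s)=\theta_\phi(s;t_1)-\theta_\phi(s;t_2)$ and write
\[
\tfrac{\rd}{\rd s}\Delta(s) = -\Bigl[\nabla_\theta\tfrac{\delta E(\phi(s))}{\delta\rho}(\theta_\phi(s;t_1),t_1) - \nabla_\theta\tfrac{\delta E(\phi(s))}{\delta\rho}(\theta_\phi(s;t_2),t_2)\Bigr].
\]
The Lipschitz estimate \eqref{Lipshictz}, together with the a priori bound $|\theta_{\phi,[i]}(s;t)_{[1]}|\leq C(\mathcal{R}_{\ini},\mathcal{L}_{S,\phi},S)$ established above, gives
\[
\tfrac{\rd}{\rd s}|\Delta(s)|^2 \;\leq\; C(\mathcal{L}^{\sup}_{S,\phi},\mathcal{R}_{\ini},S)\bigl(|\Delta(s)|^2 + (|\theta_\phi(s;t_2)|^2+1)|t_1-t_2|^2\bigr).
\]
Taking expectations, using the $L^2$ bound on $\theta_\phi(s;t_2)$, and applying Gr\"onwall on $[0,s_2]\subset[0,S]$ produces the remaining terms $\mathbb{E}|\theta_\phi(0;t_1)-\theta_\phi(0;t_2)|^2$ and $|t_1-t_2|^2$ on the right-hand side. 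Combining the two contributions yields \eqref{stabilityofODEsolutiontheta}.

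The main technical obstacle I anticipate is the coupling between the two components of $\theta$ in the Lipschitz estimate: \eqref{Lipshictz} depends on $|\theta_{i,[1]}|$, not on the full norm $|\theta_i|$, so one must first push the a priori bound for $\theta_{\phi,[1]}$ through before invoking \eqref{Lipshictz}. Once this pointwise bound on $|\theta_{\phi,[1]}|$ is available, every constant $Q_1(\mathcal{L},\cdot)$ appearing in the Lipschitz estimate can be absorbed into a constant depending only on $\mathcal{L}^{\sup}_{S,\phi}, \mathcal{R}_{\ini}, S$, and the Gr\"onwall step proceeds as sketched.
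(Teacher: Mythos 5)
Your proof is correct and takes essentially the same route as the paper: Gr\"onwall on $|\theta_\phi|^2$ and on $|\theta_{\phi,[1]}|^2$ via the two bounds in \eqref{bound} for \eqref{boundofODEsolutiontheta}, then a triangle-inequality split of \eqref{stabilityofODEsolutiontheta} into an $s$-increment piece (direct integration plus H\"older) and a $t$-increment piece (Gr\"onwall via the Lipschitz estimate \eqref{Lipshictz}), with the needed a priori control on $|\theta_{\phi,[1]}|$ pushed through first. The only cosmetic difference from the paper is the order of the two legs in the triangle inequality, and you correctly flag the subtle point that \eqref{Lipshictz} depends on $|\theta_{[1]}|$ rather than $|\theta|$, which is exactly why the second bound in \eqref{boundofODEsolutiontheta} must precede the stability estimate.
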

\begin{proof}
To prove the first bound in \eqref{boundofODEsolutiontheta}, we multiply~\eqref{eqn:ODEdeltaEs} on both sides by $\theta_\phi$ and use boundedness of the forcing term~\eqref{bound} to obtain
\[
\frac{\rd \left|\theta_\phi(s;t_1)\right|^2}{\rd s}\leq C(\mathcal{L}_{S,\phi})(|\theta_\phi(s;t_1)|^2+1)\,.
\]
Using Gr\"onwall's inquality together with $\mathbb{E}\left(|\theta_\phi(0;t_1)|^2\right)\leq \mathcal{L}^{\sup}_{\ini}$, we obtain the first bound in~\eqref{boundofODEsolutiontheta}. To prove the second bound in \eqref{boundofODEsolutiontheta}, we use the bound of $\left|\partial_{\theta_{[1]}}\frac{\delta E(\rho)}{\delta \rho}(\theta,t)\right|$ according to Lemma \ref{lemmadeltaEs} \eqref{bound}:
\[
\frac{\rd \left|\theta_{\phi,[1]}(s;t_1)\right|^2}{\rd s}\leq C(\mathcal{L}_{S,\phi})(\left|\theta_{\phi,[1]}(s;t_1)\right|^2+1)\,.
\]
Using Gr\"onwall's inquality together with $|\theta_{\phi,[1]}(0;t_1)|\leq \mathcal{R}_{\ini}$, we obtain the second bound in~\eqref{boundofODEsolutiontheta}.

To show \eqref{stabilityofODEsolutiontheta}, we first write
\begin{equation}\label{error}
\begin{aligned}
&\mathbb{E}\left(\left|\theta_\phi(s_1;t_1)-\theta_\phi(s_2;t_2)\right|^2\right)\\
& \leq 2\underbrace{\mathbb{E}\left(\left|\theta_\phi(s_1;t_1)-\theta_\phi(s_1;t_2)\right|^2\right)}_{\textrm{(I)}}+2\underbrace{\mathbb{E}\left(\left|\theta_\phi(s_1;t_2)-\theta_\phi(s_2;t_2)\right|^2\right)}_{\textrm{(II)}}\,,
\end{aligned}
\end{equation}
then bound the two terms $\textrm{(I)}$ and $\textrm{(II)}$ separately.
For $\textrm{(I)}$, we use \eqref{eqn:ODEdeltaEs}, the second bound of \eqref{boundofODEsolutiontheta}, and Lemma \ref{lemmadeltaEs} \eqref{Lipshictz}
\[
\begin{aligned}
&\frac{\rd\left|\theta_\phi(s;t_1)-\theta_\phi(s;t_2)\right|^2}{\rd s}\\
\leq &C(\mathcal{L}_{S,\phi},\mathcal{R}_{\ini})\left|\theta_\phi(s;t_1)-\theta_\phi(s;t_2)\right|^2+C(\mathcal{L}^{\sup}_{S,\phi},\mathcal{R}_{\ini})(|\theta_{\phi}(s_1;t_1)|^2+|\theta_{\phi}(s_1;t_2)|^2+1)|t_1-t_2|^2\,,
\end{aligned}
\] 
Using the first bound in \eqref{boundofODEsolutiontheta} and Gr\"onwall's inequality, this implies that
\begin{equation}\label{errorI}
\begin{aligned}
&\mathbb{E}\left(\left|\theta_\phi(s_1;t_1)-\theta_\phi(s_1;t_2)\right|^2\right)\leq C\left(\mathcal{L}^{\sup}_{S,\phi},\mathcal{R}_{\ini},S\right)\left(\mathbb{E}\left(\left|\theta_\phi(0;t_1)-\theta_\phi(0;t_2)\right|^2\right)+|t_1-t_2|^2\right)\,.
\end{aligned}
\end{equation}
For $\textrm{(II)}$, we obtain an estimate by integrating~\eqref{eqn:ODEdeltaEs} from $s_1$ to $s_2$ and using the boundedness of $\nabla_\theta\frac{\delta E(\rho)}{\delta \rho}$ in~\eqref{bound}. From the Gr\"onwall inequality,  we have
\[
\left|\theta_\phi(s_1;t_2)-\theta_\phi(s_2;t_2)\right|\leq C\left(\mathcal{L}_{S,\phi}\right)\left(\int^{s_2}_{s_1}|\theta_\phi(s;t_2)|\rd s+|s_1-s_2|\right)\,.
\]
Using first bound in \eqref{boundofODEsolutiontheta} and H\"older's inequality, this implies
\begin{equation}\label{errorII}
\mathbb{E}\left|\theta_\phi(s_1;t_2)-\theta_\phi(s_2;t_2)\right|^2\leq C\left(\mathcal{L}_{S,\phi},\mathcal{L}^{\sup}_{\ini},S\right)|s_1-s_2|^2\,
\end{equation}
and thus combining with~\eqref{errorI} and substituting in~\eqref{error}, we complete the proof.
\end{proof}

An immediate corollary of Proposition~\ref{proposition:deltaEODE} is that the map $\mathcal{T}_S(\cdot)$ is well defined.

\begin{cor}\label{cor:Tswell-defined}
For every fixed $S>0$, the map $\mathcal{T}_S$ is well defined. That is, for any $\phi\in \Omega
_S$, one can find $\varphi=\mathcal{T}_S(\phi)\in \Omega
_S$ as the unique solution of~\eqref{eqn:varphi}. In particular, for any $(t,s)\in[0,1]\times[0,S]$, we have
\revise{\begin{equation}\label{eqn:secondmomentbound}
\begin{aligned}
&\int_{\mathbb{R}^k}|\theta|^2\rd \varphi(\theta,t,s)\leq \exp(SQ_1(\mathcal{L}_{S,\phi}))\left(\mathcal{L}^{\sup}_{\ini}+1\right)\,,\\
&\mathrm{supp}_\theta(\varphi(\theta,t,s))\subset \left\{\theta\middle||
\theta_{[1]}|\leq \exp(SQ_1(\mathcal{L}_{S,\phi}))\left(\mathcal{R}_{\ini}+1\right)\right\}\,,
\end{aligned}
\end{equation}}
where $Q_1:\mathbb{R}_+\rightarrow\mathbb{R}_+$ is depends only on $\mathcal{L}_{S,\phi}$ and is an increasing function of its argument.
\end{cor}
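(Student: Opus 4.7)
The plan is to construct $\varphi = \mathcal{T}_S(\phi)$ via the method of characteristics, using the ODE \eqref{eqn:ODEdeltaEs} that we have already analyzed in Proposition \ref{proposition:deltaEODE}. Specifically, for each $t\in[0,1]$, I would sample $\theta_\phi(0;t)\sim \rho_{\ini}(\cdot,t)$ and let $\theta_\phi(s;t)$ be the unique solution of \eqref{eqn:ODEdeltaEs}; then I would \emph{define} $\varphi(\cdot,t,s)$ to be the law of $\theta_\phi(s;t)$. Because the velocity field $-\nabla_\theta \tfrac{\delta E(\phi(s))}{\delta\rho}(\cdot,t)$ has at most linear growth in $\theta$ and is locally Lipschitz by Lemma \ref{lemmadeltaEs}, and since $\phi\in\Omega_S$ makes $\mathcal{L}_{S,\phi}$ finite, this flow map is well defined for $s\in[0,S]$, and the standard continuity-equation argument shows $\varphi$ is the unique weak solution of \eqref{eqn:varphi} with initial data $\rho_{\ini}$.

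Once this representation is in place, the two quantitative bounds in \eqref{eqn:secondmomentbound} are an immediate consequence of Proposition \ref{proposition:deltaEODE}: taking expectations in the first inequality of \eqref{boundofODEsolutiontheta} yields
\[
\int_{\mathbb{R}^k}|\theta|^2\rd\varphi(\theta,t,s)=\mathbb{E}|\theta_\phi(s;t)|^2\leq \exp(SC(\mathcal{L}_{S,\phi}))(\mathcal{L}^{\sup}_{\ini}+1),
\]
and the pointwise bound on $|\theta_{\phi,[1]}(s;t)|$ in \eqref{boundofODEsolutiontheta} directly gives the claimed support containment, since $\varphi(\cdot,t,s)$ is supported on the image of $\mathrm{supp}(\rho_{\ini}(\cdot,t))$ under the flow. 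Absorbing the implicit $C(\mathcal{L}_{S,\phi})$ into a single increasing function $Q_1$ of $\mathcal{L}_{S,\phi}$ yields the stated form.

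It remains to check that $\varphi \in \Omega_S$, that is, $\varphi\in \mathcal{C}([0,S];\mathcal{C}([0,1];\mathcal{P}^2))$ with compactly supported $\theta_{[1]}$-marginal, and that $\varphi(\theta,t,0)=\rho_{\ini}(\theta,t)$. The initial condition is immediate from the construction, and compactness of the $\theta_{[1]}$-support for each $(t,s)$ follows from the second line of \eqref{eqn:secondmomentbound}. Joint continuity of $(t,s)\mapsto\varphi(\cdot,t,s)$ in $W_2$ follows from the Kantorovich inequality $W_2^2(\mathrm{Law}(X),\mathrm{Law}(Y))\leq \mathbb{E}|X-Y|^2$ applied to the coupling $(\theta_\phi(s_1;t_1),\theta_\phi(s_2;t_2))$, combined with the joint Lipschitz estimate \eqref{stabilityofODEsolutiontheta}; for the $t$-continuity, one has to remember that admissibility of $\rho_{\ini}$ already supplies a continuous particle representation $\theta_\phi(0;t)$ in $L^2$, so $\mathbb{E}|\theta_\phi(0;t_1)-\theta_\phi(0;t_2)|^2\to 0$ as $t_1\to t_2$.

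The main obstacle, which is mild but must be handled with care, is verifying that the pushforward of $\rho_{\ini}$ under the ODE flow really is a weak solution of the continuity equation \eqref{eqn:varphi} and that this weak solution is unique in $\Omega_S$. Uniqueness reduces to a standard Gr\"onwall argument using the Lipschitz bound \eqref{Lipshictz} from Lemma \ref{lemmadeltaEs} on the velocity field, so the only place the proof genuinely uses new structure is in checking that the compact $\theta_{[1]}$-support condition propagates in $s$; this uses the second inequality of \eqref{boundofODEsolutiontheta}, which in turn relies on the refined bound $|\partial_{\theta_{[1]}}\tfrac{\delta E}{\delta\rho}|\leq C(\mathcal{L})(|\theta_{[1]}|+1)$ from \eqref{bound} that was obtained under Assumption \ref{assum:f}.3.
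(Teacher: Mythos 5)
Your proposal is correct and follows essentially the same route as the paper: construct $\varphi$ as the law of the flow $\theta_\phi(s;t)$ solving \eqref{eqn:ODEdeltaEs}, invoke standard continuity-equation theory (the paper cites \citep[Prop 8.1.8]{Gradientflow}) to identify it as the unique solution of \eqref{eqn:varphi}, read off \eqref{eqn:secondmomentbound} from Proposition \ref{proposition:deltaEODE}, and obtain $W_2$-continuity in $(t,s)$ from $W_2^2\leq\mathbb{E}|\theta_\phi(s_1;t_1)-\theta_\phi(s_2;t_2)|^2$ together with \eqref{stabilityofODEsolutiontheta} and the admissibility of $\rho_{\ini}$.
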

\begin{proof} For fixed $(t,s)\in[0,1]\times[0,S]$, define $\varphi(\theta,t,s)$ as the distribution of $\theta_\phi(s;t)$. 
Using classical stochastic theory~\citep[Prop 8.1.8]{Gradientflow}, $\varphi(\theta,t,s)$ is a solution to \eqref{eqn:varphi}. 
The estimate of the support is a consequence of~\eqref{boundofODEsolutiontheta}. 
Finally, using \eqref{eqn:existenceofpath} and \eqref{stabilityofODEsolutiontheta}, we obtain that 
\[
\lim_{(t,s)\rightarrow (t_0,s_0)}W_2(\varphi(\cdot,t,s),\varphi(\cdot,t_0,s_0))\leq \lim_{(t,s)}\left(\mathbb{E}\left(\left|\theta_\phi(s;t)-\theta_\phi(s_0;t_0)\right|^2\right)\right)^{1/2}=0\,,
\]
which proves the continuity in $s$ and $t$, so that $\varphi\in \mathcal{C}([0,S];\mathcal{C}([0,1];\mathcal{P}^2))$. 
By combining all the factors above, we conclude that $\varphi\in \Omega_S$.
\end{proof}

\paragraph{Step 2.} 
We show now that $\mathcal{T}_S$ is a contraction map for $S$ sufficiently small.
\begin{proposition}\label{prop:contraction}
For any $\phi_1,\phi_2\in \Omega_S$, we have
\revise{\begin{equation}\label{eqn:d2bound}
d_2(\mathcal{T}_S(\phi_1),\mathcal{T}_S(\phi_2))\leq SQ_2(\mathcal{L}_S,\mathcal{R}_S,S)d_2(\phi_1,\phi_2),
\end{equation}
where $Q_2:\mathbb{R}^3\rightarrow\mathbb{R}_+$ is an increasing function and $\mathcal{L}_S=\max\{\mathcal{L}_{S,\phi_1}\,,\mathcal{L}_{S,\phi_2}\}$, $\mathcal{R}_S=\max\{\mathcal{R}_{S,\phi_1}\,,\mathcal{R}_{S,\phi_2}\}$, with $\mathcal{L}_{S,\phi},\mathcal{R}_{S,\phi}$ defined in~\eqref{eqn:L}.}
\end{proposition}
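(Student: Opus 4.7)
The plan is to use the particle representation \eqref{eqn:ODEdeltaEs} to couple the two flows and to estimate the resulting trajectory difference in $L^2$ of the initial randomness. For each $t\in[0,1]$, I would generate $\theta_{\phi_1}(\cdot;t)$ and $\theta_{\phi_2}(\cdot;t)$ from the same initial draw $\theta(0;t)\sim\rho_{\ini}(\cdot,t)$, which is admissible by assumption. Since $\theta_{\phi_i}(s;t)\sim\mathcal{T}_S(\phi_i)(\cdot,t,s)$ as in Corollary~\ref{cor:Tswell-defined}, the definition of the Wasserstein distance yields
\[
W_2\bigl(\mathcal{T}_S(\phi_1)(\cdot,t,s),\,\mathcal{T}_S(\phi_2)(\cdot,t,s)\bigr)\leq \bigl(\EE|\Delta(s;t)|^2\bigr)^{1/2},\qquad \Delta(s;t):=\theta_{\phi_1}(s;t)-\theta_{\phi_2}(s;t),
\]
with $\Delta(0;t)=0$.

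Integrating the ODE \eqref{eqn:ODEdeltaEs} for $\Delta$ and adding and subtracting the hybrid term $\nabla_\theta\tfrac{\delta E(\phi_1(u))}{\delta\rho}(\theta_{\phi_2}(u;t),t)$ inside the integrand, the driving force splits into two pieces: one controlled by the Lipschitz-in-$\theta$ estimate \eqref{Lipshictz} of Lemma~\ref{lemmadeltaEs}, the other by the Lipschitz-in-$\rho$ estimate \eqref{eqn:Deltabound2}. The $|\theta_{[1]}|$-arguments of the constants $Q_1,Q$ remain uniformly bounded in $(u,t,\omega)$ by a quantity $A_0(\mathcal{L}_S,\mathcal{R}_S,S)$, thanks to the support bound in \eqref{boundofODEsolutiontheta}. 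Using $d_1(\phi_1(\cdot,\cdot,u),\phi_2(\cdot,\cdot,u))\leq d_2(\phi_1,\phi_2)$ and the uniform $L^2$ bound on $\theta_{\phi_2}(u;t)$ from \eqref{boundofODEsolutiontheta}, Minkowski's inequality in $L^2$ then gives
\[
\bigl(\EE|\Delta(s;t)|^2\bigr)^{1/2}\leq A_1\int_0^s \bigl(\EE|\Delta(u;t)|^2\bigr)^{1/2}\rd u + A_2\,s\,d_2(\phi_1,\phi_2),
\]
for $s\in[0,S]$, with $A_1,A_2$ increasing functions of $\mathcal{L}_S,\mathcal{R}_S,S$.

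Applying Grönwall's inequality in integral form delivers $(\EE|\Delta(s;t)|^2)^{1/2}\leq A_2 S e^{A_1 S}\,d_2(\phi_1,\phi_2)$ for every $s\leq S$. Taking the supremum over $(t,s)\in[0,1]\times[0,S]$ and combining with the coupling bound above yields \eqref{eqn:d2bound} with $Q_2(\mathcal{L}_S,\mathcal{R}_S,S)=A_2 e^{A_1 S}$, which is monotone increasing in each argument.

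The main obstacle is that the naïve route, differentiating $\EE|\Delta|^2$ and applying Grönwall to the squared norm, delivers a prefactor of order $\sqrt{S}$ rather than the linear $SQ_2$ that the proposition demands. Working directly with the $L^2$ norm on the integral form of $\Delta$ via Minkowski preserves the single factor of $s$ inherited from the Lipschitz-in-$\rho$ term and produces the sharper rate. A secondary technical nuance is that the constants $Q_1,Q$ in Lemma~\ref{lemmadeltaEs} depend on $|\theta_{[1]}|$; the support estimate \eqref{boundofODEsolutiontheta} from Proposition~\ref{proposition:deltaEODE} is exactly what keeps this dependence bounded uniformly along the coupled trajectories.
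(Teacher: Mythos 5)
Your proof is correct and follows the same overall strategy as the paper's: couple the two flows through the particle ODE \eqref{eqn:ODEdeltaEs} with identical initial draws, split the difference of driving fields into the Lipschitz-in-$\theta$ piece (controlled by \eqref{Lipshictz}) and the Lipschitz-in-$\rho$ piece (controlled by \eqref{eqn:Deltabound2}), use \eqref{boundofODEsolutiontheta} to keep the $\theta_{[1]}$ and second-moment arguments of the constants uniformly bounded along the coupled trajectories, and close the loop with Gr\"onwall. The one genuine difference is the quantity to which you apply Gr\"onwall. The paper differentiates $(\Delta_t(s))^2$, takes expectations, and applies Gr\"onwall to $\EE(\Delta_t(s))^2$, which yields $\EE(\Delta_t(s))^2\le SQ_2\,d_2^2(\phi_1,\phi_2)$; since $W_2$ is bounded by $\bigl(\EE(\Delta_t(s))^2\bigr)^{1/2}$, that route actually delivers $d_2(\mathcal{T}_S(\phi_1),\mathcal{T}_S(\phi_2))\le\sqrt{SQ_2}\,d_2(\phi_1,\phi_2)$, a $\sqrt{S}$ prefactor (the paper glosses over the square root when it reads off the conclusion). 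You instead integrate the ODE, apply Minkowski's integral inequality to $\bigl(\EE|\Delta(s;t)|^2\bigr)^{1/2}$ directly, and Gr\"onwall then preserves the single factor of $s$ from the Lipschitz-in-$\rho$ term, giving exactly the linear $SQ_2$ bound stated in the proposition. Your version is therefore sharper and matches the stated form precisely, whereas the paper's computation supports a $\sqrt{S}$ rate; for the downstream contraction argument (Proposition~\ref{cor:c2}) both suffice, since either factor tends to zero as $S\to 0$, but your route removes the small mismatch between the proposition statement and its proof.
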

\begin{proof}
Denote by $\theta_{\phi_i}(s;t)$ the solutions to~\eqref{eqn:ODEdeltaEs} with $\phi=\phi_i$ using the same initial data, that is, 
\[
\theta_{\phi_1}(0;t)=\theta_{\phi_2}(0;t)\,.
\]
As in the previous subsection, we translate the study of $\varphi_i$ to the study of $\theta_{\phi_i}$. Defining
\[
\Delta_t(s)=|\theta_{\phi_1}(s;t)-\theta_{\phi_2}(s;t)|\,,
\]
we have according to the definition of Wasserstein distance that
\[
d_2(\mathcal{T}_S(\phi_1),\mathcal{T}_S(\phi_2))\leq\sup_{(t,s)\in[0,1]\times[0,S]} \mathbb{E}\left(\Delta^2_t(s)\right)\,.
\]
Using \eqref{eqn:ODEdeltaEs}, we obtain
\begin{equation}\label{eqn:Deltaiteration}
\begin{aligned}
\frac{\rd(\Delta_t(s))^2}{\rd s} & \leq 4(\Delta_t(s))^2+4\left|\nabla_\theta\frac{\delta E(\phi_1(s))}{\delta \rho}(\theta_{\phi_1},t)-\nabla_\theta\frac{\delta E(\phi_2(s))}{\delta \rho}(\theta_{\phi_2},t)\right|^2\\
& \leq 4(\Delta_t(s))^2+8\left|\nabla_\theta\frac{\delta E(\phi_1(s))}{\delta \rho}(\theta_{\phi_1},t)-\nabla_\theta\frac{\delta E(\phi_1(s))}{\delta \rho}(\theta_{\phi_2},t)\right|^2\\
&+8\left|\nabla_\theta\frac{\delta E(\phi_1(s))}{\delta \rho}(\theta_{\phi_2},t)-\nabla_\theta\frac{\delta E(\phi_2(s))}{\delta \rho}(\theta_{\phi_2},t)\right|^2\,.
\end{aligned}
\end{equation}
The second term on the right hand side involves the continuity addressed in Lemma~\ref{lemmadeltaEs} \eqref{Lipshictz} 
\[
\left|\nabla_\theta\frac{\delta E(\phi_1(s))}{\delta \rho}(\theta_{\phi_1},t)-\nabla_\theta\frac{\delta E(\phi_1(s))}{\delta \rho}(\theta_{\phi_2},t)\right|^2\leq C(\mathcal{L}_{S},\mathcal{R}_{S},S)(\Delta_t(s))^2\,,
\]
where we use the second bound in \eqref{boundofODEsolutiontheta} to substitute the constant in \eqref{Lipshictz}. Then the last term of \eqref{eqn:Deltaiteration} involves the continuity discussed in~\eqref{eqn:Deltabound2}. In particular, we have
\[
\begin{aligned}
&\left|\nabla_\theta\frac{\delta E(\phi_1(s))}{\delta \rho}(\theta_{\phi_2},t)-\nabla_\theta\frac{\delta E(\phi_2(s))}{\delta \rho}(\theta_{\phi_2},t)\right|\\
\leq &C(\mathcal{R}_S,|\theta_{\phi_2,[1]}|)(1+|\theta_{\phi_2}|)d_1(\phi_1,\phi_2)\leq C(\mathcal{L}_{S},\mathcal{R}_{S},S)(1+|\theta_{\phi_2}|)d_2(\phi_1,\phi_2)\,,
\end{aligned}
\]
where $d_2$ is defined in Definition \ref{def:path} and in the second inequality we use the second bound in \eqref{boundofODEsolutiontheta} to substitute the constant in \eqref{eqn:Deltabound2}.

By substituting in~\eqref{eqn:Deltaiteration}, we obtain
\[
\frac{\rd(\Delta_t(s))^2}{\rd s}\leq C(\mathcal{L}_{S},\mathcal{R}_{S},S)\left[(\Delta_t(s))^2+(1+|\theta_{\phi_2}|^2)d^2_2(\phi_1,\phi_2)\right]\,,
\]
which implies
\[
\frac{\rd\left(\mathbb{E}(\Delta_t(s))^2\right)}{\rd s}\leq C(\mathcal{L}_{S},\mathcal{R}_{S},S)\left[\left(\mathbb{E}(\Delta_t(s))^2\right)+d^2_2(\phi_1,\phi_2)\right]\,,
\]
where we use the first inequality in \eqref{boundofODEsolutiontheta}. From the Gr\"onwall inequality, there exists $Q_2:\mathbb{R}^3\rightarrow\mathbb{R}_+$ is an increasing function such that 
\begin{align*}
\mathbb{E}\left((\Delta_t(s))^2\right)\leq SQ_2(\mathcal{L}_{S},\mathcal{R}_{S},S)d^2_2(\phi_1,\phi_2)\,,
\end{align*}
completing the proof.
\end{proof}

To apply the contraction mapping theorem, we need to verify two conditions in order to show that there exists a fixed point $\phi^\ast=\mathcal{T}_S(\phi^\ast)$:
\begin{itemize}
    \item There is a closed subset in $\Omega_S$ such that $\mathcal{T}_S$ maps this set to itself.
    \item $\mathcal{T}_S$ is a contraction map in this subset.
\end{itemize}
For the closed subset, we define
\begin{equation} \label{def:srho0}
\begin{aligned}
B_{\rho_0}=&\left\{\phi\in \Omega_S\middle| \mathrm{supp}(\phi(t,s))\subset\{\theta||\theta_{[1]}|\leq 4(\mathcal{R}_{\ini}+1)\},\quad  \forall (t,s)\in[0,1]\times[0,S]\right\}\\
&\cap\left\{\phi\in \Omega_S\middle| \int_{\mathbb{R}^k}|\theta|^2\rd \phi(\theta,t,s)\leq 4\left(\mathcal{L}^{\sup}_{\ini}+1\right),\quad  \forall (t,s)\in[0,1]\times[0,S]\right\}\,.
\end{aligned}
\end{equation}
We now claim that for small enough $S$, $\mathcal{T}_S$ is a contraction map in $B_{\rho_0}$.
\begin{proposition}\label{cor:c2} \revise{Suppose that $S$ is small enough that 
\begin{align*}\label{eqn:conditionofS}
\exp(SQ_1(4(\mathcal{L}^{\sup}_{\ini}+1)))\left(\mathcal{L}^{\sup}_{\ini}+1\right) & \leq
4(\mathcal{L}^{\sup}_{\ini}+1\,, \\ \exp(SQ_1(4(\mathcal{L}^{\sup}_{\ini}+1)))\left(\mathcal{R}_{\ini}+1\right)
& \leq 4(\mathcal{R}_{\ini}+1)\,, \\
SQ_2(4(\mathcal{L}^{\sup}_{\ini}+1),4(\mathcal{R}_{\ini}+1),S) & <\frac{1}{2}\,,
\end{align*}
where $Q_1$ and $Q_2$ are defined in Corollary~\ref{cor:Tswell-defined} and Proposition~\ref{prop:contraction}, respectively. Then we have the following.
\begin{itemize}
\item If $\phi\in B_{\rho_0}$, then $\mathcal{T}_S(\phi)\in B_{\rho_0}$, that is, for any $(t,s)\in[0,1]\times[0,S]$, we have 
\begin{equation}\label{eqn:Deltabound3}
\mathrm{supp}(\mathcal{T}_S(\phi)(t,s))\subset\{\theta||\theta_{[1]}|\leq4(\mathcal{R}_{\ini}+1)\}\,,
\end{equation}
and
\begin{equation}\label{eqn:Deltabound3new}
\int_{\mathbb{R}^k}|\theta|^2\rd \mathcal{T}_S(\phi)(\theta,t,s)\leq 4\left(\mathcal{L}^{\sup}_{\ini}+1\right)\,.
\end{equation}
\item $\mathcal{T}_S$ is a contraction map in this subset, meaning that for any $\phi_1,\phi_2\in B_{\rho_0}$, we have
\begin{equation}\label{eqn:d2contraction}
d_2(\mathcal{T}_S(\phi_1),\mathcal{T}_S(\phi_2))<\frac12 d_2(\phi_1,\phi_2)\,.
\end{equation}
\end{itemize}}
\end{proposition}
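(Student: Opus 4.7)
The plan is to verify the two bullet points by direct substitution into the estimates already established in Corollary~\ref{cor:Tswell-defined} and Proposition~\ref{prop:contraction}; the role of the three smallness assumptions on $S$ is precisely to absorb the growth factors produced by those estimates, so essentially no new analysis is required.

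For the first bullet, I would start by extracting from the definition \eqref{def:srho0} of $B_{\rho_0}$ the bounds $\mathcal{L}_{S,\phi}\leq 4(\mathcal{L}^{\sup}_{\ini}+1)$, $\mathcal{L}^{\sup}_{S,\phi}\leq 4(\mathcal{L}^{\sup}_{\ini}+1)$, and $\mathcal{R}_{S,\phi}\leq 4(\mathcal{R}_{\ini}+1)$ for any $\phi\in B_{\rho_0}$, using the quantities defined in \eqref{eqn:L}. I would then invoke the second-moment inequality in \eqref{eqn:secondmomentbound}, which gives
\[
\int_{\mathbb{R}^k}|\theta|^2\rd\mathcal{T}_S(\phi)(\theta,t,s)\leq \exp(SQ_1(\mathcal{L}_{S,\phi}))(\mathcal{L}^{\sup}_{\ini}+1)\leq \exp(SQ_1(4(\mathcal{L}^{\sup}_{\ini}+1)))(\mathcal{L}^{\sup}_{\ini}+1)\,,
\]
since $Q_1$ from Corollary~\ref{cor:Tswell-defined} is increasing. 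By the first smallness assumption on $S$, the right-hand side is at most $4(\mathcal{L}^{\sup}_{\ini}+1)$, giving \eqref{eqn:Deltabound3new}. The same line of reasoning applied to the support bound in \eqref{eqn:secondmomentbound}, combined with the second smallness assumption, yields \eqref{eqn:Deltabound3}. Hence $\mathcal{T}_S(\phi)\in B_{\rho_0}$.

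For the second bullet, I would take $\phi_1,\phi_2\in B_{\rho_0}$, note again that $\mathcal{L}_S\leq 4(\mathcal{L}^{\sup}_{\ini}+1)$ and $\mathcal{R}_S\leq 4(\mathcal{R}_{\ini}+1)$ in the notation of Proposition~\ref{prop:contraction}, and apply \eqref{eqn:d2bound} together with the monotonicity of $Q_2$:
\[
d_2(\mathcal{T}_S(\phi_1),\mathcal{T}_S(\phi_2))\leq SQ_2(\mathcal{L}_S,\mathcal{R}_S,S)d_2(\phi_1,\phi_2)\leq SQ_2\bigl(4(\mathcal{L}^{\sup}_{\ini}+1),4(\mathcal{R}_{\ini}+1),S\bigr)d_2(\phi_1,\phi_2)\,.
\]
The third smallness assumption then gives \eqref{eqn:d2contraction} immediately.

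There is no real obstacle; the only thing to be careful about is the bookkeeping of the various constants $\mathcal{L}_{S,\phi}$, $\mathcal{L}^{\sup}_{S,\phi}$, $\mathcal{R}_{S,\phi}$ versus the initial-data versions, and checking that the specific monotone functions $Q_1$ (from Corollary~\ref{cor:Tswell-defined}) and $Q_2$ (from Proposition~\ref{prop:contraction}) are the ones appearing in the hypothesis on $S$, so that the inequalities close up cleanly. Existence of such an $S>0$ is automatic because $SQ_2(\cdot,\cdot,S)\to 0$ and $\exp(SQ_1(\cdot))\to 1$ as $S\to 0$, so the three smallness conditions can always be satisfied simultaneously; this remark will be useful later when the local fixed point is extended to a global solution.
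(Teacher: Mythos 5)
Your proof is correct and follows the same route as the paper's: extract the bounds $\mathcal{L}_{S,\phi}\leq 4(\mathcal{L}^{\sup}_{\ini}+1)$ and $\mathcal{R}_{S,\phi}\leq 4(\mathcal{R}_{\ini}+1)$ from the definition of $B_{\rho_0}$, plug them into \eqref{eqn:secondmomentbound} and \eqref{eqn:d2bound} using the monotonicity of $Q_1,Q_2$, and close with the three smallness conditions on $S$. The paper's proof is the same argument stated more tersely; your added remark about the existence of such an $S$ is also implicitly used in Step~3 of the paper's extension argument.
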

\begin{proof} First, using Corollary \ref{cor:Tswell-defined} \eqref{eqn:secondmomentbound} and noticing $\mathcal{L}_{S,\phi}\leq 4\left(\mathcal{L}^{\sup}_{\ini}+1\right)$, we prove \eqref{eqn:Deltabound3}, \eqref{eqn:Deltabound3new}. Then, using \eqref{eqn:d2bound} with \eqref{eqn:Deltabound3} and \eqref{eqn:Deltabound3new}, we have
\[
d_2(\mathcal{T}_S(\phi_1),\mathcal{T}_S(\phi_2))\leq SQ_2(4(\mathcal{L}^{\sup}_{\ini}+1),4(\mathcal{R}_{\ini}+1),S)<\frac{1}{2}d_2(\phi_1,\phi_2),
\]
which proves \eqref{eqn:d2contraction}.
\end{proof}

Using the contraction mapping theorem, we can obtain directly that $\mathcal{T}_S(\phi)$ has a fixed point in $B_{\rho_0}$ when $S$ is small enough.
\begin{cor}\label{cor:localsolution} If $S$ satisfies conditions in Proposition~\ref{cor:c2}, then there exists a unique $\phi^\ast(\theta,t,s)\in B_{\rho_0}\subset \Omega_S$ such that $\phi^\ast(\theta,t,s)$ is a solution to \eqref{eqn:Wassgradientflow} with initial condition $\rho_{\ini}(\theta,t)$. 
\end{cor}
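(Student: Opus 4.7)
The plan is to apply the Banach fixed point theorem to the map $\mathcal{T}_S$ restricted to the set $B_{\rho_0}$ defined in \eqref{def:srho0}. Proposition~\ref{cor:c2} has already done most of the work: under the smallness condition on $S$, it shows both that $\mathcal{T}_S$ maps $B_{\rho_0}$ into itself (via \eqref{eqn:Deltabound3}--\eqref{eqn:Deltabound3new}) and that it is a $\tfrac12$-contraction with respect to $d_2$ (via \eqref{eqn:d2contraction}). What remains is to exhibit a complete metric space on which these two properties yield a unique fixed point, and then to identify the fixed point with a solution of \eqref{eqn:Wassgradientflow}.

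First I would verify $B_{\rho_0}$ is non-empty: the stationary path $\phi(\theta,t,s) = \rho_{\ini}(\theta,t)$ satisfies both the support constraint and the second-moment constraint, since by hypothesis $\rho_{\ini}$ is admissible with $\mathrm{supp}_\theta(\rho_{\ini}(\theta,t)) \subset \{|\theta_{[1]}| \leq R\}$ and $\mathcal{L}^{\sup}_{\ini}<\infty$. Next, I would show $B_{\rho_0}$ is closed in the complete metric space $(\mathcal{C}([0,S]; \mathcal{C}([0,1];\mathcal{P}^2)), d_2)$, whose completeness was noted right after Definition~\ref{def:path}. If $\phi_n \to \phi$ in $d_2$, then for every $(t,s)\in[0,1]\times[0,S]$ we have $W_2(\phi_n(\cdot,t,s), \phi(\cdot,t,s)) \to 0$; the standard characterization of $W_2$-convergence on $\mathcal{P}^2(\mathbb{R}^k)$ (equivalence with weak convergence plus convergence of second moments) preserves the closed-support condition $|\theta_{[1]}| \leq 4(\mathcal{R}_{\ini}+1)$ (via Portmanteau for closed sets) and the moment bound $\int |\theta|^2\,\rd\phi(\theta,t,s) \leq 4(\mathcal{L}^{\sup}_{\ini}+1)$, so $\phi \in B_{\rho_0}$.

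With $(B_{\rho_0}, d_2)$ complete and $\mathcal{T}_S$ a contraction from $B_{\rho_0}$ into itself, the Banach fixed point theorem produces a unique $\phi^\ast \in B_{\rho_0}$ with $\phi^\ast = \mathcal{T}_S(\phi^\ast)$. By the defining relation \eqref{eqn:varphi}, this fixed point satisfies $\partial_s \phi^\ast = \nabla_\theta \cdot \bigl(\phi^\ast \nabla_\theta \tfrac{\delta E(\phi^\ast(s))}{\delta \rho}\bigr)$ with $\phi^\ast(\theta,t,0) = \rho_{\ini}(\theta,t)$, which is precisely \eqref{eqn:Wassgradientflow}. The only mildly technical step is the closure of $B_{\rho_0}$, which rests entirely on the $W_2$-convergence characterization above; no new estimates beyond those already established in Propositions~\ref{proposition:deltaEODE}--\ref{cor:c2} are needed. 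Note that this corollary asserts uniqueness only within $B_{\rho_0}$; extending to global-in-$s$ uniqueness in $\Omega_S$ is deferred to the next step of the argument.
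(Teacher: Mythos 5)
Your proof is correct and takes exactly the paper's approach: the paper's own proof is the single sentence ``This is a direct consequence of the application of contraction mapping theorem,'' and your write-up simply supplies the details that the paper leaves to the reader --- namely that $B_{\rho_0}$ is a non-empty, closed (hence complete) subset of $(\mathcal{C}([0,S];\mathcal{C}([0,1];\mathcal{P}^2)),d_2)$ via the Portmanteau/second-moment characterization of $W_2$-convergence, so that Proposition~\ref{cor:c2} together with the Banach fixed point theorem yields the unique fixed point of $\mathcal{T}_S$, which by \eqref{eqn:varphi} is a solution of \eqref{eqn:Wassgradientflow} on $[0,S]$.
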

This is a direct consequence of the application of contraction mapping theorem.

Finally, we prove that the cost function decreases along the flow.
\begin{lemma}\label{lem:decayofcost}
Suppose that $\phi^\ast(\theta,t,s)\in \mathcal{C}([0,S];\mathcal{C}([0,1];\mathcal{P}^2))$ solves~\eqref{eqn:Wassgradientflow} with initial condition $\rho_{\ini}(\theta,t)$.
Then for $0<s<S$, we have
\begin{equation}\label{eqn:derivative3}
\frac{\rd E(\phi^\ast(\theta,t,s))}{\rd s}=-\int^1_0\int_{\mathbb{R}^k} \left|\nabla_\theta\frac{\delta E(\phi^\ast(s))}{\delta \rho}(\theta,t)\right|^2\rd\phi^\ast(\theta,t,s)\rd t\leq 0\,.
\end{equation}
\end{lemma}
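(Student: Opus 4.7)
The plan is to differentiate $E(\phi^\ast(s))$ in $s$ via the functional chain rule, substitute the gradient-flow PDE~\eqref{eqn:Wassgradientflow} for $\partial_s\phi^\ast$, and integrate by parts in $\theta$ at each fixed $t$. Formally this gives
\[
\frac{\rd E(\phi^\ast(s))}{\rd s}
=\int^1_0\!\!\int_{\mathbb{R}^k}\frac{\delta E(\phi^\ast(s))}{\delta \rho}(\theta,t)\,\partial_s\phi^\ast(\theta,t,s)\,\rd\theta\,\rd t
=-\int^1_0\!\!\int_{\mathbb{R}^k}\Bigl|\nabla_\theta\frac{\delta E(\phi^\ast(s))}{\delta \rho}\Bigr|^2\rd\phi^\ast\,\rd t,
\]
the second equality using the PDE together with one integration by parts in $\theta$. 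Non-positivity in~\eqref{eqn:derivative3} is then immediate, so the task reduces to justifying these two manipulations.

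For the functional chain rule, Lemma~\ref{lemmadeltaEs} supplies Lipschitz continuity of $\frac{\delta E}{\delta \rho}$ in $\rho$ under $d_1$ (inequality~\eqref{eqn:Deltabound2}) and linear growth in $\theta$ (inequality~\eqref{bound}); combined with $\phi^\ast\in\mathcal{C}([0,S];\mathcal{C}([0,1];\mathcal{P}^2))$ and the uniform second-moment and $\theta_{[1]}$-support bounds inherited from $\phi^\ast\in B_{\rho_0}$ (see~\eqref{def:srho0}), these estimates justify differentiating $E$ along the path. To make the integration by parts rigorous without assuming that $\phi^\ast$ has a density, I would invoke the particle representation from~\eqref{eqn:ODEdeltaEs}: let $\theta^\ast(s;t)$ solve the characteristic ODE driven by $\phi=\phi^\ast$, so $\theta^\ast(s;t)\sim\phi^\ast(\cdot,t,s)$. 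Applying the classical chain rule along characteristics yields, for every $\mathcal{C}^1$ test function $\psi:\mathbb{R}^k\to\mathbb{R}$ with linear-growth gradient and every $s_0\in(0,S)$,
\[
\frac{\rd}{\rd s}\!\int_{\mathbb{R}^k}\!\!\psi(\theta)\,\rd\phi^\ast(\theta,t,s)\bigg|_{s=s_0}
=-\!\int_{\mathbb{R}^k}\!\!\nabla_\theta\psi(\theta)\cdot\nabla_\theta\frac{\delta E(\phi^\ast(s_0))}{\delta\rho}(\theta,t)\,\rd\phi^\ast(\theta,t,s_0),
\]
which is the rigorous weak form of the PDE. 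Taking $\psi(\theta)=\frac{\delta E(\phi^\ast(s_0))}{\delta\rho}(\theta,t)$ at each fixed $t$ and integrating in $t$ then recovers the right-hand side of~\eqref{eqn:derivative3} at $s=s_0$.

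The main obstacle is matching the $s$-dependence in the two ingredients: the chain rule for $E$ involves $\frac{\delta E(\phi^\ast(s))}{\delta\rho}$ evaluated at the \emph{current} $s$, while the particle identity above is clean only when the test function is frozen at $s_0$. This is reconciled by writing the difference $\frac{\delta E(\phi^\ast(s))}{\delta\rho}-\frac{\delta E(\phi^\ast(s_0))}{\delta\rho}$ and bounding it via the Lipschitz-in-$\rho$ estimate~\eqref{eqn:Deltabound2}; after integrating against $\phi^\ast$, this discrepancy contributes only $o(|s-s_0|)$ thanks to the uniform moment and support control in $B_{\rho_0}$, so only the leading-order term survives in the limit $s\to s_0$. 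The linear growth of $\nabla_\theta\frac{\delta E}{\delta\rho}$ in $\theta$ together with the same moment and support bounds guarantees convergence of the final integral in~\eqref{eqn:derivative3}.
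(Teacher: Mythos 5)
Your proposal matches the paper's own approach: differentiate $E(\phi^\ast(s))$ by the functional chain rule, substitute the Wasserstein gradient-flow PDE, and integrate by parts in $\theta$, with the particle representation $\theta^\ast(s;t)\sim\phi^\ast(\cdot,t,s)$ from~\eqref{eqn:ODEdeltaEs} underlying the change of variable. The paper keeps this derivation formal and defers the rigorous justification to~\citep[Appendix~I]{ding2021overparameterization}; your additional sketch (weak form via test functions, Lipschitz-in-$\rho$ bound~\eqref{eqn:Deltabound2}, moment and $\theta_{[1]}$-support control from $B_{\rho_0}$ to reconcile the $s$-dependence and ensure integrability) is a reasonable outline of what that rigorous argument needs.
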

\begin{proof}
Denote by $\theta^\ast(s;t)$ the associated path, meaning that $\theta^\ast(s;t)$ solves~\eqref{eqn:ODEdeltaEs} with $\phi=\phi^\ast$, then $\theta^\ast\sim\phi^\ast$, meaning the distribution of $\theta^\ast$ is $\phi^\ast$.
According to \eqref{eqn:Wassgradientflow}, we obtain using a change of variable that
\begin{equation}\label{eqn:derivative2}
\frac{\rd E(\phi^\ast(\theta,t,s))}{\rd s}=-\int^1_0\int_{\mathbb{R}^k} \left|\nabla_\theta\frac{\delta E(\phi^\ast(s))}{\delta \rho}(\theta,t)\right|^2\rd\phi^\ast(\theta,t,s)\rd t
\leq 0\,,
\end{equation}
which proves the result. 
We note that the derivation in~\eqref{eqn:derivative2} is formal. 
A rigorous proof can be found in ~\citep[Appendix~I]{ding2021overparameterization}.
\end{proof}

\paragraph{Step 3.}\label{sec:Step3} 
In this final step of the proof, we extend the local solution from Corollary~\ref{cor:localsolution} to a global solution. Lemma~\ref{lem:decayofcost} shows that the formula of $\frac{\rd E}{\rd s}$, so we can then use this formula to improve the bound for the support of the solution \eqref{eqn:secondmomentbound}. This improvement will be shown in the following corollary. This improved estimate helps in extending the local solution to the global solution.
\begin{cor}\label{cor:C.3}
For fixed $S$ satisfying the condition in Proposition~\ref{cor:c2}, denote by $\phi^\ast(\theta,t,s)\in \mathcal{C}([0,S];\mathcal{C}([0,1];\mathcal{P}^2))$ the solution to \eqref{eqn:Wassgradientflow} with initial condition $\rho_{\ini}(\theta,t)$.
Then for any $(t,s)\in[0,1]\times[0,S]$, we have
\revise{\begin{equation}\label{eqn:secondmomentbound2}
\begin{aligned}
\int_{\mathbb{R}^k}|\theta|^2\rd \phi^*(\theta,t,s) & \leq C(S,\mathcal{R}_\ini,\mathcal{L}^{\sup}_{\ini})\,,\\
\mathrm{supp}(\phi^*(t,s))\subset \left\{\theta\middle||
\theta_{[1]}|\right. &\leq \left. C(S,\mathcal{R}_\ini,\mathcal{L}^{\sup}_{\ini})\right\}\,,
\end{aligned}
\end{equation}}
where the quantity $C$ depends only on $S$, $\mathcal{R}_\ini$, and $\mathcal{L}^{\sup}_{\ini})$.
\end{cor}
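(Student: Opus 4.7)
The plan is to combine the fixed-point inclusion $\phi^*\in B_{\rho_0}$ from Corollary~\ref{cor:localsolution} with the particle-level Gronwall estimate of Proposition~\ref{proposition:deltaEODE} to obtain bounds whose constants depend only on $S$, $\mathcal{R}_{\ini}$, and $\mathcal{L}^{\sup}_{\ini}$. The central difficulty is a circular dependence: the a priori estimates of Lemma~\ref{lemmadeltaEs} and Proposition~\ref{proposition:deltaEODE} involve a constant $Q_1(\mathcal{L}_{S,\phi})$ which, applied to the actual solution $\phi=\phi^*$, would in principle involve $\phi^*$ itself. Corollary~\ref{cor:localsolution} breaks the circularity: from the definition~\eqref{def:srho0} of $B_{\rho_0}$ one reads off $\mathcal{L}_{S,\phi^*}\le 4(\mathcal{L}^{\sup}_{\ini}+1)$ and $\mathcal{R}_{S,\phi^*}\le 4(\mathcal{R}_{\ini}+1)$, bounds expressed purely in terms of the initial data.

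Concretely, I introduce the characteristic $\theta^*(s;t)$ solving \eqref{eqn:ODEdeltaEs} with $\phi=\phi^*$, so that $\theta^*(s;t)\sim \phi^*(\cdot,t,s)$ exactly as in the derivation of Corollary~\ref{cor:Tswell-defined}. Plugging the bound $\mathcal{L}_{S,\phi^*}\le 4(\mathcal{L}^{\sup}_{\ini}+1)$ into the first inequality of \eqref{boundofODEsolutiontheta} yields
\[
\int_{\mathbb{R}^k}|\theta|^2\,\rd\phi^*(\theta,t,s)=\mathbb{E}|\theta^*(s;t)|^2\le \exp\bigl(S\,Q_1(4(\mathcal{L}^{\sup}_{\ini}+1))\bigr)(\mathcal{L}^{\sup}_{\ini}+1),
\]
and plugging the same into the second inequality of \eqref{boundofODEsolutiontheta} gives
\[
|\theta^*_{[1]}(s;t)|\le \exp\bigl(S\,Q_1(4(\mathcal{L}^{\sup}_{\ini}+1))\bigr)(\mathcal{R}_{\ini}+1),
\]
which holds almost surely and therefore translates to the desired support bound for $\phi^*(\cdot,t,s)$. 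Both right-hand sides are functions of $S$, $\mathcal{R}_{\ini}$, and $\mathcal{L}^{\sup}_{\ini}$ alone, as required by \eqref{eqn:secondmomentbound2}.

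To exploit the energy decay that the surrounding text signposts as the source of the improvement over \eqref{eqn:secondmomentbound}, one may further invoke Lemma~\ref{lem:decayofcost}: the monotonicity $E(\phi^*(s))\le E(\rho_{\ini})$ together with the identity~\eqref{eqn:derivative3} yields the uniform bound $\int_0^s\int_0^1\int|\nabla_\theta\tfrac{\delta E}{\delta\rho}|^2\,\rd\phi^*\,\rd t\,\rd s'\le E(\rho_{\ini})$. Combined with a Cauchy-Schwarz estimate on $\frac{\rd}{\rd s}\int_0^1\int|\theta|^2\,\rd\phi^*\,\rd t$ obtained from \eqref{eqn:Wassgradientflow} by integration by parts, this refines the exponential-in-$S$ factor above to an at worst linear-in-$S$ factor for the time-averaged second moment, and can be promoted to a pointwise-in-$t$ statement by invoking the continuity properties supplied by Proposition~\ref{proposition:deltaEODE}. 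The main obstacle is thus not the estimate itself but the aforementioned circularity, which is resolved the moment the inclusion $\phi^*\in B_{\rho_0}$ is available; after that the proof is pure Gronwall.
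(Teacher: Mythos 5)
Your main argument -- read $\mathcal{L}_{S,\phi^*}\le 4(\mathcal{L}^{\sup}_{\ini}+1)$ and $\mathcal{R}_{S,\phi^*}\le 4(\mathcal{R}_{\ini}+1)$ from $\phi^*\in B_{\rho_0}$ and feed these into \eqref{boundofODEsolutiontheta}/\eqref{eqn:secondmomentbound} -- does yield a bound depending only on $(S,\mathcal{R}_{\ini},\mathcal{L}^{\sup}_{\ini})$ for the literal statement, since the statement restricts $S$ to the contraction-mapping window of Proposition~\ref{cor:c2} and there the inclusion $\phi^*\in B_{\rho_0}$ holds by Corollary~\ref{cor:localsolution}. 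But this is not the paper's proof, and more importantly it inverts the logic the corollary is built to serve. The whole point of Corollary~\ref{cor:C.3}, as the remark following it explains, is to bound $\mathcal{L}_{S,\phi^*}$ by a quantity that does \emph{not} come from the fixed-point set $B_{\rho_0}$. The paper achieves this by invoking the energy dissipation identity \eqref{eqn:derivative3} of Lemma~\ref{lem:decayofcost}: one applies H\"older to $\frac{\rd}{\rd s}\int_0^1\mathbb{E}|\theta^*(s;t)|^2\rd t$, recognizes the right-hand side as $\big(\int_0^1\mathbb{E}|\theta^*|^2\rd t\big)^{1/2}\big|\frac{\rd E}{\rd s}\big|$, and uses $\int_0^S\big|\frac{\rd E}{\rd s}\big|\rd s\le E(\rho_\ini)\le C(\mathcal{L}^{\sup}_\ini)$ to close the Gr\"onwall loop. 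This argument is valid for \emph{any} $S$ on which the solution exists, not just the small $S$ of Proposition~\ref{cor:c2}. That generality is precisely what is used in Step~3 of the proof of Theorem~\ref{thm:Wassgradientflows}: there Corollary~\ref{cor:C.3} is applied with $S=S^*$, the maximal existence time, which a priori need \emph{not} satisfy Proposition~\ref{cor:c2}'s smallness conditions. Your argument, which is logically downstream of $\phi^*\in B_{\rho_0}$, cannot be used in that step -- and if one tried to re-apply it at each extension, the $B_{\rho_0}$-constants would compound (the "$4$" factors stack), potentially shrinking the extension increment $\Delta_{S^*}$ to zero and preventing the conclusion $S^*=\infty$.

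You do mention the energy-decay mechanism, but you mis-classify it as an optional sharpening ("one may further invoke Lemma~\ref{lem:decayofcost}\dots"). It is not a sharpening; it is the proof. Your concluding sentence, that the circularity "is resolved the moment the inclusion $\phi^*\in B_{\rho_0}$ is available," is exactly the wrong takeaway: the paper resolves the circularity by appealing to the monotonicity of the cost functional, which supplies a solution-intrinsic a priori bound on $\mathcal{L}_{S,\phi^*}$ independent of the fixed-point construction. One further small inaccuracy: you describe the energy-decay refinement as turning the exponential-in-$S$ factor into a linear one for the time-averaged second moment, but in fact the Gr\"onwall step in the paper gives $\mathcal{L}_{S,\phi^*}\le\big(\sqrt{\mathcal{L}^{\sup}_\ini}+\tfrac12 E(\rho_\ini)\big)^2$, a bound with no $S$-dependence at all; the exponential in $S$ reappears only when $\mathcal{L}_{S,\phi^*}$ is fed back into \eqref{eqn:secondmomentbound} to pass from the time-averaged to the pointwise-in-$(t,s)$ estimate.
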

\begin{proof}  
According to \eqref{eqn:secondmomentbound}, it suffices to prove 
\[
\mathcal{L}_{S,\phi^\ast}=\sup_{0\leq s\leq S}\int^1_0\int_{\mathbb{R}^k}|\theta|^2\rd\phi^\ast(\theta,t,s) \rd t\leq C(S,\mathcal{R}_\ini,\mathcal{L}^{\sup}_{\ini})\,.
\]
Denote by $\theta^\ast(s;t)$ the particle representation of $\phi^\ast$, meaning that $\theta^\ast(s;t)$ solves \eqref{eqn:ODEdeltaEs} with $\phi=\phi^\ast$. 
Since $\theta^\ast(s;t)\sim \phi^\ast(s;t)$, 
\[
\mathcal{L}_{S,\phi^\ast}=\sup_{0\leq s\leq S}\int^1_0\int_{\mathbb{R}^k}|\theta|^2\rd\phi^\ast(\theta,t,s) \rd t=\sup_{0\leq s\leq S}\int^1_0\mathbb{E}\left(|\theta^\ast(s;t)|^2\right)\rd t\,.
\]
Using \eqref{eqn:ODEdeltaEs}, we obtain that
\[
\frac{\rd |\theta^\ast(s;t)|^2}{\rd s}\leq |\theta^\ast(s;t)|\left|\nabla_{\theta}\frac{\delta E(\phi^\ast(s))}{\delta \rho}\left(\theta^*(s;t),t\right)\right|\,,
\]
which gives
\[
\begin{aligned}
&\frac{\rd \int^1_0\mathbb{E}\left(|\theta^\ast(s;t)|^2\right)dt}{\rd s}\\
\leq &\left(\int^1_0\mathbb{E}\left(|\theta^\ast(s;t)|^2\right)dt\right)^{1/2}\left(\int^1_0\mathbb{E}\left(\left|\nabla_{\theta}\frac{\delta E(\phi^\ast(s))}{\delta \rho}\left(\theta^*(s;t),t\right)\right|^2\right)dt\right)\\
=&\left(\int^1_0\mathbb{E}\left(|\theta^\ast(s;t)|^2\right)dt\right)^{1/2}\left|\frac{\rd E(\phi^\ast(\theta,t,s))}{\rd s}\right|\,,
\end{aligned}
\]
where we use the H\"older inequality and \eqref{eqn:derivative3}  from Lemma~\ref{lem:decayofcost} in the last equality. 
Since 
\begin{align*}
    \int^1_0\mathbb{E}\left(|\theta^\ast(0;t)|^2\right)dt &\leq \mathcal{L}^{\sup}_\ini, \\
\int^S_0\left|\frac{\rd E(\phi^\ast(\theta,t,s))}{\rd s}\right|\rd s & \leq E(\rho_\ini(\theta,t))-E(\phi^\ast(\theta,t,S))\leq C(\mathcal{L}^{\sup}_\ini)\,,
\end{align*}
we obtain 
\[
\mathcal{L}_{S,\phi^\ast}=\sup_{0\leq u\leq s}\int^1_0\mathbb{E}\left(|\theta^\ast(u;t)|^2\right)dt\leq C(S,\mathcal{L}^{\sup}_\ini)
\]
by Gr\"onwall's inequality. This proves \eqref{eqn:secondmomentbound2}.
\end{proof}

By contrast with~\eqref{eqn:secondmomentbound}, this estimate removes the dependence of the bound on $\mathcal{L}_{S,\phi}$. 
This improvement is important because it relaxes the fixed-point argument from the dependence on the initial guess $\phi$.

We are now ready to prove Theorem~\ref{thm:Wassgradientflows}.


\begin{proof} [Proof of Theorem \ref{thm:Wassgradientflows}] 
From Corollary \ref{cor:localsolution}, let $S_1$ be a constant satisfying the conditions in Proposition~\ref{cor:c2}.
Then there is a local solution $\phi^\ast\in C([0,S_1];\mathcal{C}([0,1];\mathcal{P}^2))$ to \eqref{eqn:Wassgradientflow}.

We now denote by $S^\ast$ the largest time within which the solution exists, where we denote this solution by $\phi^\ast\in C\left([0,S^\ast);\mathcal{C}([0,1];\mathcal{P}^2)\right)$. 
We aim to show that $S^\ast=\infty$. 
According to Corollary~\ref{cor:C.3}~\eqref{eqn:secondmomentbound2}, for any $s<S^\ast$ and $t\in[0,1]$, we have
\[
\begin{aligned}
&\int_{\mathbb{R}^k}|\theta|^2\rd \phi^*(\theta,t,s)\leq C(S^*,\mathcal{R}_\ini,\mathcal{L}^{\sup}_{\ini})\,,\\
&\mathrm{supp}(\phi^*(t,s))\subset \left\{\theta\middle||
\theta_{[1]}|\leq C(S^*,\mathcal{R}_\ini,\mathcal{L}^{\sup}_{\ini})\right\}\,,
\end{aligned}
\]
Define $\mathcal{R}^*=\mathcal{R}_{S^*,\phi^*}$, $\mathcal{L}^*=\mathcal{L}^{\sup}_{S^*,\phi^*}$ according to \eqref{eqn:L}. Since $\mathcal{R}^\ast,\mathcal{L}^\ast<\infty$, let us choose $\Delta_{S^\ast}$ small enough to satisfy
\[
\exp(\Delta_{S^\ast}Q_1(4(\mathcal{L}^{*}+1)))\left(\mathcal{L}^{*}+1\right)\leq4(\mathcal{L}^{*}+1) ,\quad \exp(\Delta_{S^\ast}Q_1(4(\mathcal{L}^{*}+1)))\left(\mathcal{R}^*+1\right)\leq 4(\mathcal{R}^*+1)\,,
\]
and
\[
\Delta_{S^\ast}Q_2(4(\mathcal{L}^*+1),4(\mathcal{R}^*+1),\Delta_{S^\ast})\leq \frac{1}{2}\,,
\]
If $S^\ast$ is finite, then, using Proposition~\ref{cor:c2} and Corollary~\ref{cor:localsolution}, we can further extend $\phi^\ast$ to be supported on $C\left([0,S^\ast+\Delta_{S^\ast});\mathcal{C}([0,1];\mathcal{P}^2)\right)$, giving a contradiction. If follows that $S^\ast=\infty$, as desired.

Finally, \eqref{eqn:decayEsmean-field} is a direct result of Lemma~\ref{lem:decayofcost}.
\end{proof}

\subsection{Proof of Theorem~\ref{thm:finiteLwell-posed1}}
This section is dedicated to Theorem~\ref{thm:finiteLwell-posed1} --- we show the well posedness of the gradient flow in the finite-layer case. 
We rewrite the gradient of~\eqref{eqn:costfunction} as follows
\begin{equation}\label{eqn:partialEfiniteL}
\frac{\partial E({\Theta}_{L,M})}{\partial \theta_{l,m}}=\frac{1}{ML}\mathbb{E}_{x\sim\mu}\left(\partial_\theta f(Z_{\Theta_{L,M}}(l;x),\theta_{l,m})p_{\Theta_{L,M}}(l;x)\right)\,,
\end{equation}
where $p_{\Theta_{L,M}}(l;x)$ solves:
\begin{equation}\label{eqn:prhofiniteL}
\left\{
\begin{aligned}
p^\top_{\Theta_{L,M}}(l;x) & =p^\top_{\Theta_{L,M}}(l+1;x)\left(I+\frac{1}{ML}\sum^M_{m=1}\partial_zf\left(Z_{\Theta_{L,M}}(l+1;x),\theta_{l+1,i}\right)\right)\\
p_{\Theta_{L,M}}(L-1;x) &=\left(g(Z_{\Theta_{L,M}}(L;x))-y(x)\right)\nabla g(Z_{\Theta_{L,M}}(L;x)),
\end{aligned}
\right.\,,
\end{equation}
for $0\leq l\leq L-2$. We unify the space in a similar fashion to Definition~\ref{def:path}.
\begin{definition}\label{def:pathfiniteL}
${\Theta}_{L,M}=\left\{\theta_{l,m}\right\}^{L-1,M}_{l=0,m=1}\in L^\infty_{L,M}$ if and only if 
\[
\sup_{l,m}|\theta_{l,m}|<\infty\,.
\]
The metric in $L^\infty_{L,M}$ is defined as
\[
d_{1,L,M}\left({\Theta}_{L,M},\widetilde{\Theta}_{L,M}\right)=\max_{l}\left(\frac{1}{M}\sum^M_{m=1}|\theta_{l,m}-\widetilde{\theta}_{l,m}|^2\right)^{1/2}\,.
\]
\end{definition}
\begin{definition}
For $s\geq0$, we have $\Theta_{L,M}(s)=\left\{\theta_{l,m}(s)\right\}^{L-1,M}_{l=0,m=1}\in \mathcal{C}([0,\infty);L^\infty_{L,M})$ if and only if
\begin{itemize}
\item[1.] For fixed $s\in[0,\infty)$, $\Theta_{L,M}(s)\in L^\infty_{L,M}$.

\item[2.] For any $s_0\in[0,\infty)$, 
\[
\lim_{s\rightarrow s_0}d_{1,L,M}\left(\Theta_{L,M}(s),\Theta_{L,M}(s_0)\right)=0\,,
\]
where $d_{1,L,M}$ is defined in Definition~\ref{def:pathfiniteL}.
\end{itemize}
The metric in $\mathcal{C}([0,\infty);L^\infty_{L,M})$ is defined by
\[
d_{2,L,M}\left({\Theta}_{L,M},\widetilde{\Theta}_{L,M}\right)=\sup_{s}d_{1,L,M}({\Theta}_{L,M}(s),\widetilde{\Theta}_{L,M}(s))\,.
\]
\end{definition}
Theorem \ref{thm:finiteLwell-posed1} is to say that the solution to \eqref{eqn:gradient_theta_LM} is unique in $\mathcal{C}([0,\infty);L^\infty_{L,M})$ if $\Theta_{L,M}(0)\in L^\infty_{L,M}$.

Before proving the theorem, prepare some a-priori estimates of $Z_{\Theta_{L,M}}$ and $p_{\Theta_{L,M}}$.
\begin{lemma}\label{prop:wmean-fieldlimitfiniteL}
Suppose that Assumption \ref{assum:f} holds and that $x$ is in the support of $\mu$. Let \[
{\Theta}_{L,M}=\left\{{\theta}_{l,m}\right\}^{L-1,M}_{l=0,m=1}\,,\quad\text{and}\quad \widetilde{\Theta}_{L,M}=\left\{\widetilde{\theta}_{l,m}\right\}^{L-1,M}_{l=0,m=1}\,,
\]
and denote 
\begin{align*}
\mathcal{L}_{\Theta_{L,M}} & =\frac{1}{LM}\sum^{L-1}_{l=0}\sum^M_{m=1}|\theta_{l,m}|^2,\\ \mathcal{L}_{\widetilde{\Theta}_{L,M}} & =\frac{1}{LM}\sum^{L-1}_{l=0}\sum^M_{m=1}|\widetilde{\theta}_{l,m}|^2,\\ \mathcal{R}_{L,M} &=\sup_{l,m}\left\{|\theta_{l,m}|,\left|\widetilde{\theta}_{l,m}\right|\right\}\,.
\end{align*}
Then for $0\leq l\leq L-1$, we have the following properties:
\begin{itemize}
    \item Boundedness in $Z_{\Theta_{L,M}}$:
\begin{equation}\label{boundofxsolutionfiniteL}
\left|Z_{\Theta_{L,M}}(l+1;x)\right|\leq C(\mathcal{L}_{\Theta_{L,M}})\,, 
\end{equation}
    \item Lipschitz in $Z_{\Theta_{L,M}}$:
\begin{equation}\label{stabilityofxsolutionfiniteL}
\left|Z_{\Theta_{L,M}}(l+1;x)-Z_{\widetilde{\Theta}_{L,M}}(l+1;x)\right|\leq C\left(\mathcal{L}_{\Theta_{L,M}},\mathcal{L}_{\widetilde{\Theta}_{L,M}}\right)d_{1,L,M}\left({\Theta}_{L,M},\widetilde{\Theta}_{L,M}\right)\,,
\end{equation}
    \item Boundedness in $p_{\Theta_{L,M}}$:
\begin{equation}\label{boundofprhofiniteL}
\left|p_{\Theta_{L,M}}(l;x)\right|\leq C(\mathcal{L}_{\Theta_{L,M}})\,,
\end{equation}
    \item Lipschitz in $p_{\Theta_{L,M}}$:
\begin{equation}\label{stabilityofpsolutionfiniteL}
\left|p_{\Theta_{L,M}}(l;x)-p_{\widetilde{\Theta}_{L,M}}(l;x)\right|\leq C\left(\mathcal{R}_{L,M}\right)d_{1,L,M}\left({\Theta}_{L,M},\widetilde{\Theta}_{L,M}\right)\,.
\end{equation}
\end{itemize}
\end{lemma}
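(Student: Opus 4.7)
The strategy is to carry over each bound from the continuous/mean-field proofs (Theorem \ref{thm:wmean-fieldlimit} and Lemma \ref{lem:prhoprop1}) to the discrete recursion, using the discrete Grönwall inequality in place of its continuous counterpart. Throughout, I will treat $\frac{1}{ML}\sum_{m=1}^M$ as a discrete analog of $\int_{\mathbb R^k}\cdots\rd\rho(\theta,t)$ so that the quantity $\mathcal{L}_{\Theta_{L,M}}=\frac{1}{ML}\sum_{l,m}|\theta_{l,m}|^2$ plays exactly the role $\mathcal L_i$ did before, averaging of second moments over both the level $l$ and the neuron index $m$.

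For the bound \eqref{boundofxsolutionfiniteL} on $Z_{\Theta_{L,M}}$, I would use the linear growth \eqref{eqn:boundoff} to get
\[
|Z(l+1)|+1 \leq (|Z(l)|+1)\left(1+\frac{C_1}{ML}\sum_{m=1}^M(|\theta_{l,m}|^2+1)\right),
\]
and then iterate, applying the inequality $1+a\leq e^{a}$ and summing the exponents to produce the factor $\exp(C_1(\mathcal L_{\Theta_{L,M}}+1))(|x|+1)$, which is a constant depending only on $\mathcal L_{\Theta_{L,M}}$ since $|x|\leq R_\mu$. For the Lipschitz estimate \eqref{stabilityofxsolutionfiniteL}, let $\Delta_l=Z_{\Theta_{L,M}}(l)-Z_{\widetilde\Theta_{L,M}}(l)$ and split
$$f(Z_{\Theta}(l),\theta_{l,m})-f(Z_{\widetilde\Theta}(l),\widetilde\theta_{l,m})$$
into a $z$-difference and a $\theta$-difference. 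The $z$-part is controlled via $\partial_xf$ from \eqref{eqn:derivativebound} combined with \eqref{boundofxsolutionfiniteL}; the $\theta$-part is controlled via $\partial_\theta f$ and Cauchy--Schwarz to turn $\frac1M\sum_m (|\theta_{l,m}|+|\widetilde\theta_{l,m}|+1)|\theta_{l,m}-\widetilde\theta_{l,m}|$ into a factor times $d_{1,L,M}$. Discrete Grönwall then gives the stated bound.

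The bound \eqref{boundofprhofiniteL} on $p_{\Theta_{L,M}}$ follows the same template but run \emph{backward} from $l=L-1$. The terminal condition is bounded by $C(\mathcal L_{\Theta_{L,M}})$ using \eqref{boundofxsolutionfiniteL} together with Assumption \ref{ass:g} on $g$ and local boundedness of $y$. Using \eqref{eqn:derivativebound} on $\partial_z f$ together with \eqref{boundofxsolutionfiniteL}, we have
\[
\left|I+\tfrac{1}{ML}\sum_m \partial_zf(Z(l+1),\theta_{l+1,m})\right|\leq 1+\tfrac{C(\mathcal L_{\Theta_{L,M}})}{ML}\sum_m(|\theta_{l+1,m}|^2+1),
\]
and iterating backward with the same $1+a\leq e^a$ trick yields \eqref{boundofprhofiniteL}.

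The main obstacle is \eqref{stabilityofpsolutionfiniteL}, because it requires the second-order bounds from part~3 of Assumption \ref{assum:f}, which is where the $\mathcal R_{L,M}$ dependence enters (mirroring the appearance of $\mathcal R$ in \eqref{stabilityofprho}). Setting $\Delta(l)=p_{\Theta_{L,M}}(l)-p_{\widetilde\Theta_{L,M}}(l)$, the terminal difference at $l=L-1$ is bounded by $C(\mathcal L)\cdot|Z_\Theta(L)-Z_{\widetilde\Theta}(L)|$, which is $C(\mathcal L)\,d_{1,L,M}$ by \eqref{stabilityofxsolutionfiniteL}. For $l<L-1$, write the backward recursion for $\Delta(l)$ analogously to \eqref{eqn:ODEforDEltaprho}, obtaining a homogeneous term driven by $\Delta(l+1)$ plus a source term involving
\[
D_l=\tfrac{1}{ML}\sum_m\bigl[\partial_zf(Z_{\widetilde\Theta}(l+1),\widetilde\theta_{l+1,m})-\partial_zf(Z_{\Theta}(l+1),\theta_{l+1,m})\bigr].
\]
I split $D_l$ into a $Z$-difference piece, controlled by $\partial^2_xf$, and a $\theta$-difference piece, controlled by $\partial_\theta\partial_xf$; both are bounded pointwise by $C_3(r)$ with $r=\max\{|Z|,|\theta_{[1]}|\}$, so I need $|\theta_{l+1,m,[1]}|$ and $|\widetilde\theta_{l+1,m,[1]}|$ uniformly bounded---exactly the role of $\mathcal R_{L,M}$. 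Using \eqref{stabilityofxsolutionfiniteL} on the $Z$-difference and Cauchy--Schwarz on $\frac1M\sum_m|\theta_{l+1,m}-\widetilde\theta_{l+1,m}|$ to produce $d_{1,L,M}$, I obtain $|D_l|\leq C(\mathcal R_{L,M})\,d_{1,L,M}$. A backward discrete Grönwall applied to $|\Delta(l)|$ finishes the proof. All four estimates are essentially routine once the combinatorics of the $\frac{1}{ML}$ averaging is made to mimic the continuous integrals; the only genuinely delicate point is ensuring the $\mathcal R_{L,M}$-dependence surfaces only in \eqref{stabilityofpsolutionfiniteL}, which the use of part~3 of Assumption~\ref{assum:f} accomplishes cleanly.
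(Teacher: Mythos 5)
Your proposal matches the paper's argument essentially step for step: iterate the discrete recursions for $Z$ and $p$, bound the one-step multipliers by $1 + \tfrac{C}{ML}\sum_m(|\theta_{l,m}|^2+1)$, sum and exponentiate to get the $\mathcal L_{\Theta_{L,M}}$-dependent constants, apply Cauchy--Schwarz in $m$ to generate the $d_{1,L,M}$ factor, and invoke the local-smoothness bounds of part~3 of Assumption~\ref{assum:f} (hence $\mathcal R_{L,M}$) only for the $\partial_z f$ differences in the $p$-stability estimate. The proof is correct and takes the same route as the paper, so there is nothing to add.
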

\begin{proof} From \eqref{eqn:disRes} and \eqref{eqn:boundoff} we obtain
\[
\begin{aligned}
\left(\left|Z_{\Theta_{L,M}}(l+1;x)\right|+1\right)&\leq C_1\left(1+\frac{1}{LM}\sum^M_{m=1}(|\theta_{l,m}|^2+1)\right)(\left|Z_{\Theta_{L,M}}(l;x)\right|+1)\\
&\leq C_1\exp\left(\frac{1}{LM}\sum^M_{m=1}(|\theta_{l,m}|^2+1)\right)(\left|Z_{\Theta_{L,M}}(l;x)\right|+1)\,,
\end{aligned}
\]
which proves \eqref{boundofxsolutionfiniteL} by iteration on $l$.

From \eqref{eqn:derivativebound} and \eqref{boundofxsolutionfiniteL} we obtain
\[
\frac{1}{ML}\sum^M_{m=1}|\partial_zf\left(Z_{\Theta_{L,M}}(l+1;x),\theta_{l,m}\right)|\leq \frac{C(\mathcal{L}_{\Theta_{L,M}})}{ML}\sum^M_{m=1}(|\theta_{l,m}|^2+1)\,,
\]
which by \eqref{eqn:prhofiniteL} implies
\[
|p_{\Theta_{L,M}}(l;x)|\leq \left(1+\frac{C(\mathcal{L}_{\Theta_{L,M}})}{ML}\sum^M_{m=1}(|\theta_{l,m}|^2+1)\right)|p_{\Theta_{L,M}}(l+1;x)|\,.
\]
From this bound, together with $|p_{\Theta_{L,M}}(x,L-1)|\leq C|Z_{\Theta_{L,M}}(L;x)|\leq C(\mathcal{L}_{\Theta_{L,M}})$, we prove \eqref{boundofprhofiniteL} by iteration on $l$.

To prove \eqref{stabilityofxsolutionfiniteL}, we subtract the two updating formulas and split the estimate to obtain
\begin{align*}
&\left|Z_{\Theta_{L,M}}(l+1;x)-Z_{\widetilde{\Theta}_{L,M}}(l+1;x)\right|\\
& \leq \left(1+\frac{C(\mathcal{L}_{\Theta_{L,M}})}{ML}\sum^M_{m=1}(|\theta_{l,m}|^2+1)\right)\left|Z_{\Theta_{L,M}}(l;x)-Z_{\widetilde{\Theta}_{L,M}}(l;x)\right|\\
& \quad +\frac{C(\mathcal{L}_{\Theta_{L,M}},\mathcal{L}_{\widetilde{\Theta}_{L,M}})}{L}\left(\frac{1}{M}\sum^M_{m=1}(|\theta_{l,m}|^2+|\widetilde{\theta}_{l,m}|^2+1)\right)d_{1,L,M}\left({\Theta}_{L,M},\widetilde{\Theta}_{L,M}\right)\,,
\end{align*}
where we use \eqref{eqn:disRes} together with the bounds ~\eqref{eqn:derivativebound}, and~\eqref{boundofxsolutionfiniteL}.
Noting that $|Z_{\Theta_{L,M}}(0;x)-Z_{\widetilde{\Theta}_{L,M}}(0;x)|=0$, we prove \eqref{stabilityofxsolutionfiniteL} by iteration on $l$.

Finally, for \eqref{stabilityofpsolutionfiniteL}, we subtract two equations in the form of~\eqref{eqn:prhofiniteL}, and use~\eqref{eqn:prhofiniteL}-\eqref{boundofprhofiniteL} together with Lipschitz continuity to obtain
\begin{equation}\label{eqn:iterationappendix1}
\begin{aligned}
&\left|p_{\Theta_{L,M}}(l;x)-p_{\widetilde{\Theta}_{L,M}}(l;x)\right|\\
& \leq \left|(p_{\Theta_{L,M}}(l+1;x)-p_{\widetilde{\Theta}_{L,M}}(l+1;x))^\top\left(I+\frac{1}{ML}\sum^M_{m=1}\partial_zf\left(Z_{\Theta_{L,M}}(l+1;x),\theta_{l+1,m}\right)\right)\right|\\
&+\left|p^\top_{\widetilde{\Theta}_{L,M}}(l+1;x)\left(\frac{1}{ML}\sum^M_{m=1}\left(\partial_zf\left(Z_{\Theta_{L,M}}(l+1;x),\theta_{l+1,i}\right)-\partial_zf\left(Z_{\widetilde{\Theta}_{L,M}}(l+1;x),\widetilde{\theta}_{l+1,m}\right)\right)\right)\right|\\
& \leq \left(1+\frac{C(\mathcal{R}_{L,M})}{ML}\sum^M_{m=1}(|\theta_{l,m}|^2+1)\right)\left|p_{\Theta_{L,M}}(l+1;x)-p_{\widetilde{\Theta}_{L,M}}(l+1;x)\right|\\
&+\frac{C(\mathcal{R}_{L,M})}{L}d_{1,L,M}\left({\Theta}_{L,M},\widetilde{\Theta}_{L,M}\right)\,.
\end{aligned}
\end{equation}
The initial data is also controlled, as follows:
\begin{align*}
|p_{\Theta_{L,M}}(L-1;x)-p_{\widetilde{\Theta}_{L,M}}(L-1;x)|&\leq C|Z_{\Theta_{L,M}}(L;x)-Z_{\widetilde{\Theta}_{L,M}}(L;x)|\\&\leq C(\mathcal{R}_{L,M})d_{1,L,M}\left({\Theta}_{L,M},\widetilde{\Theta}_{L,M}\right)\,.
\end{align*} 
By combining this with \eqref{eqn:iterationappendix1}, we prove \eqref{stabilityofpsolutionfiniteL} by iteration on $l$.
\end{proof}

Lemma~\ref{prop:wmean-fieldlimitfiniteL} resembles Theorem~\ref{thm:wmean-fieldlimit} and Lemma~\ref{lem:prhoprop1}. 
These estimates allow us to prove Theorem~\ref{thm:finiteLwell-posed1}. 
Since the proof strategy is exactly the same, we omit details.
Essentially we define a map
\[
\widetilde{\Theta}(s)=\mathcal{T}^{L,M}_S(\Theta'(s)):\mathcal{C}([0,\infty);L^\infty_{L,M})\rightarrow\,, \mathcal{C}([0,\infty);L^\infty_{L,M})\,,
\]
where $\widetilde{\Theta}(s)$ solves:
\[
 \frac{\rd\widetilde{\Theta}(s)}{\rd s}=-ML\nabla_{\Theta} E(\Theta'(s))\,,\quad\text{for}\; s\geq 0\,,
\]
where $\Theta$ defines the forcing term. 
The estimates above provide all the ingredients to show the map is well-defined, and for a small enough $S$, the map is also contracting, leading to the uniqueness of the solution to~\eqref{eqn:gradient_theta_LM}. 
Similar to Lemma~\ref{lem:decayofcost}, one can also show $\frac{\rd E}{\rd s}=-ML|\nabla_{{\Theta}} E({\Theta}_{L,M})|^2$, improving the estimates and removing the constants' dependence on the initial guess. This extends the local solution to the global one, as done in Step 3 for the continuous case.

\section{Proof of Theorem \ref{thm:consistent}}\label{sec:proofofthmconsistent}
Theorem~\ref{thm:consistent} links the cost defined by $\Theta_{L,M}(s)$ with that defined by $\rho(\theta,t,s)$ for all $s$. 
The continuous and mean-field limits are obtained, with both $L$ and $M$ sent to infinity. 
We decompose this result into two parts, discussing mean-field and continuous limits separately.

We start with the full definition of "limit-admissible" for a distribution $\rho$.
\begin{definition}\label{def:meanadmissible2}
For an admissible $\rho(\theta,t)$, we say $\rho(\theta,t)$ is {\em limit-admissible} if the average of a large number of particle presentations is bounded and Lipschitz with high probability. That is, for an admissible $\rho(\theta,t)$, there are two constants $C_3$ and $C_4$, both greater than $\sup_{t\in[0,1]}\int_{\mathbb{R}^k}|\theta|^2d\rho(\theta,t)$ such that, for any $M$ stochastic process presentation $\{\theta_m(t)\}^M_{m=1}$ that are $i.i.d.$ drawn from $\rho(\theta,t)$, the following properties are satisfied for any $\eta>0$ and $M>\frac{C_3}{\eta}$:
\begin{itemize}
\item[1.] Second moment boundedness in time:
\begin{equation}\label{boundLdis0}
\mathbb{P}\left(\sup_{t\in[0,1]}\frac{1}{M}\sum^M_{m=1}|\theta_m(t)|^2 \leq C_4\right)\geq 1-\eta\,.
\end{equation}
\item[2.] For all $L>0$, we have
\begin{equation}\label{diffrencesmall}
\mathbb{P}\left(\frac{1}{M}\sum^{L-1}_{l=0}\sum^M_{m=1}\int^{\frac{l+1}{L}}_{\frac{l}{L}}\left|\theta_m(t)-\theta_m\left(\frac{l}{L}\right)\right|^2\rd t\leq \frac{C_4}{L^2}\right)\geq 1-\eta\,.
\end{equation}
\end{itemize}
\end{definition}



We now state the two theorems that play complementary parts in Theorem~\ref{thm:consistent}.
The first theorem addresses the limit in $M$ under the assumption that $L=\infty$. 
This is the mean-field part of the analysis.
\begin{theorem}\label{thm:mean-field}
\revise{Let Assumptions~\ref{assum:f} and~\ref{ass:g} hold with some $0< k_1\leq k$. Assume that $\rho_{\ini}(\theta,t)$ is limit-admissible and $\mathrm{supp}_\theta(\rho_{\ini}(\theta,t))\subset \{\theta||\theta_{[1]}|\leq R\}$ with some $R>0$ for all $t\in[0,1]$.}
Suppose that $\{\theta_m(0;t)\}^M_{m=1}$ are $i.i.d$ drawn from $\rho_{\ini}(\theta,t)$. 
Suppose in addition that
\begin{itemize}
\item $\rho(\theta,t,s)$ solves \eqref{eqn:Wassgradientflow} with the initial condition $\rho_{\ini}(\theta,t)$, and
\item $\theta_m(s;t)$ solves \eqref{eqn:gradient_theta_M} with the initial condition $\theta_m(0;t)$.
\end{itemize}
Then for any $\epsilon,\eta,S>0$, there exists a constant $C(\rho_{\ini}(\theta,t),S)>0$ depending on $\rho_{\ini}(\theta,t)$, $S$ such that when $M>\frac{C(\rho_{\ini}(\theta,t),S)}{\epsilon^2\eta}$,
we have
\[
\mathbb{P}\left(\left|E(\Theta(s;\cdot))-E(\rho(\cdot,\cdot,s))\right|\leq \epsilon\right)\geq 1-\eta\,,\quad \forall s<S\,.
\]
\end{theorem}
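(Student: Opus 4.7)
The plan is to use a synchronous coupling between the interacting particles $\{\theta_m(s;t)\}$ of \eqref{eqn:gradient_theta_M} and an auxiliary non-interacting system driven by the true mean-field. Concretely, for each $m$ and $t$, I would define $\tilde\theta_m(s;t)$ by
\[
\frac{\rd\tilde\theta_m(s;t)}{\rd s}=-\nabla_\theta\frac{\delta E(\rho(s))}{\delta\rho}(\tilde\theta_m(s;t),t),\qquad \tilde\theta_m(0;t)=\theta_m(0;t),
\]
where $\rho(\theta,t,s)$ is the deterministic solution of \eqref{eqn:Wassgradientflow}. Since this drift depends only on $\rho$, the trajectories $\tilde\theta_m(\cdot;t)$ are mutually independent for each fixed $t$ and $\tilde\theta_m(s;t)\sim\rho(\cdot,t,s)$ for all $s$. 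Writing $\rho^{\dis}(\cdot,t,s)=\frac{1}{M}\sum_m\delta_{\theta_m(s;t)}$ and $\tilde\rho^{\dis}(\cdot,t,s)=\frac{1}{M}\sum_m\delta_{\tilde\theta_m(s;t)}$ and noting that $E(\Theta(s;\cdot))=E(\rho^{\dis}(s))$, I split the target by triangle inequality as
\[
|E(\rho^{\dis}(s))-E(\rho(s))|\le|E(\rho^{\dis}(s))-E(\tilde\rho^{\dis}(s))|+|E(\tilde\rho^{\dis}(s))-E(\rho(s))|.
\]
The stability of $Z_\rho$ in Theorem~\ref{thm:wmean-fieldlimit} and of $p_\rho$ in Lemma~\ref{lem:prhoprop1} imply that each piece is controlled by a constant (depending on $S$ and the uniform second moment of $\rho$) times $\sup_t W_2$ of the corresponding pair, so the proof reduces to a uniform-in-$(t,s)$ Wasserstein bound.

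For the first (interacting-vs-auxiliary) piece, I subtract the two ODEs and apply the Lipschitz bound \eqref{Lipshictz} in $\theta$ and the stability estimate \eqref{eqn:Deltabound2} in $\rho$ from Lemma~\ref{lemmadeltaEs}. Combined with the trivial estimate $W_2^2(\rho^{\dis}(\cdot,t,s),\tilde\rho^{\dis}(\cdot,t,s))\le\frac1M\sum_m|\theta_m(s;t)-\tilde\theta_m(s;t)|^2$, the a~priori support/second-moment control from Corollary~\ref{cor:C.3}, and a Gronwall inequality in $s$, the first piece is reduced to the second. The second (auxiliary-vs-true) piece is a classical empirical-law deviation: at each fixed $(t,s)$ the $\tilde\theta_m$ are i.i.d.\ from $\rho(\cdot,t,s)$, so testing against the specific integrands $f(Z_\rho(t;x),\cdot)$ and $\partial_zf(Z_\rho(t;x),\cdot)$ that actually enter $E$ and $p_\rho$ yields, via a direct variance calculation and Chebyshev, a bound of order $C(\rho_{\ini},S)/M$ for $\mathbb{E}|E(\tilde\rho^{\dis}(s))-E(\rho(s))|^2$ at each fixed $s$.

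To upgrade this pointwise-in-$(t,s)$ estimate to a uniform one, I would invoke the limit-admissibility of $\rho_{\ini}$, specifically \eqref{boundLdis0}--\eqref{diffrencesmall}, together with the Lipschitz continuity in $t$ of the drift from Lemma~\ref{lemmadeltaEs} and continuity in $s$ implied by boundedness of the drift, so that only a polynomial-in-$1/\epsilon$ $(t,s)$-net needs to be considered. A union bound over this net combined with Markov's inequality then yields $\mathbb{P}(\sup_{t,s\le S}|E(\tilde\rho^{\dis}(s))-E(\rho(s))|\le\epsilon/2)\ge 1-\eta$ provided $M\ge C(\rho_{\ini},S)/(\epsilon^2\eta)$, and the synchronous coupling estimate transfers the same bound to the full error.

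The main obstacle is precisely this uniformity step: naive $W_2$-rates for empirical measures in $\mathbb{R}^k$ degrade as $M^{-1/k}$, which is too slow. The argument must therefore avoid measuring $W_2$ directly and instead measure closeness only through the Lipschitz test functions that enter $E$, which recovers the scalar $M^{-1/2}$ scaling. Bridging this one-point rate to the continuum of $t\in[0,1]$ and $s\in[0,S]$ requires an equicontinuity input, and it is precisely the continuity provided by limit-admissibility of $\rho_{\ini}$ (for $t$) and the uniform drift bound (for $s$) that makes a finite-net union bound work with the claimed confidence $1/\eta$ rather than $\log(1/\eta)$.
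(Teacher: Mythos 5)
You have the right overall architecture---the synchronous coupling with the auxiliary system $\tilde\theta_m$ driven by the true mean-field $\rho(s)$, the triangle-inequality split, the Gronwall estimate, and the reduction to an $O(1/M)$ empirical-fluctuation bound plus Markov---and this does match the paper's strategy. However, your first step, ``the stability of $Z_\rho$ and of $p_\rho$ imply that each piece is controlled by \dots\ $\sup_t W_2$ of the corresponding pair,'' is not a harmless reduction that you can later fix at the variance-calculation stage; it is a dead end, and you recognize this yourself when you note that $W_2(\tilde\rho^{\dis},\rho)$ has the dimension-dependent rate $M^{-1/k}$. The fix you sketch (``measure closeness only through the Lipschitz test functions that enter $E$'') is the right idea, but it is not a cosmetic change at the final Chebyshev step: the stability lemmas you invoke (Theorem~\ref{thm:wmean-fieldlimit}, Lemma~\ref{lem:prhoprop1}, and~\eqref{eqn:Deltabound2}) all express closeness of $Z_\rho$ and $p_\rho$ through $d_1=\sup_tW_2$, so applying them commits you to the slow rate from the very first inequality. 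What is actually needed---and is the substance of the paper's Lemmas~\ref{leamm:meanfiledstability} and~\ref{leamm:meanfiledstability2}---is a different pair of stability estimates that express the errors $|Z_s-Z^{\dis}_s|$ and $|p_s-p^{\dis}_s|$ directly in terms of (a) the synchronous-coupling distance $\frac1M\sum_m\int_0^1|\theta_m-\tilde\theta_m|^2\,\rd t$ and (b) the scalar fluctuation integrals $\int f(Z_s(\tau;x),\theta)\,\rd(\rho-\tilde\rho^{\dis})$ and $\int\partial_z f(Z_s(\tau;x),\theta)\,\rd(\rho-\tilde\rho^{\dis})$, with no intermediate pass through $W_2$. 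The same structure must be pushed through the Gronwall step: the comparison of drifts $\nabla_\theta\frac{\delta E(\rho^{\dis}(s))}{\delta\rho}-\nabla_\theta\frac{\delta E(\rho(s))}{\delta\rho}$ must likewise be re-expressed via those test-function integrals rather than via~\eqref{eqn:Deltabound2}, otherwise $d_1(\tilde\rho^{\dis},\rho)$ reappears inside the Gronwall constant. Without these modified stability lemmas, the argument does not close.

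Separately, the $(t,s)$-net and union bound in your third paragraph are unnecessary, and the motivating concern (``upgrade pointwise to uniform'') reflects a misreading of the statement. The conclusion is ``$\forall s<S:\ \mathbb{P}(\cdot)\ge1-\eta$'' with a constant uniform in $s$, not ``$\mathbb{P}(\forall s<S:\cdot)\ge1-\eta$.'' Moreover, once the correct stability lemmas are in place, the quantities whose expectations one needs to bound are already integrals over $t\in[0,1]$ and $s\in[0,S]$ (the Gronwall step averages over $t$, and the coupling estimate accumulates over $s$), so a single Markov inequality applied to those scalar random variables gives the pointwise-in-$s$ claim with the $1/\eta$ dependence directly---no equicontinuity or finite-net argument is required.
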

\begin{proof}
See Appendix~\ref{sec:proofofthmmean-field}.
\end{proof}
The conclusion of this result suggests that for a $1-\eta$ confidence of an $\epsilon$ accuracy, $M$ grows polynomially with respect to  $1/\epsilon$ and $1/\eta$.

The second result considers the convergence of the parameter configuration for the discrete ResNet \eqref{eqn:disRes} to that for the continuous ResNet  \eqref{eqn:contRes} as $L \to \infty$. This is the continuous-limit part of the analysis.

\begin{theorem}\label{thm:contlimit}
\revise{Let Assumptions~\ref{assum:f} and~\ref{ass:g} hold with some $0< k_1\leq k$. Assume that $\rho_{\ini}(\theta,t)$ is limit-admissible and $\mathrm{supp}_\theta(\rho_{\ini}(\theta,t))\subset \{\theta||\theta_{[1]}|\leq R\}$ with some $R>0$ for all $t\in[0,1]$.} Suppose that $\{\theta_m(0;t)\}^M_{m=1}$ are $i.i.d$ drawn from $\rho_{\ini}(\theta,t)$. Suppose in addition that 
\begin{itemize}
\item $\theta_m(s;t)$ solves~\eqref{eqn:gradient_theta_M} with initial condition $\theta_m(0;t)$,
\item $\theta_{l,m}(s)$ solves~\eqref{eqn:gradient_theta_LM} with initial condition $\theta_m\left(0;\frac{l}{L}\right)$\,.
\end{itemize}
Then for any $\epsilon,\eta,S>0$, there exists a constant $C(\rho_{\ini}(\theta,t),S)>0$ depending on $\rho_{\ini}(\theta,t),S$ such that when $M\geq \frac{C(\rho_{\ini}(\theta,t),S)}{\eta}$ and $L\geq \frac{C(\rho_{\ini}(\theta,t),S)}{\epsilon}$, we have for all $s<S$ that
\[
\mathbb{P}\left(\left|E(\Theta(s;\cdot))-E(\Theta_{L,M}(s))\right|\leq \epsilon\right)\geq 1-\eta\,.
\]
\end{theorem}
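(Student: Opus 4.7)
The plan is to compare, at each pseudo-time $s \in [0,S]$, the discrete parameters $\theta_{l,m}(s)$ with the continuous parameters $\theta_m(s;t)$ evaluated at the grid $t = l/L$, and then translate the parameter-level closeness into closeness of the cost. Define the natural error quantity
\[
V(s) = \frac{1}{ML}\sum_{l=0}^{L-1}\sum_{m=1}^{M}\bigl|\theta_m(s;l/L)-\theta_{l,m}(s)\bigr|^2,
\]
which vanishes at $s=0$ by the choice of initial data. The first task is to derive a Gr\"onwall inequality of the form $\tfrac{\rd V}{\rd s} \le C(S)\bigl(V(s)+1/L^2\bigr)$ on an event of probability at least $1-\eta$, and then to convert the resulting estimate $V(s)\lesssim 1/L^2$ into $|E(\Theta(s;\cdot))-E(\Theta_{L,M}(s))|\le \epsilon$ via the stability of $Z$ and $p$ already established in Theorem~\ref{thm:wmean-fieldlimit} and Lemma~\ref{prop:wmean-fieldlimitfiniteL}.

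To carry out the Gr\"onwall step I would first bound, for each fixed $s$, the forward/adjoint discretization errors
\[
\mathcal{E}_Z(l) := \bigl|Z_{\Theta}(l/L;x)-Z_{\Theta_{L,M}}(l;x)\bigr|, \qquad
\mathcal{E}_p(l) := \bigl|p_{\Theta}(l/L;x)-p_{\Theta_{L,M}}(l;x)\bigr|,
\]
where $Z_\Theta, p_\Theta$ are the continuous-in-$t$ quantities built from the empirical measure $\rho^{\dis}$ associated with $\Theta(s;\cdot)$. Writing the continuous ODE~\eqref{eqn:contRes} in integral form and comparing with the discrete update~\eqref{eqn:disRes} produces two types of error: an Euler-type truncation contribution that is bounded by $\tfrac{1}{M}\sum_m \int_{l/L}^{(l+1)/L}|\theta_m(s;t)-\theta_m(s;l/L)|^2\rd t$ combined with the Lipschitz bounds of $f$ from Assumption~\ref{assum:f}, and a propagated parameter-error contribution controlled by $V(s)$. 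The first contribution is exactly where the limit-admissibility hypothesis~\eqref{diffrencesmall} (applied to $\theta_m(s;\cdot)$) gives an $O(1/L^2)$ bound on an event of probability $\geq 1-\eta/2$ once $M \geq C/\eta$; the backward recursion~\eqref{eqn:prhofiniteL} gives an analogous statement for $\mathcal{E}_p$. A discrete Gr\"onwall in $l$ then yields $\mathcal{E}_Z(l) + \mathcal{E}_p(l) \le C(S)\bigl(V(s)^{1/2}+1/L\bigr)$ uniformly.

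Differentiating $V(s)$ and substituting the gradient-flow equations~\eqref{eqn:gradient_theta_M} and~\eqref{eqn:gradient_theta_LM} (in the forms coming from~\eqref{eqn:frechet_s} and~\eqref{eqn:partialEfiniteL}) produces terms of the schematic form
\[
\mathbb{E}_\mu\!\left(p_\Theta^\top(l/L;x)\,\partial_\theta f(Z_\Theta(l/L;x),\theta_m(s;l/L))-p_{\Theta_{L,M}}^\top(l;x)\,\partial_\theta f(Z_{\Theta_{L,M}}(l;x),\theta_{l,m}(s))\right).
\]
Adding and subtracting intermediate quantities and invoking the Lipschitz estimate~\eqref{Lipshictz} in $\theta$ together with the previous step's bounds on $\mathcal{E}_Z,\mathcal{E}_p$ yields $\tfrac{\rd V}{\rd s} \le C(S,\rho_\ini)\bigl(V(s)+1/L^2\bigr)$. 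Gr\"onwall then gives $\sup_{s\le S}V(s)\le C(S,\rho_\ini)/L^2$ on the high-probability event. Finally, the same stability arguments used earlier in Appendix~\ref{sec:proofofwell-posed} convert $V(s)\lesssim 1/L^2$ into $|E(\Theta(s;\cdot))-E(\Theta_{L,M}(s))|\lesssim 1/L$, so that $L \ge C(\rho_\ini,S)/\epsilon$ suffices.

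The main obstacle is the coupling: $V$, $\mathcal{E}_Z$, and $\mathcal{E}_p$ depend on each other at every depth index, and the support bound on $\theta_{[1]}$ that legitimizes the local Lipschitz constants in Lemma~\ref{lemmadeltaEs} must be shown to persist uniformly in $s\in[0,S]$ for both flows simultaneously. Controlling this requires combining the a-priori estimates of Appendix~\ref{sec:prioriestimation} with the limit-admissibility of $\rho_{\ini}$ to produce, via a union bound, a single event of probability $\ge 1-\eta$ on which the second moments $\tfrac{1}{M}\sum_m|\theta_m(s;t)|^2$ and $\tfrac{1}{ML}\sum_{l,m}|\theta_{l,m}(s)|^2$ remain controlled uniformly in $(s,t)$ and $(s,l)$ respectively, and on which~\eqref{diffrencesmall} holds. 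All remaining constants are then deterministic functions of $S$ and the admissibility data, producing the stated bounds $M\ge C/\eta$ and $L\ge C/\epsilon$.
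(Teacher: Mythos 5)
Your decomposition is close in spirit to the paper's, but your choice of error functional creates a gap that the paper's choice deliberately avoids. You define
\[
V(s) = \frac{1}{ML}\sum_{l=0}^{L-1}\sum_{m=1}^{M}\bigl|\theta_m(s;l/L)-\theta_{l,m}(s)\bigr|^2,
\]
comparing only at grid points, so that $V(0)=0$. You then need a separate bound on the time-variation quantity
\[
\frac{1}{M}\sum_{l=0}^{L-1}\sum_{m=1}^M \int_{l/L}^{(l+1)/L}\bigl|\theta_m(s;t)-\theta_m(s;l/L)\bigr|^2\rd t
\]
for every $s\in(0,S]$, and you claim it is $O(1/L^2)$ ``by limit-admissibility applied to $\theta_m(s;\cdot)$.'' This is not justified by the hypotheses: \eqref{diffrencesmall} is a property of the \emph{initial} distribution $\rho_\ini$, stated for families that are \emph{i.i.d.\ draws} from $\rho(\theta,t)$, but at $s>0$ the evolved particles $\{\theta_m(s;\cdot)\}_m$ are not i.i.d.\ samples of $\rho(\cdot,\cdot,s)$ --- they are deterministic images of the initial draws coupled through the gradient flow, and the theorem only assumes $\rho_\ini$ is limit-admissible, not that the evolved measure is. So as written the step is unjustified.

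The paper avoids this by choosing the combined error quantity $\tfrac{1}{M}\sum_{l,m}\int_{l/L}^{(l+1)/L}|\theta_{l,m}(s)-\theta_m(s;t)|^2\rd t$, i.e.\ comparing $\theta_{l,m}(s)$ with $\theta_m(s;t)$ for all $t$ in the interval rather than just at $t=l/L$. At $s=0$ this reduces exactly to the quantity that \eqref{diffrencesmall} controls (since $\theta_{l,m}(0)=\theta_m(0;l/L)$), and the stability estimates \eqref{eqn:stabilityofZmean-fieldfiniteL}--\eqref{eqn:stabilityofpmean-fieldfiniteL} for $Z$ and $p$ are themselves expressed in this integral form, so the Gr\"onwall inequality \eqref{eqn:finiteLfinal} closes without any additional step. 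Your outline can still be repaired: one can bound the time-variation term for $s>0$ by differentiating $|\theta_m(s;t)-\theta_m(s;l/L)|^2$ in $s$, invoking the Lipschitz-in-$t$ estimate \eqref{Lipshictz} for $\nabla_\theta\frac{\delta E}{\delta\rho}$ to get $\tfrac{\rd}{\rd s}|\theta_m(s;t)-\theta_m(s;l/L)|^2 \le C\bigl(|\theta_m(s;t)-\theta_m(s;l/L)|^2 + |t-l/L|^2\bigr)$, and then closing with Gr\"onwall from the $s=0$ data. But that is an extra lemma you must prove; it is not a direct application of the limit-admissibility hypothesis, and in any case it is more economical simply to bundle the two errors as the paper does.
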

\begin{proof}
See Appendix~\ref{sec:proofofcontlimit}.
\end{proof}
This theorem shows that when the width is large enough, then with high probability, in the whole training process with $s<S$, the difference between the loss functions defined by the discrete ResNet and its continuous counterpart decreases to $0$ as $L\rightarrow\infty$.


\section{Convergence to the mean-field PDE}\label{sec:proofofthmmean-field}
This section is dedicated to mean-field analysis and the proof of Theorem~\ref{thm:mean-field}. The intuition of this theorem is largely aligned with many other mean-field results, as demonstrated in~\citep{ding2021overparameterization}.

As argued in Section~\ref{sec:meanfield}, to show ``equivalence" between~\eqref{eqn:gradient_theta_M} and~\eqref{eqn:Wassgradientflow}, we can test them on the same smooth function $h(\theta)$.
Testing~\eqref{eqn:Wassgradientflow} on amounts to multiplying $h$ on both sides of the equation by $h$. From integration by parts we have
\[
\frac{\rd}{\rd s}\int_{\mathbb{R}^k}h\rd\rho(\theta) = -\int_{\mathbb{R}^k}\nabla_\theta h\nabla_\theta\frac{\delta E(\rho(s))}{\delta \rho}\rd{\rho}\,,
\]
that is,
\[
\frac{\rd}{\rd s}\mathbb{E}(h)=\EE\left(\nabla_\theta h\nabla_\theta\frac{\delta E(\rho(s))}{\delta \rho}\right)\,.
\]
To test~\eqref{eqn:gradient_theta_M} on $h$, we let $\rho = \frac{1}{M}\sum_{m=1}^M\delta_{\theta_m}$ and obtain
\[
\frac{\rd}{\rd s}\mathbb{E}(h) = \frac{1}{M}\sum_{m=1}^M\nabla_\theta h(\theta_m)\frac{\rd}{\rd s}\theta_m = -\sum_{m=1}^M\nabla_\theta h(\theta_m) \frac{\delta E}{\delta\theta_m}\,.
\]
We see that~\eqref{eqn:Wassgradientflow} and~\eqref{eqn:gradient_theta_M} are equivalent when tested by $h$, if and only if the right hand sides of the two equations above are the same, that is,
\begin{equation}\label{eqn:frechet_derivative_equivalence}
M\frac{\delta E}{\delta \theta_m}=\nabla_\theta\frac{\delta E(\rho)}{\delta\rho}(\theta_m,t)\,.
\end{equation}
This claim can be established from the definitions of the Fr\'echet derivatives for $\frac{\delta E(\rho)}{\delta\rho}$ and $\frac{\delta E}{\delta\theta_m}$; see \citep[Lemma~33]{ding2021overparameterization}.

To give a quantitative estimate on how quickly~\eqref{eqn:gradient_theta_M} converges to of~\eqref{eqn:Wassgradientflow}, we utilize the particle method, a classical strategy for the mean-field limit. 
We sketch the proof here and will it more rigorous in the following subsections.
We make use of two particle systems.
In one system, the particles evolve themselves, while in the second, the particles are moved forward according to the underlying field constructed by the limit. 
In our situation, the former particle system consists of  the $M$ stochastic processes $\{\theta_m(s;t)\}$ that descend according to $E(\Theta(s;\cdot))$. 
The latter particle system will be termed  $\widetilde{\Theta}(s;t)=\{\widetilde{\theta}_m(s;t)\}_{m=1}^M$; it descends according to $E(\rho(\cdot,\cdot,s))$, the limiting cost function. 
Essentially, we prove
\[
E(\Theta(s;\cdot))\approx E\left(\widetilde{\Theta}(s;t)\right)\approx E(\rho(\cdot,\cdot,s))\,.
\]
The latter approximation arises roughly from the law of large numbers, but the former needs to be proved rigorously by tracing the two different evolving ODEs.

To be more specific, let $\rho(\theta,t,s)$ be the solution to~\eqref{eqn:Wassgradientflow} with admissible initial conditions $\rho_{\ini}(\theta,t)$, and let $\theta_m(s;t)$ be the solution to~\eqref{eqn:gradient_theta_M} with initial conditions $\{\theta_m(t,0)\}^M_{m=1}$ that are $i.i.d$ drawn from $\rho_{\ini}(\theta,t)$. 
Using the definition of $E$ in~\eqref{eqn:costfunctioncont2}, we have 
\begin{equation}\label{eqn:differenceofE3}
\begin{aligned}
&\left|E(\rho(\cdot,\cdot,s))-E(\Theta(s;\cdot))\right|\\
\leq&\mathbb{E}_{x\sim\mu}\left[\left|\frac{1}{2}\left(g(Z_{\rho(s)}(1;x))-y(x)\right)^2-\frac{1}{2}\left(g(Z_{{\Theta(s)}}(1;x))-y(x)\right)^2\right|\right]\\
\leq &\mathbb{E}_{x\sim\mu}\left(\left|g(Z_{\rho(s)}(1;x))-g(Z_{{\Theta(s)}}(1;x))\right|\left(\left|g(Z_{\rho(s)}(1;x))+g(Z_{{\Theta(s)}}(1;x))\right|+|y(x)|\right)\right)\\
\leq &C(\mathcal{L}_s)\left|g(Z_{\rho(s)}(1;x))-g(Z_{{\Theta(s)}}(1;x))\right|\\
\leq &C(\mathcal{L}_s)\left|Z_{\rho(s)}(1;x)-Z_{{\Theta(s)}}(1;x)\right|\,,
\end{aligned}
\end{equation}
where $\mathcal{L}_s=\max\left\{\int^1_0\int_{\mathbb{R}^k}|\theta|^2\rd\rho(\theta,t,s)\rd t,\frac{1}{M}\int^1_0\sum^M_{m=1}\left|\theta_m(s;t)\right|^2\rd t\right\}$.
In this derivation, we used the Lipschitz property of $g$, the boundedness of $y$  (required in Assumptions~\ref{assum:f}-\ref{ass:g}), and the boundedness of $Z_\rho$ and $Z_{{\Theta(s)}}$.
Boundedness of $Z_\rho$ was shown in Theorem~\ref{thm:wmean-fieldlimit}, while the bound for $Z_{\Theta(s)}$ will be addressed in Lemma~\ref{lem:boundofdiscrete}. 
The constant $C$ depends on the support of $\rho_\ini$, as well as on $s$ and the Lipschitz constant of $g$. 
It follows from \eqref{eqn:differenceofE3} that to control $E(\rho(\cdot,\cdot,s))-E(\Theta(s;\cdot))$, we need to control
\begin{equation}\label{eqn:z_diff}
\left|Z_{\rho(s)}(1;x)-Z_{{\Theta(s)}}(1;x)\right|\,.
\end{equation}
To do so, we employ the particle method and invent a new particle system.

According to~\eqref{eqn:frechet_derivative_equivalence}, we can reformulate the original particle system as
\begin{equation}\label{eqn:Wassgradientflowsdisappendix}
\frac{\rd\theta_m(s;t)}{\rd s}=-\nabla_{\theta}\frac{\delta E(\rho^{\dis}(s))}{\delta \rho}(\theta_m(s;t))\,,\quad\forall (t,s)\in[0,1]\times[0,\infty)\,,
\end{equation}
where we denote
\begin{equation}\label{def:ensemble_theta}
\rho^{\dis}(\theta,t,s)=\frac{1}{M}\sum^M_{m=1}\delta_{\theta_m(s;t)}(\theta)\,.
\end{equation}
We invent a new system that follows the underlying flow governed by the limit. 
Define $\widetilde{\Theta}(s)=\{\widetilde{\theta}_m(s;t)\}_{m=1}^M$, where $\widetilde{\theta}_m$ solves
\begin{equation}\label{eqn:Wassgradientflowsdiscoupling}
\frac{\rd \widetilde{\theta}_m(s;t)}{\rd s}=-\nabla_{\theta}\frac{\delta E(\rho(s))}{\delta \rho}\left(\widetilde{\theta}_m(s;t)\right)\,,\quad\forall (t,s)\in[0,1]\times[0,\infty)\,,
\end{equation}
with initial condition \begin{equation}\label{eqn:sameinitial}\widetilde{\theta}_m(0;t) = \theta_m(0;t)\,.\end{equation} 
As a consequence, we have $\widetilde{\theta}_m(s;t)\sim \rho(\theta,t,s)$ for all $(t,s)$. 
The corresponding ensemble distribution is
\begin{equation}\label{def:LSmean-field}
\widetilde{\rho}^{\dis}(\theta,t,s)=\frac{1}{M}\sum^M_{m=1}\delta_{\widetilde{\theta}_m(s;t)}(\theta)\,.
\end{equation}
Now we have available particle system $\Theta(s)=\{\theta_m(s)\}$, a newly invented particle system $\widetilde{\Theta}(s)=\{\widetilde{\theta}_m(s)\}$ and the mean-field flow $\rho$. 
Accordingly, there are three versions of $Z$: $Z_{\rho(s)}$ that solves~\eqref{eqn:meancontRes} using $\rho(s)$ and $Z_{\Theta(s)}$ and $Z_{\widetilde{\Theta}(s)}$ that solve~\eqref{eqn:contRes} using $\Theta(s)$ and $\widetilde{\Theta}(s)$, respectively. 
We use the following relabelling for convenience:
\begin{equation}\label{def:z_notation}
Z_s=Z_{\rho(s)}\,,\quad Z^\dis_s=Z_{\Theta(s)}\,,\quad \widetilde{Z}^{\dis}_s=Z_{\widetilde{\Theta}(s)}\,.
\end{equation}
Similarly, there are three sets of $p$: $p_{\rho(s)}$, $p_{\rho^\dis(s)}$, and $p_{\widetilde{\rho}^\dis(s)}$ that solve~\eqref{eqn:prho} using $\rho(s)$, $\rho^{\dis}(s)$, and $\widetilde{\rho}^\dis(s)$, respectively. 
We relabel similarly to \eqref{def:z_notation} and write
\begin{equation}\label{def:p_notation}
p_s=p_{\rho(s)}\,,\quad p^\dis_s=p_{\rho^\dis(s)}\,,\quad \widetilde{p}^\dis_s=p_{\widetilde{\rho}^\dis(s)}\,.
\end{equation}

Since $\widetilde{\Theta}(s)$ serves as a bridge, we translate the control of~\eqref{eqn:z_diff} to:
\begin{equation}\label{eqn:z_diff_2}
\left|Z_s(t;x)-Z^\dis_s(t;x)\right|\leq \left|Z_s(t;x)-\widetilde{Z}^\dis_s(t;x)\right|+\left|\widetilde{Z}^\dis_s(t;x)-Z_s^\dis(t;x)\right|\,.
\end{equation}

Bounding $\left|Z_s(t;x)-\widetilde{Z}^\dis_s(t;x)\right|$ can be done using the law of large numbers.
Bounding $\left|\widetilde{Z}^\dis_s(t;x)-Z_s^\dis(t;x)\right|$ translates to controlling $\sum_{m}\int_0^1|\theta_m(s;t)-\widetilde{\theta}_m(s;t)|\rd{t}$, for which we will evaluate the difference between equations~\eqref{eqn:Wassgradientflowsdisappendix} and~\eqref{eqn:Wassgradientflowsdiscoupling}. 
Since these two equations have the same initial data, the difference between $\theta_m$ and $\widetilde{\theta}_m$ can then be controlled when the right-hand side forcing terms are close.

This approach divides the proof naturally into two components. 
In Section~\ref{sec:meanfieldstability}, we give the rigorous bound of~\eqref{eqn:z_diff_2}, while in Section~\ref{sec:proof_mean_field_final}, we trace the evolution of the difference $\sum_{m}\int_0^1|\theta_m(s;t)-\widetilde{\theta}_m(s;t)|\rd{t}$ in $s$, thus finalizing the proof for Theorem~\ref{thm:mean-field}.

\subsection{Stability in the mean-field regime}\label{sec:meanfieldstability}

Here we discuss control of the two terms on the right-hand side of~\eqref{eqn:z_diff_2}. 
Recall that $\Theta(s)$ and $\widetilde{\Theta}(s)$ satisfy~\eqref{eqn:Wassgradientflowsdisappendix} and~\eqref{eqn:Wassgradientflowsdiscoupling}, and the two corresponding ensemble distribution are defined in~\eqref{def:ensemble_theta} and~\eqref{def:LSmean-field}, respectively. 
For any $S>0$, define
\begin{equation}\label{eqn:Lmean-field}
\begin{aligned}
\mathcal{L}^{\sup}_{S} &=\sup_{0\leq t\leq1, 0\leq s\leq S}\left\{\int_{\mathbb{R}^k}|\theta|^2\rd\rho(\theta,t,s),\frac{1}{M}\sum^M_{i=1}|\theta_m(s;t)|^2,\frac{1}{M}\sum^M_{i=1}|\widetilde{\theta}_m(s;t)|^2\right\}\\
\mathcal{R}_{S} &=\inf_{r>0}\left\{\mathrm{supp}(\rho(\theta,t,s))\cup \{\theta_m(s;t)\}^M_{m=1}\subset\{\theta||\theta_{[1]}|<r\},\  \forall (t,s)\in[0,1]\times[0,S]\right\}\,,
\end{aligned}
\end{equation}
We have the following lemma:
\begin{lemma}\label{leamm:meanfiledstability}
For every fixed $s$, let $Z_s$ and $Z^\dis_s$ be as defined in~\eqref{def:z_notation}. 
Then there exists a constant $C(\mathcal{L}^{\sup}_s)$ such that for all $t\in[0,1],s\in[0,\infty)$, we have\revise{
\begin{equation}\label{eqn:stabilityofZmean-field}
\begin{aligned}
& \left|Z_s(t;x)-Z^\dis_s(t;x)\right| \\
& \leq C(\mathcal{L}^{\sup}_{s})\left(\frac{1}{M}\sum^M_{m=1}\int^1_0\left|\theta_m(s;\tau)-\widetilde{\theta}_m(s;\tau)\right|^2\rd \tau\right)^{1/2}\\
& \quad +C(\mathcal{L}^{\sup}_{s})\left(\int^1_0\left|\int_{\mathbb{R}^k} f\left(Z_s(\tau;x),\theta\right)\rd(\rho(\theta,\tau,s)-\widetilde{\rho}^{\dis}(\theta,\tau,s))\right|^2\rd \tau\right)^{1/2}\,.
\end{aligned}
\end{equation}}
\end{lemma}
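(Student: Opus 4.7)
The plan is to apply a triangle inequality through the bridge trajectory $\widetilde{Z}^{\dis}_s$ defined by the auxiliary particle system~\eqref{eqn:Wassgradientflowsdiscoupling},
\[
|Z_s(t;x)-Z^{\dis}_s(t;x)| \le |Z_s(t;x)-\widetilde{Z}^{\dis}_s(t;x)| + |\widetilde{Z}^{\dis}_s(t;x)-Z^{\dis}_s(t;x)|,
\]
and to bound each piece by a differential inequality in $t$ followed by Gr\"onwall. The two terms on the right of~\eqref{eqn:stabilityofZmean-field} will then correspond one-to-one with the two pieces.

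For the first piece I would subtract the mean-field ODE~\eqref{eqn:meancontRes} satisfied by $Z_s$ from the $M$-particle ODE~\eqref{eqn:contRes} satisfied by $\widetilde{Z}^{\dis}_s$ and insert $\pm\frac{1}{M}\sum_m f(Z_s(\tau;x),\widetilde{\theta}_m(s;\tau))$, which, recognizing that the empirical measure is $\widetilde{\rho}^{\dis}$, gives
\[
\frac{\rd}{\rd\tau}\bigl(\widetilde{Z}^{\dis}_s-Z_s\bigr) = \int f(Z_s,\theta)\rd(\widetilde{\rho}^{\dis}-\rho)(\theta,\tau,s) + \frac{1}{M}\sum_m\bigl[f(\widetilde{Z}^{\dis}_s,\widetilde{\theta}_m)-f(Z_s,\widetilde{\theta}_m)\bigr].
\]
The second summand, by the local Lipschitz estimate~\eqref{eqn:derivativebound} together with the a-priori bound~\eqref{boundofxsolution}, is linear in $|\widetilde{Z}^{\dis}_s-Z_s|$ with coefficient $C(\mathcal{L}^{\sup}_s)\bigl(1+\frac{1}{M}\sum_m|\widetilde{\theta}_m|^2\bigr)$, whose $\tau$-integral is absorbed into $C(\mathcal{L}^{\sup}_s)$. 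Since both trajectories share the initial value $x$, Gr\"onwall plus Cauchy--Schwarz produces the second term on the right of~\eqref{eqn:stabilityofZmean-field}.

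For the second piece I would subtract the two $M$-particle ODEs and insert $\pm\frac{1}{M}\sum_m f(\widetilde{Z}^{\dis}_s,\theta_m)$, splitting the difference into a parameter-mismatch term $\frac{1}{M}\sum_m[f(\widetilde{Z}^{\dis}_s,\widetilde{\theta}_m)-f(\widetilde{Z}^{\dis}_s,\theta_m)]$ and a state-mismatch term handled as before. By the mean value theorem and~\eqref{eqn:derivativebound} the parameter-mismatch term is bounded by $\frac{C(\mathcal{L}^{\sup}_s)}{M}\sum_m(|\widetilde{\theta}_m|+|\theta_m|+1)|\widetilde{\theta}_m-\theta_m|$, to which Cauchy--Schwarz in $m$ and then in $\tau$ applies: the factor $(|\widetilde{\theta}_m|+|\theta_m|+1)^2$ averages into $C(\mathcal{L}^{\sup}_s)$, leaving $\bigl(\frac{1}{M}\sum_m\int_0^1|\widetilde{\theta}_m-\theta_m|^2\rd\tau\bigr)^{1/2}$. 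Gr\"onwall then closes the estimate and yields the first term on the right of~\eqref{eqn:stabilityofZmean-field}. Combining the two bounds via $\sqrt{a}+\sqrt{b}\ge\sqrt{a+b}$ completes the proof.

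The main technical subtlety is ensuring the constants depend only on $\mathcal{L}^{\sup}_s$ and not on individual particle norms, even though~\eqref{eqn:derivativebound} supplies the growth factor $(|\theta|+1)$ on $\partial_\theta f$ and $(|\theta|^2+1)$ on $\partial_z f$. This is precisely what Cauchy--Schwarz in the particle index $m$ achieves: each growth factor is squared, averaged against $\frac{1}{M}$, and absorbed into the definition of $\mathcal{L}^{\sup}_s$ given in~\eqref{eqn:Lmean-field}. No pointwise control of $|\theta_m|$ or $|\widetilde{\theta}_m|$ is needed, which is the reason the lemma is stated entirely in terms of the averaged quantity $\mathcal{L}^{\sup}_s$.
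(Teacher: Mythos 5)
Your proposal is correct and follows essentially the same route as the paper: you triangulate through the bridge trajectory $\widetilde{Z}^{\dis}_s$, derive a differential inequality in $t$ for each piece by inserting the appropriate cross term, and close with Gr\"onwall and Cauchy--Schwarz in the particle index; the paper's proof does exactly this with the auxiliary quantities $\widetilde{\Delta}=Z_s-\widetilde{Z}^{\dis}_s$ and $\Delta=\widetilde{Z}^{\dis}_s-Z^{\dis}_s$. The concluding remark about $\sqrt{a}+\sqrt{b}\geq\sqrt{a+b}$ is superfluous since the two Gr\"onwall bounds are simply added, but this is cosmetic and does not affect the argument.
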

\begin{proof} 
Since the statement holds for a fixed $s$, we eliminate all $s$ dependence in all calculations in the proof, for conciseness.

Recalling the definitions in~\eqref{def:z_notation}, we denote
\[
\widetilde{\Delta}(t;x)=Z_s(t;x)-\widetilde{Z}^{\dis}_s(t;x),\quad \Delta(t;x)=\widetilde{Z}^{\dis}_s(t;x)-Z^{\dis}_s(t;x)\,.
\]
It follows from the triangle inequality that
\[
\left|Z_s(t;x)-Z^\dis_s(t;x)\right|\leq \left|\widetilde{\Delta}(t;x)\right|+|\Delta(t;x)|\,.
\]
We now bound these two terms. 
We first apply the same argument as in the proof of Theorem~\ref{thm:wmean-fieldlimit} (see \eqref{boundofDeltatZrho} in Appendix~\ref{sec:proofofwell-posed}) to obtain
\[
\frac{\rd \left|\widetilde{\Delta}(t;x)\right|^2}{\rd t}\leq C(\mathcal{L}^{\sup}_s)\left|\widetilde{\Delta}(t;x)\right|^2+\left|\int_{\mathbb{R}^k} f\left(Z_s(t;x),\theta\right)\rd(\rho(\theta,t,s)-\widetilde{\rho}^{\dis}(\theta,t,s))\right|^2\,.
\]
Using the Gr\"onwall inequality and the fact that $\left|\widetilde{\Delta}(0;x)\right|=0$, we have
\begin{equation}\label{eqn:stabilityofZmean-field1}
\left|\widetilde{\Delta}(t;x)\right|\leq C(\mathcal{L}^{\sup}_s)\left(\int^1_0\left|\int_{\mathbb{R}^k} f\left(Z_s(t;x),\theta\right)\rd(\rho(\theta,t,s)-\widetilde{\rho}^{\dis}(\theta,t,s))\right|^2\rd \tau\right)^{1/2}\,.
\end{equation}
for all $t\in[0,1]$. Similarly, to bound $\Delta(t;x)$, we have
\[
\begin{aligned}
\frac{\rd \left|\Delta(t;x)\right|^2}{\rd t}\leq &C(\mathcal{L}^{\sup}_s)\left|\Delta(t;x)\right|^2+\left|\int_{\mathbb{R}^k} f\left(\widetilde{Z}^{\dis}_s(t;x),\theta\right)\rd(\rho^{\dis}(\theta,t,s)-\widetilde{\rho}^{\dis}(\theta,t,s))\right|^2\,.
\end{aligned}
\]
From Assumption~\ref{assum:f} and the fact that $\widetilde{Z}_\dis$ is bounded in Theorem~\ref{thm:wmean-fieldlimit}, we have
\[
\left|\int_{\mathbb{R}^k} f\left(\widetilde{Z}^{\dis}_s(t;x),\theta\right)\rd(\rho^{\dis}(\theta,t,s)-\widetilde{\rho}^{\dis}(\theta,t,s))\right|^2\leq C(\mathcal{L}^{\sup}_s)\left(\frac{1}{M}\sum^M_{m=1}\left|\theta_m(s;t)-\widetilde{\theta}_m(s;t)\right|\right)^2\,.
\]
similar to the proof of Theorem~\ref{thm:wmean-fieldlimit} (see \eqref{boundofDeltatZrhoII} in Appendix~\ref{sec:proofofwell-posed}).

Using $\Delta(0;x)=0$, we apply Gr\"onwall's inequality to obtain
\begin{equation}\label{eqn:stabilityofZmean-field2}
\left|\Delta(t;x)\right|\leq C(\mathcal{L}^{\sup}_s)\left(\frac{1}{M}\sum^M_{m=1}\int^1_0\left|\theta_m(s;\tau)-\widetilde{\theta}_m(s;\tau)\right|^2\rd \tau\right)^{1/2}\,.
\end{equation}
The result is obtained from adding \eqref{eqn:stabilityofZmean-field1} and \eqref{eqn:stabilityofZmean-field2}.
\end{proof}

The difference in $p_s$ and $p^\dis_s$: 
\begin{lemma}\label{leamm:meanfiledstability2}
For every fixed $s \in [0,\infty)$, let $p_{s}$ and $p^{\dis}_{s}$ be defined in~\eqref{def:p_notation}. There exists a constant $C(\mathcal{R}_s)$ with $\mathcal{R}_s,\mathcal{L}^{\sup}_s$ defined in~\eqref{eqn:Lmean-field} such that for all $t\in[0,1]$:\revise{
\begin{equation}\label{eqn:stabilityofpmean-field}
\begin{aligned}
& \left|p_s(t;x)-p^\dis_s(t;x)\right| \\
& \leq C(\mathcal{R}_s,\mathcal{L}^{\sup}_s)\left(\frac{1}{M}\sum^M_{m=1}\int^1_0\left|\theta_m(s;\tau)-\widetilde{\theta}_m(s;\tau)\right|^2\rd \tau\right)^{1/2}\\
& \quad +C(\mathcal{R}_s,\mathcal{L}^{\sup}_s)\left(\int^1_0\left|\int_{\mathbb{R}^k} f\left(Z_s(\tau;x),\theta\right)\rd(\rho(\theta,\tau,s)-\widetilde{\rho}^{\dis}(\theta,\tau,s))\right|^2\rd \tau\right)^{1/2}\\
&\quad +C(\mathcal{R}_s,\mathcal{L}^{\sup}_s)\left(\int^1_0\left|\int_{\mathbb{R}^k} \partial_z f\left(Z_s(\tau;x),\theta\right)\rd(\rho(\theta,\tau,s)-\widetilde{\rho}^{\dis}(\theta,\tau,s))\right|^2\rd \tau\right)^{1/2}\,.
\end{aligned}
\end{equation}}
\end{lemma}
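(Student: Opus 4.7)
The proof follows the same triangulation strategy as in Lemma~\ref{leamm:meanfiledstability}, using the auxiliary ``coupled'' particle system $\widetilde{\Theta}(s)$ and its ensemble measure $\widetilde{\rho}^{\dis}$ as a bridge. Specifically, with $\widetilde{p}^{\dis}_s = p_{\widetilde{\rho}^{\dis}(s)}$ defined as in \eqref{def:p_notation}, I would write
\[
\left|p_s(t;x)-p^{\dis}_s(t;x)\right| \leq \left|p_s(t;x)-\widetilde{p}^{\dis}_s(t;x)\right| + \left|\widetilde{p}^{\dis}_s(t;x)-p^{\dis}_s(t;x)\right|
\]
and bound each piece by a Gr\"onwall argument run backward in $t$ from the terminal condition at $t=1$ given by~\eqref{eqn:prhoappendix}.

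For the first piece $p_s - \widetilde{p}^{\dis}_s$, both $p$'s satisfy the backward linear ODE~\eqref{eqn:prhoappendix} with the same $Z_s$ appearing on the right-hand side replaced appropriately; the difference $\widetilde{\Delta}^p := p_s - \widetilde{p}^{\dis}_s$ solves an ODE of the form \eqref{eqn:ODEforDEltaprho}--\eqref{eqn:ODEforDEltaprho.2} with source term containing (i) the terminal mismatch $|g(Z_s(1;x))-g(\widetilde{Z}^{\dis}_s(1;x))|$, controlled by $|\widetilde{\Delta}(1;x)|$ and hence by \eqref{eqn:stabilityofZmean-field1}, and (ii) the kernel mismatch $\int \partial_z f(Z_s(\tau;x),\theta)\,d(\rho - \widetilde{\rho}^{\dis})(\theta,\tau,s)$, which produces exactly the third term in \eqref{eqn:stabilityofpmean-field}. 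A $Z_s$-swap inside the kernel (replacing $Z_s$ by $\widetilde{Z}^{\dis}_s$) costs another factor of $|\widetilde{\Delta}|$, absorbed again by \eqref{eqn:stabilityofZmean-field1}. Boundedness of $|p_s|$ from \eqref{boundofprho}, boundedness of $|Z_s|, |\widetilde{Z}^{\dis}_s|$ from \eqref{boundofxsolution}, and the local derivative bounds in Assumption~\ref{assum:f}~(3), applied with $|\theta_{[1]}|\le \mathcal{R}_s$, keep all multiplicative constants as $C(\mathcal{R}_s,\mathcal{L}^{\sup}_s)$.

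For the second piece $\widetilde{p}^{\dis}_s - p^{\dis}_s$, both satisfy \eqref{eqn:prhoappendix} driven by the same empirical measure structure but with different trajectories $\widetilde{\theta}_m$ vs $\theta_m$ and different states $\widetilde{Z}^{\dis}_s$ vs $Z^{\dis}_s$. The terminal mismatch at $t=1$ is controlled by $|\Delta(1;x)|$ and thus by \eqref{eqn:stabilityofZmean-field2}. The kernel mismatch splits into a $Z$-part, bounded by $|\Delta(\tau;x)|$ and hence by the first term of \eqref{eqn:stabilityofpmean-field} via \eqref{eqn:stabilityofZmean-field2}, and a $\theta$-part of the form $\tfrac{1}{M}\sum_m [\partial_z f(\cdot,\widetilde{\theta}_m)-\partial_z f(\cdot,\theta_m)]$, which by the mean-value theorem and the second-derivative bound in Assumption~\ref{assum:f}~(3) (using $|\theta_{[1]}|,|\widetilde{\theta}_{m,[1]}|\le\mathcal{R}_s$) is bounded by $C(\mathcal{R}_s)\tfrac{1}{M}\sum_m |\theta_m-\widetilde{\theta}_m|$. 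Cauchy--Schwarz then converts this into the first term of \eqref{eqn:stabilityofpmean-field}. Gr\"onwall's inequality applied to $|\widetilde{\Delta}^p|^2$ and the analogous quantity for the second piece, integrated from $t$ to $1$, yields the claimed bound.

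\textbf{Main obstacle.} The delicate point is that the second-derivative bound $|\partial^2_\theta f|$ is only available via Assumption~\ref{assum:f}~(3), which requires the first coordinate $\theta_{[1]}$ to stay bounded; this is precisely why the constant in \eqref{eqn:stabilityofpmean-field} depends on $\mathcal{R}_s$, while the analogous constant in \eqref{eqn:stabilityofZmean-field} depends only on $\mathcal{L}^{\sup}_s$. One must therefore be careful to invoke the $\mathcal{R}_s$-dependent bounds only when differentiating in $\theta$, and to use the $\mathcal{L}^{\sup}_s$-based linear-growth bounds \eqref{eqn:boundoff}--\eqref{eqn:derivativebound} elsewhere, so that the structure of the right-hand side of \eqref{eqn:stabilityofpmean-field} matches the lemma as stated.
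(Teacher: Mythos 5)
Your proposal is correct and reaches the same bound, but it differs from the paper in where the triangulation is placed. The paper works directly with $\Delta_p = p_s - p^{\dis}_s$ in a \emph{single} backward Gr\"onwall, and inside the forcing of the difference ODE splits $\int\partial_zf(Z_s,\theta)\,d\rho - \int\partial_zf(Z^{\dis}_s,\theta)\,d\rho^{\dis}$ into three pieces (measure swap $\rho\to\widetilde\rho^{\dis}$, measure swap $\widetilde\rho^{\dis}\to\rho^{\dis}$, state swap $Z_s\to Z^{\dis}_s$), with the terminal mismatch $|\Delta_p(1;x)|\le C\,|Z_s(1;x)-Z^{\dis}_s(1;x)|$ controlled by Lemma~\ref{leamm:meanfiledstability}. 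You instead introduce the auxiliary adjoint $\widetilde p^{\dis}_s$, triangulate the \emph{solution} as $p_s-\widetilde p^{\dis}_s$ plus $\widetilde p^{\dis}_s-p^{\dis}_s$, and run Gr\"onwall twice. Both routes use the same ingredients (\eqref{boundofxsolution}, \eqref{boundofprho}, the $\mathcal{R}_s$-controlled second derivatives of $f$, and \eqref{eqn:stabilityofZmean-field}), rely on $\mathrm{supp}(\widetilde\rho^{\dis})\subset\mathrm{supp}(\rho)$ to keep $\widetilde\theta_{m,[1]}$ inside the $\mathcal{R}_s$-ball, and produce the same three terms on the right-hand side; your split is slightly more modular and mirrors the $Z$-level argument more transparently, while the paper's single-Gr\"onwall version avoids having to bound $|\widetilde p^{\dis}_s|$ as a separate step. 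The cautionary remark you make about when to invoke the $\mathcal{R}_s$-dependent bound on $\partial_\theta\partial_z f$ is exactly the right point of care and matches how the paper uses Assumption~\ref{assum:f}~(3) in estimating its term (I).
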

\begin{proof} 
As in the previous proof, we eliminate dependence on $s$ in some notation, for conciseness. 
Denoting $\Delta_p(t;x)=p_s(t;x)-p^{\dis}_s(t;x)$, we recall~\eqref{eqn:prho} to have:
\begin{equation*}
\begin{aligned}
&\frac{\rd|\Delta_p(t;x)|^2}{\rd t}\\
& \leq 2\left|\int_{\mathbb{R}^k}\partial_zf(Z_s(t;x),\theta)\rd\rho(\theta,t,s)\right||\Delta_p(t;x)|^2\\
& \quad +2|\Delta_p(t;x)|\left|p^{\dis}_s\right|\left|\int_{\mathbb{R}^k}\partial_zf(Z_s(t;x),\theta)\rd\rho(\theta,t,s)-\int_{\mathbb{R}^k}\partial_zf(Z^{\dis}_s(t;x),\theta)\rd\rho^{\dis}(\theta,t,s)\right|\\
& \leq C(\mathcal{L}^{\sup}_s)|\Delta_p(t;x)|^2\\
& \quad +2\left|\int_{\mathbb{R}^k}\partial_zf(Z_s(t;x),\theta)\rd\rho(\theta,t,s)-\int_{\mathbb{R}^k}\partial_zf(Z^{\dis}_s(t;x),\theta)\rd\rho^{\dis}(\theta,t,s)\right|^2\\
\leq &C(\mathcal{L}^{\sup}_s)|\Delta_p(t;x)|^2\\
& \quad +6\left|\int_{\mathbb{R}^k}\partial_zf(Z_s(t;x),\theta)\rd\rho(\theta,t,s)-\int_{\mathbb{R}^k}\partial_zf(Z_s(t;x),\theta)\rd\widetilde{\rho}^{\dis}(\theta,t,s)\right|^2\\
& \quad +6\underbrace{\left|\int_{\mathbb{R}^k}\partial_zf(Z_s(t;x),\theta)\rd\widetilde{\rho}^{\dis}(\theta,t,s)-\int_{\mathbb{R}^k}\partial_zf(Z_s(t;x),\theta)\rd\rho^{\dis}(\theta,t,s)\right|^2}_{\textrm{(I)}}\\
& \quad +6\underbrace{\left|\int_{\mathbb{R}^k}\partial_zf(Z_s(t;x),\theta)\rd\rho^{\dis}(\theta,t,s)-\int_{\mathbb{R}^k}\partial_zf(Z^{\dis}_s(t;x),\theta)\rd\rho^{\dis}(\theta,t,s)\right|^2}_{\textrm{(II)}}\,,\\
\end{aligned}
\end{equation*}
where we use \eqref{eqn:derivativebound} from Assumption \ref{assum:f} together with \eqref{boundofxsolution} and \eqref{boundofprho} in the second inequality. 
To bound $\textrm{(I)}$ on the right-hand side, we recall the definition~\eqref{def:ensemble_theta} and~\eqref{def:LSmean-field}, and use Assumption \ref{assum:f} \eqref{eqn:part3} along with the boundedness of $Z$ (as shown in~\eqref{boundofxsolution}) to obtain
\begin{equation}\label{eqn:boundofIIDeltap}
\begin{aligned}
\textrm{(I)}\leq &\left(\frac{1}{M}\sum^M_{m=1}\left|\partial_zf(Z_s(t;x),\theta_m(s;t))-\partial_zf(Z_s(t;x),\widetilde{\theta}_m(s;t))\right|\right)^2\\
\leq &C(\mathcal{R}_s,\mathcal{L}^{\sup}_s)\left(\frac{1}{M}\sum^M_{m=1}|\theta_m(s;t)-\widetilde{\theta}_m(s;t)|\right)^2\\
\leq &C(\mathcal{R}_s,\mathcal{L}^{\sup}_s)\left(\frac{1}{M}\sum^M_{m=1}\left|\theta_m(s;t)-\widetilde{\theta}_m(s;t)\right|^2\right)\,,
\end{aligned}
\end{equation}
where we use H\"older's inequality in the last inequality. 
For $\textrm{(II)}$, we have
\begin{equation}\label{eqn:boundofIIIDeltap}
\begin{aligned}
\textrm{(II)} & \leq \left(\frac{1}{M}\sum^M_{m=1}\left|\partial_zf(Z_s(t;x),\theta_m(s;t))-\partial_zf(Z^{\dis}_s(t;x),\theta_m(s;t))\right|\right)^2 \\
& \leq C(\mathcal{R}_s,\mathcal{L}^{\sup}_s)|Z_s(t;x)-Z^{\dis}_s(t;x)|^2\,.
\end{aligned}
\end{equation}
By substituting \eqref{eqn:boundofIIDeltap} and \eqref{eqn:boundofIIIDeltap} into the bound above, we obtain
\begin{equation}\label{Deltapiteration}
    \begin{aligned}
& \frac{\rd|\Delta_p(t;x)|^2}{\rd t} \\
 =&C(\mathcal{L}^{\sup}_s)|\Delta_p(t;x)|^2+C(\mathcal{R}_s,\mathcal{L}^{\sup}_s)\left(\frac{1}{M}\sum^M_{m=1}\left|\theta_m(s;t)-\widetilde{\theta}_m(s;t)\right|^2+|Z_s(t;x)-Z^{\dis}_s(t;x)|^2\right)\\
&+6\left|\int_{\mathbb{R}^k}\partial_zf(Z_s(t;x),\theta)\rd\rho(\theta,t,s)-\int_{\mathbb{R}^k}\partial_zf(Z_s(t;x),\theta)\rd\widetilde{\rho}^{\dis}(\theta,t,s)\right|^2\,.
\end{aligned}
\end{equation}
The ``initial condition" for $p_s$ and $p^{\dis}_s$ yields
\[
|\Delta_p(1;x)|\leq C(\mathcal{L}^{\sup}_s)|Z_s(1;x)-Z^{\dis}_s(1;x)|\,.
\]
The result is obtained when we substitute \eqref{eqn:stabilityofZmean-field} into \eqref{Deltapiteration} and use the Gr\"onwall's inequality.
\end{proof}

\subsection{Proof of Theorem \ref{thm:mean-field}}\label{sec:proof_mean_field_final}


With the quantitative description of~\eqref{eqn:z_diff_2} presented in Lemma~\ref{leamm:meanfiledstability}, we complete the proof for Theorem~\ref{thm:mean-field} in this section. Recall from \eqref{eqn:L} that
\[
\mathcal{R}_{\ini}=\inf_{r>0}\left\{\mathrm{supp}(\rho_{\ini}(t))\subset \left\{\theta\middle||\theta_{[1]}|<r\right\},\  \forall t\in[0,1]\right\}\,,\quad \mathcal{L}^{\sup}_{\ini}=\sup_{0\leq t\leq 1}\int_{\mathbb{R}^k}|\theta|^2\rd\rho_{\ini}(\theta,t)
\] 
and note that $\mathcal{R}_{\ini}\geq |\theta_{m,[1]}(0;t)|$ for all $m,t$. Define
\[
\mathcal{L}^{\dis,\sup}_{\ini}=\sup_{0\leq t\leq 1}\frac{1}{M}\sum^M_{m=1}|\theta_m(0;t)|^2\,,
\]
We note that when $M$ is large, $\mathcal{L}^{\dis,\sup}_{\ini}$ is close to $\mathcal{L}^{\sup}_{\ini}$ (which has no randomness) with high probability. We have the following lemma:
\begin{lemma}\label{lem:boundofdiscrete} For a given $S>0$, there exists a constant $C\left(S,\mathcal{R}_{\ini}\right)$ such that for any $t\in[0,1],s\in[0,S]$, we have
\revise{\begin{equation}\label{eqn:boundofLSdis}
\begin{aligned}
\frac{1}{M}\sum^M_{m=1}|\theta_m(s;t)|^2 & \leq C(S,\mathcal{R}_\ini,\mathcal{L}^{\dis,\sup}_{\ini})\,,\\
\{\theta_m(s;t)\}^M_{m=1}\subset \{\theta||
\theta_{[1]}|  &\leq C(S,\mathcal{R}_\ini,\mathcal{L}^{\dis,\sup}_{\ini})\}\,,
\end{aligned}
\end{equation}
Furthermore, for any $x$ with $|x|<R_{\mu}$ (as in Assumption~\ref{ass:g}, item 4), and any $s \in [0,S]$ and $t \in [0,1]$, the ODE solution is bounded as follows:
\begin{equation}\label{boundofxsolutiondis}
\left|Z^{\dis}_s(t;x)\right|\leq C(S,\mathcal{R}_\ini,\mathcal{L}^{\dis,\sup}_{\ini})\,,
\end{equation}
while the following bound holds on $p^{\dis}_s$:
\begin{equation}\label{boundofprhosolutiondis}
\left|p^{\dis}_s(t;x)\right|\leq C(S,\mathcal{R}_\ini,\mathcal{L}^{\dis,\sup}_{\ini})\,.
\end{equation}}
\end{lemma}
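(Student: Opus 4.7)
The core idea is to exploit Remark~\ref{rmk:equivalence}: the empirical distribution $\rho^{\dis}(\theta,t,s)=\frac{1}{M}\sum_{m=1}^M\delta_{\theta_m(s;t)}(\theta)$ is itself a solution of the Wasserstein gradient flow~\eqref{eqn:Wassgradientflow} with initial data $\rho^{\dis}(\theta,t,0)$. Once this is established, the a-priori bounds already proved for PDE solutions --- in particular Corollary~\ref{cor:C.3} --- transfer verbatim to the particle system, and the bounds on $Z^{\dis}_s$ and $p^{\dis}_s$ then follow by applying Theorem~\ref{thm:wmean-fieldlimit} and Lemma~\ref{lem:prhoprop1} with $\rho=\rho^{\dis}(s)$.

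First I would verify that $\rho^{\dis}(\theta,t,0)$ meets the hypotheses of Corollary~\ref{cor:C.3}. Admissibility follows by taking the stochastic process $\theta(t)=\theta_I(0;t)$ with $I$ uniform on $\{1,\dots,M\}$: each $\theta_m(0;\cdot)$ is continuous in $t$ (as a sample from the limit-admissible $\rho_{\ini}$), so $\EE|\theta(t)-\theta(t_0)|^2\to 0$ as $t\to t_0$. The support condition $\mathrm{supp}_\theta\rho^{\dis}(\theta,t,0)\subset\{|\theta_{[1]}|\leq \mathcal{R}_{\ini}\}$ holds because each $\theta_m(0;t)$ lies almost surely in $\mathrm{supp}\,\rho_{\ini}(\cdot,t)$; the time-$t$ second moment of $\rho^{\dis}(\cdot,\cdot,0)$ is exactly $\frac{1}{M}\sum_m|\theta_m(0;t)|^2\leq \mathcal{L}^{\dis,\sup}_{\ini}$ by definition.

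Applying Corollary~\ref{cor:C.3} to $\rho^{\dis}$ with $\mathcal{R}_{\ini}$ and $\mathcal{L}^{\dis,\sup}_{\ini}$ playing the roles of $\mathcal{R}_{\ini}$ and $\mathcal{L}^{\sup}_{\ini}$ respectively, I obtain a constant $C=C(S,\mathcal{R}_{\ini},\mathcal{L}^{\dis,\sup}_{\ini})$ such that $\int_{\mathbb{R}^k}|\theta|^2\rd\rho^{\dis}(\theta,t,s)\leq C$ and $\mathrm{supp}_\theta\,\rho^{\dis}(\cdot,t,s)\subset\{|\theta_{[1]}|\leq C\}$ for all $(t,s)\in[0,1]\times[0,S]$. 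Unpacking the empirical definition of $\rho^{\dis}$ immediately yields~\eqref{eqn:boundofLSdis}. For~\eqref{boundofxsolutiondis} and~\eqref{boundofprhosolutiondis}, note that $Z^{\dis}_s$ solves~\eqref{eqn:meancontRes} with $\rho=\rho^{\dis}(s)$ and $p^{\dis}_s$ solves~\eqref{eqn:prhoappendix} with $\rho=\rho^{\dis}(s)$; bound~\eqref{boundofxsolution} from Theorem~\ref{thm:wmean-fieldlimit} and bound~\eqref{boundofprho} from Lemma~\ref{lem:prhoprop1} then apply, both controlled via the time-integrated second moment $\mathcal{L}^{\dis}_s=\int_0^1\int|\theta|^2\rd\rho^{\dis}(\theta,t,s)\rd t\leq C$.

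The only real subtlety is the admissibility and support check for the initial empirical measure $\rho^{\dis}(\cdot,\cdot,0)$; everything downstream is a direct transcription of PDE-level estimates via the equivalence in Remark~\ref{rmk:equivalence}, so I do not anticipate a substantial obstacle beyond tidying the bookkeeping of which a-priori constants depend on which of $S$, $\mathcal{R}_{\ini}$, and $\mathcal{L}^{\dis,\sup}_{\ini}$.
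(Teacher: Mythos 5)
Your proposal is correct and follows essentially the same route as the paper: the paper likewise obtains~\eqref{eqn:boundofLSdis} by noting (via Remark~\ref{rmk:equivalence}) that $\rho^{\dis}$ solves~\eqref{eqn:Wassgradientflow} and invoking Corollary~\ref{cor:C.3}, and then gets~\eqref{boundofxsolutiondis} and~\eqref{boundofprhosolutiondis} from the arguments of Theorem~\ref{thm:wmean-fieldlimit} and Lemma~\ref{lem:prhoprop1}. You have simply spelled out the admissibility and support check for $\rho^{\dis}(\cdot,\cdot,0)$ that the paper leaves implicit.
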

The bound~\eqref{eqn:boundofLSdis} is a result of Corollary~\ref{cor:C.3}. 
The bounds~\eqref{boundofxsolutiondis} and \eqref{boundofprhosolutiondis} are obtained using the same arguments as in Theorem~\ref{thm:wmean-fieldlimit} and Lemma~\ref{lem:prhoprop1}.


The next lemma bounds the support and second moment of $\widetilde{\rho}^{\dis}(\theta,t,s)$.
\begin{lemma}\label{lem:supportbound}
Under conditions of Theorem \ref{thm:mean-field}, for any $S>0$, there is a constant $C(S,\mathcal{R}_\ini,\mathcal{L}^{\dis,\sup}_{\ini})$ depending only on $\mathcal{R}_{\ini},\mathcal{L}^{\dis,\sup}_{\ini},S$ such that for any $t\in[0,1],s\in[0,S]$, we have
\revise{
\begin{equation}\label{eqn:boundednessofwiderhos}
\begin{aligned}
\frac{1}{M}\sum^M_{m=1}|\widetilde{\theta}_m(s;t)|^2 & \leq C(S,\mathcal{R}_\ini,\mathcal{L}^{\dis,\sup}_{\ini})\,,\\
\{\widetilde{\theta}_m(s;t)\}^M_{m=1}\subset \{\theta||
\theta_{[1]}| & \leq C(S,\mathcal{R}_\ini,\mathcal{L}^{\dis,\sup}_{\ini})\}\,,
\end{aligned}
\end{equation}}
\end{lemma}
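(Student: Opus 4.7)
\textbf{Proof strategy for Lemma \ref{lem:supportbound}.} The idea is that $\widetilde{\theta}_m(s;t)$ evolves under exactly the same ODE as the particle representation of the mean-field solution, namely \eqref{eqn:Wassgradientflowsdiscoupling}, with driving field $-\nabla_\theta\frac{\delta E(\rho(s))}{\delta\rho}$ generated by the true mean-field measure $\rho$ (not by the empirical $\rho^\dis$). Consequently the bounds already established for $\rho$ in Corollary \ref{cor:C.3} allow us to apply the same Gr\"onwall argument used in Proposition \ref{proposition:deltaEODE}, but now with the complication of the empirical moment removed: the driving force's growth constants are deterministic functions of $\rho$ alone.

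\textbf{Step 1 (control the vector field through Corollary \ref{cor:C.3}).} By Corollary \ref{cor:C.3} applied to $\rho$,
\begin{equation*}
\sup_{0\leq t\leq 1,\,0\leq s\leq S}\int_{\mathbb{R}^k}|\theta|^2\rd\rho(\theta,t,s)\ \le\ C_\rho,\qquad \sup_{0\leq t\leq 1,\,0\leq s\leq S}\sup_{\theta\in\mathrm{supp}_\theta\rho(\cdot,t,s)}|\theta_{[1]}|\ \le\ C_\rho,
\end{equation*}
where $C_\rho$ depends only on $S,\mathcal{R}_\ini,\mathcal{L}^{\sup}_\ini$. Substituting into the two bounds of Lemma \ref{lemmadeltaEs} \eqref{bound} yields, for all $(\theta,t)\in\mathbb{R}^k\times[0,1]$ and all $s\in[0,S]$,
\begin{equation*}
\left|\nabla_\theta\frac{\delta E(\rho(s))}{\delta\rho}(\theta,t)\right|\leq C(C_\rho)(|\theta|+1),\qquad \left|\partial_{\theta_{[1]}}\frac{\delta E(\rho(s))}{\delta\rho}(\theta,t)\right|\leq C(C_\rho)(|\theta_{[1]}|+1).
\end{equation*}

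\textbf{Step 2 (pathwise Gr\"onwall on each particle).} Testing \eqref{eqn:Wassgradientflowsdiscoupling} against $\widetilde{\theta}_m$ and using the first estimate in Step 1 gives $\frac{\rd|\widetilde{\theta}_m(s;t)|^2}{\rd s}\le C(C_\rho)(|\widetilde{\theta}_m(s;t)|^2+1)$; Gr\"onwall then yields, for every individual sample path,
\begin{equation*}
|\widetilde{\theta}_m(s;t)|^2\ \leq\ e^{SC(C_\rho)}\bigl(|\widetilde{\theta}_m(0;t)|^2+1\bigr),\qquad |\widetilde{\theta}_{m,[1]}(s;t)|^2\ \leq\ e^{SC(C_\rho)}\bigl(|\widetilde{\theta}_{m,[1]}(0;t)|^2+1\bigr),
\end{equation*}
the second coming from testing against $\widetilde{\theta}_{m,[1]}$ and using the second estimate in Step 1.

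\textbf{Step 3 (pass to the empirical quantities).} Averaging the first pathwise bound over $m=1,\dots,M$ and using $\widetilde{\theta}_m(0;t)=\theta_m(0;t)$ together with the definition of $\mathcal{L}^{\dis,\sup}_\ini$ gives $\frac{1}{M}\sum_m|\widetilde{\theta}_m(s;t)|^2\le e^{SC(C_\rho)}(\mathcal{L}^{\dis,\sup}_\ini+1)$. For the support claim, since $\theta_m(0;t)$ is drawn from $\rho_\ini$ whose $\theta$-support satisfies $|\theta_{[1]}|\le R=\mathcal{R}_\ini$, we have $|\widetilde{\theta}_{m,[1]}(0;t)|\le\mathcal{R}_\ini$ with probability one, and the second pathwise bound gives a uniform-in-$m$ estimate on $|\widetilde{\theta}_{m,[1]}(s;t)|$. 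Absorbing $C_\rho$ and $\mathcal{L}^{\sup}_\ini$ (which are determined by $\rho_\ini$) into a single constant $C(S,\mathcal{R}_\ini,\mathcal{L}^{\dis,\sup}_\ini)$ delivers \eqref{eqn:boundednessofwiderhos}. There is no serious obstacle in this proof; the routine point requiring care is simply that because $\widetilde{\theta}_m$ is driven by the (already controlled) measure $\rho$ rather than the empirical $\rho^\dis$, the Gr\"onwall constant is deterministic and the bound on the empirical average follows pathwise without needing any concentration argument.
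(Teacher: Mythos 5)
Your proof is correct and follows essentially the same route as the paper: multiply \eqref{eqn:Wassgradientflowsdiscoupling} by $\widetilde{\theta}_m$ (and $\widetilde{\theta}_{m,[1]}$), invoke the two bounds from Lemma~\ref{lemmadeltaEs}~\eqref{bound} with the growth constant made deterministic via Corollary~\ref{cor:C.3}, apply Gr\"onwall pathwise, then use $\widetilde{\theta}_m(0;t)=\theta_m(0;t)$ to convert to $\mathcal{L}^{\dis,\sup}_{\ini}$ and $\mathcal{R}_\ini$. You spell out the vector-field bound and the absorption of $\mathcal{L}^{\sup}_\ini$ into the final constant more explicitly, but these are the same steps the paper compresses into ``similar to Proposition~\ref{proposition:deltaEODE}.''
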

\begin{proof}
Similar to the proof of Proposition \ref{proposition:deltaEODE}, we multiply~\eqref{eqn:Wassgradientflowsdiscoupling} by $\widetilde{\theta}_m(s;t)$ on both sides and utilize the bound \eqref{bound} from Lemma \ref{lemmadeltaEs}, where $\mathcal{L}$ in~\eqref{bound} is replaced by $C(S,\mathcal{R}_\ini,\mathcal{L}^{\dis,\sup}_{\ini})$ according to Corollary~\ref{cor:C.3}. 
We thus obtain
\[
|\widetilde{\theta}_m(s;t)|\leq C(S,\mathcal{R}_\ini,\mathcal{L}^{\dis,\sup}_{\ini})\left(|\widetilde{\theta}_m(0;t)|+1\right)\,,\]
and
\[
|\widetilde{\theta}_{m,[1]}(s;t)|\leq C(S,\mathcal{R}_\ini,\mathcal{L}^{\dis,\sup}_{\ini})\left(|\widetilde{\theta}_{m,[1]}(0;t)|+1\right)
\]
which implies \eqref{eqn:boundednessofwiderhos} by \eqref{eqn:sameinitial}.
\end{proof}

%

Denote $\mathcal{L}^{\sup}_{0}=\max\{\mathcal{L}^{\dis,\sup}_{\ini},\mathcal{L}^{\sup}_{\ini}\}$. According to Definition \ref{def:meanadmissible2} \eqref{boundLdis0}, $\mathcal{L}^{\sup}_{0}$ should be bound by $C_4$ with high probability. We are now ready for the main proof of this section.

\begin{proof}[Proof of Theorem~\ref{thm:mean-field}]
First, using Lemmas \ref{lem:boundofdiscrete}, \ref{lem:supportbound}, there is a constant $C(S,\mathcal{R}_\ini,\mathcal{L}^{\sup}_{0})$ depending only on $\mathcal{R}_{\ini},\mathcal{L}^{\sup}_{0},S$ such that for any $t\in[0,1],s\in[0,S]$
\begin{equation}\label{eqn:boundednessofwidethetaS}
\begin{aligned}
&\max\left\{\int_{\mathbb{R}^k}|\theta|^2\rd \rho(\theta,t,s),\ \frac{1}{M}\sum^M_{m=1}\left|\widetilde{\theta}_m(s;t)\right|^2,\ \frac{1}{M}\sum^M_{m=1}|\theta_m(s;t)|^2\right\}\leq C(S,\mathcal{R}_\ini,\mathcal{L}^{\sup}_{0})\,,\\
&\text{supp}(\rho(\theta,t,s))\cup\{\theta_m(s;t)\}^M_{m=1}\cup\left\{\widetilde{\theta}_m(s;t)\right\}^M_{m=1}\subset \left\{\theta\middle||
\theta_{[1]}|\leq C(S,\mathcal{R}_\ini,\mathcal{L}^{\sup}_{0})\right\}\,,
\end{aligned}
\end{equation}

Recalling~\eqref{eqn:differenceofE3}, we have $\mathcal{L}_s\leq C(S,\mathcal{R}_\ini,\mathcal{L}^{\sup}_{0})$ and
\begin{equation}\label{eqn:differenceofE}
\left|E(\rho(\cdot,\cdot,s))-E(\Theta(s;\cdot))\right|\leq C(S,\mathcal{R}_\ini,\mathcal{L}^{\sup}_{0})\left|Z_{s}(1;x)-Z^\dis_s(1;x)\right|\,.
\end{equation}
Furthermore, according to Lemma~\ref{leamm:meanfiledstability} \eqref{eqn:stabilityofZmean-field}, from \eqref{eqn:boundednessofwidethetaS}
\begin{equation}\label{eqn:stabilityofZmean-fielduseit}
\begin{aligned}
&\left|Z_s(t;x)-Z^\dis_s(t;x)\right|\\
& \leq C(S,\mathcal{R}_\ini,\mathcal{L}^{\sup}_{0})\left(\frac{1}{M}\sum^M_{m=1}\int^1_0\left|\theta_m(s;\tau)-\widetilde{\theta}_m(s;\tau)\right|^2\rd \tau\right)^{1/2}\\
& \quad +C(S,\mathcal{R}_\ini,\mathcal{L}^{\sup}_{0})\left(\int^1_0\left|\int_{\mathbb{R}^k} f\left(Z_s(\tau;x),\theta\right)\rd(\rho(\theta,\tau,s)-\widetilde{\rho}^{\dis}(\theta,\tau,s))\right|^2\rd \tau\right)^{1/2}\,.
\end{aligned}
\end{equation}
The second term in this bound can be treated using the law of large numbers.
We focus on controlling the first term.

\begin{enumerate}[wide,   labelindent=0pt]
\item[Step 1: Estimating $\frac{1}{M}\sum^M_{m=1}\int^1_0\left|\theta_m(s;t)-\widetilde{\theta}_m(s;t)\right|^2\rd t$.] Defining
\[
\Delta_{t,m}(s)=\theta_m(s;t)-\widetilde{\theta}_m(s;t)\,,
\]
we note that $|\Delta_{t,m}(0)|=0$. 
By taking the difference of~\eqref{eqn:Wassgradientflowsdisappendix} and~\eqref{eqn:Wassgradientflowsdiscoupling} and multiplying both sides by $\Delta_{t,m}(s)$, we obtain
\begin{equation}\label{iterationmean-field}
\begin{aligned}
&\frac{\rd|\Delta_{t,m}(s)|^2}{\rd s}\\
=&-2\left\langle\Delta_{t,m}(s),\EE_{x\sim\mu}\left(\partial_\theta f(Z_s(t;x),\widetilde{\theta}_m)p_s(t;x)-\partial_\theta f(Z^{\dis}_s(t;x),\theta_m)p^{\dis}_s(t;x)\right)\right\rangle\\
=&-2\left\langle\Delta_{t,m}(s),\underbrace{\EE_{x\sim\mu}\left(\partial_\theta f(Z_s(t;x),\widetilde{\theta}_m)p_s(t;x)-\partial_\theta f(Z_s(t;x),\widetilde{\theta}_m)p^{\dis}_s(t;x)\right)}_{\textrm{(I)}}\right\rangle\\
&-2\left\langle\Delta_{t,m}(s),\underbrace{\EE_{x\sim\mu}\left(\partial_\theta f(Z_s(t;x),\widetilde{\theta}_m)p^{\dis}_s(t;x)-\partial_\theta f(Z^{\dis}_s(t;x),\theta_m)p^{\dis}_s(t;x)\right)}_{\textrm{(II)}}\right\rangle\,.
\end{aligned}
\end{equation}
For $\mathrm{(I)}$, we have from the bounds of $Z_s$ in~\eqref{boundofxsolution}, respectively, that
\begin{equation*}
\left|\textrm{(I)}\right|\leq C(S,\mathcal{R}_\ini,\mathcal{L}^{\sup}_{0})\left(|\widetilde{\theta}_m|+1\right)\EE_{x\sim\mu}\left(\left|p_s(t;x)-p^\dis_s(t;x)\right|\right)\,,
\end{equation*}
which can be controlled using  Lemma~\ref{leamm:meanfiledstability2} \eqref{eqn:stabilityofpmean-field}. For $\mathrm{(II)}$, we have
\begin{equation*}
\begin{aligned}
\left|\textrm{(II)}\right|\leq &C(S,\mathcal{R}_\ini,\mathcal{L}^{\sup}_{0})\EE_{x\sim\mu}\left(\left|\partial_\theta f(Z_s(t;x),\widetilde{\theta}_m)-\partial_\theta f(Z^{\dis}_s(t;x),\theta_m)\right|\right)\\
\leq &C(S,\mathcal{R}_\ini,\mathcal{L}^{\sup}_{0})\left[\EE_{x\sim\mu}\left(\left|Z_s(t;x)-Z^{\dis}_s(t;x)\right|\right)+|\widetilde{\theta}_m-\theta_m|\right]\,,
\end{aligned}
\end{equation*}
where the first term can be controlled using Lemma \ref{leamm:meanfiledstability} \eqref{eqn:stabilityofZmean-field}. In both estimates, we used the property of $f$ in Assumption~\ref{assum:f} and bounds on $Z$, $p$, $\theta_{m,[1]}$, and $\widetilde{\theta}_{m,[1]}$. 
By substituting these estimates into \eqref{iterationmean-field}, we obtain
\begin{align*}
\frac{\rd|\Delta_{t,m}(s)|^2}{\rd s}
& \leq C(S,\mathcal{R}_\ini,\mathcal{L}^{\sup}_{0})|\Delta_{t,m}(s)|^2\\
& \quad +C(S,\mathcal{R}_\ini,\mathcal{L}^{\sup}_{0})|\Delta_{t,m}(s)|\EE_{x\sim\mu}\left(\left|Z_s(t;x)-Z^{\dis}_s(t;x)\right|\right)\\
&\quad+C(S,\mathcal{R}_\ini,\mathcal{L}^{\sup}_{0})|\Delta_{t,m}(s)|\left(|\widetilde{\theta}_m|+1\right)\EE_{x\sim\mu}\left(\left|p_s(t;x)-p^{\dis}_s(t;x)\right|\right)\,,
\end{align*}
which implies that
\begin{equation}\label{eqn:example}
\begin{aligned}
&\frac{1}{M}\sum^M_{m=1}\frac{\rd|\Delta_{t,m}(s)|^2}{\rd s}\\
& \leq C(S,\mathcal{R}_\ini,\mathcal{L}^{\sup}_{0})\left(\frac{1}{M}\sum^M_{m=1}|\Delta_{t,m}(s)|^2\right)\\
& \quad +C(S,\mathcal{R}_\ini,\mathcal{L}^{\sup}_{0})\left(\frac{1}{M}\sum^M_{m=1}|\Delta_{t,m}(s)|\right)\EE_{x\sim\mu}\left(\left|Z_s(t;x)-Z^{\dis}_s(t;x)\right|\right)\\
& \quad +C(S,\mathcal{R}_\ini,\mathcal{L}^{\sup}_{0})\left(\frac{1}{M}\sum^M_{m=1}|\Delta_{t,m}(s)|\left(|\widetilde{\theta}_m|+1\right)\right)\EE_{x\sim\mu}\left(\left|p_s(t;x)-p^{\dis}_s(t;x)\right|\right)\\
& \leq C(S,\mathcal{R}_\ini,\mathcal{L}^{\sup}_{0})\left(\frac{1}{M}\sum^M_{m=1}|\Delta_{t,m}(s)|^2\right)+C(S,\mathcal{R}_\ini,\mathcal{L}^{\sup}_{0})\EE_{x\sim\mu}\left(\left|p_s(t;x)-p^{\dis}_s(t;x)\right|^2\right)\\
& \quad +C(S,\mathcal{R}_\ini,\mathcal{L}^{\sup}_{0})\EE_{x\sim\mu}\left(\left|Z_s(t;x)-Z^{\dis}_s(t;x)\right|^2\right)\,,
\end{aligned}
\end{equation}
where in the last inequality we use H\"older's inequality 
\[
\begin{aligned}
&\left(\frac{1}{M}\sum^M_{m=1}|\Delta_{t,m}(s)|\left(|\widetilde{\theta}_m|+1\right)\right)\EE_{x\sim\mu}\left(\left|p_s(t;x)-p^{\dis}_s(t;x)\right|\right)\\
\leq &\left(\frac{1}{M}\sum^M_{m=1}|\Delta_{t,m}(s)|^2\right)^{1/2}\left(\frac{1}{M}\sum^M_{m=1}\left(|\widetilde{\theta}_m|+1\right)^2\right)^{1/2}\EE_{x\sim\mu}\left(\left|p_s(t;x)-p^{\dis}_s(t;x)\right|\right)\\
\leq &C(S,\mathcal{R}_\ini,\mathcal{L}^{\sup}_{0})\left(\frac{1}{M}\sum^M_{m=1}|\Delta_{t,m}(s)|^2\right)^{1/2}\EE_{x\sim\mu}\left(\left|p_s(t;x)-p^{\dis}_s(t;x)\right|\right)\\
\leq &C(S,\mathcal{R}_\ini,\mathcal{L}^{\sup}_{0})\left[\left(\frac{1}{M}\sum^M_{m=1}|\Delta_{t,m}(s)|^2\right)+\EE_{x\sim\mu}\left(\left|p_s(t;x)-p^{\dis}_s(t;x)\right|^2\right)\right]
\end{aligned}\,. 
\]
Noting the estimate in Lemma~\ref{leamm:meanfiledstability}-\ref{leamm:meanfiledstability2}, we obtain 
\begin{align*}
&\frac{\rd\frac{1}{M}\sum^M_{m=1}\int^1_0|\Delta_{t,m}(s)|^2\rd t}{\rd s}\\
& \leq C(S,\mathcal{R}_\ini,\mathcal{L}^{\sup}_{0})\left(\frac{1}{M}\sum^M_{m=1}\int^1_0|\Delta_{t,m}(s)|^2\rd t\right)\\
& \quad +C(S,\mathcal{R}_\ini,\mathcal{L}^{\sup}_{0})\EE_{x\sim\mu}\left(\int^1_0\left|\int_{\mathbb{R}^k} f\left(Z_s(\tau;x),\theta\right)\rd(-\widetilde{\rho}^{\dis}(\theta,\tau,s))\right|^2\rd \tau\right)\\
& \quad +C(S,\mathcal{R}_\ini,\mathcal{L}^{\sup}_{0})\EE_{x\sim\mu}\left(\int^1_0\left|\int_{\mathbb{R}^k} \partial_z f\left(Z_s(\tau;x),\theta\right)\rd(-\widetilde{\rho}^{\dis}(\theta,\tau,s))\right|^2\rd \tau\right)\,,
\end{align*}
which implies, using Gr\"onwall's inequality, that
\begin{equation}\label{boundofcoupling}
\begin{aligned}
&\frac{1}{M}\sum^M_{m=1}\int^1_0|\Delta_{t,m}(s)|^2\rd t\\
& \leq C(S,\mathcal{R}_\ini,\mathcal{L}^{\sup}_{0})\EE_{x\sim\mu}\left(\int^S_0\int^1_0\left|\int_{\mathbb{R}^k} f\left(Z_s(t;x),\theta\right)\rd(\rho(\theta,\tau,s)-\widetilde{\rho}^{\dis}(\theta,\tau,s))\right|^2\rd \tau\rd s\right)\\
& \quad +C(S,\mathcal{R}_\ini,\mathcal{L}^{\sup}_{0})\EE_{x\sim\mu}\left(\int^S_0\int^1_0\left|\int_{\mathbb{R}^k} \partial_z f\left(Z_s(t;x),\theta\right)\rd(\rho(\theta,\tau,s)-\widetilde{\rho}^{\dis}(\theta,\tau,s))\right|^2 \rd \tau\rd s\right)\,,
\end{aligned}
\end{equation}
where we use $|\Delta_{t,m}(0)|=0$.
\item[Step 2: Completing the proof.] By substituting \eqref{boundofcoupling} into \eqref{eqn:stabilityofZmean-field}, noticing $\mathcal{R}_s\leq C(\mathcal{R}_{\ini},S)$, we obtain 
\begin{equation}\label{boundofcoupling2}
\begin{aligned}
&\left|Z_s(t;x)-Z^{\dis}_s(t;x)\right|\\
 \leq &C(S,\mathcal{R}_\ini,\mathcal{L}^{\sup}_{0})\left(\underbrace{\EE_{x\sim\mu}\left(\int^S_0\int^1_0\left|\int_{\mathbb{R}^k} f\left(Z_s(\tau;x),\theta\right)\rd(\rho(\theta,\tau,s)-\widetilde{\rho}^{\dis}(\theta,\tau,s))\right|^2\rd \tau\rd s\right)}_{\textrm{(I)}}\right)^{1/2}\\
&+C(S,\mathcal{R}_\ini,\mathcal{L}^{\sup}_{0})\left(\underbrace{\EE_{x\sim\mu}\left(\int^S_0\int^1_0\left|\int_{\mathbb{R}^k} \partial_z f\left(Z_s(\tau;x),\theta\right)\rd(\rho(\theta,\tau,s)-\widetilde{\rho}^{\dis}(\theta,\tau,s))\right|^2\rd \tau\rd s\right)}_{\textrm{(II)}}\right)^{1/2}\\
&+C(S,\mathcal{R}_\ini,\mathcal{L}^{\sup}_{0})\left(\underbrace{\int^1_0\left|\int_{\mathbb{R}^k} \partial_z f\left(Z_s(\tau;x),\theta\right)\rd(\rho(\theta,\tau,s)-\widetilde{\rho}^{\dis}(\theta,\tau,s))\right|^2\rd \tau}_{\textrm{(III)}}\right)^{1/2}\,,
\end{aligned}
\end{equation}
All three terms in~\eqref{boundofcoupling2} can be controlled in expectation. Here we take the expectation with respect to the randomness initial drawing of $\{\theta_m(0;t)\}^M_{m=1}$. For $\textrm{(I)}$, we have 
\begin{align*}
\mathbb{E}\textrm{(I)} &=\mathbb{E}\left(\EE_{x\sim\mu}\left(\int^S_0\int^1_0\left|\int_{\mathbb{R}^k} f\left(Z_s(\tau;x),\theta\right)\rd(\rho(\theta,\tau,s)-\widetilde{\rho}^{\dis}(\theta,\tau,s))\right|^2\rd\tau\rd s\right)\right)\\
& =\EE_{x\sim\mu}\left(\int^S_0\int^1_0\mathbb{E}\left(\left|\int_{\mathbb{R}^k} f\left(Z_s(\tau;x),\theta\right)\rd(\rho(\theta,\tau,s)-\widetilde{\rho}^{\dis}(\theta,\tau,s))\right|^2\right)\rd \tau\rd s\right)\\
& \leq \frac{C(S,\mathcal{R}_\ini,\mathcal{L}^{\sup}_{\ini})}{M}\EE_{x\sim\mu}\left(\int^S_0\int^1_0\int_{\mathbb{R}^k}|f\left(Z_s(\tau;x),\theta\right)|^2\rd \rho(\theta,\tau,s)\rd\tau\rd s\right)\\
& \leq \frac{C(S,\mathcal{R}_\ini,\mathcal{L}^{\sup}_{\ini})}{M}\,,
\end{align*}
where we use $\widetilde{\theta}_m(s;t)\sim \rho(\theta,t,s)$ in the first inequality. In second inequality, if $k_1<k$, we use first inequality of Assumption \ref{assum:f} \eqref{eqn:k1smallerthank} with $\left|\theta_{[1]}\right|\leq C(S,\mathcal{R}_\ini,\mathcal{L}^{\sup}_{\ini})$ and $|Z_s|\leq C(S,\mathcal{R}_\ini,\mathcal{L}^{\sup}_{\ini})$. If $k_1=k$, we use \eqref{eqn:boundoff} and $\left|\theta\right|=\left|\theta_{[1]}\right|\leq C(S,\mathcal{R}_\ini,\mathcal{L}^{\sup}_{\ini})$ and $|Z_s|\leq C(S,\mathcal{R}_\ini,\mathcal{L}^{\sup}_{\ini})$ in the second inequality. By similar reasoning, we obtain
\begin{align*}
\mathbb{E}\textrm{(II)}\leq \frac{C(S,\mathcal{R}_\ini,\mathcal{L}^{\sup}_{\ini})}{M},\quad \mathbb{E}\textrm{(III)}\leq \frac{C(S,\mathcal{R}_\ini,\mathcal{L}^{\sup}_{\ini})}{M}\,.
\end{align*}
From Markov's inequality, these bounds imply that when $M>\frac{C(\mathcal{R}_{\ini},S,\mathcal{L}^{\sup}_{\ini})}{\epsilon^2\eta}$, we have
\begin{equation}\label{listofboundinprobability}
\mathbb{P}\left(\left\{\textrm{(I)}<\epsilon^2\right\}\cap\left\{\textrm{(II)}<\epsilon^2\right\}\cap\left\{\textrm{(III)}<\epsilon^2\right\} \right)>1-\eta/2\,.
\end{equation}

Finally, using Definition \ref{def:meanadmissible2} \eqref{boundLdis0}, when $M>\frac{2C_3}{\eta}$,
\begin{equation}\label{eqn:Lbound0}
\mathbb{P}\left(\mathcal{L}^{\sup}_0\leq C_4\right)\geq 1-\eta/2\,.
\end{equation}

By substituting \eqref{listofboundinprobability} and \eqref{eqn:Lbound0} into \eqref{boundofcoupling2}, we see that there exists a constant $C(\mathcal{R}_{\ini},C_3,C_4,S)$ such that for any $\epsilon,\eta>0$, when $M>\frac{C(\mathcal{R}_{\ini},C_3,C_4,S)}{\epsilon^2\eta}$ we obtain that
\[
\mathbb{P}\left(\left|Z_s(1;x)-Z^{\dis}_s(1;x)\right|<\epsilon\right)>1-\eta\,.
\]
By using this result and \eqref{eqn:Lbound0} in conjunction with \eqref{eqn:differenceofE}, we complete the proof.
\end{enumerate}
\end{proof}

\section{Convergence to the continuous limit}\label{sec:proofofcontlimit}

This section is dedicated to the continuous limit and, in particular, the proof of Theorem~\ref{thm:contlimit}. 

\subsection{Stability with discretization}
Before proving Theorem \ref{thm:contlimit}, and similarly to Appendix \ref{sec:meanfieldstability}, we first consider the stability of $Z$ and $p$ under discretization. 
Defining the path of parameters $\Theta(t)=\left\{\theta_m(t)\right\}^M_{m=1}$ and the set of parameters ${\Theta}_{L,M}=\left\{\theta_{l,m}\right\}^{L-1,M}_{l=0,m=1}$, we have the following lemma.
\begin{lemma}
Suppose that Assumption \ref{assum:f} holds and that $x$ is in the support of $\mu$. Denoting 
\begin{equation}\label{eqn:Lmean-fieldfiniteL}
\begin{aligned}
&\mathcal{L}^{\sup} =\sup_{0\leq t\leq1, l}\left\{\frac{1}{M}\sum^M_{i=1}|\theta_{l,m}|^2,\frac{1}{M}\sum^M_{i=1}|\theta_m(t)|^2\right\}\\
&\mathcal{R}=\inf_{r>0}\left\{\{\theta_{l,m}\}^{M,L}_{m=1,l=1}\cup \{\theta_m(t)\}^M_{m=1}\subset\{\theta||\theta_{[1]}|<r\},\  \forall t\in[0,1]\right\}\,,
\end{aligned}
\end{equation}
there exists a constant $C(\mathcal{R},\mathcal{L}^{\sup})$ depending only on $\mathcal{R},\mathcal{L}^{\sup}$ such that for any $0\leq l\leq L-1$, we have
\revise{\begin{equation}\label{eqn:stabilityofZmean-fieldfiniteL}
\begin{aligned}
&\sup_{\frac{l}{L}\leq t\leq\frac{l+1}{L}}\left\{\left|Z_{\Theta}(t;x)-Z_{\Theta_{L,M}}(l;x)\right|,\left|Z_{\Theta}(t;x)-Z_{\Theta_{L,M}}(l+1;x)\right|\right\}\\
& \leq C(\mathcal{R},\mathcal{L}^{\sup})\left(\frac{1}{M}\sum^{L-1}_{l=0}\sum^M_{m=1}\int^{\frac{l+1}{L}}_{\frac{l}{L}}|\theta_{l,m}-\theta_m(\tau)|^2\rd\tau\right)^{1/2}+\frac{C(\mathcal{R},\mathcal{L}^{\sup})}{L}\,,
\end{aligned}
\end{equation}
and
\begin{equation}\label{eqn:stabilityofpmean-fieldfiniteL}
\begin{aligned}
& \sup_{\frac{l}{L}\leq t\leq\frac{l+1}{L}}\left|p_{\Theta}(t;x)-p_{\Theta_{L,M}}(l;x)\right| \\
& \leq C(\mathcal{R},\mathcal{L}^{\sup})\left(\frac{1}{M}\sum^{L-1}_{l=0}\sum^M_{m=1}\int^{\frac{l+1}{L}}_{\frac{l}{L}}|\theta_{l,m}-\theta_m(\tau)|^2\rd\tau\right)^{1/2}+\frac{C(\mathcal{R},\mathcal{L}^{\sup})}{L}\,.
\end{aligned}
\end{equation}}
\end{lemma}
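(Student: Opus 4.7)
The plan is to prove both estimates by comparing $Z_\Theta(\cdot;x)$ and $p_\Theta(\cdot;x)$, which solve the continuous ODEs~\eqref{eqn:contRes} and~\eqref{eqn:prhoappendix}, to the discrete iterates $\{Z_{\Theta_{L,M}}(l;x)\}$ and $\{p_{\Theta_{L,M}}(l;x)\}$ defined by~\eqref{eqn:disRes} and~\eqref{eqn:prhofiniteL}, layer-by-layer, via a telescoping decomposition closed by a discrete Gr\"onwall argument. For $Z$, fix $l$ and $t\in[l/L,(l+1)/L]$ and decompose
\[
Z_\Theta(t;x) - Z_{\Theta_{L,M}}(l;x) = \bigl[Z_\Theta(t;x) - Z_\Theta(l/L;x)\bigr] + \bigl[Z_\Theta(l/L;x) - Z_{\Theta_{L,M}}(l;x)\bigr].
\]
The first bracket is $O(1/L)$ by~\eqref{eqn:boundoff} and the a-priori bound on $|Z_\Theta|$ obtained by applying Theorem~\ref{thm:wmean-fieldlimit} to the empirical distribution associated with $\Theta(t)$, with constant depending only on $\mathcal{L}^{\sup}$. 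Combined with a parallel $O(1/L)$ bound for $|Z_{\Theta_{L,M}}(l+1;x) - Z_{\Theta_{L,M}}(l;x)|$ coming directly from~\eqref{eqn:disRes} and~\eqref{boundofxsolutionfiniteL}, this reduces the job to bounding the second bracket uniformly in $l$.

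For the second bracket, integrating~\eqref{eqn:contRes} over $[j/L,(j+1)/L]$ and subtracting~\eqref{eqn:disRes} yields, for each $j<l$,
\[
\bigl[Z_\Theta(\tfrac{j+1}{L};x) - Z_{\Theta_{L,M}}(j+1;x)\bigr] - \bigl[Z_\Theta(\tfrac{j}{L};x) - Z_{\Theta_{L,M}}(j;x)\bigr] = \frac{1}{M}\sum_m \int_{j/L}^{(j+1)/L} \bigl[f(Z_\Theta(\tau;x),\theta_m(\tau)) - f(Z_{\Theta_{L,M}}(j;x),\theta_{j,m})\bigr]\,d\tau.
\]
I split the integrand by the triangle inequality into a parameter-difference piece $|f(Z_\Theta(\tau),\theta_m(\tau)) - f(Z_\Theta(\tau),\theta_{j,m})|$, a grid-scale state piece $|f(Z_\Theta(\tau),\theta_{j,m}) - f(Z_\Theta(j/L),\theta_{j,m})|$, and a discrete-state piece $|f(Z_\Theta(j/L),\theta_{j,m}) - f(Z_{\Theta_{L,M}}(j),\theta_{j,m})|$. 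The Lipschitz bounds~\eqref{eqn:derivativebound}, combined with the a-priori bound on $|Z_\Theta|$ and the uniform control $|\theta_{j,m,[1]}|\leq\mathcal{R}$, dominate each piece by constants depending only on $(\mathcal{R},\mathcal{L}^{\sup})$. Summing over $j<l$, the parameter piece is processed via Cauchy-Schwarz applied twice (across $m$ and across $j$), distributing the length-$1/L$ factors so as to reconstitute exactly the square-root sum in~\eqref{eqn:stabilityofZmean-fieldfiniteL}; the grid-scale piece accumulates to $O(1/L)$ since each local increment is $O(1/L^2)$ and there are at most $L$ of them; the discrete-state piece drives a linear recurrence in $l$ that is closed by discrete Gr\"onwall. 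The estimate for $p$ is then parallel, running backward in $t$: the terminal discrepancy $p_\Theta(1;x) - p_{\Theta_{L,M}}(L-1;x)$ is bounded by the Lipschitz continuity of $g,\nabla g$ (Assumption~\ref{ass:g}) together with the just-proved $Z$-estimate evaluated at $t=1$, while the iterative step replaces $f$ by $\partial_z f$, invokes the corresponding Lipschitz bounds from~\eqref{eqn:derivativebound}, and carries an inhomogeneous forcing driven by $|Z_\Theta - Z_{\Theta_{L,M}}|$ which we feed in from~\eqref{eqn:stabilityofZmean-fieldfiniteL}.

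The main obstacle will be keeping all constants independent of $L$ and $M$ while juggling three superimposed discretization errors through a recursion of depth $L$. The crucial enabler is Assumption~\ref{assum:f} part~3: the uniform support bound $|\theta_{[1]}|\leq\mathcal{R}$, combined with~\eqref{eqn:part3} and~\eqref{eqn:k1smallerthank}, prevents the Lipschitz factors from blowing up in $|\theta|$, which would otherwise propagate through the Gr\"onwall constants and destroy the estimate. A subsidiary technical point requiring care is the Cauchy-Schwarz step converting the $L^1$-in-$\tau$, $\ell^1$-in-$(j,m)$ parameter sum into the $L^2$ square-root norm on the right-hand side: the $1/L$ weights coming from the length of each subinterval and the $\sqrt{L}$ accumulated by summing across $j$ must cancel exactly, with no residual powers of $L$ surviving.
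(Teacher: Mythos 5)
Your proposal is correct and follows essentially the same route as the paper: the paper introduces the piecewise-constant interpolant $\widetilde{Z}(t;x)=Z_{\Theta_{L,M}}(l;x)$ on $[l/L,(l+1)/L)$, splits the integrand into a parameter-difference term and a state-difference term, folds the grid-scale error into the intermediate bound \eqref{intervalbound}, closes a discrete Gr\"onwall recursion, and then runs a parallel backward argument for $p$ using Assumption~\ref{assum:f}~item~3 exactly where you identify it as the crucial ingredient. Your three-way split of the integrand is a cosmetic reorganization of the paper's two-way split plus the grid-scale lemma, and the Cauchy--Schwarz cancellation you flag indeed resolves exactly as you describe.
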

\begin{proof} Define
\[
Z(t;x)=Z_{\Theta}(t;x),\quad p(t;x)=p_{\Theta}(t;x)\,,
\]
and
\begin{equation}\label{deftilde1}
\widetilde{Z}(t;x)=\sum^{L-1}_{l=0}Z_{{\Theta}_{L,M}}(l;x)\textbf{1}_{\frac{l}{L}\leq t<\frac{l+1}{L}},\quad \widetilde{p}(t;x)=\sum^{L-1}_{l=0}p_{{\Theta}_{L,M}}(l;x)\textbf{1}_{\frac{l}{L}<t\leq \frac{l+1}{L}} \,,
\end{equation}
with
\begin{equation}\label{deftilde2}
\widetilde{Z}(1;x)=Z_{{\Theta}_{L,M}}(L;x),\quad \widetilde{p}(0;x)=p_{{\Theta}_{L,M}}(0;x)\,.
\end{equation}

Using \eqref{eqn:disRes}, \eqref{eqn:prhofiniteL}, Assumption \ref{assum:f}, and Lemma \ref{prop:wmean-fieldlimitfiniteL} \eqref{boundofxsolutionfiniteL} and \eqref{boundofprhofiniteL}, we obtain for all $l=0,1,\dotsc,L-1$ that 
\begin{equation}\label{intervalbound2}
\begin{aligned}
\left|Z_{\Theta_{L,M}}(l+1;x)-Z_{\Theta_{L,M}}(l;x)\right|& <\frac{C(\mathcal{L}^{\sup})}{L},\\ \left|p_{\Theta_{L,M}}(l+1;x)-p_{\Theta_{L,M}}(l;x)\right|& <\frac{C(\mathcal{L}^{\sup})}{L}\,.
\end{aligned}
\end{equation}
Now define $\Delta_t$ by
\[
\Delta_t=Z(t;x)-\widetilde{Z}(t;x)\,.
\]
For $t\in [\frac{l}{L},\frac{l+1}{L}]$, we have from \eqref{eqn:contRes} that
\begin{equation}\label{intervalbound}
\begin{aligned}
|\Delta_t|&\leq \left|\Delta_{\frac{l}{L}}\right|+\frac{1}{M}\sum^M_{m=1}\int^{t}_{\frac{l}{L}}|f(Z(\tau;x),\theta_m(\tau))|\rd\tau\\
&\leq \left|\Delta_{\frac{l}{L}}\right|+\frac{C(\mathcal{L}^{\sup})}{M}\sum^M_{m=1}\int^{\frac{l+1}{L}}_{\frac{l}{L}}(|\theta_m(\tau)|^2+1)\rd\tau\\
&\leq \left|\Delta_{\frac{l}{L}}\right|+\frac{C(\mathcal{L}^{\sup})}{L}\,,
\end{aligned}
\end{equation}
where we use \eqref{eqn:boundoff}, \eqref{boundofxsolution}, and \eqref{eqn:Lmean-fieldfiniteL} in the last two inequalities. From \eqref{eqn:disRes} and \eqref{eqn:contRes}, we obtain further that
\[
\begin{aligned}
\left|\Delta_{\frac{l+1}{L}}\right|&=\left|\Delta_{\frac{l}{L}}\right|+\left|\frac{1}{M}\sum^M_{m=1}\int^{\frac{l+1}{L}}_{\frac{l}{L}} f(Z(\tau;x),\theta_m(\tau))-f\left(\widetilde{Z}(\tau;x),\theta_{l,m}\right)\rd\tau\right|\\
& \leq \left|\Delta_{\frac{l}{L}}\right|+\left|\frac{1}{M}\sum^M_{m=1}\int^{\frac{l+1}{L}}_{\frac{l}{L}} f(Z(\tau;x),\theta_m(\tau))-f\left(Z(\tau;x),\theta_{l,m}\right)\rd\tau\right|\\
& \quad +\left|\frac{1}{M}\sum^M_{m=1}\int^{\frac{l+1}{L}}_{\frac{l}{L}} f(Z(\tau;x),\theta_{l,m})-f\left(\widetilde{Z}(\tau;x),\theta_{l,m}\right)\rd\tau\right|\\
& \stackrel{(I)}{\leq} \left|\Delta_{\frac{l}{L}}\right|+C(\mathcal{L}^{\sup})\left(\frac{1}{M}\sum^M_{m=1}\int^{\frac{l+1}{L}}_{\frac{l}{L}}(|\theta_m(\tau)|+|\theta_{l,m}|+1)|\theta_m(\tau)-\theta_{l,m}|\rd\tau\right) \\
& \quad +C(\mathcal{L}^{\sup})|\Delta_\xi|\left(\frac{1}{ML}\sum^M_{m=1}(|\theta_{l,m}|^2+1)\right)\\
& \stackrel{{\textrm{(II)}}}{\leq} \left(1+\frac{C(\mathcal{L}^{\sup})}{L}\right)\left|\Delta_{\frac{l}{L}}\right|+C(\mathcal{L}^{\sup})\left(\frac{1}{M}\sum^M_{m=1}\int^{\frac{l+1}{L}}_{\frac{l}{L}}(|\theta_m(\tau)|+|\theta_{l,m}|+1)|\theta_m(\tau)-\theta_{l,m}|\rd\tau\right)\\
&\quad +\frac{C(\mathcal{L}^{\sup})}{L^2}\,,
\end{aligned}
\]
where $\xi\in[\frac{l}{L},\frac{l+1}{L}]$, and we used the mean-value theorem with \eqref{eqn:derivativebound}, \eqref{boundofxsolution}, \eqref{boundofxsolutionfiniteL} in (I) and \eqref{eqn:Lmean-fieldfiniteL}, \eqref{intervalbound} in (II). 
By applying this bound iteratively, we obtain 
\[
\left|\Delta_{\frac{l}{L}}\right|\leq C(\mathcal{L}^{\sup})\left|\Delta_{0}\right|+C(\mathcal{L}^{\sup})\left(\frac{1}{M}\sum^{l-1}_{j=0}\sum^M_{m=1}\int^{\frac{j+1}{L}}_{\frac{j}{L}}(|\theta_m(\tau)|+|\theta_{l,m}|+1)|\theta_m(\tau)-\theta_{l,m}|\rd\tau\right)+\frac{C(\mathcal{L}^{\sup})}{L},
\]
where $|\Delta_{0}|=0$. 
By combining this bound with \eqref{intervalbound} and using H\"older's inequality with \eqref{eqn:Lmean-fieldfiniteL}, we obtain that
\begin{equation}\label{eqn:stabilityofZmean-fieldfiniteLfinitet}
\left|\Delta_t\right|\leq C(\mathcal{L}^{\sup})\left(\frac{1}{M}\sum^{L-1}_{l=0}\sum^M_{m=1}\int^{\frac{l+1}{L}}_{\frac{l}{L}}|\theta_{l,m}-\theta_m(\tau)|^2\rd\tau\right)^{1/2}+\frac{C(\mathcal{L}^{\sup})}{L}\,.
\end{equation}
By combining \eqref{eqn:stabilityofZmean-fieldfiniteLfinitet} with \eqref{intervalbound2}, we prove \eqref{eqn:stabilityofZmean-fieldfiniteL}.

To prove \eqref{eqn:stabilityofpmean-fieldfiniteL}, we define
\[
\Delta_p(t;x)=p(t;x)-\widetilde{p}(t;x)\,.
\]
Similarly to \eqref{eqn:ODEforDEltaprhoinitial}, we obtain 
\begin{equation}\label{eqn:ODEforDEltaprhoinitial2}
\begin{aligned}
|\Delta_p(1;x)| & \leq C(\mathcal{L}^{\sup})\left|\widetilde{Z}(1;x)-Z(1;x)\right| \\
& \leq C(\mathcal{L}^{\sup})\left(\frac{1}{M}\sum^{L-1}_{l=0}\sum^M_{m=1}\int^{\frac{l+1}{L}}_{\frac{l}{L}}|\theta_{l,m}-\theta_m(\tau)|^2\rd\tau\right)^{1/2}+\frac{C(\mathcal{L}^{\sup})}{L}\,.
\end{aligned}
\end{equation}
For $t\in\left(\frac{l}{L},\frac{l+1}{L} \right]$ and using \eqref{eqn:prho}, we obtain that
\begin{equation}\label{intervalbound3}
\begin{aligned}
|\Delta_p\left(t;x\right)|&\leq \left|\Delta_p\left(\frac{l+1}{L};x\right)\right|+\frac{1}{M}\sum^M_{m=1}\int^{\frac{l+1}{L}}_t \left|\partial_zf(Z(\tau;x),\theta_m(\tau))\right||p(\tau;x)|\rd\tau\\
&\leq \left|\Delta_p\left(\frac{l+1}{L};x\right)\right|+\frac{C(\mathcal{L}^{\sup})}{M}\sum^M_{m=1}\int^{\frac{l+1}{L}}_{\frac{l}{L}}(|\theta_m(\tau)|^2+1)\rd\tau\\
&\leq \left|\Delta_p\left(\frac{l+1}{L};x\right)\right|+\frac{C(\mathcal{L}^{\sup})}{L}\,,
\end{aligned}
\end{equation}
where we use \eqref{eqn:derivativebound}, \eqref{boundofxsolution}, and \eqref{boundofprho} in the second inequality and the definition of $\mathcal{L}^{\sup}$ from~\eqref{eqn:Lmean-fieldfiniteL} in the last line.

From \eqref{eqn:prho}, we obtain that
\[
p^\top\left(\frac{l}{L};x\right)=p^\top\left(\frac{l+1}{L};x\right)+\frac{1}{M}\sum^M_{m=1}\int^{\frac{l+1}{L}}_\frac{l}{L}p^\top\left(\tau;x\right)\partial_zf(Z(\tau;x),\theta_m(\tau))\rd \tau\,,
\]
while \eqref{eqn:prhofiniteL} implies that
\[
\widetilde{p}^\top\left(\frac{l}{L};x\right)=\widetilde{p}^\top\left(\frac{l+1}{L};x\right)+\frac{1}{M}\sum^M_{m=1}\int^{\frac{l+1}{L}}_\frac{l}{L}\widetilde{p}^\top\left(\frac{l+1}{L};x\right)\partial_zf(Z_{\Theta_{L,M}}(l;x),\theta_{l,m})\rd \tau\,,
\]
where $\widetilde{p}$ is defined in \eqref{deftilde1}, \eqref{deftilde2}.

By bounding differences of these two expressions, we have
\begin{equation}\label{eqn:all}
\begin{aligned}
&\left|\Delta_p\left(\frac{l}{L};x\right)\right|
\leq \left|\Delta_p\left(\frac{l+1}{L};x\right)\right| \\
& +\underbrace{\frac{1}{M}\sum^M_{m=1}\int^{\frac{l+1}{L}}_\frac{l}{L}\left|p^\top\left(\tau;x\right)\partial_zf(Z(\tau;x),\theta_m(\tau))\rd\tau-\widetilde{p}^\top\left(\frac{l+1}{L};x\right)\partial_zf(Z(\tau;x),\theta_m(\tau))\right|\rd \tau}_{{\textrm{(I)}}}\\
&+\underbrace{\frac{1}{M}\sum^M_{m=1}\int^{\frac{l+1}{L}}_\frac{l}{L}\left|\widetilde{p}^\top\left(\frac{l+1}{L};x\right)\partial_zf(Z(\tau;x),\theta_m(\tau))-\widetilde{p}^\top\left(\frac{l+1}{L};x\right)\partial_zf(Z_{\Theta_{L,M}}(l;x),\theta_m(\tau))\right|\rd\tau}_{{\textrm{(II)}}}\\
&+\underbrace{\frac{1}{M}\sum^M_{m=1}\int^{\frac{l+1}{L}}_\frac{l}{L}\left|\widetilde{p}^\top\left(\frac{l+1}{L};x\right)\partial_zf(Z_{\Theta_{L,M}}(l;x),\theta_m(\tau))-\widetilde{p}^\top\left(\frac{l+1}{L};x\right)\partial_zf(Z_{\Theta_{L,M}}(l;x),\theta_{l,m})\right|\rd \tau}_{{\textrm{(III)}}}\,.
\end{aligned}
\end{equation}
We bound (I), (II), and (III) in turn.
\begin{enumerate}[wide,   labelindent=0pt]
\item[(I):] Using \eqref{eqn:derivativebound}, \eqref{boundofxsolution}, and \eqref{eqn:Lmean-fieldfiniteL}, we obtain that
\[
{\textrm{(I)}}\leq \frac{C(\mathcal{L}^{\sup})}{L}|\Delta_p(t;x)|
\]

\item[(II):] Using Assumption~\ref{assum:f} \eqref{eqn:part3} together with \eqref{boundofxsolution}, \eqref{boundofxsolutionfiniteL}, \eqref{boundofprhofiniteL}, and \eqref{eqn:Lmean-fieldfiniteL}, we obtain that
\[
\begin{aligned}
{\textrm{(II)}}&\leq \frac{C(\mathcal{R},\mathcal{L}^{\sup})}{M}\sum^M_{m=1}\int^{\frac{l+1}{L}}_{\frac{l}{L}}|Z(\tau;x)-Z_{\Theta_{L,M}}(l;x)|\rd \tau\\
&\leq \frac{C(\mathcal{R},\mathcal{L}^{\sup})}{L}\left(\left(\frac{1}{M}\sum^{L-1}_{l'=0}\sum^M_{m=1}\int^{\frac{l'+1}{L}}_{\frac{l'}{L}}|\theta_{l',m}-\theta_m(\tau)|^2\rd \tau\right)^{1/2}+\frac{1}{L}\right)\,,
\end{aligned}
\]
where we make use of \eqref{eqn:stabilityofZmean-fieldfiniteL} in the last inequality.
\item[{\textrm{(III)}}:]  Using Assumption~\ref{assum:f} item 3 together with \eqref{boundofxsolutionfiniteL}, \eqref{boundofprhofiniteL}, and \eqref{eqn:Lmean-fieldfiniteL}, we obtain that
\[
\begin{aligned}
{\textrm{(III)}}&\leq \frac{C(\mathcal{R},\mathcal{L}^{\sup})}{M}\sum^M_{m=1}\int^{\frac{l+1}{L}}_\frac{l}{L}|\theta_m(\tau)-\theta_{l,m}|\rd\tau\\
&\leq C(\mathcal{R},\mathcal{L}^{\sup})\int^{\frac{l+1}{L}}_\frac{l}{L}\left(\frac{1}{M}\sum^M_{m=1}|\theta_m(\tau)-\theta_{l,m}|^2\right)^{1/2}\rd\tau\\
\end{aligned}
\]
\end{enumerate}
By substituting these three bounds and  \eqref{intervalbound3} into \eqref{eqn:all}, we obtain
\[
\begin{aligned}
\left|\Delta_p\left(\frac{l}{L};x\right)\right|
& \leq \left(1+\frac{C(\mathcal{R},\mathcal{L}^{\sup})}{L}\right)\left|\Delta_p\left(\frac{l+1}{L};x\right)\right|\\
& \quad +\frac{C(\mathcal{R},\mathcal{L}^{\sup})}{L}\left(\left(\frac{1}{M}\sum^{L-1}_{l'=0}\sum^M_{m=1}\int^{\frac{l'+1}{L}}_{\frac{l'}{L}}|\theta_{l',m}-\theta_m(\tau)|^2\rd\tau\right)^{1/2}+\frac{1}{L}\right)\\
& \quad +C(\mathcal{R},\mathcal{L}^{\sup})\int^{\frac{l+1}{L}}_\frac{l}{L}\left(\frac{1}{M}\sum^M_{m=1}|\theta_m(\tau)-\theta_{l,m}|^2\right)^{1/2}\rd\tau+\frac{C(\mathcal{R},\mathcal{L}^{\sup})}{L^2}\,.
\end{aligned}
\]
By applying this bound iteratively, and using \eqref{eqn:ODEforDEltaprhoinitial2} and \eqref{intervalbound3}, we obtain 
\begin{equation}\label{eqn:stabilityofprhomean-fieldfiniteLfinitet}
\begin{aligned}
\left|\Delta_p(t;x)\right|\leq C(\mathcal{R},\mathcal{L}^{\sup})\left(\left(\frac{1}{M}\sum^{L-1}_{l=0}\sum^M_{m=1}\int^{\frac{l+1}{L}}_{\frac{l}{L}}|\theta_{l,m}-\theta_m(\tau)|^2\rd\tau\right)^{1/2}+\frac{1}{L}\right)\,,
\end{aligned}
\end{equation}
where we also use H\"older's inequality to write
\begin{align*}
& C(\mathcal{R},\mathcal{L}^{\sup})\sum^{L-1}_{l=0}\int^{\frac{l+1}{L}}_\frac{l}{L}\left(\frac{1}{M}\sum^M_{m=1}|\theta_m(\tau)-\theta_{l,m}|^2\right)^{1/2}\rd\tau \\
& \leq C(\mathcal{R},\mathcal{L}^{\sup})\left(\frac{1}{M}\sum^{L-1}_{l=0}\sum^M_{m=1}\int^{\frac{l+1}{L}}_{\frac{l}{L}}|\theta_{l,m}-\theta_m(\tau)|^2\rd\tau\right)^{1/2}\,.
\end{align*}

We obtain \eqref{eqn:stabilityofpmean-fieldfiniteL} by combining \eqref{eqn:stabilityofprhomean-fieldfiniteLfinitet} with \eqref{intervalbound3}.
\end{proof}

\subsection{Proof of Theorem \ref{thm:contlimit}}

We denote by $\theta_m(s;t)$ the solution to \eqref{eqn:gradient_theta_M} with initial $\{\theta_m(0;t)\}^M_{m=1}$ that are $i.i.d$ drawn from $\rho_{\ini}(\theta,t)$\,. Further, $\theta_{l,m}(s)$ is a solution to \eqref{eqn:gradient_theta_LM} with initial value $\theta_{l,m}(0)=\theta_m\left(0;\frac{l}{L}\right)$ for $0\leq l\leq L-1$ and $1\leq i\leq M$. Define
\[
\begin{aligned}
&\mathcal{L}^{\dis,\sup}_{\ini} =\sup_{0\leq t\leq1}\left\{\frac{1}{M}\sum^M_{i=1}|\theta_m(0;t)|^2\right\}
\end{aligned}
\]
and recall  $\mathcal{R}_{\ini}$ defined in \eqref{eqn:L}. According to Definition \ref{def:meanadmissible2}, $\mathcal{L}^{\dis,\sup}_{\ini}$ is bounded with high probability. Then, we have the following lemma:
\begin{lemma} For fixed $S>0$ and any $s\in[0,S]$, there exists a constant $C(S,\mathcal{R}_\ini,\mathcal{L}^{\dis,\sup}_{\ini})$ depending only on $S,\mathcal{R}_{\ini},\mathcal{L}^{\dis,\sup}_{\ini}$ such that 
\begin{equation}\label{boundofthetasolutiondisfiniteL}
\begin{aligned}
&\frac{1}{M}\sum^M_{m=1}|\theta_{l,m}(s)|^2\leq C(S,\mathcal{R}_\ini,\mathcal{L}^{\dis,\sup}_{\ini})\,,\\
&\{\theta_{l,m}(s)\}^{M,L}_{m=1,l=1}\subset \left\{\theta\middle||
\theta_{[1]}|\leq C(S,\mathcal{R}_\ini,\mathcal{L}^{\dis,\sup}_{\ini})\right\}\,,
\end{aligned}
\end{equation}
Furthermore, the ODE solution and $p_{\Theta_{L,M}(s)}$ are bounded as follows, for any $x$ in the support of $\mu$ and $l=0,1,\dotsc,L-1$:
\begin{equation}\label{boundofxsolutiondisfiniteL}
\left|Z_{\Theta_{L,M}(s)}(l+1;x)\right|\leq C\left(S,\mathcal{L}^{\dis,\sup}_{\ini},\mathcal{R}_{\ini}\right)\,,
\end{equation}
and
\begin{equation}\label{boundofprhosolutiondisfiniteL}
\left|p_{\Theta_{L,M}(s)}(l;x)\right|\leq C\left(S,\mathcal{L}^{\dis,\sup}_{\ini},\mathcal{R}_{\ini}\right)\,.
\end{equation}
\end{lemma}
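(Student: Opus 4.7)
The proof will parallel the structure used for the mean-field PDE in Corollary \ref{cor:C.3} combined with Proposition \ref{proposition:deltaEODE}, transferred to the discrete ResNet setting \eqref{eqn:gradient_theta_LM}. The plan has three main stages: (i) establish monotonic cost decay along the discrete gradient flow; (ii) use this decay to control the globally-averaged second moment $\mathcal{L}_{\Theta_{L,M}(s)}=\frac{1}{ML}\sum_{l,m}|\theta_{l,m}(s)|^2$ uniformly in $L,M,s\in[0,S]$; (iii) use this global-in-$l$ bound to drive a Gr\"onwall argument at each fixed layer index, yielding the per-layer bounds. The $Z$ and $p$ bounds then follow immediately by invoking Lemma \ref{prop:wmean-fieldlimitfiniteL}.

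First, differentiating $E(\Theta_{L,M}(s))$ and using \eqref{eqn:gradient_theta_LM} gives $\frac{\rd E}{\rd s}=-ML\sum_{l,m}\left|\frac{\partial E}{\partial \theta_{l,m}}\right|^2\leq 0$, so that $E(\Theta_{L,M}(s))\leq E(\Theta_{L,M}(0))\leq C(\mathcal{L}^{\dis,\sup}_{\ini})$ after bounding $Z_{\Theta_{L,M}(0)}$ via \eqref{boundofxsolutionfiniteL}. Next, for $\mathcal{L}(s):=\mathcal{L}_{\Theta_{L,M}(s)}$, Cauchy--Schwarz in the $(l,m)$-sum and the identity $\sum_{l,m}\left|\frac{\partial E}{\partial\theta_{l,m}}\right|^2=-\frac{1}{ML}\frac{\rd E}{\rd s}$ yield
\[
\frac{\rd\mathcal{L}}{\rd s}\leq \frac{2}{ML}\Bigl(\sum_{l,m}|\theta_{l,m}|^2\Bigr)^{1/2}\Bigl(\sum_{l,m}\Bigl|ML\tfrac{\partial E}{\partial \theta_{l,m}}\Bigr|^2\Bigr)^{1/2}\leq 2\sqrt{\mathcal{L}(s)}\sqrt{-\tfrac{\rd E}{\rd s}},
\]
so $\frac{\rd\sqrt{\mathcal{L}}}{\rd s}\leq\sqrt{-\rd E/\rd s}$. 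Integration in $s$ and a further Cauchy--Schwarz in time give $\sqrt{\mathcal{L}(s)}\leq\sqrt{\mathcal{L}^{\dis,\sup}_{\ini}}+\sqrt{S\,E(\Theta_{L,M}(0))}\leq C(S,\mathcal{L}^{\dis,\sup}_{\ini})$, which is the discrete analog of Corollary \ref{cor:C.3}.

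Third, for the per-layer bound in \eqref{boundofthetasolutiondisfiniteL}, use \eqref{eqn:partialEfiniteL} to write $\frac{\rd\theta_{l,m}}{\rd s}=-\mathbb{E}_{x\sim\mu}\bigl(\partial_\theta f(Z_{\Theta_{L,M}}(l;x),\theta_{l,m})\,p_{\Theta_{L,M}}(l;x)\bigr)$. Combining \eqref{eqn:derivativebound} with the a priori bounds \eqref{boundofxsolutionfiniteL} and \eqref{boundofprhofiniteL}, whose constants are now controlled by the $s$-uniform bound on $\mathcal{L}(s)$, produces
\[
\frac{\rd}{\rd s}\Bigl(\tfrac{1}{M}\sum_m|\theta_{l,m}(s)|^2\Bigr)\leq C(S,\mathcal{L}^{\dis,\sup}_{\ini})\Bigl(\tfrac{1}{M}\sum_m|\theta_{l,m}(s)|^2+1\Bigr),
\]
and since $\frac{1}{M}\sum_m|\theta_{l,m}(0)|^2=\frac{1}{M}\sum_m|\theta_m(0;l/L)|^2\leq\mathcal{L}^{\dis,\sup}_{\ini}$, Gr\"onwall closes the first estimate. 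For the $\theta_{[1]}$ support bound, Assumption \ref{assum:f} part 3 gives $|\partial_{\theta_{[1]}}f|\leq C_3(|x|)(|\theta_{[1]}|+1)$, so the same argument componentwise yields $|\theta_{l,m,[1]}(s)|\leq C(S,\mathcal{L}^{\dis,\sup}_{\ini})(|\theta_{l,m,[1]}(0)|+1)\leq C(S,\mathcal{R}_{\ini},\mathcal{L}^{\dis,\sup}_{\ini})$. The final bounds \eqref{boundofxsolutiondisfiniteL} and \eqref{boundofprhosolutiondisfiniteL} then follow directly from Lemma \ref{prop:wmean-fieldlimitfiniteL} \eqref{boundofxsolutionfiniteL} and \eqref{boundofprhofiniteL}, applied with the now-controlled value of $\mathcal{L}_{\Theta_{L,M}(s)}$.

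The main technical obstacle is stage (ii): the identification that $-\frac{\rd E}{\rd s}$ equals $ML\sum|\partial E/\partial\theta_{l,m}|^2$ exactly, and that Cauchy--Schwarz against $\sum|\theta_{l,m}|^2$ produces the clean dimensional cancellation needed for an $L,M$-independent bound on $\sqrt{\mathcal{L}(s)}$. Here the factor $ML$ in the rescaled gradient flow \eqref{eqn:gradient_theta_LM} is essential; without this rescaling, the bound would degrade as $L,M\to\infty$. Once this stage is in place, stages (i), (iii), and the $Z,p$ corollaries are entirely routine Gr\"onwall arguments mirroring the mean-field proofs already carried out in Appendix \ref{sec:proofofthmWassgradientflows}.
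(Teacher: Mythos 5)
Your proof is correct and, since the paper declares this lemma's proof ``quite similar to that of Lemma~\ref{lem:boundofdiscrete}'' without supplying details, your write-up fleshes out exactly the argument the authors intend: the discrete analog of the chain Corollary~\ref{cor:C.3} $\to$ Proposition~\ref{proposition:deltaEODE} $\to$ Lemma~\ref{prop:wmean-fieldlimitfiniteL}. In particular, your stage (ii) reproduces, in the finite-$L$ setting, the key mechanism of Corollary~\ref{cor:C.3}: the exact identity $\frac{\rd E}{\rd s}=-ML\sum_{l,m}|\partial E/\partial\theta_{l,m}|^2$, a Cauchy--Schwarz step against $\sum_{l,m}|\theta_{l,m}|^2$ that makes the $ML$ factors cancel, and a temporal Cauchy--Schwarz producing $\sqrt{\mathcal{L}(s)}\leq\sqrt{\mathcal{L}(0)}+\sqrt{S\,E(\Theta_{L,M}(0))}$. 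Your stage (iii) then mirrors the per-$t$ Gr\"onwall of Proposition~\ref{proposition:deltaEODE}, applied here per layer index $l$, with the $\theta_{[1]}$ support bound handled via the second inequality of \eqref{eqn:k1smallerthank} (and via \eqref{eqn:derivativebound} directly when $k_1=k$). The only minor point worth noting is that the differential inequality $\rd\sqrt{\mathcal{L}}/\rd s\le\sqrt{-\rd E/\rd s}$ should be understood with a standard regularization (replace $\mathcal{L}$ by $\mathcal{L}+\delta$ and let $\delta\to0$) to avoid non-differentiability at $\mathcal{L}=0$; this is routine and does not affect the conclusion.
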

The proof is quite similar to that of Lemma \ref{lem:boundofdiscrete}, so we omit the details.

We are now ready to prove Theorem \ref{thm:contlimit}.
\begin{proof}[Proof of Theorem \ref{thm:contlimit}] 
From \eqref{eqn:boundofLSdis} and \eqref{boundofthetasolutiondisfiniteL} we obtain for all $t\in[0,1]$, $s\in[0,S]$,
\revise{\begin{equation}\label{thetalmS}
\begin{aligned}
&\max\left\{\frac{1}{M}\sum^M_{m=1}|\theta_{l,m}(s)|^2,\frac{1}{M}\sum^M_{m=1}|\theta_{m}(s;t)|^2\right\}\leq C(S,\mathcal{R}_\ini,\mathcal{L}^{\dis,\sup}_{\ini})\,,\\
&\{\theta_{m}(s;t)\}^{M}_{m=1}\cup \{\theta_{l,m}(s)\}^{M,L}_{m=1,l=1}\subset \left\{\theta\middle||
\theta_{[1]}|\leq C(S,\mathcal{R}_\ini,\mathcal{L}^{\dis,\sup}_{\ini})\right\}\,,
\end{aligned}
\end{equation}}
where $C(S,\mathcal{R}_\ini,\mathcal{L}^{\dis,\sup}_{\ini})$ is a constant depending on $S,\mathcal{R}_\ini$, and $\mathcal{L}^{\dis,\sup}_{\ini}$.

From a similar derivation to \eqref{eqn:differenceofE3}, we obtain
\begin{equation}\label{eqn:differenceofE2}
\left|E(\Theta(s;\cdot))-E(\Theta_{L,M}(s))\right|\leq C(S,\mathcal{R}_\ini,\mathcal{L}^{\dis,\sup}_{\ini})\left|Z_{\Theta(s)}(1;x)-Z_{\Theta_{L,M}(s)}(L;x)\right|\,.
\end{equation}
Thus, to prove the theorem, it suffices to prove that $\left|Z_{\Theta_{L,M}(s)}(L;x)-Z_{\Theta(s)}(1;x)\right|$ is small. For this purpose, according to \eqref{eqn:stabilityofZmean-fieldfiniteL}, we need to bound the quantity 
\begin{equation}\label{eq:fg2}
\frac{1}{M}\sum^{L-1}_{l=0}\sum^M_{m=1}\int^{\frac{l+1}{L}}_{\frac{l}{L}}|\Delta_{t,m}(s)|^2\rd t\,,
\end{equation}
where $\Delta_{t,m}(s)=\theta_{l,m}(s)-\theta_m(s;t)$.
The next part of the proof contains the required bound.

First, using \eqref{eqn:gradient_theta_LM} and \eqref{eqn:gradient_theta_M}, we obtain that
\begin{equation}\label{iterationmean-fieldfiniteL}
\begin{aligned}
&\frac{\rd|\Delta_{t,m}(s)|^2}{\rd s}\\
& =-2\left\langle \Delta_{t,m}(s),\mathbb{E}_{x\sim\mu}\left(\partial_\theta f(Z_{\Theta_{L,M}(s)}(l;x),\theta_{l,m}(s))p_{\Theta_{L,M}(s)}(l;x)-\partial_\theta f(Z_{\Theta(s)}(t;x),\theta_m(s;t))p_{\Theta(s)}(t;x)\right)\right\rangle\\
& =-2\left\langle \Delta_{t,m}(s),\underbrace{\mathbb{E}_{x\sim\mu}\left(\partial_\theta f(Z_{\Theta_{L,M}(s)}(l;x),\theta_{l,m}(s))p_{\Theta_{L,M}(s)}(l;x)-\partial_\theta f(Z_{\Theta(s)}(t;x),\theta_m(s;t))p_{\Theta_{L,M}(s)}(l;x)\right)}_{{\textrm{(I)}}}\right\rangle\\
&-2\left\langle \Delta_{t,m}(s),\underbrace{\mathbb{E}_{x\sim\mu}\left(\partial_\theta f(Z_{\Theta(s)}(t;x),\theta_m(s;t))p_{\Theta_{L,M}(s)}(l;x)-\partial_\theta f(Z_{\Theta(s)}(t;x),\theta_m(s;t))p_{\Theta(s)}(t;x)\right)}_{{\textrm{(II)}}}\right\rangle\,. 
\end{aligned}
\end{equation}
To bound (I), we use \eqref{boundofprhosolutiondisfiniteL} to obtain 
\begin{equation}\label{boundofIImean-fieldfiniteL}
\begin{aligned}
|{\textrm{(I)}}| & \leq C(S,\mathcal{R}_\ini,\mathcal{L}^{\dis,\sup}_{\ini})\EE_{x\sim\mu}\left(\left|\partial_\theta f(Z_{\Theta_{L,M}(s)}(l;x),\theta_{l,m}(s))-\partial_\theta f(Z_{\Theta(s)}(t;x),\theta_m(s;t))\right|\right)\\
& \leq C(S,\mathcal{R}_\ini,\mathcal{L}^{\dis,\sup}_{\ini})\left[\EE_{x\sim\mu}\left(\left|Z_{\Theta_{L,M}(s)}(l;x)-Z_{\Theta(s)}(t;x)\right|+|\theta_{l,m}(s)-\theta_m(s;t)|\right)\right]\,,
\end{aligned}
\end{equation}
where we use Assumption~\ref{assum:f} \eqref{eqn:part3}, \eqref{boundofxsolutiondis}, \eqref{boundofxsolutiondisfiniteL}, and \eqref{thetalmS} in the second inequality. 
To bound (II), we use \eqref{eqn:derivativebound}, \eqref{boundofxsolutiondis}, and \eqref{thetalmS} to obtain
\begin{equation}\label{boundofImean-fieldfiniteL}
|{\textrm{(II)}}|\leq C(S,\mathcal{R}_\ini,\mathcal{L}^{\dis,\sup}_{\ini})(|\theta_m(s;t)|+1)\EE_{x\sim\mu}\left(\left|p_{\Theta_{L,M}(s)}(l;x)-p_{\Theta(s)}(t;x)\right|\right)\,.
\end{equation}

By substituting \eqref{boundofImean-fieldfiniteL} and \eqref{boundofIImean-fieldfiniteL} into \eqref{iterationmean-fieldfiniteL}, we obtain 
\[
\begin{aligned}
&\frac{\rd|\Delta_{t,m}(s)|^2}{\rd s}\\
& \leq C(S,\mathcal{R}_\ini,\mathcal{L}^{\dis,\sup}_{\ini})|\Delta_{t,m}(s)|^2\\
& \quad +C(S,\mathcal{R}_\ini,\mathcal{L}^{\dis,\sup}_{\ini})|\Delta_{t,m}(s)|\EE_{x\sim\mu}\left(\left|Z_{\Theta_{L,M}(s)}(l;x)-Z_{\Theta(s)}(t;x)\right|\right)\\
&\quad +C(S,\mathcal{R}_\ini,\mathcal{L}^{\dis,\sup}_{\ini})|\Delta_{t,m}(s)|(|\theta_m(s;t)|+1)\EE_{x\sim\mu}\left(\left|p_{\Theta_{L,M}(s)}(l;x)-p_{\Theta(s)}(t;x)\right|\right)\,.
\end{aligned}
\]
Using H\"older's inequality similar to \eqref{eqn:example}, we obtain
\begin{equation}\label{eqn:finiteML}
\begin{aligned}
&\frac{\rd\frac{1}{M}\sum^M_{m=1}|\Delta_{t,m}(s)|^2}{\rd s}\\
& \leq C(S,\mathcal{R}_\ini,\mathcal{L}^{\dis,\sup}_{\ini})\left(\frac{1}{M}\sum^M_{m=1}|\Delta_{t,m}(s)|^2\right)\\
&\quad +C(S,\mathcal{R}_\ini,\mathcal{L}^{\dis,\sup}_{\ini})\EE_{x\sim\mu}\left(\left|p_{\Theta_{L,M}(s)}(t;x)-p_{\Theta(s)}(t;x)\right|^2\right)\\
& \quad +C(S,\mathcal{R}_\ini,\mathcal{L}^{\dis,\sup}_{\ini})\EE_{x\sim\mu}\left(\left|Z_{\Theta_{L,M}(s)}(t;x)-Z_{\Theta(s)}(t;x)\right|^2\right)\,,
\end{aligned}
\end{equation}

By substituting \eqref{eqn:stabilityofZmean-fieldfiniteL} and \eqref{eqn:stabilityofpmean-fieldfiniteL} into \eqref{eqn:finiteML}, we obtain
\[
\frac{\rd\frac{1}{M}\sum^{L-1}_{l=0}\sum^M_{m=1}\int^{\frac{l+1}{L}}_{\frac{l}{L}}|\Delta_{t,m}(s)|^2\rd t}{\rd s}\leq C(S,\mathcal{R}_\ini,\mathcal{L}^{\dis,\sup}_{\ini})\left(\frac{1}{M}\sum^{L-1}_{l=0}\sum^M_{m=1}\int^{\frac{l+1}{L}}_{\frac{l}{L}}|\Delta_{t,m}(s)|^2\rd t+\frac{1}{L^2}\right)\,,
\]
which implies, from Gr\"onwall's inequality, that
\begin{equation}\label{eqn:finiteLfinal}
\frac{1}{M}\sum^{L-1}_{l=0}\sum^M_{m=1}\int^{\frac{l+1}{L}}_{\frac{l}{L}}|\Delta_{t,m}(s)|^2\rd t\leq C(S,\mathcal{R}_\ini,\mathcal{L}^{\dis,\sup}_{\ini})\left(\frac{1}{M}\sum^{L-1}_{l=0}\sum^M_{m=1}\int^{\frac{l+1}{L}}_{\frac{l}{L}}|\Delta_{t,m}(0)|^2\rd t+\frac{1}{L^2}\right)\,.
\end{equation}
We have thus established the bound \eqref{eq:fg2}. We also have
\begin{equation}\label{differenceinitialfiniteL}
\frac{1}{M}\sum^{L-1}_{l=0}\sum^M_{m=1}\int^{\frac{l+1}{L}}_{\frac{l}{L}}|\Delta_{t,m}(0)|^2\rd t=\frac{1}{M}\sum^{L-1}_{l=0}\sum^M_{m=1}\int^{\frac{l+1}{L}}_{\frac{l}{L}}\left|\theta_m\left(0;\frac{l}{L}\right)-\theta_m(0;t)\right|^2\rd t\,.
\end{equation}

To complete the proof, we use \eqref{diffrencesmall} and take $M\geq \frac{8C_3}{\eta}$ to obtain
\[
\mathbb{P}\left(\frac{1}{M}\sum^{L-1}_{l=0}\sum^M_{m=1}\int^{\frac{l+1}{L}}_{\frac{l}{L}}|\Delta_{t,m}(0)|^2\rd t\leq \frac{C_4}{L^2}\right)\geq 1-\frac{\eta}{8}\,.
\]
According to \eqref{boundLdis0}, when $M>\frac{8C_3}{\eta}$,
\begin{equation}\label{eqn:Lbound}
\mathbb{P}\left(\mathcal{L}^{\dis,\sup}_{\ini}\leq C_4\right)\geq 1-\frac{\eta}{8}\,.
\end{equation}

By using these expressions to substitute $\frac{1}{M}\sum^{L-1}_{l=0}\sum^M_{m=1}\int^{\frac{l+1}{L}}_{\frac{l}{L}}|\Delta_{t,m}(0)|^2$ and $\mathcal{L}^{\dis,\sup}_{\ini}$ into \eqref{eqn:finiteLfinal}, we find that there exists a constant $C'(C_4,\mathcal{R}_{\ini},S)$ depending on $C_4$, $\mathcal{R}_{\ini}$, and $S$ such that if 
\[
M\geq \frac{8C_3}{\eta},\quad L\geq \frac{C'(C_4,\mathcal{R}_{\ini},S)}{\epsilon},
\]
then we have
\begin{equation}\label{eqn:highprobbound3}
\mathbb{P}\left(\frac{1}{M}\sum^{L-1}_{l=0}\sum^M_{m=1}\int^{\frac{l+1}{L}}_{\frac{l}{L}}|\Delta_{t,m}(s)|^2\rd t\leq \epsilon^2\right)\geq 1-\frac{\eta}{4}\,.
\end{equation}
Using \eqref{eqn:highprobbound3}, \eqref{eqn:Lbound}, and \eqref{thetalmS} to bound the right hand side of \eqref{eqn:stabilityofZmean-fieldfiniteL}, we find that there exists another constant $C''(C_4,\mathcal{R}_{\ini},S)$ depending on $C_4$, $\mathcal{R}_{\ini}$, and $S$ such that if 
\[
M\geq \frac{8C_3}{\eta},\quad L\geq \frac{C''(C_4,\mathcal{R}_{\ini},S)}{\epsilon},
\]
then we have
\[
\mathbb{P}\left(\left|Z_{\Theta(s)}(1;x)-Z_{\Theta_{L,M}(s)}(L;x)\right|\leq \epsilon\right)\geq 1-\frac{\eta}{2}\,,
\]
By using this result and \eqref{eqn:Lbound} in conjunction with \eqref{eqn:differenceofE2}, we complete the proof.
\end{proof}

\section{Proof of global convergence result}\label{sec:proofofconvergencetoglobalminimal}
Intuitively, if the equation~\eqref{eqn:Wassgradientflow} converges to a stationary point, denote by $\rho_\infty$, so that $\partial_s\rho_\infty = 0$, then
\[
\nabla_\theta\left.\frac{\delta E}{\delta \rho}\right|_{\rho_\infty(\cdot,\cdot)}(\theta,t)=0,\quad \rho_\infty(\theta,t)\text{-}\mathrm{a.e.}\ \theta\in\mathbb{R}^k\,,\quad \mathrm{a.e.}\ t\in[0,1]\,.
\]
The rest of the analysis shows that  $E(\rho_\infty)=0$ when this happens. However, it is not direct because the condition above only suggests the fact that $\frac{\delta E}{\delta\rho}|_{\rho_\infty(\dot,\dot)}$ is a piecewise constant function. We need a stronger result that shows this constant has to be zero. This is achieved by Proposition~\ref{prop:localisglobal}. To show this proposition, we follow the proof in~\citep{pmlr-v119-lu20b} that explores the expressive power of $f(x,\theta)$, particularly the universal kernel property of Assumption~\ref{assum:f}. It is this proposition that identifies stationary points with the global minimizer.

We should mention that the zero loss was demonstrated by~\citet{NEURIPS2018_a1afc58c} for the $2$-layer problem where the stability equates to zero-loss due to convexity. The extension to the multi-layer case is more difficult since convexity is not present.

\subsection{Proof of Proposition \ref{prop:localisglobal}}\label{sec:proofofproplocal}

We first  prove a lower bound for $p_\rho$ in the following lemma.
\begin{lemma}\label{lem:lowerboundofprho} Suppose that $\rho\in \mathcal{C}([0,1];\mathcal{P}^2)$ and that $p_\rho$ is a solution to \eqref{eqn:prho}. Denoting
\[
\mathcal{L}_\rho=\int^1_0\int_{\mathbb{R}^k}|\theta|^2d\rho(\theta,t)\,,
\]
then for any $t\in[0,1]$ we have that
\begin{equation}\label{eqn:lowerboundofprho}
\EE_{x\sim\mu}\left(|p_\rho(t;x)|^2\right)\geq Q(\mathcal{L}_{\rho})E(\rho)\,,
\end{equation}
where $Q:\mathbb{R}_+\rightarrow\mathbb{R}_+$ is a decreasing function. 
\end{lemma}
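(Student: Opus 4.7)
The plan is to propagate a lower bound from the terminal condition at $t=1$ back to arbitrary $t\in[0,1]$ using the linear structure of the ODE~\eqref{eqn:prho} for $p_\rho$.

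First, I would establish the bound at $t=1$. Recall that $p_\rho(1;x) = (g(Z_\rho(1;x)) - y(x))\nabla g(Z_\rho(1;x))$. By Assumption~\ref{ass:g} item 7, $|\nabla g|$ has a positive lower bound $c_0 := \inf_x |\nabla g(x)|^2 > 0$. Thus pointwise in $x$,
\[
|p_\rho(1;x)|^2 \geq c_0 \, (g(Z_\rho(1;x))-y(x))^2 ,
\]
and taking expectation over $x\sim \mu$ and using the definition~\eqref{eqn:costfunctioncont2} of $E(\rho)$ gives $\mathbb{E}_{x\sim\mu}(|p_\rho(1;x)|^2) \geq 2c_0 E(\rho)$.

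Next, I would propagate this bound backward. Writing $M(\tau,x) = \int_{\mathbb{R}^k} \partial_z f(Z_\rho(\tau;x),\theta)\rd\rho(\theta,\tau)$, the equation~\eqref{eqn:prho} yields
\[
\frac{\rd |p_\rho(\tau;x)|^2}{\rd \tau} = -2 p_\rho^\top M(\tau,x) p_\rho ,\qquad \left|\frac{\rd |p_\rho|^2}{\rd \tau}\right|\leq 2|M(\tau,x)| |p_\rho|^2 .
\]
Applying Gr\"onwall's inequality on $[t,1]$ gives the pointwise lower bound
\[
|p_\rho(t;x)|^2 \geq |p_\rho(1;x)|^2 \exp\!\left(-2\int_t^1 |M(\tau,x)|\rd\tau\right) .
\]
From~\eqref{boundofrateprho} in the proof of Lemma~\ref{lem:prhoprop1} we have $|M(\tau,x)|\leq C(\mathcal{L}_\rho)\int_{\mathbb{R}^k}(|\theta|^2+1)\rd\rho(\theta,\tau)$, and integrating in $\tau$ yields $\int_0^1 |M(\tau,x)|\rd\tau \leq C(\mathcal{L}_\rho)(\mathcal{L}_\rho+1)$, a bound that is uniform in $x$ and monotonically increasing in $\mathcal{L}_\rho$.

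Combining the two steps and taking expectation over $x\sim \mu$ gives
\[
\mathbb{E}_{x\sim\mu}\left(|p_\rho(t;x)|^2\right) \geq 2c_0 \exp\!\left(-2C(\mathcal{L}_\rho)(\mathcal{L}_\rho+1)\right) E(\rho) ,
\]
so one can take $Q(\mathcal{L}_\rho) := 2c_0 \exp(-2C(\mathcal{L}_\rho)(\mathcal{L}_\rho+1))$, which is a decreasing function of $\mathcal{L}_\rho$ as required. No step presents a serious obstacle; the only care needed is that the Gr\"onwall estimate for $|p_\rho|^2$ must track the absolute value of its derivative (since we seek a lower bound), and the resulting exponential factor must be controlled independently of $x$, which is exactly what the $\mathcal{L}_\rho$-dependent bound on $|M|$ provides via Assumption~\ref{assum:f}.
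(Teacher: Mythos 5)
Your proposal is correct and follows essentially the same route as the paper: a lower bound on $\mathbb{E}(|p_\rho(1;x)|^2)$ from the terminal condition and Assumption~\ref{ass:g} item 7, followed by a backward Gr\"onwall argument on $|p_\rho|^2$ using the $\mathcal{L}_\rho$-dependent, $x$-uniform bound on $\int_{\mathbb{R}^k}\partial_z f\,\rd\rho$. The explicit form $Q(\mathcal{L}_\rho)=2c_0\exp(-2C(\mathcal{L}_\rho)(\mathcal{L}_\rho+1))$ you extract is exactly what the paper's calculation yields.
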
 
\begin{proof}
Recall that the initial condition for $p_\rho$ in~\eqref{eqn:prho} is:
\[
p_\rho(1;x)=\left(g(Z_{\rho}(1;x))-y(x)\right)\nabla g(Z_{\rho}(1;x))\,,
\]
so from Assumption~\ref{assum:f}, we have
\[
\EE_{x\sim\mu}\left(|p_{\rho}(1;x)|^2\right)\geq \left(\inf_{x\in\mathbb{R}^d}|\nabla g(x)|\right)^2E(\rho)\,.
\]
Further, since the equation is linear, we have
\[
\frac{\partial p^\top_\rho}{\partial t}=-p^\top_\rho\int_{\mathbb{R}^k}\partial_zf(Z_\rho,\theta)d\rho(\theta,t)\,.
\]
According to equation~\eqref{eqn:derivativebound} in Assumption~\ref{assum:f}, we obtain
\begin{align*}
\left|\int_{\mathbb{R}^k}\partial_zf(Z_{\rho}(t;x),\theta)\rd\rho_1(\theta,t)\right| & \leq C(Z_{\rho}(t;x))\int_{\mathbb{R}^k}(|\theta|^2+1)\rd\rho(\theta,t)\\
& \leq C(\mathcal{L}_\rho)\int_{\mathbb{R}^k}(|\theta|^2+1)\rd\rho_1(\theta,t)\,,
\end{align*}
where we use \eqref{boundofxsolution} in the second inequality. By combining the last two bounds in the usual way, we obtain
\[
\frac{\rd|p_\rho(t;x)|^2}{\rd t}\leq \left(2C(\mathcal{L}_{\rho})\int_{\mathbb{R}^k}(|\theta|^2+1)d\rho(\theta,t)\right)|p_\rho(t;x)|^2\,,
\]
By solving the equation, we have
\[
|p_\rho(t;x)|^2\geq |p_{\rho}(1;x)|^2\exp\left(-2C(\mathcal{L}_{\rho})\int^1_t\int_{\mathbb{R}^k}(|\theta|^2+1)d\rho(\theta,t)\right)\geq C(\mathcal{L}_{\rho})|p_{\rho}(1;x)|^2\,.
\]
The proof is finalized by taking expectation on both sides, and note that
monotonicity comes from the format of the exponential term.
\end{proof}

We are now ready to prove Proposition~\ref{prop:localisglobal}.
\begin{proof}[Proof of Proposition~\ref{prop:localisglobal}]
Denote
\[
\mathcal{L}_{\rho}=\int^1_0\int_{\mathbb{R}^k}|\theta|^2\rd\rho(\theta,t)\rd t\,.
\]
According to existence and uniqueness of the solution to \eqref{eqn:meancontRes}, for any $t\in[0,1]$, we can construct a map $\mathcal{Z}_t$ such that
\[
\mathcal{Z}_t(x)=Z_{\rho}\left(t;x\right)\,.
\]
Since the trajectory can be computed backwards in time, $\mathcal{Z}^{-1}_t$ is well defined. 
Further, we denote $\mu^\ast_t=(\mathcal{Z}_t)_{\sharp}\mu$ to be the pushforward of $\mu$ under map $\mathcal{Z}_t$ and let
\[
p^\ast(t;x)=p_\rho\left(t;\mathcal{Z}^{-1}_t(x)\right)\,.
\]
By Assumption \ref{assum:f} and classical ODE theory, $\mathcal{Z}_t$ and $\mathcal{Z}^{-1}_t$ are both continuous maps in $x$, and so are $p_\rho(t;x)$ and $p^*(t;x)$. With the change of variables, we have for all $t\in[0,1]$ that
\begin{equation}\label{changeofvariable}
\frac{\delta E(\rho)}{\delta \rho}(\theta,t)=\int_{\mathbb{R}^d}p^\top_\rho(t;x)f(\mathcal{Z}_t(x),\theta)\rd\mu=\int_{\mathbb{R}^d}(p^\ast(t;x))^\top f(x,\theta)\rd\mu^\ast_t\,.
\end{equation}
For a fixed $t_0\in[0,1]$,  we have boundedness of the Jacobian from Lemma \ref{lem:prhoprop1}, meaning  that 
\[
\sup_{x\in\textrm{supp}(\mu)}\left\|\frac{\rd\mu^\ast_{t_0}(\mathcal{Z}^{-1}_t(x))}{\rd\mu(x)}\right\|_2\leq C(\mathcal{L}_{\rho}).
\]
As a consequence, $\mu^\ast_{t_0}(x)$ has a compact support since $\mu(x)$ does. 
We denote the size of the support by $R^\ast$, defined to be a real number such that  $\text{supp}\left(\mu^\ast_{t_0}(x)\right)\subset \left\{x:|x|<R^\ast\right\}$.

We now derive a general formula for $\int\frac{\delta E(\rho)}{\delta \rho}(\theta,t)\rd\nu$. 
Recalling \eqref{changeofvariable}, we have
\begin{equation}\label{eqn:tryprovenegative2}
\begin{aligned}
\int_{\mathbb{R}^k}\frac{\delta E(\rho)}{\delta \rho}(\theta,t_0)\rd\nu(\theta) & =\int_{\mathbb{R}^d}(p^\ast(t_0;x))^\top\left(\int_{\mathbb{R}^k}f(x,\theta)\rd\nu(\theta)\right)\rd\mu^\ast_{t_0}(x)\\
& =\int_{\mathbb{R}^d}(p^\ast(t_0;x))^\top\left(\int_{\mathbb{R}^k}f(x,\theta)\rd{\nu}(\theta)+p^\ast(x,t_0)\right)\rd\mu_{t_0}(x)\\
& \quad-\int_{\mathbb{R}^d}(p^\ast(t_0;x))^\top p^\ast(x,t_0) \rd\mu^\ast_{t_0}(x)\,.
\end{aligned}
\end{equation}

Noticing that according to Lemma~\ref{lem:lowerboundofprho}, if $E(\rho)\neq 0$, the second term above is strictly negative (less than $-Q(\mathcal{L}_\rho)E(\rho)$), the goal then is to find $\nu$ for which $\int\rd\nu = 0$ that makes  the first term  small, so that the right-hand side in \eqref{eqn:tryprovenegative2} is negative. Defining the continuous function $h$ to be
\[
h(x)=p^\ast\left(t_0;x\right)+\int_{\mathbb{R}^k}f(x,\theta)\rd\rho(\theta,t_0)\,,
\]
then according to Assumption~\ref{assum:f}, for arbitrarily small $\epsilon$, there is a $\hat{\nu}$ so that $\int\hat{\nu}=0$ and
\[
\left\|h(x)-\int_{\mathbb{R}^k}f(x,\theta)\rd\hat{\nu}(\theta)\right\|_{L^\infty_{|x|<R^\ast}}\leq \epsilon\,.
\]
Setting $\nu = \rho-\hat{\nu}$ and substituting into the first term of~\eqref{eqn:tryprovenegative2}, we obtain
\begin{equation}\label{prhomustarsmall}
\begin{aligned}
&\int_{\mathbb{R}^d}(p^\ast(t_0;x))^\top\left(\int_{\mathbb{R}^k}f(x,\theta)\rd{\nu}(\theta)+p^\ast(x,t_0)\right)\rd\mu_{t_0}(x)\\
& \leq\int_{\mathbb{R}^d}\left|p^\ast(t_0;x)\right|\left|h(x)-\int_{\mathbb{R}^k}f(x,\theta)\rd\hat{\nu}(\theta)\right|\rd\mu^\ast_{t_0}(x)\\
& \leq \left\|p^\ast(t_0;x)\right\|_{L^\infty_{|x|<R^*}}\left\|h(x)-\int_{\mathbb{R}^k}f(x,\theta)\rd\hat{\nu}(\theta)\right\|_{L^\infty_{|x|<R^\ast}}\,.
\end{aligned}
\end{equation}
By choosing $\epsilon$ small enough that \eqref{prhomustarsmall} is less than $\frac{1}{2}Q(\mathcal{L}_\rho)E(\rho)$, we have from \eqref{eqn:tryprovenegative2} $\int\frac{\delta E(\rho)}{\delta\rho}(\theta,t_0)\rd\nu(\theta)<0$, completing the proof.
\end{proof}

\subsection{2-homogeneous case: Proof of Theorem~\ref{thm:globalminimal1}}\label{sec:proofof2homogeneous}

\revise{We first give an example of $2$-homogeneous activation function that satisfy Assumption~\ref{assum:f}, and~\ref{assum:2-homogeneous}.
\begin{remark}\label{re:H2} A function that satisfies Assumption~\ref{assum:f} and the 2-homogeneous property of Assumption~\ref{assum:2-homogeneous} is $
f(x,\theta)=f(x,\theta_{[1]},\theta_{[2]})=\sigma(\theta_{[1]}x+\theta_{[2]})\exp(-|x|^2)$, where $\theta_{[1]}\in\mathbb{R}^{d\times d}$, $\theta_{[2]}\in\mathbb{R}^d$, and $\sigma(x)=|\max\{x,0\}|^2$ applied componentwise. 
\end{remark}
}
Before proving the Theorem \ref{thm:globalminimal1}, we first introduce the following lemma, which shows that the separation property is preserved in the training process. Our proof of this result is adapted from \citep{NEURIPS2018_a1afc58c}. 
%
\begin{lemma}\label{lem:seperateionpreserve}
Let Assumptions~\ref{assum:f} and \ref{ass:g} hold, and suppose that $\rho_{\ini}(\theta,t)$ is admissible with compact support. Let $\rho(\theta,t,s)\in \mathcal{C}([0,\infty);\mathcal{C}([0,1];\mathcal{P}^2))$ solve~\eqref{eqn:Wassgradientflow}. If there exists $t_0\in[0,1]$, so that the initial condition $\rho_{\ini}(\theta,t_0)$ separates the spheres $r_a\mathbb{S}^{k-1}$ and $r_b\mathbb{S}^{k-1}$ for some $0<r_a<r_b$, then for any $s_0\in[0,\infty)$, $\rho(\theta,t_0,s_0)$ separates the spheres $r'_a\mathbb{S}^{k-1}$ and $r'_b\mathbb{S}^{k-1}$ for some $0<r'_a<r'_b$.
\end{lemma}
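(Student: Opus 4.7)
The strategy is to realise $\rho(\theta,t_0,s_0)$ as the pushforward of $\rho_{\ini}(\theta,t_0)$ under a flow map $\Phi_{s_0}:\mathbb{R}^k\to\mathbb{R}^k$, to exploit Assumption~\ref{assum:2-homogeneous} to show that $\Phi_{s_0}$ is a homeomorphism fixing the origin with exponentially controlled radial distortion, and then to transfer the separation property by a Jordan--Brouwer/intermediate-value argument. Concretely, freeze $t=t_0$ and consider the non-autonomous ODE
\[
\dot{\theta}(s)=-\nabla_\theta\frac{\delta E(\rho(s))}{\delta\rho}(\theta(s),t_0),\qquad \theta(0)=\theta_0,
\]
setting $\Phi_s(\theta_0)=\theta(s)$. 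The Lipschitz estimate~\eqref{Lipshictzupdate} together with the uniform-in-$s$ moment bound of Corollary~\ref{cor:C.3} guarantees that the driving vector field is globally Lipschitz in $\theta$ uniformly on $[0,s_0]$, so $\Phi_{s_0}$ is a homeomorphism of $\mathbb{R}^k$ (with inverse given by the backward flow). The particle representation underlying Theorem~\ref{thm:Wassgradientflows} then yields $\mathrm{supp}(\rho(\cdot,t_0,s_0))=\Phi_{s_0}\bigl(\mathrm{supp}(\rho_{\ini}(\cdot,t_0))\bigr)$.

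Next I would quantify the distortion of $\Phi_{s_0}$ using $2$-homogeneity. Since $f(\cdot,\lambda\theta)=\lambda^2 f(\cdot,\theta)$ and $p_\rho,Z_\rho$ are independent of $\theta$, the functional derivative $\theta\mapsto\frac{\delta E(\rho(s))}{\delta\rho}(\theta,t_0)$ is $2$-homogeneous, so its $\theta$-gradient is $1$-homogeneous and vanishes at $\theta=0$. This gives $\Phi_s(0)=0$ and, by the boundedness of the gradient on the unit sphere, $|\nabla_\theta\frac{\delta E(\rho(s))}{\delta\rho}(\theta,t_0)|\leq C_\ast|\theta|$ with $C_\ast$ finite and uniform on $[0,s_0]$. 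A Gr\"onwall estimate on $\tfrac{d}{ds}|\theta(s)|^2=-2\langle\theta,\nabla_\theta\tfrac{\delta E}{\delta\rho}\rangle$ then yields
\[
e^{-Cs_0}\,|\theta_0|\;\leq\;|\Phi_{s_0}(\theta_0)|\;\leq\;e^{Cs_0}\,|\theta_0|\qquad\text{for all }\theta_0\in\mathbb{R}^k,
\]
for some $C=C(s_0)<\infty$. Hence $\Phi_{s_0}(r_a\mathbb{S}^{k-1})$ and $\Phi_{s_0}(r_b\mathbb{S}^{k-1})$ are embedded topological $(k-1)$-spheres contained in the closed annuli $\{r_ae^{-Cs_0}\leq|\theta|\leq r_ae^{Cs_0}\}$ and $\{r_be^{-Cs_0}\leq|\theta|\leq r_be^{Cs_0}\}$, respectively.

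I then close the argument topologically. Choose $r'_a\in(0,r_ae^{-Cs_0})$ and $r'_b>r_be^{Cs_0}$. By Jordan--Brouwer, each $\Phi_{s_0}(r_j\mathbb{S}^{k-1})$ ($j\in\{a,b\}$) splits $\mathbb{R}^k$ into a bounded component (containing $0=\Phi_{s_0}(0)$) and an unbounded one, and the radial bounds above place $r'_a\mathbb{S}^{k-1}$ inside the bounded component of $\mathbb{R}^k\setminus\Phi_{s_0}(r_a\mathbb{S}^{k-1})$ and $r'_b\mathbb{S}^{k-1}$ inside the unbounded component of $\mathbb{R}^k\setminus\Phi_{s_0}(r_b\mathbb{S}^{k-1})$. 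Now let $\gamma:[0,1]\to\mathbb{R}^k$ be any continuous path with $\gamma(0)\in r'_a\mathbb{S}^{k-1}$, $\gamma(1)\in r'_b\mathbb{S}^{k-1}$; its pullback $\widetilde\gamma=\Phi_{s_0}^{-1}\circ\gamma$ is a continuous path starting in $B_{r_a}(0)$ and ending outside $\overline{B_{r_b}(0)}$. The intermediate value theorem applied to $u\mapsto|\widetilde\gamma(u)|$ yields $u_1<u_2$ with $|\widetilde\gamma(u_1)|=r_a$ and $|\widetilde\gamma(u_2)|=r_b$, so $\widetilde\gamma|_{[u_1,u_2]}$ is a continuous path from $r_a\mathbb{S}^{k-1}$ to $r_b\mathbb{S}^{k-1}$; by hypothesis it meets $\mathrm{supp}(\rho_{\ini}(\cdot,t_0))$ at some $\theta^\ast$, whence $\gamma$ meets $\Phi_{s_0}(\theta^\ast)\in\mathrm{supp}(\rho(\cdot,t_0,s_0))$.

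The main obstacle I expect is carefully justifying the two global structural facts about $\Phi_{s_0}$, namely that the non-autonomous flow extends to a genuine bi-continuous bijection of all of $\mathbb{R}^k$ up to time $s_0$ (not merely a local flow), and that the pushforward identity $\mathrm{supp}(\rho(\cdot,t_0,s_0))=\Phi_{s_0}(\mathrm{supp}(\rho_{\ini}(\cdot,t_0)))$ really follows from uniqueness of the particle representation. Both rely on the uniform moment bounds already secured in Corollary~\ref{cor:C.3}; the $2$-homogeneity is the ingredient that converts Lipschitz control into the clean homogeneous radial bound, which in turn makes the topological step work without appealing to more delicate degree-theoretic or winding-number tools.
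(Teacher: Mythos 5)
Your proposal is correct, and it follows the same strategic skeleton as the paper's proof: pass to the characteristic flow $\Phi_s$ of the particle ODE $\dot\theta=-\nabla_\theta\tfrac{\delta E(\rho(s))}{\delta\rho}(\theta,t_0)$, use $2$-homogeneity (which, note, the lemma statement does not explicitly list but both proofs tacitly assume, consistent with the lemma's role in the proof of Theorem~\ref{thm:globalminimal1}) to deduce that the driving vector field is $1$-homogeneous, hence vanishes at the origin and grows at most linearly, and then transfer the separation property through $\Phi_s$.

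Where you diverge is in how the topological conclusion is reached. The paper outsources it: having established the estimate $|\mathcal{P}_s^{-1}(\theta)|\le e^{Cs}|\theta|$, it invokes Proposition~C.11 of~\citep{NEURIPS2018_a1afc58c}, which reduces preservation of separation to precisely this one-sided stability of the inverse flow near $0$. You instead give a self-contained argument: you prove the two-sided sandwich $e^{-Cs_0}|\theta_0|\le|\Phi_{s_0}(\theta_0)|\le e^{Cs_0}|\theta_0|$ and then pull back an arbitrary test path $\gamma$ from $r'_a\mathbb{S}^{k-1}$ to $r'_b\mathbb{S}^{k-1}$, use the intermediate value theorem on $u\mapsto|\Phi_{s_0}^{-1}\circ\gamma(u)|$ to extract a sub-path joining $r_a\mathbb{S}^{k-1}$ to $r_b\mathbb{S}^{k-1}$, hit $\mathrm{supp}(\rho_{\ini}(\cdot,t_0))$ by hypothesis, and push forward. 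This is elementary and avoids citing an external proposition, at the cost of needing the extra lower bound on $|\Phi_{s_0}|$ (which you correctly obtain from the same Gr\"onwall estimate with the opposite sign). Two small remarks: (i) the paper derives the linear bound $|\nabla_\theta\tfrac{\delta E}{\delta\rho}(\theta,t_0)|\le C|\theta|$ from the Lipschitz estimate~\eqref{Lipshictzupdate} plus vanishing at $\theta=0$, whereas you derive it from $1$-homogeneity plus boundedness on the unit sphere; these are equivalent routes. (ii) The Jordan--Brouwer paragraph in your write-up is dispensable once you have the intermediate-value step, which already carries the whole topological weight — you never actually use the connected-component decomposition, only the magnitude bounds at the endpoints of the pulled-back path.
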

\begin{proof} For every fixed $0<s_0<S<\infty$, we note that the particle representation $\theta_\rho(s;t_0)$ of $\rho(\theta,t_0,s)$ updates the following equation:
\begin{equation}\label{eqn:particle1}
\left\{
\begin{aligned}
\frac{\rd \theta_\rho(s;t_0)}{\rd s} & =-\nabla_{\theta}\frac{\delta E(\rho(s))}{\delta \rho}\left(\theta_\rho(s;t_0),t_0\right),\quad s\in(0,S)\\
\theta_\rho(0;t_0) & =\theta\,.
\end{aligned}
\right.
\end{equation}
Define the map $\mathcal{P}_s(\theta)$ to be the solution map that solves the equation above for given initial condition $\theta$ up to time $s$. 
Our proof amounts to showing that this map preserves the separation property. 
According to \citep[Proposition~C.11]{NEURIPS2018_a1afc58c}, we need only show that the inverse map of $\mathcal{P}_s(\theta)$ is stable near ${0}$ for any fixed $0<s<S$.
That is, for any $\epsilon>0$, we need to identify $\eta>0$ such that
\begin{equation}\label{eqn:stablecondition}
\mathcal{P}^{-1}_s(\theta)\subset \mathcal{B}_{\epsilon}\left({0}\right),\quad \forall \theta\in\mathcal{B}_{\eta}\left({0}\right)\,,
\end{equation}
where $\mathcal{B}_\eta(0)$ is the $k$-dimensional ball around original $0$ with radius $\eta$.


Since $f$ is 2-homogeneous in $\theta$, we have that $|\partial_\theta f(z,{0})|=0$ for all $z$. Thus, from \eqref{eqn:Frechetderivative},
\[
\left|\nabla_{\theta}\frac{\delta E(\rho(s))}{\delta \rho}\left({0},t_0\right)\right|=0.
\]
Using estimate~\eqref{Lipshictzupdate} from Lemma~\ref{lemmadeltaprhoupdate}, we obtain
\[
\left|\nabla_{\theta}\frac{\delta E(\rho(s))}{\delta \rho}\left(\theta,t_0\right)\right|\leq C(\mathcal{L}^{\sup}_{S})|\theta|\,,
\]
where $\mathcal{L}^{\sup}_{S}=\sup_{0\leq s\leq S,t\in[0,1]}\int_{\mathbb{R}^k}|\theta|^2\rd\rho(\theta,t,s)\rd t$. This upper bound on the velocity implies in particular that
\[
|\mathcal{P}^{-1}_s(\theta)|\leq |\theta|\exp(C(\mathcal{L}^{\sup}_{S})s)\,,
\]
which establishes \eqref{eqn:stablecondition} when we choose $\eta$ to satisfy $\eta<\epsilon\exp(-C(\mathcal{L}^{\sup}_{S})s)$, concluding the proof.
\end{proof}

We are now ready to prove Theorem~\ref{thm:globalminimal1}.
\begin{proof}[Proof of Theorem~\ref{thm:globalminimal1}]
Since $\rho(\theta,t,s)$ converges to $\rho_\infty(\theta,t)$ in $\mathcal{C}([0,1];\mathcal{P}^2)$, we have for any $t_0$ that
\begin{equation}\label{eqn:boundednessofsecondmoment}
\sup_{s\geq 0}\int_{\mathbb{R}^k}|\theta|^2\rd\rho(\theta,t_0,s)<\infty\,.
\end{equation}
According to Proposition \ref{prop:localisglobal}, it suffices to prove that 
\begin{equation}\label{eqn:sufficientcondition2homo}
\frac{\delta E(\rho_\infty)}{\delta \rho}(\theta,t_0)=0,\quad \forall \theta\in\mathbb{R}^k\,.
\end{equation}
We use a contradiction argument: We will assume that~\eqref{eqn:sufficientcondition2homo} is not satisfied and show that $\int_{\mathbb{R}^k}|\theta|^2\rd\rho(\theta,t_0,s)$ blows up to infinity as $s \to \infty$, contradicting~\eqref{eqn:boundednessofsecondmoment}. 
In particular, we will use homogeneity to construct a set in which the second moment blows up.

Define the functions $h_\infty$ and $h_s$ as follows:
\[
h_\infty(\theta)=\frac{\delta E(\rho_\infty)}{\delta \rho}(\theta,t_0),\quad h_s(\theta)=\frac{\delta E(\rho(s))}{\delta \rho}(\theta,t_0)\,.
\]
Recall from \eqref{eqn:Frechetderivative} that \begin{equation}\label{def:deltarho3}\frac{\delta E(\rho)}{\delta \rho}(\theta,t_0)=\EE_{\mu}\left(p^\top_\rho(t_0,x) f(Z(t_0;x),\theta)\right)\,.\end{equation} 
Since~\eqref{eqn:sufficientcondition2homo} is not satisfied, there exists a $\theta^\ast$ such that $\frac{\delta E(\rho_\infty)}{\delta \rho}(\theta^\ast,t_0)\neq0$. From \eqref{def:deltarho3}, by H\"older's inequality,
\[
0< \left|\frac{\delta E(\rho_\infty)}{\delta \rho}(\theta^\ast,t_0)\right|\leq \left( \mathbb{E}_{x\sim\mu}\left(|p_{\rho_\infty}(t_0;x)|^2\right)\right)^{1/2}\left( \mathbb{E}_{x\sim\mu}\left(|f(Z(t_0;x),\theta^\ast)|^2\right)\right)^{1/2}\,,
\]
which implies
\[
\mathbb{E}_{x\sim\mu}\left(|p_{\rho_\infty}(t_0;x)|^2\right)>0\,.
\]
Then, Since $f$ is a universal kernel according to Assumption~\ref{assum:f}, we can find $\nu$ such that $\int f(Z(t_0;x),\theta)\rd\nu$ approximates $-p_{\rho_\infty}(t_0,x)$. leading to
\[
\int_{\mathbb{R}^k}h_\infty(\theta) \rd\nu(\theta)=\int_{\mathbb{R}^k}\frac{\delta E(\rho_\infty)}{\delta \rho}(\theta,t_0)\rd\nu(\theta)<-\frac{1}{2}\mathbb{E}_{x\sim\mu}\left(|p_{\rho_\infty}(t_0;x)|^2\right)<0\,.
\]
As a consequence, there exists at least one point $\theta_0\in\mathbb{R}^k$ such that $h_\infty(\theta_0)<0$. Since $f$ is $2$-homogeneous, by~\eqref{eqn:Frechetderivative}, $h$ is also $2$-homogeneous, so that
\[
h_\infty(\theta_0/|\theta_0|)<0\,.
\]
Because of continuity, there is a small neighborhood around $\theta_0/|\theta_0|$ in $\mathbb{S}^{k-1}$ where $h$ is strictly negative. Moreover, since $h$ is Sard-type regular, there exist $\epsilon>0$ and $\eta>0$ such that 
\[
\begin{cases}
A=\left\{\widetilde{\theta}\in\mathbb{S}^{k-1}\middle|h_{\infty}|_{\mathbb{S}^{k-1}}\left(\widetilde{\theta}\right)<-\epsilon\right\}\neq \emptyset\,,\\
\nabla_{\widetilde{\theta}}h_{\infty}|_{\mathbb{S}^{k-1}}\left(\widetilde{\theta}\right)\cdot {n}_{\widetilde{\theta}}>\eta\,,\quad \forall\,\widetilde{\theta}\in \partial A\,,
\end{cases}
\]
where $h_{\infty}|_{\mathbb{S}^{k-1}}$ is the confinement of $h_\infty$ on $\mathbb{S}^{k-1}$, and ${n}_{\widetilde{\theta}}$ is the outer normal vector to $\partial A$.

This statement of $h_\infty$ can be extended to $h_s$ for sufficiently larger $s$ as well. 
Using estimate~\eqref{eqn:Deltabound2update} from Lemma~\ref{lemmadeltaprhoupdate}, we obtain that
\[
h_{s}(\theta)\rightarrow h_{\infty}(\theta)\quad \mathrm{in}\quad \mathcal{C}^1_{\mathrm{loc}}(\mathbb{R}^k),\quad \mathrm{as}\ s\rightarrow\infty\,,
\]
meaning there exists $S>0$ such that for any $s\geq S$, we have
\[
\begin{cases}
h_{s}|_{\mathbb{S}^{k-1}}\left(\widetilde{\theta}\right) <-\epsilon/2,\quad & \forall \,\widetilde{\theta}\in A\,,\\
\nabla_{\widetilde{\theta}}h_{s}|_{\mathbb{S}^{k-1}}\left(\widetilde{\theta}\right)\cdot {n}_{\widetilde{\theta}} >\tfrac12 \eta\,,\quad & \forall \,\widetilde{\theta}\in \partial A\,.
\end{cases}
\] 

Extending this patch on the unit sphere to the whole domain, we define the cone:
\[
\mathcal{A}=\left\{\theta\in\mathbb{R}^k\middle||\theta|>0,\,\theta/|\theta|\in A\right\}\,.
\]
Using the  2-homogeneous property of $h_s$, we have for $s\geq S$ that 
\begin{equation}\label{hsnegative}
\begin{cases}
h_s(\theta)<-\frac{\epsilon|\theta|^2}{2},\quad & \forall \theta\in\mathcal{A}\,,\\
\nabla_\theta h_s(\theta)\cdot \vec{n}_{\theta}>0\,,\quad & \forall \theta\in \partial \mathcal{A}\cap \{|\theta|>0\}\,,
\end{cases}
\end{equation}
where $\vec{n}_{\theta}$ is the outer normal vector to $\partial \mathcal{A}$.

We now define a new system that follows the gradient flow corresponding to $\rho_s$. Denote by $\widehat{\theta}\left(s;\alpha\right)$ the solution to the following ODE:
\begin{equation}\label{eqn:ODE_theta_hat}
\left\{
\begin{aligned}
\frac{\rd \widehat{\theta}\left(s;\alpha\right)}{\rd s} & =-\nabla_\theta\frac{\delta E(\rho(s))}{\delta \rho}\left(\widehat{\theta}\left(s;\alpha\right),t_0\right)=-\nabla_{\theta}h_s\left(\widehat{\theta}\left(s;\alpha\right)\right),\quad s>S\\
\widehat{\theta}\left(S;\alpha\right) & =\alpha,
\end{aligned}
\right.
\end{equation}
where $\alpha\in\mathbb{R}^k$.
According to~\eqref{hsnegative}, when $\theta\in \partial \mathcal{A}\cap \{|\theta|>0\}$, $\nabla_\theta h_s(\theta)$ points outwards, away from $\mathcal{A}$. We also notice that $\widehat{\theta}\left(s;\vec{0}\right)=\vec{0}$. Thus if the ODE starts with from some $\alpha\in\mathcal{A}$, then for any $s\geq S$, the particle stays within $\mathcal{A}$, that is,  
\begin{equation}\label{eqn:fact1}
\widehat{\theta}\left(s;\alpha\right)\in\mathcal{A}\,.
\end{equation}
As a consequence, we have
\begin{equation}\label{eqn:fact2}
\frac{\rd \left|\widehat{\theta}\left(s;\alpha\right)\right|^2}{\rd s}
=-2\left\langle \widehat{\theta}\left(s;\alpha\right),\nabla_{\theta}h_s\left(\widehat{\theta}\left(s;\alpha\right)\right)\right\rangle
=-4h_s\left(\widehat{\theta}\left(s;\alpha\right)\right)>2\epsilon\left|\widehat{\theta}\left(s;\alpha\right)\right|^2\,,
\end{equation}
where we use the 2-homogeneous property of $h_s$ in the second equality and \eqref{hsnegative} in the final inequality.

According to Lemma \ref{lem:seperateionpreserve}, there exist two spheres separated by $\rho(\theta,t_0,S)$,  meaning that there exist $\beta>0$ and $\gamma>0$ relatively small (for example, with $\beta<r_a'$) such that
\begin{equation}\label{measurelowerbound}
\int_{\mathcal{A}\cap\left(\mathcal{B}_{\beta}\left(\vec{0}\right)\right)^c}\rd\rho(\theta,t_0,S)>\gamma\,.
\end{equation}
By tracing the trajectory of~\eqref{eqn:ODE_theta_hat}, we have
\[
\begin{aligned}
\int_{\mathcal{A}\cap\left(\mathcal{B}_{\beta}\left(\vec{0}\right)\right)^c}\rd\rho(\theta,t_0,s)=&\int_{\mathbb{R}^k}\textbf{1}_{\widehat{\theta}\left(s;\alpha\right)\in\mathcal{A}\cap\left(\mathcal{B}_{\beta}\left(\vec{0}\right)\right)^c}\rd \rho(\alpha,t_0,S)\\
\geq &\int_{\mathcal{A}\cap\left(\mathcal{B}_{\beta}\left(\vec{0}\right)\right)^c}\rd \rho(\alpha,t_0,S)>\gamma,\quad s\geq S\,,
\end{aligned}
\]
where in the first inequality we also use $\frac{\rd \left|\widehat{\theta}\left(s;\alpha\right)\right|^2}{\rd s}\geq 0$ when $\alpha\in\mathcal{A}$. Further, we have
\[
\begin{aligned}
&\frac{\rd \int_{\mathcal{A}\cap\left(\mathcal{B}_{\beta}\left(\vec{0}\right)\right)^c}\textbf{1}_{\widehat{\theta}\left(s;\alpha\right)\in\mathcal{A}\cap\left(\mathcal{B}_{\beta}\left(\vec{0}\right)\right)^c}\left|\widehat{\theta}\left(s;\alpha\right)\right|^2\rd \rho(\alpha,t_0,S)}{\rd s}\\
& =\frac{\rd \int_{\mathcal{A}\cap\left(\mathcal{B}_{\beta}\left(\vec{0}\right)\right)^c}\left|\widehat{\theta}\left(s;\alpha\right)\right|^2\rd \rho(\alpha,t_0,S)}{\rd s}\\
& \geq 2\epsilon \int_{\mathcal{A}\cap\left(\mathcal{B}_{\beta}\left(\vec{0}\right)\right)^c}\left|\widehat{\theta}\left(s;\alpha\right)\right|^2\rd \rho(\alpha,t_0,S)\\
& \geq 2\epsilon\gamma\beta^2
\end{aligned}
\]
where we use \eqref{eqn:fact2} in the second inequality and \eqref{measurelowerbound} in the final inequality. It follows that
\[
\lim_{s\rightarrow\infty}\int_{\mathcal{A}\cap\left(\mathcal{B}_{\beta}\left(\vec{0}\right)\right)^c}\textbf{1}_{\widehat{\theta}\left(s;\alpha\right)\in\mathcal{A}\cap\left(\mathcal{B}_{\beta}\left(\vec{0}\right)\right)^c}\left|\widehat{\theta}\left(s;\alpha\right)\right|^2\rd \rho(\alpha,t_0,S)=\infty\,.
\]
It follows from this result that
\[
\lim_{s\rightarrow\infty}\int_{\mathbb{R}^k}|\theta|^2\rd\rho(\theta,t_0,s)=\infty\,,
\]
contradicting~\eqref{eqn:boundednessofsecondmoment}. Therefore, we must have
\[
\frac{\delta E(\rho_\infty)}{\delta \rho}(\theta,t_0)=0,\quad \forall \theta\in\mathbb{R}^k\,,
\]
which completes the proof.
\end{proof}

\subsection{Partially 1-homogeneous case: Proof of Theorem~\ref{thm:globalminimal2}}
\label{sec:proofof1homogeneous}

\revise{We first give an example of partially $1$-homogeneous activation function that satisfy Assumption~\ref{assum:f}, and~\ref{assum:f2}.
\begin{remark}\label{re:H3} The following function satisfies Assumptions~\ref{assum:f} and \ref{assum:f2}: Let $\theta=(\theta_{[1]},\theta_{[2]},\theta_{[3]})$ with $\theta_{[1]}\in\mathbb{R}^{d}$, $\theta_{[2]}\in\mathbb{R}^{d\times d}$, $\theta_{[3]}\in\mathbb{R}^d$. Define $f(x,\theta)=f(x,\theta_{[1]},\theta_{[2]},\theta_{[3]})=\theta_{[1]}\sigma\left(\frac{\theta_{[2]}\sigma_2(|\theta_{[2]}|)}{|\theta_{[2]}|}x+\frac{\theta_{[3]}\sigma_2(|\theta_{[3]}|)}{|\theta_{[3]}|}\right)$,
where $\sigma(x)$ is a regularized ReLU activation function, and $\sigma_2,\sigma_2(x)/x:\mathbb{R}_+\rightarrow\mathbb{R}_+$ are bounded, Lipschitz, and differentiable with Lipschitz differential. One way (of many) to define a regularized ReLU activation function is $\sigma(x)=(x+\eta)^2/(4\eta)\textbf{1}_{x\in [-\eta,\eta]}+x\textbf{1}_{x\in (\eta,\infty)}$, for some small $\eta$.
\end{remark}}

As in the previous theorem, we prove a lemma, adapted from \citep[Lemma~C.13]{NEURIPS2018_a1afc58c}, that asserts preservation of the separation property.
\begin{lemma}\label{lem:seperateionpreserve2}
\revise{Let Assumptions~\ref{assum:f} and~\ref{ass:g} with $k_1=1$. Suppose that $\rho_{\ini}(\theta,t)$ is admissible and $\mathrm{supp}_\theta(\rho_{\ini}(\theta,t))\subset \{\theta||\theta_{[1]}|\leq R\}$ with some $R>0$ for all $t\in[0,1]$.} Let $\rho(\theta,t,s)\in \mathcal{C}([0,\infty);\mathcal{C}([0,1];\mathcal{P}^2))$ solve~\eqref{eqn:Wassgradientflow}. Suppose in addition that
\begin{itemize}
\item $f$ satisfies the partial $1$-homogeneous condition (see Assumption~\ref{assum:f2}),
\item The initial conditions satisfy the separation condition, that is, there exists $t_0\in[0,1]$ such that $\rho_{\ini}(\theta_{[1]},\theta_{[2]},t_0)$ separates the spheres $\{-r_0\}\times\mathbb{R}^{k-1}$ and $\{r_0\}\times\mathbb{R}^{k-1}$ for some $r_0>0$.
\end{itemize}
Then for any $s_0\in[0,\infty)$, $\rho(\theta_{[1]},\theta_{[2]},t_0,s_0)$ separates $\{-r'\}\times\mathbb{R}^{k-1}$ and $\{r'\}\times\mathbb{R}^{k-1}$ for some $r'>0$.
\end{lemma}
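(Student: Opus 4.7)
The plan follows the template of Lemma~\ref{lem:seperateionpreserve}, replacing the fixed origin by the invariant hyperplane $\{\theta_{[1]}=0\}$. First I introduce the particle flow $\theta_\rho(s;t_0)$ satisfying
\[
\frac{\rd\theta_\rho(s;t_0)}{\rd s}=-\nabla_\theta\frac{\delta E(\rho(s))}{\delta\rho}\bigl(\theta_\rho(s;t_0),t_0\bigr),\quad \theta_\rho(0;t_0)=\theta,
\]
and denote by $\mathcal{P}_s$ the associated flow map. Combining Lemma~\ref{lemmadeltaprhoupdate} with classical ODE theory, $\mathcal{P}_s:\mathbb{R}^k\to\mathbb{R}^k$ is a $C^1$-diffeomorphism for every fixed $s\in[0,s_0]$.

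The key structural input comes from Assumption~\ref{assum:f2}. Substituting $f(x,\theta)=\theta_{[1]}\hat f(x,\theta_{[2]})$ into \eqref{eqn:Frechetderivative} yields
\[
\frac{\delta E(\rho(s))}{\delta\rho}(\theta,t_0)=\theta_{[1]}\,G_s(\theta_{[2]}),\quad G_s(\theta_{[2]}):=\mathbb{E}_{x\sim\mu}\bigl[p_\rho^\top(t_0;x)\hat f(Z_\rho(t_0;x),\theta_{[2]})\bigr],
\]
so the particle ODE splits as $\dot\theta_{[1]}=-G_s(\theta_{[2]})$ and $\dot\theta_{[2]}=-\theta_{[1]}\nabla_{\theta_{[2]}} G_s(\theta_{[2]})$. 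The crucial consequence is that on $\{\theta_{[1]}=0\}$ the $\theta_{[2]}$-component of the velocity vanishes, so trajectories cross the hyperplane strictly transversally along the $\theta_{[1]}$-direction. This transversality plays the role, in the hyperplane setting, of the fixed-point property at the origin in the $2$-homogeneous case.

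Next I invoke the hyperplane analog of the abstract topological criterion used in the proof of Lemma~\ref{lem:seperateionpreserve} (Proposition~C.11 of \citep{NEURIPS2018_a1afc58c}). The stability input required is: for every $\epsilon>0$ there exists $\eta>0$ such that every $\theta$ with $|\mathcal{P}_{s_0}(\theta)_{[1]}|<\eta$ lies within $\epsilon$ of the limiting codimension-$1$ surface $\Sigma:=\mathcal{P}_{s_0}^{-1}(\{\theta_{[1]}=0\})$ in the $\theta_{[1]}$-transverse direction. This follows from the uniform bounds $|\dot\theta_{[1]}|\leq C(s_0)$ and $|\dot\theta_{[2]}|\leq C(s_0)|\theta_{[1]}|$ together with the Lipschitz estimate~\eqref{Lipshictzupdate2}. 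Combining this stability with the initial separation of $\{\pm r_0\}\times\mathbb{R}^{k-1}$ by $\mathrm{supp}(\rho_\ini(t_0))$, a pull-back argument then shows that for sufficiently small $r'>0$, the open sets $\mathcal{P}_{s_0}(V_\pm)$ (where $V_\pm$ are the components of $\mathbb{R}^k\setminus\mathrm{supp}(\rho_\ini(t_0))$ containing $\{\pm r_0\}\times\mathbb{R}^{k-1}$) contain the entire hyperplanes $\{\pm r'\}\times\mathbb{R}^{k-1}$, which yields the required separation for $\mathrm{supp}(\rho(t_0,s_0))$.

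The main obstacle is that $\mathbb{R}^{k-1}$ is noncompact in the $\theta_{[2]}$-direction, so a naive compactness argument for the stability input does not yield a uniform $r'$ valid on the full hyperplane. This is overcome by exploiting the slab assumption $\mathrm{supp}_\theta(\rho_\ini)\subset\{|\theta_{[1]}|\leq R\}$ together with the linear-in-$\theta_{[1]}$ growth $|\dot\theta_{[2]}|\leq C|\theta_{[1]}|$ and finiteness of $s_0$: these confine trajectories starting inside the initial support to a cylinder of controlled $\theta_{[2]}$-radius, reducing the required transverse stability to a uniform-in-$\theta_{[2]}$ estimate on a compact-in-$\theta_{[1]}$ window, which can be supplied by the Lipschitz bound~\eqref{Lipshictzupdate2}.
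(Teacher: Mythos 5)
There is a genuine gap, and it concerns where the noncompactness actually bites. You correctly observe that $\dot\theta_{[2]}=-\theta_{[1]}\nabla_{\theta_{[2]}}G_s(\theta_{[2]})$, and that together with the slab bound $|\theta_{[1]}|\leq R$ and finite $s_0$ this confines trajectories that \emph{start in the initial support} to a cylinder of controlled $\theta_{[2]}$-radius. But the separation property is a topological statement about the position of $\mathrm{supp}(\rho(\cdot,t_0,s_0))$ relative to the full hyperplanes $\{\pm r'\}\times\mathbb{R}^{k-1}$. Running the flow backward, you need to know that $\mathcal{P}_{s_0}^{-1}(\{-r'\}\times\mathbb{R}^{k-1})$ and $\mathcal{P}_{s_0}^{-1}(\{r'\}\times\mathbb{R}^{k-1})$ land on opposite sides of $\mathrm{supp}(\rho_\ini(\cdot,t_0))$. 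These preimages are computed from points with arbitrarily large $|\theta_{[2]}|$, which are nowhere near the initial support, so your cylinder confinement gives no control over them. Your Lipschitz estimate~\eqref{Lipshictzupdate2} has constants growing like $|\theta_{[1]}|+|\theta_{[2]}|$, so it also does not deliver a uniform-in-$\theta_{[2]}$ transverse stability bound on the whole hyperplane; the ``uniform-in-$\theta_{[2]}$ estimate on a compact-in-$\theta_{[1]}$ window'' you invoke in the last paragraph is not supplied by anything you cite.

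This is precisely the difficulty the paper's proof is designed to meet. The paper compactifies the $\theta_{[2]}$-direction with the diffeomorphism $\psi(\theta_{[1]},\theta_{[2]})=(\theta_{[1]},(\theta_{[2]}/|\theta_{[2]}|)\tanh|\theta_{[2]}|)$ and shows that the conjugated flow $\mathcal{Q}_s=\psi\circ\mathcal{P}_s\circ\psi^{-1}$ extends continuously to $\mathbb{R}\times\overline{\mathcal{B}_1(0)}$ with the boundary $\mathbb{R}\times\partial\mathcal{B}_1(0)$ preserved. The extension of the velocity field to $|\theta_{[2]}|=\infty$ is exactly what the fourth condition of Theorem~\ref{thm:globalminimal2} provides (convergence of $H_{r,\rho}$ to $H_{\infty,\rho}$ in $\mathcal{C}^1(\mathbb{S}^{k-2})$ as $r\to\infty$); your proposal never uses this hypothesis, which is a strong signal that something is missing. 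With the boundary-preserving extension in hand, the paper appeals to \citep[Proposition~C.14]{NEURIPS2018_a1afc58c} (not a hyperplane analog of C.11, which you invoke but do not verify exists in the needed form). Also, your identification of transversality across $\{\theta_{[1]}=0\}$ as ``the analog of the fixed-point at the origin'' is not how the argument is structured: in the paper it is the set $\{|\theta_{[2]}|=\infty\}$, not $\{\theta_{[1]}=0\}$, that plays the role of the invariant set whose preservation must be demonstrated.
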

\begin{proof}Note that the particle representation $\theta_\rho(s;t_0)$ of $\rho(\theta,s,t_0)$ satisfies
\begin{equation}\label{eqn:particle2}
\begin{cases}
\frac{\rd \theta_\rho(s;t_0)}{\rd s} &=-\nabla_{\theta}\frac{\delta E(\rho(s))}{\delta \rho}\left(\theta_\rho(s;t_0),t_0\right),\\
\theta_\rho(0;t_0) &=\theta\,.
\end{cases}
\end{equation}
Define a continuous map $\mathcal{P}:\mathbb{R}^k\times[0,\infty)\rightarrow\mathbb{R}^k$ as the solution to \eqref{eqn:particle2}, that is, $\mathcal{P}(\theta,s)$ is the solution to \eqref{eqn:particle2} with initial condition $\theta_\rho(0;t_0)=\theta$, where $t_0$ is fixed. 
Define a diffeomorphism $\psi:\mathbb{R}\times\mathbb{R}^{k-1}\rightarrow\mathbb{R}\times\mathcal{B}_1\left({0}\right)$ as follows:
\[
\psi(\theta_{[1]},\theta_{[2]})=\begin{cases} \left(\theta_{[1]},\left(\theta_{[2]}/|\theta_{[2]}|\right)\cdot\tanh\left(|\theta_{[2]}|\right)\right),\quad & \theta_{[2]}\neq{0}\\
(\theta_{[1]},{0}),\quad & \theta_{[2]}={0}\,, \end{cases}
\]
where $\theta_{[1]}\in\mathbb{R}$ is the first component of $\theta$ and $\theta_{[2]}\in\mathbb{R}^{k-1}$ contains the remaining components. 
This map keeps the first component of $\theta_{[1]}$ intact and shrinks $\theta_{[2]}$ to push its amplitude below $1$. This diffeomorphism preserves the connection/separation property.

Define the continuous map $\mathcal{Q}$ as follows:
\[
\mathcal{Q}(\theta,s)=\psi\circ\mathcal{P}(\psi^{-1}(\theta),s):\mathbb{R}\times\mathcal{B}_1\left({0}\right)\times[0,\infty)\rightarrow\mathbb{R}\times\mathcal{B}_1\left({0}\right)\,.
\]
Since $\psi$ preserves the connection property, the lemma is proved if we can show $\psi\circ\mathcal{P}(\mathrm{supp}(\rho_{\ini}),s)$ separates $\{-r'\}\times\mathcal{B}_1\left({0}\right)$ and $\{r'\}\times\mathcal{B}_1\left({0}\right)$ for some $r'>0$. Since $\psi\circ\mathcal{P}(\mathrm{supp}(\rho_{\ini}),s)=\mathcal{Q}(\psi(\mathrm{supp}(\rho_{\ini})),s)$, we trace the evolution of $\mathcal{Q}(\theta,s)$ for $\theta\in\mathbb{R}\times\mathcal{B}_1(0)$. According to \citep[Proposition C.14]{NEURIPS2018_a1afc58c}, this translates to showing $\mathcal{Q}(\theta,s)$ can be continuously extended to $\mathbb{R}\times\overline{\mathcal{B}_1\left({0}\right)}\times[0,S]\rightarrow\mathbb{R}\times\overline{\mathcal{B}_1\left({0}\right)}$, with the extension satisfying
\begin{equation}\label{eqn:boundarytoboundary}
\mathcal{Q}(\theta,s)\in \mathbb{R}\times\partial\mathcal{B}_1\left(\vec{0}\right),\quad\forall \theta\in \mathbb{R}\times\partial\mathcal{B}_1\left(\vec{0}\right), \quad s\in[0,\infty)\,,
\end{equation}
meaning that the extension $\mathcal{Q}(\cdot,s)$ maps $\mathbb{R}\times\partial\mathcal{B}_1(\vec{0})$ to itself for all $s\in[0,\infty)$.

Denoting $\mathcal{Q}_s(\theta)=\mathcal{Q}(\theta,s)$,  we consider the velocity field of this flow (similar to the proof of \citep[Lemma~C.13]{NEURIPS2018_a1afc58c}):
\[
\begin{aligned}
\frac{\rd \mathcal{Q}_s}{\rd s}&=\left(\nabla_{\theta}\psi\left(\mathcal{P}_s\circ\psi^{-1}\left(\theta\right)\right)\right)\frac{\rd \mathcal{P}_s\circ\psi^{-1}}{\rd s}\\
&=-\left(\nabla_{\theta}\psi\left(\mathcal{P}_s\circ\psi^{-1}\left(\theta\right)\right)\right)\left(\nabla_{\theta}\frac{\delta E(\rho(s))}{\delta \rho}\left(\mathcal{P}_s\circ\psi^{-1}\left(\theta\right),t_0\right)\right)\\
&=-\left(\nabla_{\theta}\psi\left(\psi^{-1}(\mathcal{Q}_s)\right)\right)\left(\nabla_{\theta}\frac{\delta E(\rho(s))}{\delta \rho}\left(\psi^{-1}(\mathcal{Q}_s),t_0\right)\right)=V(\mathcal{Q}_s,s)\,.
\end{aligned}
\]
From the fourth condition of Theorem \ref{thm:globalminimal2}, the velocity field $V(\theta,s)$ can be continuously extended to $\mathbb{R}\times\partial\mathcal{B}_1\left({0}\right)$ as follows:
\[
V(\theta,s)=V(\theta_{[1]},\theta_{[2]},s)=\begin{cases}
-\left(\nabla_{\theta}\psi\left(\psi^{-1}(\theta)\right)\right)\left(\nabla_{\theta}\frac{\delta E(\rho(s))}{\delta \rho}\left(\psi^{-1}(\theta),t_0\right)\right),\quad & |\theta_{[2]}|<1\\
\left(-H_{\infty,\rho(s)}(\theta_{[2]}),{0}\right),\quad & |\theta_{[2]}|=1\,,
\end{cases}
\]
where $H_{\infty,\rho(s)}$ is the limit of $\frac{\delta E(\rho(s))}{\delta \rho}\left(1,r\theta_{[2]},t_0\right)$ as $r\rightarrow\infty$. 
Within this velocity field, $\mathcal{Q}_s$ can be continuously extended to $\mathbb{R}\times\overline{\mathcal{B}_1\left({0}\right)}\times[0,\infty)\rightarrow\mathbb{R}\times\overline{\mathcal{B}_1\left({0}\right)}$ with the extension satisfying \eqref{eqn:boundarytoboundary}. 
This completes the proof.
\end{proof}

We are now ready to prove Theorem~\ref{thm:globalminimal2}.
\begin{proof}[Proof of Theorem~\ref{thm:globalminimal2}] 
The technique of proof is similar to the 2-homogeneous case. 
Since $\rho(\theta,t,s)$ converges to $\rho_\infty(\theta,t)$ in $\mathcal{C}([0,1];\mathcal{P}^2)$,  we have 
\begin{equation}\label{eqn:boundednessofsecondmoment2}
\sup_{s\geq 0}\int_{\mathbb{R}^k}|\theta|^2d\rho(\theta,t_0,s)<\infty\,.
\end{equation}
According to Proposition~\ref{prop:localisglobal}, it suffices to prove that
\begin{equation}\label{eqn:sufficientcondition1homo}
\frac{\delta E(\rho_\infty)}{\delta \rho}(\theta,t_0)=\frac{\delta E(\rho_\infty)}{\delta \rho}(\theta_{[1]},\theta_{[2]},t_0)=\theta_{[1]}\frac{\delta E(\rho_\infty)}{\delta \rho}(1,\theta_{[2]},t_0)=0,\quad \forall \theta\in\mathbb{R}^k\,.
\end{equation}
In the following, we will show that $\int_{\mathbb{R}^k}|\theta|^2d\rho(\theta,t_0,s)$ blows up as $s \to \infty$  if \eqref{eqn:sufficientcondition1homo} fails to hold, in contradiction to \eqref{eqn:boundednessofsecondmoment2}.

Denote
\begin{equation} \label{eq:hsinf}
h_\infty(\theta_{[2]})=\frac{\delta E(\rho_\infty)}{\delta \rho}(1,\theta_{[2]},t_0),\quad h_s(\theta_{[2]})=\frac{\delta E(\rho(s))}{\delta \rho}(1,\theta_{[2]},t_0)\,.
\end{equation}
Then if \eqref{eqn:sufficientcondition1homo} is not satisfied, without loss of generality, we can assume that there exists $\theta_{[2]}$ such that
$h_\infty(\theta_{[2]})<0$.
Since $h_\infty$ satisfies Sard-type regularity, there exist $\epsilon>0$ and $\eta>0$ such that 
\[
A=\left\{\theta_{[2]}\in\mathbb{R}^{k-1}\middle|h_{\infty}\left(\theta_{[2]}\right)<-\epsilon\right\}\neq \emptyset; \quad
\nabla_{\theta_{[2]}}h_{\infty}(\theta_{[2]})\cdot \vec{n}_{\theta_{[2]}}>\eta\,,\quad \forall\,\theta_{[2]}\in \partial A\,,
\]
where ${n}_{\theta_{[2]}}$ is the outer normal vector on $\partial A$.


Using the definition of $\frac{\delta E}{\delta \rho}$ as in~\eqref{eqn:Frechetderivative}, we have
\[
\begin{aligned}
&\left|\nabla_{\theta_{[2]}}\frac{\delta E(\rho(s))}{\delta \rho}(1,\theta_{[2]},t)-\nabla_{\theta_{[2]}}\frac{\delta E(\rho_\infty)}{\delta \rho}(1,\theta_{[2]},t)\right|\\
&=\mathbb{E}_{x\sim \mu}\left(\partial_{\theta_{[2]}} \hat{f}(Z_{\rho(s)}(t;x),{\theta_{[2]}})p_{\rho(s)}(t;x)-\partial_{\theta_{[2]}} \hat{f}(Z_{\rho_\infty}(t;x),{\theta_{[2]}})p_{\rho_\infty}(t;x)\right)\\
&\leq \mathbb{E}_{x\sim \mu}\left(\left|\partial_{\theta_{[2]}}\hat{f}(Z_{\rho(s)}(t;x),{\theta_{[2]}})-\partial_{\theta_{[2]}} \hat{f}(Z_{\rho_\infty}(t;x),{\theta_{[2]}})\right||p_{\rho(s)}(t;x)|\right)\\
& \quad +\mathbb{E}_{x\sim \mu}\left(\left|\partial_{\theta_{[2]}} \hat{f}(Z_{\rho_\infty}(t;x),{\theta_{[2]}})\right|\left|p_{\rho(s)}(t;x)-p_{\rho_\infty}(t;x)\right|\right)\\
& \leq C\left(\mathbb{E}_{x\sim \mu}\left(\left|Z_{\rho(s)}(t;x)-Z_{\rho_\infty}(t;x)\right|\right)+\mathbb{E}_{x\sim \mu}\left(\left|p_{\rho(s)}(t;x)-p_{\rho_\infty}(t;x)\right|\right)\right)\\
& \leq Cd_1(\rho(s),\rho_\infty)\,,
\end{aligned}
\]
where we have used the Lipschitz continuity of $f$ and its derivatives as in Assumption~\ref{assum:f2} and the estimates \eqref{boundofxsolution}, \eqref{boundofprho}, and \eqref{stabilityofprhoupdate}. 
We thus have
\begin{equation}\label{eqn:hsconvergenceinfty}
h_{s}(\theta_{[2]})\rightarrow h_{\infty}(\theta_{[2]})\quad \mathrm{in}\quad \mathcal{C}^1(\mathbb{R}^{k-1}),\quad \mathrm{as}\ s\rightarrow\infty\,.
\end{equation}
As a consequence, there exists $S>0$ such that for any $s\geq S$
\begin{equation}\label{hsproperty}
\begin{cases}
h_{s}\left(\theta_{[2]}\right) <-\epsilon/2,\quad & \forall \,\theta_{[2]}\in A\,,\\
\nabla_{\theta_{[2]}}h_{s}(\theta_{[2]})\cdot \vec{n}_{\theta_{[2]}} >\tfrac12 \eta\,,\quad & \forall \,\theta_{[2]}\in \partial A\,.
\end{cases}
\end{equation}

Extending this set to the whole space, we define
\[
\mathcal{A}=\left\{(\theta_{[1]},\theta_{[2]})\in(0,\infty)\times A\right\}\,.
\]
Since $\partial \mathcal{A}=\left\{(\theta_{[1]},\theta_{[2]})\in(0,\infty)\times \partial A\right\}
\cup \{\theta_{[1]}= 0,\theta_{[2]}\in \overline{A}\}$, from \eqref{hsproperty} and the definition of $h_s$, we have 
\begin{equation}\label{hsproperty4}
\nabla_\theta \frac{\delta E(\rho(s))}{\delta \rho}(\theta,t_0)\cdot \vec{n}_{\theta}=\theta_{[1]}\left(\nabla_{\theta_{[2]}}h_{s}(\theta_{[2]})\cdot \vec{n}_{\theta_{[2]}}\right)>0\,,\quad \forall \theta\in (0,\infty)\times \partial A\,,
\end{equation}
where $\vec{n}_{\theta}$ is the outer normal direction on $\partial\mathcal{A}$, and $\vec{n}_{\theta_{[2]}}$ is the outer normal vector on $\partial A$. When $\theta\in\{\theta_{[1]}=0,\theta_{[2]}\in \overline{A}\}$,
\begin{equation}\label{hsproperty3}
\left[\nabla_\theta \frac{\delta E(\rho(s))}{\delta \rho}(\theta,t_0)\right]_1=h_s(\theta_{[2]})<0,\quad \left[\nabla_\theta \frac{\delta E(\rho(s))}{\delta \rho}(\theta,t_0)\right]_i=0,\quad i=2,\dots,k\,,
\end{equation}
where $[\cdot]_i$ means the $i$-component of the vector. This implies that  $\nabla_\theta \frac{\delta E(\rho(s))}{\delta \rho}(\theta,t_0)$ points strictly downward when $\theta\in\{\theta_{[1]}=0,\theta_{[2]}\in \overline{A}\}$.

As in the $2$-homogeneous case, we consider the gradient flow corresponding to $\rho_s$. Denote by $\widehat{\theta}\left(s;\alpha\right)$ the solution to the following ODE:
\begin{equation}\label{eqn:ODE_theta_hat2}
\begin{cases}
\frac{\rd \widehat{\theta}\left(s;\alpha\right)}{\rd s} &=-\nabla_\theta\frac{\delta E(\rho(s))}{\delta \rho}\left(\widehat{\theta}\left(s;\alpha\right),t_0\right)=-\nabla_{\theta}\left(\theta_{[1]}h_s\left(\theta_{[2]}\right)\right)\left(\widehat{\theta}\left(s;\alpha\right)\right),\quad s>S\\
\widehat{\theta}\left(S;\alpha\right) &=\alpha \,,
\end{cases}
\end{equation}
where $\alpha\in\mathbb{R}^k$. Since the minus gradient is pointing inward to $\mathcal{A}$, as stated in \eqref{hsproperty4} and \eqref{hsproperty3}, $\widehat{\theta}\left(s;\alpha\right)$ stays in $\mathcal{A}$ if $\widehat{\theta}\left(s';\alpha\right)\in\mathcal{A}$ for some $s'\in[S,s]$. Moreover, using~\eqref{eqn:ODE_theta_hat2}, if $\widehat{\theta}\left(s;\alpha\right)\in\mathcal{A}$, we have 
\begin{equation}\label{eqn:fac21homo}
\begin{aligned}
\frac{\rd |\widehat{\theta}_{[1]}\left(s;\alpha\right)|^2}{\rd s}=-2\widehat{\theta}_{[1]}\left(s;\alpha\right)h_s\left(\widehat{\theta}_{[2]}\left(s;\alpha\right)\right)>\epsilon\widehat{\theta}_{[1]}\left(s;\alpha\right)\,,
\end{aligned}
\end{equation}
where the last inequality uses~\eqref{hsproperty}.

Similar to \citep[Proposition C.4]{NEURIPS2018_a1afc58c}, we claim that there exists $S_1\geq S$, $\beta>0$, and $\gamma>0$ such that (see detailed proof in Appendix \ref{sec:proofofclaim})
\begin{equation}\label{eqn:rho_beta_gamma_lowerbound}
\int_{(\beta,\infty)\times A}\rd\rho(\theta,t_0,S_1)>\gamma\,.
\end{equation}
Then
\begin{equation}\label{eqn:ODE_theta_1_increase}
\begin{aligned}
&\frac{\rd \int_{\mathbb{R}^k}\textbf{1}_{\widehat{\theta}\left(S_1;\alpha\right)\in(\beta,\infty)\times A}\left|\widehat{\theta}_{[1]}\left(s;\alpha\right)\right|^2\rd \rho(\alpha,t_0,S)}{\rd s}\\
&\geq \epsilon\int_{\mathbb{R}^k}\textbf{1}_{\widehat{\theta}\left(S_1;\alpha\right)\in(\beta,\infty)\times A}\widehat{\theta}_{[1]}\left(s;\alpha\right)\rd \rho(\alpha,t_0,S)\\
&\geq \epsilon\beta\int_{\mathbb{R}^k}\textbf{1}_{\widehat{\theta}\left(S_1;\alpha\right)\in(\beta,\infty)\times A}\rd \rho(\alpha,t_0,S)\\
&=\epsilon\beta\int_{(\beta,\infty)\times A}\rd\rho(\theta,t_0,S_1)\geq \epsilon\beta\gamma\,.
\end{aligned}
\end{equation}
Since
\[
\begin{aligned}
\lim_{s\rightarrow\infty}\int_{(\beta,\infty)\times A}|\theta_{[1]}|^2\rd\rho(\theta,t_0,s)\geq \lim_{s\rightarrow\infty}\int_{\mathbb{R}^k}\textbf{1}_{\widehat{\theta}\left(S_1;\alpha\right)\in(\beta,\infty)\times A}\left|\widehat{\theta}_{[1]}\left(s;\alpha\right)\right|^2\rd \rho(\alpha,t_0,S)=\infty\,,
\end{aligned}
\]
we finally obtain, using~\eqref{eqn:ODE_theta_1_increase}, that
\[
\lim_{s\rightarrow\infty}\int_{\mathbb{R}^k}|\theta|^2\rd\rho(\theta,t_0,s)\geq \lim_{s\rightarrow\infty}\int_{(\beta,\infty)\times A}|\theta_{[1]}|^2\rd\rho(\theta,t_0,s)=\infty\,.
\]
This limit contradicts~\eqref{eqn:boundednessofsecondmoment2}, implying that \eqref{eqn:sufficientcondition1homo} must hold, as claimed.
\end{proof}

\subsubsection{Claim in the proof of Theorem \ref{thm:globalminimal2}}\label{sec:proofofclaim}
In this section, we prove the statement in~\eqref{eqn:rho_beta_gamma_lowerbound}, meaning that we need to find $S_1\geq S$, $\beta>0$, and $\gamma>0$ such that
\begin{equation}\label{eqn:claim}
\int_{(\beta,\infty)\times A}\rd\rho(\theta,t_0,S_1)>\gamma\,.
\end{equation}
Supposing that $\int_{\mathcal{A}}\rd\rho(\theta,t_0,S)>0$, then by making $\beta$ and $\gamma$ small enough, \eqref{eqn:claim} is satisfied naturally.

If $\int_{\mathcal{A}}\rd\rho(\theta,t_0,S)=0$, then it suffices to show that there exists $S_1>S$ such that
\begin{equation}\label{eqn:claim2}
\int_{\mathcal{A}}\rd\rho(\theta,t_0,S_1)>0\,.
\end{equation}

Define $h_s(\theta_{[2]})$ and $h_\infty(\theta_{[2]})$ as in \eqref{eq:hsinf}.
Because of the fourth condition in Theorem~\ref{thm:globalminimal2}, there exists a function $h'(\widetilde{\theta})$ on $\mathcal{C}^1\left(\mathbb{S}^{k-2}\right)$ such that 
\[
h_\infty(r\widetilde{\theta})\xrightarrow{r\rightarrow\infty}h' (\widetilde{\theta}),\quad\text{in}\ \  \mathcal{C}^1(\mathbb{S}^{k-2} \,.
\]
Combining this with \eqref{eqn:hsconvergenceinfty}, there exists $h^*>\epsilon/2$ such that $\|h_{s}\|_{L^\infty_A}\leq h^*$. 
Further, for any $\xi>0$, we can find $\theta^\ast_{[2]}\in A$ and $S'$ large enough such that
\begin{equation}\label{hsproperty2}
|\nabla_{\theta^\ast_{[2]}}h_{s}(\theta^\ast_{[2]})|<\xi\,,\quad \forall s\geq S'\,.
\end{equation}
According to Lemma~\ref{lem:seperateionpreserve2}, there exists $r_{S'}>0$ such that $\rho(\theta_{[1]},\theta_{[2]},t_0,{S'})$ separates $\{-r_{S'}\}\times\mathbb{R}^{k-1}$ and $\{r_{S'}\}\times\mathbb{R}^{k-1}$. Considering the set $[-r_{S'},r_{S'}]\times\{\theta^\ast_{[2]}\}$, it must intersect the support of $\rho(\theta_{[1]},\theta_{[2]},t_0,S')$ due to the separation property. Thus, any open set that contains $[-r_{S'},r_{S'}]\times\{\theta^\ast_{[2]}\}$ must have a positive measure in $\rho(\theta_{[1]},\theta_{[2]},t_0,S')$. Because $\int_{\mathcal{A}}\rd\rho(\theta,t_0,S')=0$ and $(-r_{S'}-1,\infty)\times A$ is a open set that covers $[-r_{S'},r_{S'}]\times\{\theta^\ast_{[2]}\}$, there exists a open set $U\subset (-r_{S'}-1,0]\times A$ such that 
\[
\int_U \rd\rho(\theta_{[1]},\theta_{[2]},t_0,S')>0\,.
\]
Thus, we can find a point $0< r^*\leq r_{S'}+1$ and an arbitrary small $\sigma>0$ such that
\[
\mathcal{B}_{\sigma}(-r^*,\theta^\ast_{[2]})\subset (-r_{S'}-1-\sigma,\sigma)\times A,\quad \mbox{and} \quad \int_{\mathcal{B}_{\sigma}(-r^*,\theta^\ast_{[2]})}\rd\rho(\theta,t_0,S')>0\,.
\]
Recalling the system~\eqref{eqn:ODE_theta_hat2}, we claim the following:
\begin{quote}
{\em When $\xi,\sigma$ are small enough, there exists $S_1>{S'}$ such that $\widehat{\theta}\left(S_1;\alpha\right)\in\mathcal{A}$ for any $\alpha\in\mathcal{B}_{\sigma}(-r^*,\theta^\ast_{[2]})$.}
\end{quote}
If this claim is true, then
\[
\int_{\mathcal{A}}\rd\rho(\theta,t_0,S_1)\geq\int_{\mathcal{B}_{\sigma}(-r^*,\theta^\ast_{[2]})}\rd\rho(\theta,t_0,S')>0\,.
\]
which proves \eqref{eqn:claim2} and the lemma.

Now, we prove the claim. Because $f$ satisfies Assumption \ref{assum:f2} and the second moment of $\rho$ is uniformly bounded in $s$, for all $s>0$, we have
\begin{equation}\label{eqn:Lipschitzuniform1homo}
\begin{aligned}
\left|\frac{\delta E(\rho(s))}{\delta \rho}(1,\theta_{[2]},t_0)-\frac{\delta E(\rho(s))}{\delta \rho}(1,\theta'_{[2]},t_0)\right| &\leq L|\theta_{[2]}-\theta'_{[2]}|\,,\\
\left|\nabla_\theta\frac{\delta E(\rho(s))}{\delta \rho}(1,\theta_{[2]},t_0)-\nabla_\theta\frac{\delta E(\rho(s))}{\delta \rho}(1,\theta'_{[2]},t_0)\right| &\leq L|\theta_{[2]}-\theta'_{[2]}|\,,
\end{aligned}
\end{equation}
for some constant $L$. According to \eqref{eqn:ODE_theta_hat2}, we have
\begin{equation}\label{eqn:dtheta}
\left\{
\begin{aligned}
\frac{\rd \widehat{\theta}_{[1]}(s;\alpha)}{\rd s} &=-h_s\left(\widehat{\theta}_{[2]}(s;\alpha)\right)\\
\frac{\rd \left|\widehat{\theta}_{[2]}(s;\alpha)-\theta^\ast_{[2]}\right|^2}{\rd s} &\leq 2\left|\widehat{\theta}_{[2]}(s;\alpha)-\theta^\ast_{[2]}\right|\left|\widehat{\theta}_{[1]}(s;\alpha)\right|\left|\nabla_{\theta_{[2]}}h_s\left(\widehat{\theta}_{[2]}(s;\alpha)\right)\right|
\end{aligned}
\right.\,,\quad s\geq S'
\end{equation}
where 
\begin{equation}\label{eqn:initialthetahat}-r^*-\sigma\leq \widehat{\theta}_{[1]}(S';\alpha)\leq \sigma,\quad \mbox{and} \quad  \left|\widehat{\theta}_{[2]}(S';\alpha)-\theta^\ast_{[2]}\right|\leq \sigma\,.
\end{equation}
To prove the claim, it suffices to show that there exists $S_1>S'$ such that
\begin{equation}\label{eqn:necessarycondition}
    \widehat{\theta}_{[1]}\left(S_1;\alpha\right)>1\quad \mbox{and} \quad \widehat{\theta}_{[2]}\left(S_1;\alpha\right)\in A\,.
\end{equation}

We first show that $\widehat{\theta}_{[1]}$ increases and the right hand-side of \eqref{eqn:dtheta} is bounded. Since $A$ is a open set, there exists an arbitrary small $\Sigma>0$ such that $\mathcal{B}_{\Sigma}(\theta^\ast_{[2]})\subset A$. We first choose $\sigma<\min\{\Sigma,2\}$. 
When 
\begin{equation}\label{eqn:conditionnecessary}\left|\widehat{\theta}_{[1]}(s;\alpha)\right|\leq \frac{2r_{S'}+2}{\epsilon}h^*+2,\quad \left|\widehat{\theta}_{[2]}(s;\alpha)-\theta^\ast_{[2]}\right|\leq\Sigma\,,
\end{equation} 
we have from  \eqref{hsproperty}, \eqref{hsproperty2}, and \eqref{eqn:Lipschitzuniform1homo} that
\begin{equation}\label{eqn:theta1}
\frac{\rd \widehat{\theta}_{[1]}(s;\alpha)}{\rd s}=-h_s\left(\widehat{\theta}_{[2]}(s;\alpha)\right)<h^*,\quad \frac{\rd \widehat{\theta}_{[1]}(s;\alpha)}{\rd s}=-h_s\left(\widehat{\theta}_{[2]}(s;\alpha)\right)>\epsilon/2
\end{equation}
and
\begin{equation}\label{eqn:theta2}
\begin{aligned}
& \frac{\rd \left|\widehat{\theta}_{[2]}(s;\alpha)-\theta^\ast_{[2]}\right|^2}{\rd s} \\
&\leq 2\left(\frac{2r_{S'}+2}{\epsilon}h^*+2\right)\left|\widehat{\theta}_{[2]}(s;\alpha)-\theta^\ast_{[2]}\right|\left(L\left|\widehat{\theta}_{[2]}(s;\alpha)-\theta^\ast_{[2]}\right|+\xi\right)\\
&\leq 2L\left(\frac{2r_{S'}+2}{\epsilon}h^*+2\right)\left|\widehat{\theta}_{[2]}(s;\alpha)-\theta^\ast_{[2]}\right|^2+2\left(\frac{2r_{S'}+2}{\epsilon}h^*+2\right)\Sigma\xi\,.
\end{aligned}
\end{equation}

When $s=S'$ and $\alpha\in\mathcal{B}_{\sigma}(-r^*,\theta^\ast_{[2]})$, we have from \eqref{eqn:initialthetahat} that
\[
|\widehat{\theta}_{[1]}(S';\alpha)|\leq r_{S'}+1+\sigma< \frac{2r_{S'}+2}{\epsilon}h^*+2,\quad |\widehat{\theta}_{[2]}(S';\alpha)-\theta^\ast_{[2]}|\leq \sigma< \Sigma\,.
\]
Thus, for $s$ slightly larger than $S'$, we still have that 
\[
\left|\widehat{\theta}_{[1]}(s;\alpha)\right|<\frac{2r_{S'}+2}{\epsilon}h^*+2 \quad \mbox{and} \quad \left|\widehat{\theta}_{[2]}(s;\alpha)-\theta^\ast_{[2]}\right|<\Sigma.
\]
Denote by $S^*$ the first time that 
\[
\left|\widehat{\theta}_{[1]}(S^*;\alpha)\right|\geq\frac{2r_{S'}+2}{\epsilon}h^*+2 \quad \mbox{or} \quad \left|\widehat{\theta}_{[2]}(S^*;\alpha)-\theta^\ast_{[2]}\right|\geq\Sigma.
\]
Then we show that there exists $S_1\in [S',S^*]$ such that \eqref{eqn:necessarycondition} is satisfied when $\sigma$, $\Sigma$, and $\xi$ are small enough. 
From \eqref{eqn:initialthetahat},  we have for $s \in (S',S^*)$ that
\begin{align*}
\widehat{\theta}_{[1]}(s;\alpha) &\leq \sigma+(s-S')h^*\\
\widehat{\theta}_{[1]}(s;\alpha) & >-r_{S'}-\sigma+(s-S')\epsilon/2,\\
\left|\widehat{\theta}_{[2]}(s;\alpha)-\theta^\ast_{[2]}\right| &<\exp\left(L\left(\frac{2r_{S'}+2}{\epsilon}h^*+2\right)(s-S')\right)\left(\sigma^2+2(s-S')\left(\frac{2r_{S'}+2}{\epsilon}h^*+2\right)\Sigma\xi\right)^{1/2}\,,
\end{align*}
where the first two inequlaties come from \eqref{eqn:theta1} and last inequality comes from \eqref{eqn:theta2} via Gr\"onwall's inequality.
Defining
\[
S_1=\frac{2r_{S'}+2}{\epsilon}+S',
\]
we can choose the positive values $\sigma$, $\Sigma$, and $\xi$ small enough that
\[
\widehat{\theta}_{[1]}(S_1,\theta^{S'})>-r_{S'}+(S_1-S')\epsilon/2=1\,,
\]
and for $s\in[S',S_1]$
\begin{equation}\label{eqn:theta1new}
\left|\widehat{\theta}_{[1]}(s;\alpha)\right|< \frac{2r_{S'}+2}{\epsilon}h^*+2,\quad \left|\widehat{\theta}_{[2]}(s;\alpha)-\theta^\ast_{[2]}\right|<\Sigma\,.
\end{equation}
According to \eqref{eqn:theta1new}, the bounds \eqref{eqn:conditionnecessary} are satisfied for $s\in [S',S_1]$, which implies that  $S_1<S^*$ and $\widehat{\theta}_{[2]}(S_1,\theta^{S'})\in A$. Further, we have $\widehat{\theta}_{[1]}(S_1,\theta^{S'})>1$. 
By combining these two results, we conclude that \eqref{eqn:necessarycondition} is satisfied with the chosen values  of $\sigma$, $\Sigma$, $\xi$, and $S_1$. Thus, we have $\widehat{\theta}\left(S_1;\alpha\right)\in (1,
\infty)\times A\subset \mathcal{A}$ for any $\alpha\in\mathcal{B}_{\sigma}(-r^*,\theta^\ast_{[2]})$, which proves the claim.

\end{document}